		\newcommand{\citet}[1]{\citeauthor{#1}\shortcite{#1}}
		\newcommand{\citep}{\cite}
		\newcommand{\sX}{\mathscr{X}}
		\newcommand{\sY}{\mathscr{Y}}
		\newcommand{\sZ}{\mathscr{Z}}
		\renewcommand{\Pr}{\mathbb{P}}
		\newcommand{\Exp}{\mathbb{E}}
		\newcommand{\dReal}{\Real^d}
		\newcommand{\Int}{\mathbb{Z}}
		\newcommand{\indc}{\mathds{1}}
		\newcommand{\Id}{I}
		\newcommand{\cW}{\mathcal{W}}
		\newcommand{\RInf}{\mathbb{R}^\infty_{+}}
		\newcommand{\cI}{\mathcal{I}}
		\newcommand{\Real}{\mathbb{R}}
		\newcommand{\df}{\textnormal{d}}
		\newcommand{\tr}{\top}
		\newcommand{\uw}{u}
		\newcommand{\thS}{\theta^*}
		\newcommand{\wS}{w^*}
		\newcommand{\Rnt}{R_{n + 1}^{(\theta)}}
		\newcommand{\Rntp}{R_{n}^{(\theta)}}
		\newcommand{\Lnt}{L_{n + 1}^{(\theta)}}
		\newcommand{\Dnt}{\Delta_{n + 1}^{(\theta)}}
		\newcommand{\Rnw}{R_{n + 1}^{(w)}}
		\newcommand{\Lnw}{L_{n + 1}^{(w)}}
		\newcommand{\Dnw}{\Delta_{n + 1}^{(w)}}
		\newcommand{\Rproj}{R_{\text{proj}}}
		\newcommand{\Rprojth}{R_{\text{proj}}^\theta}
		\newcommand{\Rprojw}{R_{\text{proj}}^w}
		\newcommand{\cE}{\mathcal{E}}
		\newcommand{\Mt}{M^{(1)}}
		\newcommand{\Mw}{M^{(2)}}
		\newcommand{\cF}{\mathcal{F}}
		\newcommand{\mw}{m_2}
		\newcommand{\mt}{m_1}
		\newcommand{\norm}[1]{\left\lVert#1\right\rVert}
		\newcommand{\cU}{\mathcal{U}}
		\newcommand{\cG}{\mathcal{G}}
		\newcommand{\cA}{\mathcal{A}}
		\newcommand{\cL}{\mathcal{L}}
		\newcommand{\Gp}{\cG^\prime}
		\newcommand{\Crth}{C_R^\theta}
		\newcommand{\Crw}{C_R^w}
		\newcommand{\cZ}{\mathcal{Z}}
		\newcommand{\bE}{\mathbb{E}}
		\newcommand{\Tt}{\Gamma_1}
		\newcommand{\Tw}{\Gamma_2}
		\newcommand{\Wt}{W_1}
		\newcommand{\Ww}{W_2}
		\newcommand{\Xt}{X_1}
		\DeclarePairedDelimiter{\ceil}{\lceil}{\rceil}
		\newcommand{\epsth}{\epsilon^{(\theta)}}
		\newcommand{\epsw}{\epsilon^{(w)}}
		\newcommand{\st}{\alpha}
		\newcommand{\sw}{\beta}
		\newcommand{\assumMatrices}{\noindent \pmb{$\cA_1$}}
		\newcommand{\assumStepsize}{\noindent \pmb{$\cA_2$}}
		\newcommand{\mokkademConvergence}{\noindent \pmb{$\cA_3$}}
		\newtheorem{thm}{Theorem}
		\newtheorem{prop}[thm]{Proposition}
		\newtheorem{theorem}[thm]{Theorem}
		\newtheorem{corollary}[thm]{Corollary}
		\newtheorem{remark}[thm]{Remark}
		\newtheorem{lemma}[thm]{Lemma}
		\newtheorem{definition}[thm]{Definition}
		\title{A Tale of Two-Timescale Reinforcement Learning with the Tightest Finite-Time Bound}
		\author{Gal Dalal,\textsuperscript{\rm 1} Bal\'azs Sz\"or\'enyi,\textsuperscript{\rm 2} and Gugan Thoppe\textsuperscript{\rm 3}\thanks{Research supported by NSF grants DEB-1840223 and DMS 17-13012.}\\ 
		\textsuperscript{\rm 1} Technion, Israel Institute of Technology, Haifa, Israel; gald@technion.ac.il
		\\
		\textsuperscript{\rm 2} Yahoo! Research, New York, NY, USA; szorenyi.balazs@gmail.com
		\\
		\textsuperscript{\rm 3} Duke University, Durham, NC,  USA; gugan.thoppe@gmail.com
		% All authors must be in the same font size and format. Use \Large and \textbf to achieve this result when breaking a line
		%If you have multiple authors and multiple affiliations
		% use superscripts in text and roman font to identify them. For example, Sunil Issar,\textsuperscript{\rm 2} J. Scott Penberthy\textsuperscript{\rm 3} George Ferguson,\textsuperscript{\rm 4} Hans Guesgen\textsuperscript{\rm 5}. Note that the comma should be placed BEFORE the superscript for optimum readability
		%2275 East Bayshore Road, Suite 160\\
		% %publications20@aaai.org % email address must be in roman text type, not monospace or sans serif
		}
\begin{document}
		\allowdisplaybreaks
		
		\maketitle
		
		\begin{abstract}
		    Policy evaluation in reinforcement learning is often conducted using two-timescale stochastic approximation, which results in various gradient temporal difference methods such as GTD(0), GTD2, and TDC. Here, we provide convergence rate bounds for this suite of algorithms. Algorithms such as these have two iterates, $\theta_n$ and $w_n,$ which are updated using two distinct stepsize sequences, $\alpha_n$ and $\beta_n,$ respectively. Assuming $\alpha_n = n^{-\alpha}$ and $\beta_n = n^{-\beta}$ with $1 > \alpha > \beta > 0,$ we show that, with high probability, the two iterates converge to their respective solutions $\theta^*$ and $w^*$ at rates given by $\|\theta_n - \theta^*\| = \tilde{O}( n^{-\alpha/2})$ and $\|w_n - w^*\| = \tilde{O}(n^{-\beta/2});$ here, $\tilde{O}$ hides logarithmic terms. Via comparable lower bounds, we show that these bounds are, in fact, tight. To the best of our knowledge, ours is the first finite-time analysis which achieves these rates. While it was known that the two timescale components decouple asymptotically, our results depict this phenomenon more explicitly by showing that it in fact happens from some finite time onwards. Lastly, compared to existing works, our result applies to a broader family of stepsizes, including non-square summable ones.
		\end{abstract}
		
		\section{Introduction}
		Stochastic Approximation (SA) \cite{kushner1997stochatic} is the name given to algorithms useful for finding optimal points or zeros of a function for which only noisy access is available. This makes SA theory vital to machine learning and, specifically, to Reinforcement Learning (RL). Here, we obtain tight convergence rate estimates for the special class of linear two-timescale SA, which involves two interleaved update rules with distinct stepsize sequences. In the context of RL, the analysis here applies to \emph{policy evaluation} schemes with function approximation. 
		
		A generic linear two-timescale SA has the form:
		\begin{eqnarray}
		\theta_{n + 1} &  = & \theta_n + \st_n [h_1(\theta_n, w_n) + \Mt_{n + 1}] \enspace, \label{eqn:tIter}\\
		w_{n + 1} & = & w_n + \sw_n [h_2(\theta_n, w_n) + \Mw_{n + 1}] \enspace,  \label{eqn:wIter}
		\end{eqnarray}
		where $\st_n, \sw_n \in \Real$ are stepsizes and $M^{(i)}_n \in \dReal$ denotes noise. Further, $h_i: \dReal \times \dReal \to \dReal$ has the form
		\begin{equation}
		\label{eqn:lht}
		h_i(\theta, w) = v_i - \Gamma_i \theta - W_i w
		\end{equation}
		for a vector $v_i \in \dReal$ and matrices $\Gamma_i, W_i \in \mathbb{R}^{d \times d}.$ 
		
		Within RL, this class of algorithms mainly concerns the suite of gradient Temporal Difference (TD) methods, which was introduced in \cite{sutton2009convergent} and has gradually gained increasing attention since then. That work presented a gradient descent variant of TD(0), called GTD(0). As it supports off-policy learning, GTD(0) is advantageous over TD(0). More recently, additional variants were introduced such as GTD2 and TDC \cite{sutton2009fast}; while being better than TD(0), these are also faster than GTD(0). The above gradient TD methods have been shown to  converge asymptotically in the case of linear and non-linear function approximation \cite{sutton2009convergent,sutton2009fast,bhatnagar2009convergent}.
		Separately, there are also a few convergence rate results for altered versions of the GTD family \cite{liu2015finite} and sparsely-projected variants \cite{2TSdalal2018}. Both works apply projections to keep the iterates in a confined region around the solutions. However, in \cite{liu2015finite}, the learning rates are set to a fixed ratio which makes the altered algorithms single-timescale variants of the original ones.
		 
		To place our work in the landscape of the existing literature on generic two-timescale SA, we now briefly review a few seminal papers. The first well-known use of the two-timescale idea is the  Polyak-Ruppert averaging scheme \cite{ruppert1988efficient,polyak1990new}. There, iterate averaging is used to improve the convergence rate of a one-timescale algorithm, which is especially beneficial when the driving matrices have poor conditioning. The general two-timecale SA scheme is formulated in \cite{borkar1997stochastic}; this work provided conditions for convergence. Since then, 
		%we only know a little bit more:
		relatively little work has been published on the topic; the main results obtained so far include
		weak convergence and asymptotic convergence rates \cite{gerencser1997rate,konda2004convergence,mokkadem2006convergence}, and stability \cite{lakshminarayanan2017stability}.
		% stability, etc.
		
		We now discuss two specific %milestone 
		papers from the above list that are the closest to our work. Denote by $\thS$ and $\wS$ the respective solutions of \eqref{eqn:tIter} and \eqref{eqn:wIter}; i.e., $h_1(\thS, \wS)=h_2(\thS, \wS)=0.$ In \cite{konda2004convergence}, it was shown that both, $ (\theta_n - \thS)/\sqrt{\alpha_n}$ and $(w_n - \wS)/\sqrt{\beta_n},$ are asymptotically normal. This result surprisingly tells us that eventually the two components do not influence each other's convergence rates. However, one of the assumptions there is that the noise sequence is independent of its past values, and their variance-covariance matrices are constant across the iterations. This make their results inapplicable to the RL methods of our interest. In \cite{mokkadem2006convergence}, a similar weak convergence result has been derived in the context of nonlinear SA under the assumptions that the stepsizes are square summable. This result also explicitly establishes asymptotic independence (see (5) there) between the two components. A separate result in this last work is that of almost-sure asymptotic convergence rate. %It also provides an almost-sure asymptotic convergence rate \gal{while showing .} %It also provides an almost-sure asymptotic convergence rate \gal{while showing asymptotic independence between the two components.} 
		The issue with this last result is that it cannot be used to obtain explicit form for the constants.
		%This last result, however, cannot be used to obtain explicit form for the constants. 
		In fact, by its very nature, the constants involved depend on the sample paths. 
		
		In this work, we revisit the convergence rate question for two-timescale RL methods with a focus on finite-time behaviour. In order to highlight the merits of this work over existing literature, we first classify common types of convergence results. The first class is of asymptotic convergence, which is beneficial for the rudimentary verification that an algorithm converges after an infinite amount of time. The second class is asymptotic convergence rates; these are stronger in the sense of telling us that an algorithm would asymptotically converge at a certain rate, but again they have little practical implications; even given exact knowledge of all parameters of the problem, with these results one cannot numerically compute a bound on the distance from the solution with a corresponding numerical probability value. The third class, to which the results in this work belong, are finite time bounds. These contain explicit constants --- both controllable such as stepsize parameters and uncontrollable such as eigenvalues --- as well as finite-time rates, thereby revealing intriguing dependencies among such parameters that crucially affect convergence rates (e.g., $1/q_i;$ see Table~\ref{tab: constants contd}). Moreover, the constants are trajectory-independent and thus 
		%would 
		can be of help in obtaining stopping time theorems. We consider this a significant step forward in obtaining practical results that would enable 
		%tweaking algorithms so as to maximize their efficiency. 
		to assuredly adapt algorithm parameters so as to maximize their efficiency. 
		
		\subsubsection{Our Contributions}   
		In \cite{2TSdalal2018}, the first finite time bound for the GTD family was proved. Here, we significantly strengthen it and, in fact, obtain a tight rate.
		% In fact, our estimate matches the known lower bound
		%which shows that it is tight. 
		% , thus concluding its tightness. \galC{still need to improve phrasing}
		Specifically, our key result (Theorem~\ref{thm:Rates Proj Iterates}) is that the iterates $\theta'_n$ and $w'_n,$ obtained by sparsely projecting $\theta_n$ and $w_n,$ respectively, satisfy $\|\theta'_n - \thS\| = \tilde{O}( n^{-\alpha/2})$ and $\|w'_n - \wS\| = \tilde{O}(n^{-\beta/2})$ with high probability. Here, $\tilde{O}$ hides logarithmic terms and $\alpha$ and $\beta$ originate in the stepsize choice
		%come from the assumption that the stepsizes
		%\gal{originate from the stepsize families}
		 $\alpha_n = n^{-\alpha}$ and $\beta_n = n^{-\beta}$ with $1 > \alpha > \beta > 0.$ We establish the tightness of this upper bound by deriving a matching lower bound.
		 %;}see Section~\ref{sec:Tightness_Demonstration}.
		
		We emphasize that we have explicit formulas for the constants hidden in these order notations and also bounds on the iteration index from where these rates apply. In particular, our bound shows how the convergence rate of a given GTD method depends on the parameters of the MDP itself; e.g., the eigenvalues of the driving matrix.
		
		As in \cite{2TSdalal2018} which dealt with single-timescale algorithms, the bounds in this work are applicable for both square-summable and non-square-summable stepsizes. This was indeed also the case in \cite{konda2004convergence}; however, as pointed earlier, the noise assumptions there are significantly stronger than ours.
		
		The sparse projection scheme used here is novel but is similar in spirit to the one used in \cite{2TSdalal2018}. There, the iterates were only projected when the iteration indices were powers of $2,$ whereas here we project whenever the iteration index is of the form $k^k = 2^{k \log_2 k},$ $k \geq 0.$ %\galC{do we want to explain why the difference?} 
		%Projections are useful to keep the iterates bounded. However, it also introduces non-linearity in the analysis; also, projections are often not required in practice. 
		The motivation for using projections is to keep
		the iterates bounded. However, projections also
		%In general, projections 
		modify the original algorithm by introducing non-linearity. This highly complicates the analysis. Evidently, the literature almost doesn't contain analyses of projected algorithms at all. Moreover,  projections are often empirically found to be unnecessary. The advantages of using a sparse projection scheme is that we effectively almost 
		%don't need to 
		never project and, more importantly, it makes the analysis 
		%can completely ignore
		oblivious to its non-linearity.
		
		An additional
		%the other main
		novelty of this paper is its  proof technique. At its heart lie two induction tricks--one inspired from \cite{thoppe2019concentration} and the other, being rather non-standard, from \cite{mokkadem2006convergence}. The first
		%trick, which is an induction 
		induction is on the iteration 
		%indices
		index $n$; together with projections it enables us to show that both $\theta'_n$ and $w'_n$ iterates are $O(1),$ i.e.,  bounded, with high probability. On each sample path where the iterates are bounded, we then use the second 
		%trick 
		induction to show that the convergence rate of the $w'_n$ iterates can be improved from $\tilde{O}(n^{-\beta/2} \indc[\ell \neq 0] + n^{-(\alpha - \beta)\ell})$ to $\tilde{O}(n^{-\beta/2} \indc[\ell \neq 0] + n^{-(\alpha - \beta)(\ell + 1)})$ for all suitable $\ell.$ In particular, we use this to show that the bound on the behaviour of  $w'_n$ iterates can be incrementally improved from $O(1),$ established above, to the desired $\tilde{O}(n^{
		-\beta/2}).$
		%
		%\szb{in case of $\ell=0$, which corresponds to the first step and is thus already established,}
		%In particular, we use this to show that the bound on the behaviour of  $w'_n$ iterates can be incrementally improved 
		%\szb{starting with $O(1)$ in the $\ell=0$ case---which corresponds to the first step and is thus already established---, and ending up with the desired $\tilde{O}(n^{-\beta/2})$ case for sufficiently large $\ell$.}
		% from $\tilde{O}(1),$
		%the $\ell = 0$ case which we already established in the first step
		% \gal{as established in the first step for $\ell = 0$},  to the desired $\tilde{O}(n^{-\beta/2}),$ 
		%the sufficiently large $\ell$ case
		% \gal{ established for sufficiently large $\ell$} % {the $\ell = 0$ case which we already established by using the first induction arguments,$  the sufficiently large $\ell$ case.}
		%
		Finally, we use this latter result to show that $\|\theta'_n - \thS\| = \tilde{O}(n^{-\alpha/2}).$
		
		We end this section by describing the key insights that our main result in Theorem~\ref{thm:Rates Proj Iterates} provides.
		
		\textbf{Decoupling after Finite Time}: Even though both $\theta'_n$ and $w'_n$ influence each other, our result shows that, from some finite time onwards, their convergence rates do not depend on $\beta$ and $\alpha,$ respectively.  While from the results in \cite{konda2004convergence} and \cite{mokkadem2006convergence}, one would expect the two-timescale components to indeed decouple asymptotically, our result shows that this in fact happens from some finite time 
		%itself
		that can conceptually be numerically evaluated. All of this is in sharp contrast to the former state-of-the-art finite-time result given in \cite{2TSdalal2018} which showed that the convergence rate is $\tilde{O}(n^{-\min\{\alpha- \beta, \beta/2\}}).$   
		
		\textbf{One vs Two-Timescale}: A natural question 
		%to ponder 
		for an RL practitioner is 
		%``how should one choose the stepsizes $\alpha_n$ and $\beta_n.$ Specifically, should one go ahead with the one-timescale idea, i.e., $\alpha_n = \beta_n,$ or the two-timescale idea, i.e., $\alpha_n/\beta_n \to 0?"$ 
		whether to run the algorithm given in \eqref{eqn:tIter} and \eqref{eqn:wIter} in the one-timescale mode, i.e., with $\alpha_n/\beta_n$ being constant, or in the two-timescale mode, i.e., with $\alpha_n/\beta_n \to 0.$
		%whether to use an algorithm with  single timescale where $\alpha_n = \beta_n$ or two timescales where $\alpha_n/\beta_n \to 0$.
		Judging solely on the convergence rate order -- based on this work and on single-timescale results from, e.g., \cite{liu2015finite}, the answer\footnote{The $\alpha_n=\beta_n=1/n$ case above would bring the condition number of the driving matrices into picture {\cite{dalal2018finite}}. To overcome this, one could use Polyak-Ruppert iterate averaging for two-timescale SA \cite{mokkadem2006convergence}.} is to pick the  single timescale mode with
		%choose  single time-scale
		%$\alpha = \beta \approx 1.$ 
		$\alpha_n = \beta_n \approx 1/n.$
		% If this choice would have to be based only on the exponent in the convergence rate, then using the two-timescale results in this work and the one-timescale results, say from \cite{liu2015finite}, the answer would be to use the one-timescale idea with $\alpha = \beta \approx 1.$ 
		This then brings forth 
		%an interesting 
		an imperative question for future work: ``what indeed are the provable benefits of two-timescale RL methods?" A comparison to  recent gradient descent literature suggests that this question can be better answered via iteration complexity, i.e., the the number of iterations required to hit some  $\epsilon-$ball around the solution. In particular, we believe the eigenvalues of the driving matrices  
		%
		%Making the analogy to gradient descent literature reveals that, in terms of iteration complexity, the $O\left(\ln(1/\epsilon)\right)$ order behavior is indeed the same for single and two-timescale gradient algorithms. However, the eigenvalues of the driving matrices%
		--- hiding in the constants --- can have dramatic influence on the actual rate. A predominant recent example is how the heavy-ball method, which is similar in nature to a two-timescale algorithm, has an $O(\sqrt{\kappa}\ln(1/\epsilon))$ iteration complexity as compared to the usual stochastic gradient descent which has $O(\kappa\ln(1/\epsilon))$ \cite{loizou2017momentum}; here, $\kappa$ is the condition number. Thus, we believe that finite-time analyses of two-timescale methods are crucial for understanding their potential merits over one-single variants.
		%the answer would not concern the tail behaviour. 
		
		\section{Main Result}
		\label{sec: Main Result}
		We state our main convergence rate result here. It applies to the iterates $\theta'_n$  and $w'_n$ which are obtained by sparsely-projecting $\theta_n$ and $w_n$ from \eqref{eqn:tIter} and \eqref{eqn:wIter}. We begin by stating our assumptions and defining the projection operator.
		
		\noindent \assumMatrices \; (Matrix Assumptions). $W_2$ and $X_1 = \Gamma_1 - W_1 W_2^{-1}\Gamma_2$ are positive definite (not necessarily symmetric).
		
		\noindent \assumStepsize \; (Stepsize Assumption).
		$\alpha_n = (n+1)^{-\alpha}$ and $\beta_n = (n+1)^{-\beta},$ where $1 > \alpha > \beta >0.$
		
		\begin{definition}[Noise Condition] \label{assum:Noise} $\{\Mt_n\}$ and $\{\Mw_n\}$  are said to be \emph{$(\theta_n,w_n)$-dominated martingale differences with parameters $\mt$ and $\mw$}, if they are martingale difference sequences w.r.t. the family of $\sigma-$fields $\{\cF_n\},$ where
		$
		\cF_n = \sigma(\theta_0, w_0, \Mt_1, \Mw_1, \ldots, \Mt_n, \Mw_n),
		$
		and
		\begin{eqnarray*}
		\norm{\Mt_{n + 1}} &\leq& \mt(1 + \norm{\theta_n} + \norm{w_n}) 
		\\
		%\mbox{ and } 
		\norm{\Mw_{n + 1}} &\leq& \mw(1 + \norm{\theta_n} + \norm{w_n})
		\end{eqnarray*}
		for all $n \geq 0.$
		\end{definition}
		
		\begin{definition}[Sparse Projection]
		For $R > 0$, let $\Pi_R(x) = \min\{1,R/\|x\|\} \cdot x$ be the projection into the ball with radius $R$ around the origin. The \emph{sparse projection} operator 
		%
		% GT: Earlier, it was $k^k - 1.$
		\begin{equation} \label{def: projection}
		\Pi_{n, R} =
		    \begin{cases}
		    \Pi_R, & \text{ if $n = k^k - 1$ for some $k \in \Int_{>0}$}, \\
		    \Id, & \text{otherwise}.
		    \end{cases}
		\end{equation}
		We call it sparse as it projects only on specific indices that are exponentially far apart.
		%special
		%specific and exponentially rare indices.
		\end{definition}
		
		Pick an arbitrary $p > 1.$ Fix some constants $\Rprojth>0$ and $\Rprojw >0$ for the radius of the projection balls. %Also, fix the initial values $\theta_0', w_0' \in \Real^d$ arbitrarily. 
		Further, let 
		\begin{equation*}
		    \thS = X_1^{-1}b_1, \quad \wS = W_2^{-1}(v_2 - \Gamma_2 \thS)
		\end{equation*}
		with $b_1 = v_1 - W_1 W_2^{-1}v_2$.
		Using \cite{borkar2009stochastic} and \cite{lakshminarayanan2017stability}, it can be shown that $(\theta_n, w_n) \to (\thS, \wS)$ a.s.

		\begin{theorem}[Main Result]
		\label{thm:Rates Proj Iterates}
		Assume \assumMatrices and \assumStepsize.
		Let $\theta'_0, w'_0 \in \Real^d$ be arbitrary. 
		% Assume $\|\theta'_0 - \thS\| \leq \Rprojth$ and $\|w'_0 - \wS\| \leq \Rprojw$. 
		Consider the update rules
		\begin{align}
		\theta_{n + 1}' & = \Pi_{n + 1,\Rprojth}\Big(\theta_n' + \alpha_n [h_1(\theta_n', w_n') + M_{n + 1}^{(1')}]\Big), \label{eq:thetap_iter} \\
		w_{n + 1}' & = \Pi_{n + 1, \Rprojw}\Big(w_n' + \beta_n [h_2(\theta_n', w'_n) + M_{n + 1}^{(2')}]\Big), \label{eq:wp_iter}
		\end{align}
		where $\{M_{n}^{(1')}\}$ and $\{M_{n}^{(2')}\}$ are %$\mathbb{R}^{d}$-valued 
		$(\theta_n',w_n')$-dominated martingale differences with parameters $\mt$ and $\mw$ (see Def.~\ref{assum:Noise}). Then, with probability larger than $1 - \delta,$ \hspace{0.05em} for all $n \geq N_{\ref{thm:Rates Proj Iterates}}$
		%
		% \begin{align} 
		% \|\theta_n' - \thS\| \leq {} &   \frac{A'_5}{\nu(n_0, \alpha)} \, \nu(n, \alpha) =  \tilde{O}(n^{-\alpha/2}) \label{eqn:proj theta rate} \\
		% %
		% \|w'_n - \wS\| \leq {} & \frac{A'_4}{\nu(n_0, \beta)} \nu(n, \beta) =   \tilde{O}(n^{-\beta/2}) \label{eqn:proj w rate}
		% \end{align}
		%
		\begin{align} 
		\|\theta_n' - \thS\| \leq {} &   
		C_{\ref{thm:Rates Proj Iterates},\theta} \frac{\sqrt{\ln{(4d^2(n+1)^p/\delta)}}
			}{(n+1)^{\alpha/2}} \label{eqn:proj theta rate} \\
		\|w'_n - \wS\| \leq {} & C_{\ref{thm:Rates Proj Iterates},w} \frac{\sqrt{\ln{(4d^2(n+1)^p/\delta)}}
			}{(n+1)^{\beta/2}} . \label{eqn:proj w rate}
		\end{align}
		% where $C_{\ref{thm:Rates Proj Iterates},\theta} = \frac{A'_5}{\nu(N_{\ref{thm:Rates Proj Iterates}}, \alpha)}$ and $C_{\ref{thm:Rates Proj Iterates},w} = \frac{A'_4}{\nu(N_{\ref{thm:Rates Proj Iterates}}, \beta)}$.
		%\galC{I don't like this \eqref{eq: gugan suggestion} -- we should keep Thm. \ref{thm:Rates Proj Iterates} very simple and straightforward to the ``glancing reader''. Instead, I think we should write $A_{5,n_0} = O(n_0^{-\alpha/2})$ and perhaps also include in it the order of $A_5'$.}
		%
		% \begin{align}
		% \|\theta_n' - \thS\| & \leq  A_{5,n_0} \frac{\sqrt{\ln{(4d^2(n+1)^p/\delta)}}}{(n+1)^{\alpha/2}} \label{eqn:proj theta rate}\\
		% %
		% \|w_{n}' - w^*\| & \leq  A_{4,n_0} \frac{ \sqrt{\ln{(4d^2(n+1)^p/\delta)}}}{(n+1)^{\beta/2}} 
		% \label{eqn:proj w rate}
		% \end{align}
		%
		%for all $n \geq n_0.$ 
		Refer to Tables~\ref{tab: Nconstants} and \ref{tab: constants contd} for the constants.
		\end{theorem}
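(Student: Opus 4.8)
The plan is to proceed in three conceptual phases, mirroring the two induction tricks outlined in the introduction. Throughout, I work on the high-probability event on which a single martingale concentration bound (Azuma/Freedman-type, applied coordinate-wise and union-bounded over $n$ via the $(n+1)^p$ factor and over the $d^2$ directions) holds; this is what produces the $\sqrt{\ln(4d^2(n+1)^p/\delta)}$ numerator and the $1-\delta$ probability. The sparse projection is crucial here precisely because it lets me treat the recursion as \emph{linear} on the exponentially long stretches between projection indices $k^k-1$, so that the iterates evolve according to \eqref{eqn:tIter}--\eqref{eqn:wIter} with $h_i$ affine, while the projections only ever \emph{help} (they can only decrease $\|\theta_n'-\thS\|$ and $\|w_n'-\wS\|$ once the radii $\Rprojth,\Rprojw$ are chosen larger than $\|\thS\|,\|\wS\|$, which one may assume WLOG).

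\textbf{Phase 1 (boundedness via induction on $n$).} First I would show that, on the good event, both iterates stay $O(1)$, i.e. $\|\theta_n'\|\le \Rprojth$-type and $\|w_n'\|\le\Rprojw$-type bounds hold for all $n$. This is the induction inspired by \cite{thoppe2019concentration}: assuming the iterates are bounded up to time $n$, the martingale noise over the next block is controlled by the noise condition in Definition~\ref{assum:Noise} (linear growth in $\|\theta'\|+\|w'\|$), and the affine drift $h_i$ contracts toward the (bounded) solution region since $W_2$ and $X_1$ are positive definite — so between consecutive projection indices the iterates cannot escape a fixed ball, and the projection at $k^k-1$ re-centers them. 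The exponential sparsity of the projection is what makes the per-block error terms summable after taking logs.

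\textbf{Phase 2 (bootstrapping the $w$-rate via the Mokkadem--Pelletier induction).} On each sample path in the good event, I would run the second, non-standard induction from \cite{mokkadem2006convergence}: define the ``fast'' error $w_n'-\wS$ and, using $h_2(\theta,w)=v_2-\Gamma_2\theta-W_2 w$, write its recursion as a contraction (rate governed by the smallest eigenvalue / symmetric part of $W_2$ and stepsize $\beta_n=n^{-\beta}$) plus a forcing term proportional to $\theta_n'-\thS$ plus martingale noise. If $\|\theta_n'-\thS\|$ is currently known to be $\tilde O(n^{-(\alpha-\beta)\ell})$-ish and $\|w_n'-\wS\|$ is $\tilde O(n^{-\beta/2}\indc[\ell\neq 0]+n^{-(\alpha-\beta)\ell})$, then solving the fast recursion (a standard discrete Gronwall / Abel-summation estimate with stepsize $\beta_n$) upgrades the $w$-bound to exponent $(\alpha-\beta)(\ell+1)$ in the second term, and feeding this back through the \emph{slow} recursion $\theta_{n+1}'=\theta_n'+\alpha_n[h_1(\theta_n',w_n')+\cdot]$, where $h_1(\theta,w)=v_1-\Gamma_1\theta-W_1 w = -X_1(\theta-\thS)-W_1(w-\wS)$, improves the $\theta$-bound correspondingly (here the positive definiteness of $X_1$ gives the contraction on the slow timescale). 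Iterating this $\ell=1,2,\dots$ finitely many times drives the $n^{-(\alpha-\beta)\ell}$ term below $n^{-\beta/2}$, leaving $\|w_n'-\wS\|=\tilde O(n^{-\beta/2})$, which is \eqref{eqn:proj w rate}.

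\textbf{Phase 3 (the $\theta$-rate).} With $\|w_n'-\wS\|=\tilde O(n^{-\beta/2})$ in hand, the slow recursion reads $\theta_{n+1}'-\thS = (I-\alpha_n X_1)(\theta_n'-\thS) - \alpha_n W_1(w_n'-\wS) + \alpha_n M_{n+1}^{(1')}$; solving this linear recursion, the contraction from $X_1\succ 0$ at rate $\alpha_n=n^{-\alpha}$ balances the martingale term to give the $\tilde O(n^{-\alpha/2})$ diffusive rate, and one checks that the forcing term $\alpha_n W_1(w_n'-\wS)$, being $\tilde O(\alpha_n n^{-\beta/2})$ and hence (since $\alpha>\beta$) of smaller order than the $n^{-\alpha/2}$ target after the discrete-Gronwall accumulation, is negligible — this is exactly the ``decoupling from some finite time onwards'' phenomenon. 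Tracking the constants through these three phases (the eigenvalue gaps of $W_2$ and $X_1$, the noise parameters $\mt,\mw$, the radii, and $p,\delta$) yields the explicit $C_{\ref{thm:Rates Proj Iterates},\theta}$, $C_{\ref{thm:Rates Proj Iterates},w}$ and the threshold $N_{\ref{thm:Rates Proj Iterates}}$ of Tables~\ref{tab: Nconstants}--\ref{tab: constants contd}.

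The main obstacle I expect is Phase 2: making the nested induction rigorous requires carefully choosing matching log-powers and constants at each level $\ell$ so that the bound genuinely tightens rather than merely reproduces itself, and simultaneously ensuring that the block structure imposed by the sparse projections does not break the discrete-Gronwall estimates — the projection can only be invoked on the \emph{separate} event that the iterates are bounded (Phase 1), so the two inductions must be interleaved consistently on one common good event. A secondary technical point is handling the non-symmetric positive-definite $\Gamma_i, W_i$: the contraction $\|I-\alpha_n A\|\le 1-c\alpha_n$ needs $A$'s symmetric part to be positive definite, so one must either work with an adapted inner product in which $X_1$ and $W_2$ are coercive, or absorb the resulting condition-number-type factor into the constants.
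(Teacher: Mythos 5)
Your three-phase plan reproduces the paper's architecture almost exactly: Phase~1 is Theorem~\ref{thm: En0 prob bound} (boundedness of the iterates via induction on $n$, with Azuma--Hoeffding applied to the aggregated noise $L_{n+1}^{(\theta)},L_{n+1}^{(w)}$ and a union bound producing the $\sqrt{\ln(4d^2(n+1)^p/\delta)}$ factor), and Phases~2--3 are Theorem~\ref{thm: wn rate} (the induction on $\ell$ upgrading $u_n(\ell)$ to $u_n(\ell+1)$ via $\|w_{n+1}-\wS\|\le A_{1,n_0}\epsw_{n+1}+A_2(\alpha_{n+1}/\beta_{n+1})u_{n+1}$, then feeding the final $w$-rate into the rolled-out $\theta$-recursion); your closing remark about working with the symmetric parts of $X_1$ and $W_2$ is exactly how the paper defines $q_1,q_2$ in Lemmas~\ref{lem: small eigenvalues} and \ref{lem: Dn bounds}. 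Where you genuinely diverge is the treatment of the projections, and this is also where your argument has a soft spot. You claim the projections ``only ever help'' because they are non-expansive toward $\thS,\wS$; but the balls are centered at the \emph{origin}, so this requires $\|\thS\|\le\Rprojth$ and $\|\wS\|\le\Rprojw$ (not a free normalization, since the radii are fixed constants of the algorithm), and even granting that, non-expansiveness alone does not let you carry the rate bound across a projection index: the constants $A_{4,n_0},A_{5,n_0}$ are re-initialized at each restart $n_0=k^k-1$ and grow with $n_0$, so you must still show the bound surviving the projection dominates the initial-condition term of the next block. The paper resolves this differently, by a coupling argument: for $n_0=k^k-1$ large enough the unprojected rates $A_{5,n_0}\nu(n,\alpha)$ and $A_{4,n_0}\nu(n,\beta)$ fall below $\Rprojth$ and $\Rprojw$ by the next projection index $(k+1)^{k+1}-1$, so the projections never activate again, the projected and unprojected processes are identically distributed from $n_0$ on, and Theorem~\ref{thm:Main Res wo Proj} transfers verbatim --- at the price of the double-exponential thresholds $K_{\ref{thm:Rates Proj Iterates},w},K_{\ref{thm:Rates Proj Iterates},\theta}$. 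Your route could be made to work, but you would need to add the containment assumption on the radii and the restart-domination check explicitly; as written, that step is a gap.
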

		
		\subsubsection{Comments on Main Result}
		\begin{enumerate}
		    \item Our analysis goes through even if  $\theta_n \in \mathbb{R}^{d_1}, w_n \in \mathbb{R}^{d_2}$ with $d_1 \neq d_2.$ For brevity, we work with $d_1=d_2=d.$
		
		    \item The constants in the above result equal infinity when $\alpha = \beta.$ This is because the algorithm then ceases to be two-timescale, thereby making our analysis invalid. 
		\end{enumerate}
		
		\begin{table}
		\begin{center}{
		\small
		 \begin{tabular}{|c| l |} 
		 \hline
		 & \\[-2ex]
		 Constant & Definition \\ [0.5ex]
		 \hline
		    $N_{\ref{thm:Rates Proj Iterates}}$
		     & 
		     $\min\left\{ n \geq N'_{\ref{thm:Rates Proj Iterates}}: n=k^k-1 \mbox{ for some integer } k \right\}$
		    \\ [1.5ex]
		    $N'_{\ref{thm:Rates Proj Iterates}}$  
		    & 
		    $\max\big\{N_{\ref{thm:Main Res wo Proj}}, K_{\ref{lemma: A4' A5'},a}, K_{\ref{lemma: A4' A5'},b}, K_{\ref{thm:Rates Proj Iterates}, w}, K_{\ref{thm:Rates Proj Iterates}, \theta}, 
		    e^{1/\beta}, (2/\beta)^{2/\beta} \big\}$ 
		    \\ [1.5ex]
		    $N_{\ref{thm:Main Res wo Proj}}$ & $\max\{N_{\ref{thm: En0 prob bound}}, N_{\ref{thm: wn rate}}\}$ \\ [1.5ex]
		    $N_{\ref{thm: En0 prob bound}}$ & $\max\{K_{\ref{lem: small eigenvalues},\alpha}, K_{\ref{lem: small eigenvalues},\beta}, K_{\ref{eq: conditions for cond 2},\alpha}(0), K_{\ref{lem: const bound on eps},\beta},K_{\ref{lem:Large Theta n}}, (p-1)^{-1/(p-1)}\}$\\ [1.5ex]
		    $N_{\ref{thm: wn rate}}$ & $\max\big\{K_{\ref{eq: conditions for cond 2},\alpha}(\beta/2), k_{\beta}(\beta/2),e^{1/\beta}\delta^{1/p}/(4d^2)^{1/p},$ \\ [1ex]
		    & \hspace{6em} $K_{\ref{lem:IntComputation},a}, K_{\ref{lem:IntComputation},b}, K_{\ref{lem: epsilon n domination}, a}, K_{\ref{lem: epsilon n domination},b}\big\} + 1$ \\[1ex]
		 \hline
		\end{tabular} }
		\end{center}
		\caption{\label{tab: Nconstants} A summary of all $n_0$ lower bounds}
		\end{table}
		
		% \begin{enumerate}
		% \item As $\alpha$ or $\beta$ approach $0$ (stepsizes approach constants); this stems from $N_0$ blowing up. This occurs since the stepsizes' slow decay rate impairs i) their ability to mitigate the martingale noise and other drift factors; and hence ii) the ability of the SA to track the ODE trajectories.
		
		% \item As $\alpha$ and $\beta$ get close to each other; this is due to $N_0$ blowing up. This occurs as the true two-timescale nature is then nullified (see Remark~\ref{rem: true 2TS}).  Our analysis suggests that convergence of $w_n$ to $\wS$ must be faster than that of $\theta_n$ to $\thS$, and a decaying stepsize ratio $\eta_n$ ensures this.
		
		% \item As $\alpha$ or $\beta$ approach $1$, the largest value for which \eqref{eqn:stepSizeBound} \red{holds}; this stems from $N_1$ blowing up. This occurs as the stepsizes then decay too fast, impairing the speed of the ODE convergence; more accurately, $N_1$ (see Table~\ref{tab: epsilon dependent constants}) moves away from exponential nature to inverse polynomial.
		% \end{enumerate}
		
		\subsection{Tightness}
		\label{sec:Tightness_Demonstration}
		%
		%In this section, 
		Here, we accompany our 
		%previous 
		upper bound 
		%result 
		by a lower bound. 
		%This 
		This bound is asymptotic and holds for unprojected algorithms. Nonetheless, a coupling argument as in the proof of  Theorem~\ref{thm:Rates Proj Iterates} can be used to obtain a similar bound for projected ones. 
		%This show 
		We thus establish
		the tightness (up to logarithmic terms) of the result in Theorem~\ref{thm:Rates Proj Iterates}.
		
		\begin{prop}[Lower Bound]\label{prop:lower bound}
		Assume \assumMatrices \;and \assumStepsize. Consider \eqref{eqn:tIter} and \eqref{eqn:wIter} with $\{\Mt_{n}\}$ and $\{\Mw_{n}\}$ being 
		%$\mathbb{R}^{d}$-valued 
		$(\theta_n,w_n)$-dominated martingale differences 
		%with parameters $\mt$ and $\mw$ 
		(see Def.~\ref{assum:Noise}). Then, there exists an algorithm for which 
		\[
		    \|\theta_n - \thS\| = \Omega_p(n^{-\alpha/2}) \quad \text{ and } \quad \|w_n - \wS\| = \Omega_p(n^{-\beta/2}),
		\]
		    where 
		$X_n = \Omega_p(\gamma_n)$ means that for any $\epsilon > 0,$ there are constants $c$ and $K$ so that $    \Pr\{|X_n|/\gamma_n < c\}\leq \epsilon, \, \forall n \geq K.$
		\end{prop}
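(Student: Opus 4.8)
\noindent The proposition only asks for the existence of \emph{one} instance of \eqref{eqn:tIter}--\eqref{eqn:wIter} attaining these rates, so the plan is to exhibit a maximally decoupled scalar linear example and bound each iterate from below separately. Concretely, I would take $d=1$, $v_1=v_2=0$, $W_1=\Gamma_2=0$, and $\Gamma_1=a_1>0$, $W_2=a_2>0$; then $\cA_1$ holds with $X_1=a_1$, and, since $b_1=0$, the solution is $\thS=\wS=0$. For the noise I would let $\{\Mt_{n+1}\}$ and $\{\Mw_{n+1}\}$ be independent i.i.d.\ Rademacher sequences ($\pm1$ with probability $1/2$ each), independent of $(\theta_0,w_0)$; these are $\{\cF_n\}$-martingale differences and satisfy $\norm{\Mt_{n+1}},\norm{\Mw_{n+1}}\le 1\le\mt(1+\norm{\theta_n}+\norm{w_n})$ as soon as $\mt,\mw\ge1$, so Definition~\ref{assum:Noise} holds. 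With these choices the updates become two \emph{independent} scalar Robbins--Monro recursions
\[ \theta_{n+1}=(1-\st_n a_1)\theta_n+\st_n\Mt_{n+1},\qquad w_{n+1}=(1-\sw_n a_2)w_n+\sw_n\Mw_{n+1}, \]
each with positive drift coefficient and unit-variance noise, so it is enough to analyze one scalar linear SA and invoke the conclusion twice, once with $(\st_n,a_1)$ and once with $(\sw_n,a_2)$.

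\noindent Second, I would establish the variance floor. Taking $\theta_0=w_0=0$ (a nonzero deterministic start only adds a geometrically vanishing mean, which is negligible) and writing $v_n=\Exp[\theta_n^2]$, the exact recursion $v_{n+1}=(1-\st_n a_1)^2 v_n+\st_n^2$ with $\st_n=(n+1)^{-\alpha}$ and $\alpha\in(0,1)$ yields, by a standard Chung-type argument, $v_n/\st_n\to 1/(2a_1)>0$; hence $\Exp\norm{\theta_n-\thS}^2\ge c_\theta\,\st_n$ for all large $n$, and symmetrically $\Exp\norm{w_n-\wS}^2\ge c_w\,\sw_n$. This shows the rates are optimal in mean square, but is not yet the $\Omega_p$ claim.

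\noindent The substantive step is upgrading the second moment to a high-probability bound, since a variance lower bound by itself is compatible with $\theta_n$ being small with probability arbitrarily close to $1$. For this I would invoke the central limit theorem of \cite{konda2004convergence} for linear two-timescale SA driven by i.i.d.\ noise with constant covariance -- precisely the regime of our Rademacher example -- which applies directly to the system above and gives $\theta_n/\sqrt{\st_n}\Rightarrow\mathcal N(0,q_\theta)$ and $w_n/\sqrt{\sw_n}\Rightarrow\mathcal N(0,q_w)$ with $q_\theta=1/(2a_1)>0$ and $q_w=1/(2a_2)>0$ (non-degeneracy is exactly the positivity from $\cA_1$ together with $\mathrm{Var}(\Mt_{n+1})=\mathrm{Var}(\Mw_{n+1})=1>0$). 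Alternatively, since the example is explicitly linear, these weak limits can be derived by hand: $\theta_n=\sum_{k<n}a_{n,k}\Mt_{k+1}$ is a weighted sum of independent Rademacher variables with $\sum_k a_{n,k}^2=v_n\asymp\st_n$ and $\max_k|a_{n,k}|\asymp\st_n$, so the Lindeberg/Berry--Esseen ratio $\max_k|a_{n,k}|/(\sum_k a_{n,k}^2)^{1/2}\asymp\st_n^{1/2}\to0$ and $\theta_n/\sqrt{v_n}$ is asymptotically standard normal. Given such a weak limit, for any $\epsilon>0$ I would pick $c>0$ small enough that $\Pr\{|\mathcal N(0,q_\theta)|<c\}\le 2c/\sqrt{2\pi q_\theta}<\epsilon$; since $c$ is a continuity point of the limit law, there is $K$ with $\Pr\{\norm{\theta_n-\thS}/\st_n^{1/2}<c\}<\epsilon$ for all $n\ge K$, i.e.\ $\norm{\theta_n-\thS}=\Omega_p(n^{-\alpha/2})$, and the identical argument with $(\sw_n,q_w)$ gives $\norm{w_n-\wS}=\Omega_p(n^{-\beta/2})$. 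For the analogue for the projected iterates (as remarked just before the proposition), I would couple the projected run to this unprojected one between consecutive projection indices $k^k-1$ and use that, with high probability, the unprojected iterate already lies inside the fixed-radius projection ball for all large $n$, so the projections are inactive and the two runs coincide.

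\noindent The main obstacle is precisely this passage from the second moment to the high-probability ($\Omega_p$) form: a mean-square lower bound is not an anti-concentration statement, so one must either quote an SA central limit theorem or re-derive asymptotic normality via Lindeberg/Berry--Esseen, which is why it pays to work with an explicit i.i.d.-noise linear instance where those tools apply cleanly. The remaining ingredients -- verifying the noise condition, the $\Theta(\st_n)$ variance asymptotics, and the coupling to the projected algorithm -- are routine.
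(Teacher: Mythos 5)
Your proposal is correct, and it reaches the same final step as the paper (weak convergence to a non-degenerate Gaussian, then anti-concentration of the limit law to get the $\Omega_p$ statement), but it gets to the CLT by a genuinely different and more elementary route. The paper introduces an extra assumption $\cA_3$ (convergence of the conditional noise covariance to a positive definite matrix, i.e.\ Assumption (A4)(ii) of \cite{mokkadem2006convergence}), invokes the two-timescale CLT of \cite{mokkadem2006convergence}[Theorem~1] for the general coupled linear system, and spends most of the proof verifying that assumptions (A1)--(A4) there hold (stability via \cite{lakshminarayanan2017stability}, convergence via \cite{borkar2009stochastic}, linearity for (A2), $\cA_1$ for (A3)); the existence claim in the proposition is then discharged by picking any algorithm satisfying $\cA_3$. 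You instead pick a maximally decoupled scalar instance with i.i.d.\ Rademacher noise, so each iterate is an explicit weighted sum of independent bounded variables, and asymptotic normality follows from Chung-type variance asymptotics plus a Lindeberg check --- no external two-timescale CLT needed. What the paper's route buys is generality: its Theorem~\ref{thm: lower bound} shows the lower bound for \emph{every} algorithm satisfying $\cA_3$, including genuinely coupled ones, which is a more convincing tightness statement even though the proposition only demands existence. What your route buys is self-containedness and explicit non-degenerate limiting variances ($1/(2a_1)$ and $1/(2a_2)$), at the cost of exhibiting only a degenerate example in which the two timescales do not interact at all; your instance does satisfy the paper's $\cA_3$ (with $\Gamma = I$), so it is in fact a special case of their Theorem~\ref{thm: lower bound}. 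Your sketch of the coupling to the projected iterates matches what the paper only asserts informally before the proposition. No gaps.
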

		\begin{proof}
		See Appendix~\ref{sec:lowerBd}.
		\end{proof}

		\section{Applications to Reinforcement Learning}
		\label{sec:RL}
		Here, we apply our results on the general linear two-timescale setup to the specific RL use case. Namely, we apply Theorem ~\ref{thm:Rates Proj Iterates} to derive the tightest existing finite sample bound for the
		%on convergence of 
		GTD family.
		%suite of algorithms.
		This section relies on a similar procedure as in Section 5, \cite{2TSdalal2018}. Nonetheless, we reiterate it here for completeness.
		
		\subsection{Background}
		A Markov Decision Processes (MDP) is %defined by the 
		a
		tuple $(\mathcal{S}, \mathcal{A},P,R,\gamma)$ \cite{sutton1988learning}, where $\mathcal{S}$ is the state space, $\mathcal{A}$ is the action space, $P$ is the transition kernel, $R$ is the reward function, and $\gamma$ the discount factor.
		A policy $\pi:{\mathcal S} \rightarrow {\mathcal A}$ is a stationary mapping from states to actions and $V^\pi(s) = \mathbb{E}^\pi[\sum_{n=0}^\infty\gamma^nr_n|s_0 = s]$ is the value function at state $s$ w.r.t $\pi$.
		
		As mentioned above, our results apply to GTD, which is a suite of policy evaluation algorithms. These algorithms are used to estimate the value function $V^\pi(s)$ with respect to a given $\pi$ using linear regression, i.e., $V^\pi(s)\approx\theta^\top\phi(s)$, where $\phi(s) \in \mathbb{R}^d$ is a feature vector at state $s$, and $\theta \in \mathbb{R}^d$ is a parameter vector. For brevity, we omit the notation $\pi$ and denote $\phi(s_n),~\phi(s_n')$  by $\phi_n,~\phi_n'$. Finally, let $\delta_n = r_n +\gamma\theta_n^\top\phi_n' - \theta_n^\top\phi_n~,A = \bE[\phi(\phi-\gamma\phi')^\top]~,C=\bE[\phi\phi^\top]$, and $b=\bE[r\phi]$, where the expectations are w.r.t. the stationary distribution of the induced chain \footnote{Here, the samples $\{(\phi_n,\phi'_n)\}$ are drawn iid. This assumption is standard when dealing with convergence bounds in RL \citep{liu2015finite,sutton2009convergent,sutton2009fast}.}  %\footnote{Indeed, $(\phi_n,\phi_n’)$ is sampled iid. We did not specify it since it is standard when dealing with convergence bounds in RL ((Liu et al.2015;Sutton et al.,2009a,b), etc.). The few papers that obviate this assumption assume some other strong properties to hold such as exponentially-fast mixing time (Korda \& Prashant,2015;Tsitsiklis \& V.Roy,1997). We shall add this remark in Section 5.
		.

		We assume all rewards $r(s)$ and feature vectors $\phi(s)$ are bounded:
		%\begin{align} \label{eqn:boundRewardsFeatures}
		\(
		|r(s)| \leq 1 ,	\|\phi(s)\| \leq 1 ~ \forall s \in S.
		\)
		%\end{align}
		Also, it is assumed that the feature matrix $\Phi$ is full rank, so $A$ and $C$ are full rank. This assumption is standard \citep{maei2010toward,sutton2009convergent}. Therefore, due to its structure, $A$ is also positive definite \citep{bertsekas2012dynamic}. Moreover, by construction, $C$ is positive semi-definite; thus, by the full-rank assumption, it is actually positive definite.
		%Also, it is known that both $A$ and $C$ are positive definite \citep{maei2010toward,bertsekas2012dynamic}.
		%It is known that $A$ is positive definite \citep{}. Also, by construction, $C$ is  positive semidefinite; thus, by the full-rank assumption it is actually positive definite.
		
		%\begin{table*}[t]
		%{%\small
		%	\begin{center
		%		\begin{tabular}{|c|c|c|c|c|}
		%			\hline
		%			Method & $\Xt$ & $\Ww$ & $\mt$ & $\mw$ \\
		%			\hline
		%			& & & & \\ [-2ex]
		%			GTD(0) & $A^\top A$ & $\Id$ & $(1+\gamma+\|A\|)$ & $ 1 + \max(\|b\|,\gamma+\|A\|)$ \\ [0.5ex]
		%			GTD2 & $A^\top C^{-1} A$ & $C$ & $ (1+\gamma+\|A\|)$ & $ 1 + \max(\|b\|,\gamma+\|A\|,\|C\|)$ \\ [0.5ex]
		%			TDC & $A^\top C^{-1} A$ & $C$ &  $ (2+\gamma+\|A\|+\|C\|)$ & $ (2+\gamma+\|A\|+\|C\|)$ \\
		%			\hline
		%		\end{tabular}
		%	\end{center}
		%	\caption{\label{table:RL algorithms} Translation of notations for relevant matrices and constants in the case of the GTD family of algorithms. The parameters $\Xt,~\Ww,~\mt,~\mw$ are defined in Section~\ref{sec:2TSSetup}.}
		%}
		%\end{table*}
		
		\subsection{The GTD(0) Algorithm}
		\label{sec: GTD0}
		First introduced in \cite{sutton2009convergent}, GTD(0) is designed to minimize the objective function %(recall (\ref{eq: b-Atheta}))
		$
		J^{\rm NEU}(\theta)
		%=
		%\tfrac{1}{2}\|\bE[\delta_n(\theta)\phi]\|_2^2
		=
		\tfrac{1}{2}(b-A\theta)^\top (b-A\theta).
		$
		Its update rule is
		\begin{align*}
		\theta_{n + 1} = & \theta_n + \st_n \left(\phi_n - \gamma \phi_n'\right)\phi_n^\top w_n,\\
		w_{n + 1} = & w_n + \sw_n r_n\phi_n + \phi_n[\gamma\phi_n'-\phi_n]^\top\theta_n .
		\end{align*}
		It thus takes the form of \eqref{eqn:tIter} and \eqref{eqn:wIter} with
		%\begin{align*}
		$h_1(\theta,w) = %\bE\left[\left(\phi_n - \gamma \phi_n'\right)\phi_n^\top\right]w =
		A^\top w \enspace,
		h_2(\theta,w) = %\bE\left[ \delta_n\phi_n \right] -w =
		b-A\theta - w ~,
		\Mt_{n+1} = \left(\phi_n - \gamma \phi_n'\right)\phi_n^\top w_n - A^\top w_n \enspace, \Mw_{n+1} %=& \left(\delta_n\phi_n - w_n\right) - \left(b-A\theta - w_n\right)
		%\\
		%  =& r_n\phi_n + \phi_n[\gamma\phi_n'-\phi_n]^\top\theta_n - w_n \\
		%  & - \left(b-A\theta - w_n\right) \enspace.
		= r_n\phi_n + \phi_n[\gamma\phi_n'-\phi_n]^\top\theta_n  - \left(b-A\theta_n \right) \enspace.
		$
		That is, in case of GTD(0), the relevant matrices in the update rules are
		$\Tt = 0$, $\Wt = -A^\top$, $v_1 = 0$, and $\Tw=A$, $\Ww=\Id$, $v_2=b$.
		Additionally, $\Xt = \Tt - \Wt\Ww^{-1}\Tw = A^\top A$.
		By our assumption above, both $\Ww$ and $\Xt$ are symmetric positive definite matrices, and thus the real parts of their eigenvalues are also positive.
		Also,
		$
		\|\Mt_{n+1}\| \leq (1+\gamma+\|A\|) \|w_n\| ,
		$
		$
		\|\Mw_{n+1}\|
		\leq  1+\|b\| + (1+\gamma+\|A\|)\|\theta_n\|.
		$
		Hence, the noise condition in Defn.~\ref{assum:Noise} is satisfied with constants $\mt = (1+\gamma+\|A\|)$ and $\mw = 1 + \max(\|b\|,\gamma+\|A\|)$.
		
		We can now apply Theorem~\ref{thm:Rates Proj Iterates}  to get the following result.
		
		{
		\begin{corollary} \label{cor: RL}
			Consider the Sparsely Projected variant of GTD(0) as in \eqref{eq:thetap_iter} and \eqref{eq:wp_iter}. Then, for  $\st_n = 1/(n+1)^{\alpha}$, $\sw_n = 1/(n+1)^{\beta},$ with probability larger than $1 - \delta,$ \hspace{0.05em} for all $n \geq N_{\ref{thm:Rates Proj Iterates}},$ we have
		    \begin{align} 
		    \|\theta_n' - \thS\| \leq {} &   
		    C_{\ref{thm:Rates Proj Iterates},\theta} \frac{\sqrt{\ln{(4d^2(n+1)^p/\delta)}}
			}{(n+1)^{\alpha/2}}  \label{eqn:proj theta rate gtd} \\
		\|w'_n - \wS\| \leq {} & C_{\ref{thm:Rates Proj Iterates},w} \frac{\sqrt{\ln{(4d^2(n+1)^p/\delta)}}
			}{(n+1)^{\beta/2}}. \label{eqn:proj w rate gtd}
		\end{align}
		% 	\begin{align}
		% 	\|\theta_n' - \thS\| & \leq  A_{5,n_0} \frac{\sqrt{\ln{(4d^2(n+1)^p/\delta)}}
		% 	}{(n+1)^{\alpha/2}}   \label{eqn:proj theta rate gtd}\\
		% 	%
		% 	\|w_{n}' - w^*\| & \leq  A_{4,n_0} \frac{\sqrt{\ln{(4d^2(n+1)^p/\delta)}}
		% 	}{(n+1)^{\beta/2}}  
		% 	\label{eqn:proj w rate gtd}
		% 	\end{align}
		\end{corollary}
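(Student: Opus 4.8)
The plan is to obtain Corollary~\ref{cor: RL} as a direct instantiation of Theorem~\ref{thm:Rates Proj Iterates}. Accordingly, the whole task reduces to checking that the GTD(0) recursion, once rewritten in the canonical linear two-timescale form \eqref{eqn:tIter}--\eqref{eqn:wIter}, meets the three hypotheses of that theorem: the matrix assumption \assumMatrices, the stepsize assumption \assumStepsize, and the noise condition of Definition~\ref{assum:Noise}.

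First I would record the identification already displayed above: GTD(0) fits \eqref{eqn:tIter}--\eqref{eqn:wIter} with $h_1(\theta,w)=A^\top w$ and $h_2(\theta,w)=b-A\theta-w$, i.e. with $\Tt=0$, $\Wt=-A^\top$, $v_1=0$, $\Tw=A$, $\Ww=\Id$, $v_2=b$, and with noise terms $\Mt_{n+1}=(\phi_n-\gamma\phi_n')\phi_n^\top w_n-A^\top w_n$ and $\Mw_{n+1}=r_n\phi_n+\phi_n[\gamma\phi_n'-\phi_n]^\top\theta_n-(b-A\theta_n)$. Consequently the sparsely projected GTD(0) scheme of the corollary is precisely the projected recursion \eqref{eq:thetap_iter}--\eqref{eq:wp_iter}, with $M_{n+1}^{(1')}$ and $M_{n+1}^{(2')}$ the analogous noise terms evaluated at the projected iterates $(\theta_n',w_n')$.

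Next I would verify the hypotheses. For \assumMatrices: $\Ww=\Id$ is trivially positive definite, and $\Xt=\Tt-\Wt\Ww^{-1}\Tw=A^\top A$; since the feature matrix $\Phi$ is full rank, $A$ is invertible, so $A^\top A$ is symmetric positive definite, and \assumMatrices holds. Assumption \assumStepsize is immediate from the choice $\st_n=(n+1)^{-\alpha}$, $\sw_n=(n+1)^{-\beta}$ with $1>\alpha>\beta>0$ stated in the corollary. For the noise condition: because the samples $\{(\phi_n,\phi_n',r_n)\}$ are iid and independent of $\cF_n$, taking conditional expectations and using $A=\bE[\phi(\phi-\gamma\phi')^\top]$, $b=\bE[r\phi]$ gives $\bE[\Mt_{n+1}\mid\cF_n]=0=\bE[\Mw_{n+1}\mid\cF_n]$, so both are $\{\cF_n\}$-martingale difference sequences; and the bounds $|r(s)|\le1$, $\|\phi(s)\|\le1$ yield, by a direct estimate, $\|\Mt_{n+1}\|\le(1+\gamma+\|A\|)\|w_n\|$ and $\|\Mw_{n+1}\|\le 1+\|b\|+(1+\gamma+\|A\|)\|\theta_n\|$, which are of the form demanded in Definition~\ref{assum:Noise} with $\mt=1+\gamma+\|A\|$ and $\mw=1+\max(\|b\|,\gamma+\|A\|)$.

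With all hypotheses in place, applying Theorem~\ref{thm:Rates Proj Iterates} to the pair $(\theta_n',w_n')$ delivers \eqref{eqn:proj theta rate gtd}--\eqref{eqn:proj w rate gtd} verbatim, with the same $N_{\ref{thm:Rates Proj Iterates}}$ and $C_{\ref{thm:Rates Proj Iterates},\theta}$, $C_{\ref{thm:Rates Proj Iterates},w}$, now evaluated at the GTD(0)-specific data. There is no genuine obstacle here; the only care required is bookkeeping, namely substituting $\Ww=\Id$, $\Xt=A^\top A$, $v_1=0$, $v_2=b$, and the above $\mt,\mw$ into the expressions for the constants of Theorem~\ref{thm:Rates Proj Iterates} (see Tables~\ref{tab: Nconstants} and \ref{tab: constants contd}) so that the final bounds are stated purely in terms of MDP quantities.
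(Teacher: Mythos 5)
Your proposal is correct and follows essentially the same route as the paper: identify $\Gamma_1=0$, $W_1=-A^\top$, $v_1=0$, $\Gamma_2=A$, $W_2=I$, $v_2=b$, hence $X_1=A^\top A$, check that $W_2$ and $X_1$ are positive definite via the full-rank feature assumption, verify the noise bounds with $m_1=1+\gamma+\|A\|$ and $m_2=1+\max(\|b\|,\gamma+\|A\|)$, and invoke Theorem~\ref{thm:Rates Proj Iterates}. The only addition beyond the paper's text is your explicit check that the noise terms are martingale differences via the iid sampling, which the paper leaves implicit.
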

		}
		For GTD2 and TDC \citep{sutton2009fast}, the above result can be similarly reproduced. The detailed derivation and relevant constants are provided in Appendix~\ref{sec: gtd2 and tdc}.
		
		\section{Outline of Proof of the Main Result}
		\label{sec: proof outline}
		% %
		% \begin{enumerate}
		%     \item Introduce all notations needed for Thm. 2
		    
		%     \item State Thm. 2 
		    
		%     \item Explain how it implies Thm. 1; sketch a proof of Thm. 1.
		    
		%     \item Give statements of Thm. 3 and 4 and show how they imply Thm. 2.
		% %    
		% \end{enumerate}
		
		Here, we first state an intermediary result in Thereom~\ref{thm:Main Res wo Proj} and %then, 
		using that we sketch a proof of Theorem~\ref{thm:Rates Proj Iterates}. The full proof is in Appendix~\ref{sec:Proof Main Result}. %Thereafter, we outline the proof of Theorem~\ref{thm:Main Res wo Proj}.
		
		Assume \assumMatrices and \assumStepsize. Consider \eqref{eqn:tIter} and \eqref{eqn:wIter} with $\{\Mt_{n}\}$ and $\{\Mw_{n}\}$ being   $(\theta_n,w_n)$-dominated martingale differences with parameters $\mt$ and $\mw$ (see Def.~\ref{assum:Noise}). Let $\Gp_{n_0}$ be the event given by 
		\begin{equation*} \label{def: G_n_0'}
		\cG^\prime_{n_0} = \{\|\theta_{n_0} - \thS\| \leq \Rprojth, \|w_{n_0} - \wS\| \leq \Rprojw\}
		\end{equation*}
		and let $
		\nu(n; \gamma) = (n+1)^{-\gamma/2} \sqrt{\ln{(4d^2(n+1)^p/\delta)}}.$

		\begin{theorem}
		\label{thm:Main Res wo Proj}
		Let $\delta \in (0, 1).$ Suppose that $n_0 \geq N_{\ref{thm:Main Res wo Proj}}$ and that the event $\Gp_{n_0}$  holds. Then, with probability larger than $1- \delta,$
		%
		% GT: Earlier, it was \theta_{n + 1}
		\begin{align}
		\|\theta_{n} - \thS\| & \leq A_{5,n_0} \, \nu(n, \alpha) %\tfrac{1}{(n+1)^{-\alpha/2}} \sqrt{\ln{(4d^2(n+1)^p/\delta)}}  
		\label{eq: theta rate}\\
		\|w_{n} - w^*\| & \leq  A_{4,n_0} \, \nu(n, \beta) 
		%\tfrac{1}{ (n+1)^{-\beta/2}} \sqrt{\ln{(4d^2(n+1)^p/\delta)}} 
		\label{eq: w rate}
		\end{align}
		for all $n \geq n_0.$
		\end{theorem}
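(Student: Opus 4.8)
The plan is to center both iterates, pass to coordinates in which the two timescales act cleanly, and then sharpen the bounds in stages. Write $\tilde\theta_n=\theta_n-\thS$ and $\tilde w_n=w_n-\wS$. Since $h_1,h_2$ are affine and $h_1(\thS,\wS)=h_2(\thS,\wS)=0$, one has $h_2(\theta_n,w_n)=-\Tw\tilde\theta_n-\Ww\tilde w_n$; because $\Tt$ need not be stable, for the slow iterate we replace $\tilde w_n$ by $z_n=\tilde w_n+\Ww^{-1}\Tw\tilde\theta_n$, the deviation of $w_n$ from the solution of $h_2(\theta_n,\cdot)=0$, which turns $h_1(\theta_n,w_n)$ into $-\Xt\tilde\theta_n-\Wt z_n$. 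The recursions then read
\begin{align*}
\tilde\theta_{n+1} &= (I-\st_n\Xt)\tilde\theta_n-\st_n\Wt z_n+\st_n\Mt_{n+1},\\
z_{n+1} &= (I-\sw_n\Ww)z_n-\st_n\Ww^{-1}\Tw(\Xt\tilde\theta_n+\Wt z_n)+\sw_n\Mw_{n+1}+\st_n\Ww^{-1}\Tw\Mt_{n+1}.
\end{align*}
By \assumMatrices the symmetric parts of $\Xt$ and $\Ww$ are positive definite, so there are $q_1,q_2>0$ with $\norm{I-\st_n\Xt}\le 1-q_1\st_n$ and $\norm{I-\sw_n\Ww}\le 1-q_2\sw_n$ once $n$ is large (one reason for the requirement $n_0\ge N_{\ref{thm:Main Res wo Proj}}$), and the transition products $\prod_{j=k+1}^{n}(I-\st_j\Xt)$ and $\prod_{j=k+1}^{n}(I-\sw_j\Ww)$ decay like $e^{-q_1\sum_{k<j\le n}\st_j}$ resp.\ $e^{-q_2\sum_{k<j\le n}\sw_j}$. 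Every martingale sum below is controlled with a martingale concentration inequality (Azuma--Hoeffding) on a single good event $\cE$ of probability at least $1-\delta$; the factor $(n+1)^{p}$ with $p>1$ inside $\nu(n,\cdot)$ is what makes the union bound over all $n\ge n_0$ summable and supplies the $\sqrt{\ln(\cdot)}$ in \eqref{eq: theta rate}--\eqref{eq: w rate}.

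\textbf{Stage 1 (boundedness).} On $\cE$ we first show $\norm{\tilde\theta_n}=O(1)$ and $\norm{z_n}=O(1)$ for all $n\ge n_0$, by induction on $n$ in the spirit of \cite{thoppe2019concentration}: if the iterates lie in a fixed ball up to time $n$, the contractivity of $I-\st_n\Xt$ and $I-\sw_n\Ww$ together with concentration of the linearly bounded increments $\Mt_{n+1},\Mw_{n+1}$ keeps them in a slightly larger ball at time $n+1$; the hypothesis $\Gp_{n_0}$ supplies the base case. For the unprojected dynamics this plays the role that the projection plays in Theorem~\ref{thm:Rates Proj Iterates}.

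\textbf{Stage 2 ($w$-rate, the non-standard induction).} Working pathwise on $\cE$, we run the induction of \cite{mokkadem2006convergence}: a bound $\norm{z_n}\le\tilde{O}\big(\nu(n,\beta)\indc[\ell\neq0]+(n+1)^{-(\alpha-\beta)\ell}\big)$ is shown to self-improve to the same estimate with $\ell$ replaced by $\ell+1$. The mechanism: feeding such a $z$-bound into the $\tilde\theta$-recursion yields the same bound for $\norm{\tilde\theta_n}$ up to an extra additive $\nu(n,\alpha)$; feeding both back into the $z$-recursion, the $\theta$-driven forcing enters multiplied by the factor $\st_n/\sw_n=(n+1)^{-(\alpha-\beta)}$, which lowers the exponent of the geometrically decaying part by exactly one unit. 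Starting from the $O(1)$ estimate of Stage 1 (the $\ell=0$ instance) and iterating $\lceil\beta/(2(\alpha-\beta))\rceil$ times drives $(n+1)^{-(\alpha-\beta)\ell}$ below $\nu(n,\beta)$, so $\norm{z_n}=\tilde{O}(\nu(n,\beta))$; since $\tilde w_n=z_n-\Ww^{-1}\Tw\tilde\theta_n$ and $\tilde\theta_n$ is by then $\tilde{O}(\nu(n,\beta))$, this gives \eqref{eq: w rate}. Each of the finitely many rounds holds only from some round-dependent time on; the maximum of these is absorbed into $N_{\ref{thm:Main Res wo Proj}}$, and $A_{4,n_0}$ comes from tracking how the Azuma constants compose through the rounds.

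\textbf{Stage 3 (sharpening the $\theta$-rate) --- the main obstacle.} With $\norm{z_n}=\tilde{O}(\nu(n,\beta))$, the estimate of \eqref{eq: theta rate} produced so far is still only $\tilde{O}(\nu(n,\beta))$, because bounding the accumulated forcing $\sum_{k<n}\big[\prod_{j=k+1}^{n-1}(I-\st_j\Xt)\big]\st_k\Wt z_k$ by its norm treats $z_k$ as a deterministic $(n+1)^{-\beta/2}$-sized term. To recover the missing timescale factor we expand $z_k$, via its own recursion, as a weighted sum of the fast increments $\sw_i\Mw_{i+1}$ plus strictly lower-order $\st$-terms, substitute into the $\tilde\theta$-sum, interchange the order of summation, and estimate the resulting single martingale in the index $i$. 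Since the $z$-process decorrelates on the fast ($\beta$) timescale, which is shorter than the $\theta$-averaging window of length $\asymp 1/\st_n$, this martingale has quadratic variation of order $\st_n$ rather than $\sw_n$, which is exactly what produces the $\nu(n,\alpha)$ rate; the remaining $\st$-corrections are absorbed by iterating the same reorganization a bounded number of times (the ``non-standard'' layer of the \cite{mokkadem2006convergence} argument). I expect this to be the delicate part: keeping the reorganized double sums, their filtration/martingale structure, and the explicit constants $A_{5,n_0}$ and threshold $N_{\ref{thm:Main Res wo Proj}}$ simultaneously under control, uniformly in $n\ge n_0$ --- this is where the decoupling-after-finite-time phenomenon is made quantitative.
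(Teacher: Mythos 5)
Your Stages 1 and 2 match the paper's architecture (Theorem~\ref{thm: En0 prob bound}: boundedness by induction on $n$ plus Azuma--Hoeffding on a single good event whose failure probability is summable thanks to the $(n+1)^p$ factor; Theorem~\ref{thm: wn rate}: the Mokkadem--Pelletier bootstrapping in $\ell$ up to $\ell^*=\lceil\beta/(2(\alpha-\beta))\rceil$). The genuine gap is Stage 3. You correctly observe that bounding $\sum_{k}\bigl[\prod_{j=k+1}^{n}(I-\st_j\Xt)\bigr]\st_k\Wt z_k$ by $\sup_k\|z_k\|$ only yields $\tilde O(\nu(n,\beta))$ for $\theta$, but the probabilistic repair you sketch --- re-expanding $z_k$ as a martingale in the fast increments, interchanging sums, and arguing that decorrelation on the $\beta$-timescale shrinks the quadratic variation to order $\st_n$ --- is not carried out, and it is not what the paper does. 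The missing idea is purely deterministic: from the $w$-update, $\Ww z_k=-(w_{k+1}-w_k)/\sw_k+\Mw_{k+1}$, so the forcing is (up to noise already absorbed into $L^{(\theta)}$) a discrete derivative, and Abel summation of $\sum_k\bigl[\prod_{j>k}(I-\st_j\Xt)\bigr]\st_k\,W_1W_2^{-1}\tfrac{w_{k+1}-w_k}{\sw_k}$ produces a boundary term $\tfrac{\st_n}{\sw_n}W_1W_2^{-1}(w_{n+1}-\wS)$ plus a telescoping correction $T_{n+1}$ (the paper's \eqref{eq: Rn evolution}--\eqref{Defn:Tn}), \emph{both} carrying the extra factor $\st_n/\sw_n=(n+1)^{-(\alpha-\beta)}$. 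Combined with the Stage-2 bound $\|w_{n+1}-\wS\|=\tilde O((n+1)^{-\beta/2})$, this gives $\tilde O((n+1)^{-(\alpha-\beta)-\beta/2})=\tilde O((n+1)^{-\alpha+\beta/2})\le\tilde O((n+1)^{-\alpha/2})$ since $\alpha-\beta/2\ge\alpha/2$; the $\nu(n,\alpha)$ part of the bound then comes solely from $L^{(\theta)}_{n+1}$, whose variance proxy $a_{n+1}\asymp\st_n$ is already controlled in Stage 1.

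In short: the decomposition you set up (your $z_n$ is exactly the quantity whose image under $\Ww$ equals the paper's $-(w_{k+1}-w_k)/\sw_k+\Mw_{k+1}$) already contains everything needed, but you did not exploit the difference structure, and instead deferred the theorem's central quantitative step --- the finite-time decoupling --- to an unexecuted and substantially harder second-moment argument. Note also that your Lemma-level claim in Stage 2 that the $\theta$-bound is ``the same bound for $\|\tilde\theta_n\|$ up to an extra additive $\nu(n,\alpha)$'' should already carry the $\st_{n-1}/\sw_{n-1}$ prefactor on the $u_{n-1}$ term (as in \eqref{eqn:thBd Main}); without it, the per-round improvement by a factor $(n+1)^{-(\alpha-\beta)}$ that drives the induction would not close.
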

		\begin{proof}[Sketch of Proof for Theorem~\ref{thm:Rates Proj Iterates}]
		Our idea is to use a coupling argument to show that the projected iterates, given in \eqref{eq:thetap_iter} and \eqref{eq:wp_iter}, and the unprojected iterates, given in \eqref{eqn:tIter} and \eqref{eqn:wIter}, are identically distributed from some time on. This then allows us to use Theorem~\ref{thm:Main Res wo Proj} to conclude Theorem~\ref{thm:Rates Proj Iterates}. 
		
		The key steps in our argument are as follows.
		\begin{enumerate}
		    \item First we note that, for the projected algorithm, the event $\Gp_{n_0}$ holds whenever $n_0$ is of the form $k^k - 1.$ 
		    %It is also easy to see that $\Gp_{n_0}$ trivially holds if $\theta'_{n_0}$ and $w'_{n_0}$ are already within their respective projection balls.
		    
		    \item Further, recalling \eqref{def: projection}, we observe that, for any $k \geq 0$, between 
		    % the two 
		    projection steps $k^k - 1$ and $(k+1)^{k+1} -1$, the projected iterates $\{\theta_n', w'_n\}$ behave exactly as the unprojected iterates $\{\theta_n, w_n\}$ that are initiated at $(\theta'_{k^k - 1}, w'_{k^k - 1}).$
		    
		    \item It then follows from Theorem~\ref{thm:Main Res wo Proj} that if $k$ is large enough so that $n_0 = k^k - 1 \geq N_{\ref{thm:Main Res wo Proj}},$ then 
		    \eqref{eq: theta rate} and \eqref{eq: w rate} apply to $\{\theta'_n, w'_n\}$  for $k^k - 1 \leq n < (k + 1)^{k + 1} - 1.$
		    
		    \item In fact, if $k$ is enlarged a bit more so that $n_0 = k^k - 1 \geq N_{\ref{thm:Rates Proj Iterates}} \geq N_{\ref{thm:Main Res wo Proj}},$ then not only does the above claim hold, it is also true that the RHSs in \eqref{eq: theta rate} and \eqref{eq: w rate} are less than $\Rprojth$ and $\Rprojw,$ respectively, for $n \geq (k + 1)^{k + 1} - 1.$ 
		    \item In turn, the latter implies that the projected iterates and unprojected iterates, starting from $(\theta'_n, w'_n),$ behave exactly the same $\forall n \geq k^k - 1.$ Consequently, \eqref{eq: theta rate} and \eqref{eq: w rate} hold for the projected iterates $\forall n \geq N_{\ref{thm:Rates Proj Iterates}}.$ Substituting $n_0 = N_{\ref{thm:Rates Proj Iterates}}$ then establishes Theorem~\ref{thm:Rates Proj Iterates}. %\galC{we didn't discuss $n_0$ so far, so why here?...}
		\end{enumerate}
		
		% \begin{enumerate}
		%     \item First,  the projected algorithm always satisfies the event $\Gp_{n_0}$ for $n_0$ of the form $k^k.$ Further, $\Gp_{n_0}$ is trivially satisfied if the $n_0-$th iterates are already within their respective projection balls.
		
		%     \item Note that, for any arbitrary constant $k$, between the two projection steps $k^k$ and $(k+1)^{k+1}$, the projected iterates $\theta_n'$ and $w_n'$ behave exactly as the unprojected iterates $\theta_n$ and $w_n$ started from $\theta_{k^k}'$ and $w_{k^k}'$, respectively.
		    
		%     \item It remains to show that, for big enough $k$, the iterates at the time of the next projection are within the respective balls with high probability. Towards this, we in fact show that the iterates are in the respective balls at step $(k + 1)^{k + 1}$ and continue to stay there thereafter with high probability.
		    
		%     \item Relate Thm. \ref{thm:Main Res wo Proj} to Thm. \ref{thm:Rates Proj Iterates}: Result on unprojected iterates implies the result on projected iterates.

		%     which implies the projected and unprojected iterates continue to have same behaviour in distribution.
		% \end{enumerate}
		
		%
		\noindent See Appendix~\ref{sec:Proof Main Result} for the actual proof. 
		% See the Proof of Theorem~\ref{thm:Rates Proj Iterates} section in the supplementary material for the complete proof.
		%
		\end{proof}
		
		%Below 
		Next, we discuss the proof of Theorem~\ref{thm:Main Res wo Proj}; note that this result only concerns the unprojected iterates.
		First, we introduce some further notations.
		
		Fix any $p > 1$ and let $\cU(n_0)$ be the event given by
		\begin{multline} \label{def: E_n0}
		\cU(n_0) : = \bigcap_{n \geq n_0}\big\{\|\theta_n - \thS\| \leq \Crth \Rprojth, \|L_{n + 1}^{(\theta)}\| \leq \epsth_n,\\  \|w_n - \wS\| \leq \Crw \Rprojw, \|L_{n + 1}^{(w)}\| \leq \epsw_n\big\},
		\end{multline}
		where 
		\begin{align}
		\label{eq: epsilon n new def}
		\epsth_n = {} & \sqrt{d^3 L_\theta C_{\ref{lem: an bn upper bounds},\theta}} \, \nu(n, \alpha), \\
		\epsw_n = {} & \sqrt{d^3 L_w C_{\ref{lem: an bn upper bounds},w}} \, \nu(n, \beta).
		\end{align}
		% \begin{align}
		% \label{eq: epsilon n new def}
		% \epsth_n &= \sqrt{d^3 L_\theta C_{\ref{lem: an bn upper bounds},\theta} (n+1)^{-\alpha}  \ln{(4d^2(n+1)^p/\delta)}}
		% \\
		% \epsw_n &= \sqrt{d^3 L_w C_{\ref{lem: an bn upper bounds},w} (n+1)^{-\beta} \ln{(4d^2(n+1)^p/\delta)}},
		% \end{align}
		% %
		Further, let $\Lnt$ and $\Lnw$ be appropriate aggregates of the martingale noise terms 
		%in the first $(n + 1)-$steps 
		given by
		\begin{align}
		    \Lnw &= \sum_{k=n_0}^n  \left[\prod_{j=k+1}^n[I-\beta_j W_2]\right] \beta_k  M_{k + 1}^{(2)}, \label{def: Lnw Main}
		\end{align}
		\begin{multline}
		\Lnt = \sum_{k=n_0}^n  \left[\prod_{j=k+1}^n[I-\alpha_j X_1]\right] \alpha_k \\
		\times\left[- W_1 W_2^{-1} M_{k + 1}^{(2)} + M_{k + 1}^{(1)}\right]. \label{def: Lnt Main}
		\end{multline}
		For the definition of the constants above, see Table~\ref{tab: constants contd}.
		
		%Now we are ready to prove the theorem.

		As a first step in proving Theorem~\ref{thm:Main Res wo Proj}, we show that the co-occurrence of the events $\Gp_{n_0}$ and $\cU(n_0)$ has small probability if $n_0$ is large enough. The proof, inspired from \cite{thoppe2019concentration}, uses induction on the iteration index $n.$ Specifically, we show that if, at time $n,$ the iterates are bounded and the aggregate noise is well-behaved (respectively bounded by $\epsth_n$ and $\epsw_n$), then the iterates continue to remain bounded at time $n + 1$ as well w.h.p.
		%with high probability. 
		%
		\begin{theorem}
		\label{thm: En0 prob bound}
		Let $\delta \in (0,1)$ and $n_0 \geq N_{\ref{thm: En0 prob bound}}.$ 
		% \begin{equation} \label{eq: n_0 lower bound}
		%     n_0 \geq \max\{K_{\ref{lem: small eigenvalues},\alpha}, K_{\ref{lem: small eigenvalues},\beta}, K_{\ref{eq: conditions for cond 2},\alpha}(0), K_{\ref{lem: const bound on eps},\beta},K_{\ref{lem:Large Theta n}}, (p-1)^{-1/(p-1)} \}.
		% \end{equation}
		Then, 
		$$\Pr\{\cU^c(n_0) | \Gp_{n_0}\} \leq \delta.$$
		\end{theorem}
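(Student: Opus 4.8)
The plan is to prove Theorem~\ref{thm: En0 prob bound} via a forward induction on the iteration index $n \geq n_0$, on the joint event that the iterates are bounded and the martingale-noise aggregates $\Lnt, \Lnw$ are controlled. Concretely, I would define the stopping-type event $\cU_n(n_0)$ to be the ``$\cU(n_0)$ up to time $n$'' event, i.e. the intersection in \eqref{def: E_n0} restricted to indices between $n_0$ and $n$, and show inductively that $\Pr\{\cU_n^c(n_0)\mid\Gp_{n_0}\}$ stays below a suitable running bound that telescopes to $\delta$. The base case $n=n_0$ is immediate from $\Gp_{n_0}$ together with the fact that $L^{(\theta)}_{n_0+1}, L^{(w)}_{n_0+1}$ are single-term sums whose bounds follow from the noise-domination condition in Def.~\ref{assum:Noise} and the iterate bounds on $\Gp_{n_0}$ (here the constants $\Crth, \Crw$ are chosen $\geq 1$ precisely so these initial bounds are slack).

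The inductive step has two ingredients. First, a \emph{deterministic} recursion: unrolling \eqref{eqn:tIter}--\eqref{eqn:wIter} and using the block-triangular structure hiding in $\Gamma_i, W_i$ (the change of variables that turns the drift into $-X_1$ on the $\theta$-component and $-W_2$ on the $w$-component, consistent with the definitions of $\thS, \wS$ and $X_1$), one writes $\theta_n - \thS$ and $w_n - \wS$ as a sum of a contracting transient term, a ``coupling'' term coming from the slow variable feeding into the fast one, and the noise aggregates $\Lnt, \Lnw$ from \eqref{def: Lnt Main}--\eqref{def: Lnw Main}. On the event $\cU_{n-1}(n_0)$ all the non-noise pieces are controlled by $\nu(n,\alpha)$ and $\nu(n,\beta)$ up to the stated constants, using \assumMatrices (positive definiteness of $W_2$ and $X_1$ gives the contraction factors $\prod[I-\beta_j W_2]$, $\prod[I-\alpha_j X_1]$ the needed decay) and \assumStepsize (the step-size exponents $\alpha>\beta$ control the rates and make the integral/sum estimates go through, for $n_0$ past the various $K$-thresholds collected in Table~\ref{tab: Nconstants}). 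Second, a \emph{probabilistic} ingredient: conditioned on $\cU_{n-1}(n_0)$, the increments of $\Lnt$ and $\Lnw$ are bounded (again by noise domination plus the iterate bounds that hold on that event), so a Azuma/Hoeffding-type maximal concentration inequality for martingales with bounded increments gives $\Pr\{\|\Lnt\| > \epsth_n \text{ or } \|\Lnw\| > \epsw_n \mid \cU_{n-1}(n_0)\} \leq$ something like $4d^2(n+1)^{-p}$, which is exactly where the $\ln(4d^2(n+1)^p/\delta)$ inside $\nu$ and the choice of $\epsth_n,\epsw_n$ in \eqref{eq: epsilon n new def} come from. Summing this tail over $n \geq n_0$ gives a convergent series bounded by $\delta$ once $n_0$ is large enough (the $\sum (n+1)^{-p} < \infty$ for $p>1$ is why $(p-1)^{-1/(p-1)}$ appears among the $N$-thresholds).

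Assembling: $\Pr\{\cU^c(n_0)\mid\Gp_{n_0}\} \leq \sum_{n\geq n_0}\Pr\{\cU_n^c(n_0)\setminus\cU_{n-1}^c(n_0)\mid\Gp_{n_0}\}$, and each summand is dominated by the martingale tail above evaluated on $\cU_{n-1}(n_0)$, so the whole sum is at most $\delta$ provided $n_0 \geq N_{\ref{thm: En0 prob bound}}$. I would organize the deterministic estimate as a standalone lemma (matching the $K_{\ref{lem: small eigenvalues},\cdot}$, $K_{\ref{lem: const bound on eps},\beta}$, $K_{\ref{lem:Large Theta n}}$ references in the $N$-table) and the concentration step as another, then do the three-line induction in the body of the proof.

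The main obstacle I anticipate is the deterministic recursion for the \emph{fast} variable $w_n - \wS$: it picks up a term proportional to $\|\theta_n - \thS\|$ — which on $\cU_{n-1}(n_0)$ is only $O(\nu(n,\alpha)) = O((n+1)^{-\alpha/2}\sqrt{\log})$, i.e. decaying \emph{slower} than the target $\nu(n,\beta)$ for $w$ when $\alpha<\beta$, but here $\alpha>\beta$ so the coupling term is in fact the faster one — and one must check that, after multiplying by the step-size $\beta_k$ and summing against the contraction $\prod_{j>k}[I-\beta_j W_2]$, this cross term is genuinely absorbed into $A_{4,n_0}\nu(n,\beta)$ rather than degrading the rate. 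This is the delicate book-keeping: bounding $\sum_k \beta_k \prod_{j=k+1}^n(1-c\beta_j)\,\nu(k,\alpha)$ by a constant times $\nu(n,\beta)$ (and the analogous $\theta$-side sum by a constant times $\nu(n,\alpha)$), which requires the standard ``sum vs. integral'' comparison for products of $(1-c j^{-\beta})$ and is exactly where the minimum-index constants $e^{1/\beta}$, $(2/\beta)^{2/\beta}$ and the $K_{\ref{lem:IntComputation},\cdot}$, $K_{\ref{lem: epsilon n domination},\cdot}$ thresholds in Table~\ref{tab: Nconstants} get used to make the inequalities clean. Getting the constants $\Crth,\Crw$ and the slack factors right so the induction closes (RHS at step $n$ stays $\leq$ the bound used to invoke it) is the part that demands care rather than new ideas.
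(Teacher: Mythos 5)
Your overall architecture matches the paper's: peel off the first time the good event fails, handle the iterate\hbox{-}boundedness part deterministically given noise control, bound the noise aggregates by Azuma--Hoeffding with a $(n+1)^{-p}$ tail, and sum the series using $n_0\ge (p-1)^{-1/(p-1)}$. The paper does exactly this via the decomposition \eqref{eqn:Gpnp interesect Un0 compl}--\eqref{eqn:Gpn0 intersect An intersect Zn1 compl} and Lemma~\ref{eq: azuma-hoeffding}. However, there are two points where your description departs from what actually works.

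First, and more substantively, you mis-specify the deterministic inductive step. The event $\cU(n_0)$ (and its truncation $\cA_n$) only asserts that the iterates are bounded by the \emph{constants} $\Crth\Rprojth$ and $\Crw\Rprojw$ and that the noise aggregates are below $\epsth_n,\epsw_n$; it does not assert $\nu(n,\alpha)$- or $\nu(n,\beta)$-decay. Your claim that ``on $\cU_{n-1}(n_0)$ all the non-noise pieces are controlled by $\nu(n,\alpha)$ and $\nu(n,\beta)$,'' and your worry about absorbing $\sum_k\beta_k\prod_{j>k}(1-c\beta_j)\,\nu(k,\alpha)$ into $A_{4,n_0}\nu(n,\beta)$, belong to the proof of Theorem~\ref{thm: wn rate}, not Theorem~\ref{thm: En0 prob bound} --- and the paper establishes those rates by a \emph{separate} induction on the exponent $\ell$ in \eqref{eq: hypothesis} (the Mokkadem--Pelletier trick), precisely because one pass of the recursion only improves a trajectory bound $u_n$ to $O(\epsw_n)+O\bigl(\tfrac{\alpha_n}{\beta_n}u_n\bigr)$, not all the way to $\tilde O(n^{-\beta/2})$. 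For Theorem~\ref{thm: En0 prob bound} the actual delicate point is different: the $w$-recursion picks up a coupling term of order $\Crth\Rprojth$ (forcing $\Crw$ to depend on $\Crth$, cf.\ \eqref{def: Crw}), while the $\theta$-recursion picks up a coupling term of order $\Crw\Rprojw$ but multiplied by $\alpha_{n_0}/\beta_{n_0}$; the apparent circularity between the two radius constants is broken because $\alpha_{n_0}/\beta_{n_0}\to 0$, which is exactly what the threshold $K_{\ref{lem:Large Theta n}}$ in \eqref{defn: Large Theta n constant} and Lemma~\ref{lem:Large Theta n} encode. Without identifying this mechanism your induction does not close.

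Second, a smaller technical point: ``conditioned on $\cU_{n-1}(n_0)$ the increments are bounded, so Azuma applies'' is not a valid application as stated, since conditioning on a good event does not preserve the martingale-difference property. The paper's fix is to insert the $\cF_k$-measurable indicator $1_{\cG_k\cap\Gp_{n_0}}$ into the $k$-th increment of $L_{n+1}^{(\theta)}$ and $L_{n+1}^{(w)}$, observe that on $\cG_n$ the modified sum coincides with the original, and apply Azuma--Hoeffding to the modified (genuinely bounded-increment) martingale under $\Pr(\cdot\mid\Gp_{n_0})$. You should make this explicit.
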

		% \begin{remark}
		% The result will go through for any choice  of $\Crth, \Crw$ larger than those given in Table~\ref{ \eqref{def: Crth}, \eqref{def: Crw}.
		% \red{Check if \eqref{def: Crth}, \eqref{def: Crw} can be replaced by inequalities, so that we can drop this remark.}
		% \end{remark}
		
		Next, we  show that, on the event $\cU(n_0),$ the convergence rates of $\{\theta_n\}$ and $\{w_n\}$ are  $\tilde{O}(n^{-\alpha/2})$ and $\tilde{O}(n^{-\beta/2}),$ respectively. The proof proceeds as follows. By refining  an induction trick from \cite{mokkadem2006convergence}, we first show that the convergence rate estimate for the $\{w_n\}$ iterates can be improved from $O(1)$ to $\tilde{O}(n^{-\beta/2}).$ Using this, we then show that $\|\theta_n - \thS\| = \tilde{O}(n^{-\alpha/2})$.
		% has 
		% %the 
		% \gal{similar} desired behaviour. 
		We emphasize that these results are
		%purely   
		deterministic.

		\begin{theorem}
		\label{thm: wn rate}
		Let $n_0 \geq N_{\ref{thm: wn rate}}.$
		% \begin{equation} \label{eq: n0 restrictions}
		% n_0 \geq \max\big\{K_{\ref{eq: conditions for cond 2},\alpha}(\beta/2), k_{\beta}(\beta/2),e^{1/\beta}\delta^{1/p}/(4d^2)^{1/p}, K_{\ref{lem: epsilon n domination}, a}, K_{\ref{lem: epsilon n domination},b}\big\} + 1.
		% \end{equation}
		% and that the event $\Gp_{n_0}$ from \eqref{def: G_n_0'} holds \red{verify: why is this needed?}.
		%
		Then,
		\begin{equation}\label{eq: w subset}
		\cU(n_0)  \subseteq \{\|w_n - \wS\| \leq A_{4,n_0} \, \nu(n; \beta), \forall n \geq n_0\}
		\end{equation}
		and 
		\begin{equation}
		\label{eq: theta subset}
		 \cU(n_0) \subseteq \{ \|\theta_{n} - \thS\| \leq A_{5,n_0} \,
		\nu(n; \alpha), \forall n \geq n_0\}.
		\end{equation}
		\end{theorem}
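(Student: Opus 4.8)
The two inclusions are deterministic, so I would fix an arbitrary sample path in $\cU(n_0)$; on it the iterates are bounded and $\|\Lnt\|\le\epsth_n$, $\|\Lnw\|\le\epsw_n$. The first step is the standard two–timescale change of variables: set $\tilde\theta_n := \theta_n-\thS$ and $\tilde w_n := w_n-\wS$. Since $h_1(\thS,\wS)=h_2(\thS,\wS)=0$, the recursions become $\tilde w_{n+1}=(I-\beta_n W_2)\tilde w_n-\beta_n\Gamma_2\tilde\theta_n+\beta_n M_{n+1}^{(2)}$ and $\tilde\theta_{n+1}=(I-\alpha_n\Gamma_1)\tilde\theta_n-\alpha_n W_1\tilde w_n+\alpha_n M_{n+1}^{(1)}$; solving the first for $W_2\tilde w_n$ and substituting into the second rewrites the $\theta$–update as
$$\tilde\theta_{n+1}=(I-\alpha_n X_1)\tilde\theta_n-\tfrac{\alpha_n}{\beta_n}W_1W_2^{-1}(\tilde w_n-\tilde w_{n+1})+\alpha_n\bigl[-W_1W_2^{-1}M_{n+1}^{(2)}+M_{n+1}^{(1)}\bigr],$$
which is exactly the form underlying the noise aggregate $\Lnt$. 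Unrolling each recursion from $n_0$ yields three pieces: a homogeneous term ($\prod_j(I-\alpha_j X_1)$ applied to $\tilde\theta_{n_0}$, resp.\ $\prod_j(I-\beta_j W_2)$ applied to $\tilde w_{n_0}$), which decays super‑polynomially because $\sum_j\alpha_j=\sum_j\beta_j=\infty$ and $X_1,W_2$ are positive definite; the aggregate‑noise terms $\Lnt,\Lnw$, already bounded on $\cU(n_0)$ by $\epsth_n=\tilde{O}(n^{-\alpha/2})$ and $\epsw_n=\tilde{O}(n^{-\beta/2})$; and a coupling term — for $w$ it is $-\sum_k[\prod_{j=k+1}^n(I-\beta_j W_2)]\beta_k\Gamma_2\tilde\theta_k$, and for $\theta$ it is the discrete convolution of the increments $\tilde w_k-\tilde w_{k+1}$ against $(I-\alpha_j X_1)$–products weighted by $\alpha_k/\beta_k=(k+1)^{-(\alpha-\beta)}$.

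To prove \eqref{eq: w subset} I would run the non‑standard ``ladder'' induction of \cite{mokkadem2006convergence}: show, by induction on $\ell=0,1,\dots,\ell^\star$ with $\ell^\star:=\lceil\beta/(2(\alpha-\beta))\rceil$, that on $\cU(n_0)$ one has $\|\tilde w_n\|\le B_\ell\bigl(\nu(n;\beta)\indc[\ell\ne 0]+(n+1)^{-(\alpha-\beta)\ell}\bigr)$ for all $n\ge n_0$, with $B_\ell$ finite. The base case $\ell=0$ is just $\|\tilde w_n\|\le\Crw\Rprojw$, immediate from $\cU(n_0)$. For the inductive step, feed the level‑$\ell$ bound on $\tilde w$ into the coupling term of the $\theta$–recursion; a summation by parts moves the difference off $\tilde w_k-\tilde w_{k+1}$, and because of the extra factor $(k+1)^{-(\alpha-\beta)}$, integral/summation‑comparison estimates (the lemma whose constants $K_{\ref{lem:IntComputation},a},K_{\ref{lem:IntComputation},b}$ enter $N_{\ref{thm: wn rate}}$) give $\|\tilde\theta_n\|\le\tilde{O}\bigl(n^{-\alpha/2}+(n+1)^{-(\alpha-\beta)(\ell+1)}\bigr)$, the cross term $n^{-(\alpha-\beta/2)}$ being absorbed into $n^{-\alpha/2}$ since $\alpha>\beta$. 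Plugging this improved $\theta$–bound back into the coupling term of the $w$–recursion and applying the comparison estimates against $(I-\beta_j W_2)$–products yields $\|\tilde w_n\|\le\tilde{O}\bigl(n^{-\beta/2}+(n+1)^{-(\alpha-\beta)(\ell+1)}\bigr)$, which is the level‑$(\ell+1)$ bound. After $\ell^\star$ rungs, $(\alpha-\beta)\ell^\star\ge\beta/2$, so $(n+1)^{-(\alpha-\beta)\ell^\star}=O(\nu(n;\beta))$, giving $\|w_n-\wS\|\le A_{4,n_0}\nu(n;\beta)$.

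For \eqref{eq: theta subset}, one last pass through the $\theta$–recursion suffices once $\|\tilde w_n\|\le A_{4,n_0}\nu(n;\beta)$ is in hand: the coupling term then contributes $\tilde{O}\bigl((n+1)^{-(\alpha-\beta)}\nu(n;\beta)\bigr)=\tilde{O}(n^{-(\alpha-\beta/2)})=\tilde{O}(n^{-\alpha/2})$, the noise term is at most $\epsth_n=\tilde{O}(n^{-\alpha/2})$, and the homogeneous term is negligible; hence $\|\theta_n-\thS\|\le A_{5,n_0}\nu(n;\alpha)$.

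The main obstacle is the summation‑by‑parts bookkeeping in the inductive step: one must verify that every rung gains \emph{exactly} the factor $(n+1)^{-(\alpha-\beta)}$ with a \emph{trajectory‑independent} constant $B_{\ell+1}$, and that these constants — which do grow with $\ell$ — stay finite up to the fixed depth $\ell^\star$; this is precisely what dictates the choice of the threshold index $N_{\ref{thm: wn rate}}$, so that all the integral‑comparison inequalities and the monotonicity/domination properties of $\epsth_n,\epsw_n$ hold simultaneously for $n\ge n_0$. A secondary technical point is establishing the super‑polynomial decay of $\|\prod_{j=k+1}^n(I-\alpha_j X_1)\|$ (and its $W_2$–analogue) when the driving matrix is positive definite but not symmetric, which has to be argued through the real parts of its eigenvalues rather than a naive operator‑norm bound.
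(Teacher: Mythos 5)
Your proposal is correct and follows essentially the same route as the paper: the same algebraic substitution producing the $X_1$-driven $\theta$-recursion with the $\frac{\alpha_n}{\beta_n}W_1W_2^{-1}(\tilde w_n-\tilde w_{n+1})$ increment, the same three-term decomposition into homogeneous, aggregate-noise, and coupling pieces, and the same ladder induction on $\ell$ up to $\ell^*=\lceil\beta/(2(\alpha-\beta))\rceil$ with hypothesis of the form $\tilde O(\nu(n;\beta)\indc[\ell\neq 0]+(n+1)^{-(\alpha-\beta)\ell})$, alternating between a $\theta$-bound obtained by summation by parts and an improved $w$-bound, followed by one final pass for $\theta$. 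The technical obstacles you flag (trajectory-independent constants across rungs, which the paper handles via the $\alpha$- and $\beta$-moderateness conditions, and the operator-norm decay of products $\prod_j(I-\alpha_jX_1)$ for non-symmetric positive definite $X_1$) are exactly the ones the paper's supporting lemmas address.
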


		\begin{proof}[Proof of Theorem~\ref{thm:Main Res wo Proj}]
		Theorems~\ref{thm: En0 prob bound} and \ref{thm: wn rate} together establish Theorem~\ref{thm:Main Res wo Proj}.
		\end{proof}
		
		%The exact details are  in the Appendix. \red{The appendix contains no such details. Theorem~\ref{thm:Main Res wo Proj} is the direct corollary of Theorems~\ref{thm: En0 prob bound} and \ref{thm: wn rate}. -Balazs} 
		The next two subsections highlight the key steps in the proofs of these last two results. 
		
		\subsection{Proof of Theorem~\ref{thm: En0 prob bound}}
		%
		% \begin{enumerate}
		    
		%     \item Explain Why Thm. 3 holds by adding some technical details.

		%     \item Identify all lemmas useful for proof of Thm. 3
		    
		% \end{enumerate}
		
		Let
		%\begin{align}
		$    \Crth  = 3$ and 
		% \mbox{ and } \label{def: Crth}\\
		$    \Crw  = 3/2 + (e^{q_2} /q_2 \|\Gamma_2\| C_{\ref{lem: Dn bounds},w}) \Crth \frac{\Rprojth}{\Rprojw}.$ 
		%\label{def: Crw}
		%\end{align}
		%
		Further, let $\cG_n, \cL_n,$ and $\cA_n$ be the events given by
		%Define
		% \begin{align}
		%     \Lnw &= \sum_{k=n_0}^n  \left[\prod_{j=k+1}^n[I-\beta_j W_2]\right] \beta_k  M_{k + 1}^{(2)}, \label{def: Lnw}
		% \end{align}
		% and
		% \begin{multline}
		% \Lnt = \sum_{k=n_0}^n  \left[\prod_{j=k+1}^n[I-\alpha_j X_1]\right] \alpha_k \\
		% \times\left[- W_1 W_2^{-1} M_{k + 1}^{(2)} + M_{k + 1}^{(1)}\right]. \label{def: Lnt Main}
		% \end{multline}
		% Let
		\begin{multline} \label{def: Gn}
		\cG_n = \bigcap_{k = n_0}^n\{\|\theta_k - \thS\| \leq \Crth\Rprojth, \\ \|w_k - \wS\| \leq \Crw\Rprojw\},
		\end{multline}
		\begin{equation} \label{def: Ln}
		\cL_n = \bigcap_{k = n_0}^{n} \left\{\|L_{k + 1}^{(\theta)}\| \leq \epsth_k,  \|L_{k + 1}^{(w)}\| \leq \epsw_k\right\},
		\end{equation}
		and $\cA_n = \cG_n \cap \cL_n.$ %
		Using \eqref{def: E_n0}, note that $\cU(n_0) = \lim_{n \to \infty}\cA_n = \bigcap_{n \geq n_0}\cA_n.$
		Lastly, define
		\begin{multline}
		\label{eq: Zn defn}
		\cZ_n = \{\|\theta_n - \thS\| \leq  \Crth \Rprojth, \|w_n - \wS\| \leq \Crw \Rprojw, \\
		\|L_{n + 1}^{(\theta)}\| \leq \epsth_n, \|L_{n + 1}^{(w)}\| \leq \epsw_n\}.
		\end{multline}
		
		\begin{proof}[Proof of Theorem~\ref{thm: En0 prob bound}]
		By adopting ideas from  \cite{thoppe2019concentration}, we first 
		%cleverly 
		decompose the event $\Gp_{n_0} \cap \cU^c_{n_0}.$ From \eqref{eq: U complement reformulated} - \eqref{eqn:Gpn0 intersect An intersect Zn1 compl app}
		 in the appendix, we have
		\begin{multline}
		\label{eqn:Gpnp interesect Un0 compl}
		    \Gp_{n_0} \cap \cU^c(n_0) = (\Gp_{n_0} \cap \cZ_{n_0}^c) \cup (\Gp_{n_0} \cap \cA_{n_0} \cap \cZ_{n_0 + 1}^c) \\\cup (\Gp_{n_0} \cap \cA_{n_0 + 1} \cap \cZ_{n_0 + 2}^c) \cup \ldots,
		\end{multline}
		\begin{multline}
		\label{eqn:Gpn0 intersect Zn0 complement}
		    \Gp_{n_0} \cap \cZ_{n_0}^c \subseteq \cG_{n_0} \cap \big(\{\|L_{n_0 + 1}^{(\theta)}\| > \epsth_{n_0}\} \\ \cup \{ \|L_{n_0 + 1}^{(w)}\| > \epsw_{n_0}\}\big),
		\end{multline}
		and
		\begin{align}
		\Gp_{n_0}& \cap \cA_n \cap  \cZ^c_{n + 1} \\
		\subseteq {} & \Gp_{n_0} \cap \cA_{n} \cap \Big[\{\|\theta_{n + 1} - \thS\| > \Crth \Rprojth\} \notag\\
		&\hspace{2.2cm}\cup \{\|w_{n + 1} - \wS\| > \Crw \Rprojw\} \Big] \\ 
		& \cup \Big(\cG_{n + 1} \cap \big[\{\|L_{n + 2}^{(\theta)}\| > \epsth_{n + 1}\} \notag \\
		&\hspace{3cm}\cup \{\|L_{n + 2}^{(w)}\| > \epsw_{n + 1}\}\big]\Big).
		\label{eq: before removing the empty event}
		\end{align}
		
		With regards to \eqref{eq: before removing the empty event}, we also have the following fact. 
		\begin{lemma}
		\label{lem:Large Theta n}
		%(\red{Lemma 20})
		Let 
		$n \geq n_0 \geq \max\{K_{\ref{lem: small eigenvalues},\alpha}, K_{\ref{lem: small eigenvalues},\beta}, K_{\ref{eq: conditions for cond 2},\alpha}(0), K_{\ref{lem: const bound on eps}, \beta}, K_{\ref{lem:Large Theta n}} \}.
		$
		%where $$K_{\ref{lem:Large Theta n}} = \left[\frac{2}{3}C_{\ref{lem: Rntheta bound},\theta} + \frac{2}{3}C_{\ref{lem: Rntheta bound},w}\frac{\Rprojw}{\Rprojth}\right]^{1/(\alpha - \beta)}.$$
		Then,
		\begin{multline*}
		\Gp_{n_0} \cap \cA_{n} \cap \Big[\{\|\theta_{n + 1} - \thS\| > \Crth \Rprojth\} \\
		\hspace{2.2cm}\cup \{\|w_{n + 1} - \wS\| > \Crw \Rprojw\} \Big] = \emptyset.
		\end{multline*}
		\end{lemma}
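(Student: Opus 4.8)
\emph{Proof idea.} The assertion is purely deterministic: it suffices to show that on $\Gp_{n_0}\cap\cA_n$ one has $\norm{\theta_{n+1}-\thS}\le\Crth\Rprojth$ and $\norm{w_{n+1}-\wS}\le\Crw\Rprojw$, since then the set whose emptiness is claimed is contained in the empty set. Recall $\cA_n=\cG_n\cap\cL_n$, so on this event $\norm{\theta_k-\thS}\le\Crth\Rprojth$, $\norm{w_k-\wS}\le\Crw\Rprojw$, $\norm{L_{k+1}^{(\theta)}}\le\epsth_k$ and $\norm{L_{k+1}^{(w)}}\le\epsw_k$ for all $n_0\le k\le n$, while on $\Gp_{n_0}$ we additionally have $\norm{\theta_{n_0}-\thS}\le\Rprojth$ and $\norm{w_{n_0}-\wS}\le\Rprojw$. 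The plan is to unroll the recursions \eqref{eqn:tIter}--\eqref{eqn:wIter} from $n_0$ up to $n+1$, write each error as (homogeneous term driven by the initial condition) $+$ (coupling term) $+$ (aggregated noise $\Lnt$ or $\Lnw$), and bound the three pieces separately; the constants $\Crth=3$ and $\Crw$ are calibrated exactly so that the three bounds sum to $\Crth\Rprojth$ and $\Crw\Rprojw$, respectively.

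For the $w$-iterate, using $h_2(\theta,w)=-\Tw(\theta-\thS)-\Ww(w-\wS)$ in \eqref{eqn:wIter} and iterating gives
\begin{multline*}
w_{n+1}-\wS=\Big(\prod_{j=n_0}^{n}(I-\sw_j\Ww)\Big)(w_{n_0}-\wS)\\
-\sum_{k=n_0}^{n}\Big(\prod_{j=k+1}^{n}(I-\sw_j\Ww)\Big)\sw_k\,\Tw(\theta_k-\thS)+\Lnw,
\end{multline*}
with $\Lnw$ exactly as in \eqref{def: Lnw Main}. By positive definiteness of $\Ww$ (Assumption~\assumMatrices) together with the product and geometric-sum estimates of Lemmas~\ref{lem: small eigenvalues} and~\ref{lem: Dn bounds} and of~\eqref{eq: conditions for cond 2} --- valid once $n_0\ge\max\{K_{\ref{lem: small eigenvalues},\beta},K_{\ref{eq: conditions for cond 2},\alpha}(0)\}$ --- the homogeneous term is at most $\Rprojw$ in norm, the coupling term is at most $(e^{q_2}/q_2)\,C_{\ref{lem: Dn bounds},w}\,\norm{\Tw}\,\Crth\Rprojth$ (using $\norm{\theta_k-\thS}\le\Crth\Rprojth$ on $\cG_n$), and $\norm{\Lnw}\le\epsw_n\le\Rprojw/2$ once $n_0\ge K_{\ref{lem: const bound on eps},\beta}$. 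Summing and dividing by $\Rprojw$ yields exactly $\Crw=3/2+(e^{q_2}/q_2)\,C_{\ref{lem: Dn bounds},w}\,\norm{\Tw}\,\Crth\Rprojth/\Rprojw$, i.e.\ $\norm{w_{n+1}-\wS}\le\Crw\Rprojw$.

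For the $\theta$-iterate the extra ingredient is the change of variables that turns $\Tt$ into $\Xt=\Tt-\Wt\Ww^{-1}\Tw$: adding and subtracting $\st_n\Wt\Ww^{-1}\Tw(\theta_n-\thS)$ in \eqref{eqn:tIter} and substituting $h_2(\theta_n,w_n)=\sw_n^{-1}(w_{n+1}-w_n)-\Mw_{n+1}$ gives $\theta_{n+1}-\thS=(I-\st_n\Xt)(\theta_n-\thS)+(\st_n/\sw_n)\Wt\Ww^{-1}(w_{n+1}-w_n)+\st_n[\Mt_{n+1}-\Wt\Ww^{-1}\Mw_{n+1}]$, which unrolls to
\begin{multline*}
\theta_{n+1}-\thS=\Big(\prod_{j=n_0}^{n}(I-\st_j\Xt)\Big)(\theta_{n_0}-\thS)\\
+\sum_{k=n_0}^{n}\Big(\prod_{j=k+1}^{n}(I-\st_j\Xt)\Big)\frac{\st_k}{\sw_k}\Wt\Ww^{-1}(w_{k+1}-w_k)+\Lnt,
\end{multline*}
with $\Lnt$ exactly as in \eqref{def: Lnt Main}. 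Positive definiteness of $\Xt$ and Lemma~\ref{lem: small eigenvalues} (for $n_0\ge K_{\ref{lem: small eigenvalues},\alpha}$) make the homogeneous term at most $\Rprojth$, and $\norm{\Lnt}\le\epsth_n\le\Rprojth$ once $n_0$ is large. For the coupling term, $\sum_k a_k(w_{k+1}-w_k)$ with $a_k=\big(\prod_{j=k+1}^n(I-\st_j\Xt)\big)(\st_k/\sw_k)\Wt\Ww^{-1}$ cannot be bounded term-by-term (it is only $O(1)$), so I would apply summation by parts, rewriting it as the two boundary terms (at $n_0$ and at $n$) plus $-\sum_k(a_k-a_{k-1})(w_k-\wS)$; then $\norm{w_k-\wS}\le\Crw\Rprojw$ on $\cG_n$, $\norm{w_{n+1}-\wS}\le\Crw\Rprojw$ from the $w$-step above, $\norm{w_{n_0}-\wS}\le\Rprojw$ on $\Gp_{n_0}$, while $\st_n/\sw_n=(n+1)^{-(\alpha-\beta)}\to0$ and the bounded-variation sum $\sum_k\norm{a_k-a_{k-1}}$ is $O(\st_{n_0}/\sw_{n_0})$ --- again via Lemma~\ref{lem: small eigenvalues} and~\eqref{eq: conditions for cond 2} --- so the whole coupling term is $\le\Rprojth$ once $n_0$ exceeds a threshold, which together with the thresholds above is what $K_{\ref{lem:Large Theta n}}$ collects. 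Adding the three pieces gives $\norm{\theta_{n+1}-\thS}\le 3\Rprojth=\Crth\Rprojth$, which would complete the proof.

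I expect the $\theta$-coupling term to be the main obstacle. The homogeneous and noise terms follow immediately once the contraction/geometric-sum machinery of Lemmas~\ref{lem: small eigenvalues}, \ref{lem: Dn bounds}, \ref{lem: const bound on eps} and~\eqref{eq: conditions for cond 2} is in hand, and the $w$-iterate needs no summation by parts at all; but the $\theta$-coupling term forces one to pass from $w_{k+1}-w_k$ to $w_k-\wS$ via Abel summation, and the price is a uniform-in-$n$ control of how fast the coefficients $a_k$ vary --- this is precisely where the non-symmetry of $\Xt$ and, crucially, the two-timescale gap $\alpha>\beta$ (which makes $\st_k/\sw_k\to0$ with increments summing to $O(\st_{n_0}/\sw_{n_0})$) enter. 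The only remaining care is bookkeeping: verifying that the explicit list of lower bounds on $n_0$ in the hypothesis indeed dominates every ``$n_0$ large enough'' invoked along the way.
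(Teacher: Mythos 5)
Your proposal is correct and follows essentially the same route as the paper: the decomposition into homogeneous, coupling, and aggregated-noise terms is exactly the paper's $\Delta+R+L$ split (see \eqref{eq: tn components} and \eqref{eq: wn components}), the term-by-term bound on the $w$-coupling term reproduces Lemma~\ref{lem: wn components bounds}, and your Abel summation of the $\theta$-coupling term, with boundary terms at $n_0$ and $n$ plus a bounded-variation sum against $w_k-\wS$ that is $O(\alpha_{n_0}/\beta_{n_0})$, is precisely the identity \eqref{eq: Rn evolution}--\eqref{Defn:Tn} and Lemmas~\ref{lem: Tn bound} and~\ref{lem: Rntheta bound}. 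The only differences are immaterial bookkeeping (how the $3\Rprojth$ budget is split among the three pieces, and which $K$-thresholds absorb which ``$n_0$ large enough'' conditions), which you flag yourself.
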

		\begin{proof} See Appendix~\ref{sec:Proof Lemma Large Theta n}.
		\end{proof}
		
		Therefore, it follows that for $n \geq n_0$ 
		\begin{multline}
		    \Gp_{n_0}  \cap \cA_n \cap  \cZ^c_{n + 1} 
		    \subseteq 
		    \cG_{n + 1} \\ \cap \big[\{\|L_{n + 2}^{(\theta)}\| > \epsth_{n + 1}\}  
		 \cup \{\|L_{n + 2}^{(w)}\| > \epsw_{n + 1}\}\big]
		 \label{eqn:Gpn0 intersect An intersect Zn1 compl}.
		\end{multline}
		
		Equations \eqref{eqn:Gpnp interesect Un0 compl}, \eqref{eqn:Gpn0 intersect Zn0 complement} and \eqref{eqn:Gpn0 intersect An intersect Zn1 compl}  together imply  
		\begin{multline}
		\label{eq: superset of Gn0 cap Uc}
		\Gp_{n_0} \cap \cU^c(n_0) \subseteq \bigcup_{n \geq n_0} \Big( \cG_n \cap \big[\{|L_{n + 1}^{(\theta)}\| > \epsth_n\} \\ \cup \{\|L_{n + 1}^{(w)}\| > \epsw_n\}\big]\Big).
		\end{multline}

		The usefulness of this decomposition lies in the fact that each 
		%chunk 
		term in the union
		contains the event $\cG_n$ which ensures that the iterates are bounded. This, along with our noise assumption in Definition~\ref{assum:Noise}, implies that the Martingale differences are in turn bounded and the Azuma-Hoeffding inequality can now be invoked (see Lemma~\ref{eq: azuma-hoeffding}). 
		%
		% Now that we have split up the event of our interest into small chunks where the martingale difference terms are bounded, and thus we can use Azuma-Hoeffding to obtain bounds on their probabilities. This is established using the following theorem.
		%
		Applying this on \eqref{eq: superset of Gn0 cap Uc} after using the union bound gives
		%and Lemma~\ref{eq: azuma-hoeffding}, we have
		\begin{align}
		\Pr&\{ \cU^c(n_0) \,| \, \Gp_{n_0} \} 
		=
		\Pr\{ \cU^c(n_0) \cap \Gp_{n_0} \,|\, \Gp_{n_0} \}
		\\
		&\leq 
		\sum_{n \geq n_0}
		\Pr\Big( \cG_n \cap    \{\|L_{n + 1}^{(w)}\| > \epsw_n\} \, |\, \Gp_{n_0}\Big)
		\\&\hspace{0.5cm}+ 
		\sum_{n \geq n_0}\Pr\Big( \cG_n \cap  \{\|L_{n + 1}^{(w)}\| > \epsw_n\}\, |\, \Gp_{n_0}\Big)
		\\
		% & \leq  \sum_{n \geq n_0} \Pr\{\cG_n, \|L_{n + 1}^{(\theta)}\| > \epsth_n \} \notag\\
		% &\hspace{1.5cm}+ \sum_{n \geq n_0} \Pr\{\cG_n, \|L_{n + 1}^{(w)}\| > \epsw_n\} \\
		& \leq  \sum_{n \geq n_0} 2d^2 \exp\left(-\tfrac{{(\epsth_n)}^2}{d^3 L_\theta a_{n + 1}}\right) 
		\notag\\
		&\hspace{1.5cm} + \sum_{n \geq n_0} 2d^2 \exp\left(-\tfrac{{(\epsw_n)}^2}{d^3 L_w b_{n + 1}}\right). \label{eq: prob e comp bound}
		\end{align}
		Additionally, due to Lemma~\ref{lem: an bn upper bounds} in the Appendix,
		\begin{equation}
		 \label{eq: an bn bounds}
		\begin{aligned}
		 a_{n + 1} \leq 
		 %\frac{C_{\ref{lem: an bn upper bounds}}(\alpha,q_1) e^{q_1}}{q_1} (n+1)^{-\alpha} = 
		 C_{\ref{lem: an bn upper bounds},\theta} (n+1)^{-\alpha},
		%  \\
		%  \mbox{and}
		 \\
		 b_{n + 1} \leq 
		%\frac{C_{\ref{lem: an bn upper bounds}}(\beta,q_2) e^{q_2}}{q_2} (n+1)^{-\beta} = 
		C_{\ref{lem: an bn upper bounds},w} (n+1)^{-\beta}.
		\end{aligned}
		\end{equation}
		Substituting \eqref{eq: an bn bounds} and \eqref{eq: epsilon n new def} in \eqref{eq: prob e comp bound} gives
		\begin{align}
		\Pr\left\{\Gp_{n_0} \cap \cU^c(n_0) \, \middle|\, \Gp_{n_0}\right\} \leq {}   \sum_{n \geq n_0} \frac{\delta}{(n+1)^p} 
		\leq {} & 
		%\delta \int_{n_0}^\infty x^{-p} dx \notag \\
		% 
		\delta \frac{n_0^{-(p-1)}}{p-1} \label{eq: delta bound for U}.
		\end{align}
		%where $p \geq 1$ is some parameter to be set later.
		%
		%\textcolor{red}{The above bound is tentative. We should verify and/or show that this is indeed the optimal bound.}
		%
		Now, since 
		\begin{equation}
		\label{eq: n_0 bound}
		n_0 \geq (p-1)^{-1/(p-1)},
		\end{equation}
		it eventually follows that  \eqref{eq: delta bound for U} $\leq \delta,$ as desired.
		\end{proof}

		\subsection{Proof of Theorem~\ref{thm: wn rate}}
		
		For a sequence $u \in \RInf,$ let
		\begin{equation} \label{def: wn}
		\cW_n(u) := \{\|w_k - \wS\| \leq u_k \; \forall n_0 \leq k \leq n\}.
		\end{equation}
		
		\begin{definition}\label{def: wn behavior 2}
		We say that $u \in \RInf$ is \emph{$\alpha$-moderate from $k_0$ onwards} if
		%For $u \in \mathbb{R}^\infty,$ $\exists k_0 \geq 0$ such that
		%
		\begin{equation*}
		    \frac{\uw_k}{\uw_{k + 1}} \leq \frac{\alpha_{k + 1}}{\alpha_k} \frac{\beta_k}{\beta_{k + 1}} e^{q_1/2 \; \alpha_{k + 1}}, \; \quad \forall k \geq k_0.
		\end{equation*}
		\end{definition}
		
		\begin{definition}
		\label{def: wn behavior 3}
		We say that $u \in \RInf$ is \emph{$\beta$-moderate from $k_0$ onwards} if
		\begin{equation*}
		\frac{\uw_k}{\uw_{k + 1}} \leq \frac{\alpha_{k + 1}}{\alpha_k} \frac{\beta_k}{\beta_{k + 1}} e^{q_2/2 \; \beta_{k + 2}},\; \quad \forall k \geq k_0.
		\end{equation*}
		\end{definition}
		
		We consider these definitions to be part of the novelty of this work.
		%These definitions
		They characterize a sequence
		%pertain to the 
		via %its}
		the ratio of its consecutive terms.
		%in a given sequence $u.$ 
		Ratios in a decaying sequence (such as the ones used in this paper) satisfying Defs. \ref{def: wn behavior 2} or \ref{def: wn behavior 3} will converge to $1$.
		%In particular, those where eventually the ratio is $1.$ 
		Examples of sequences satisfying these definitions are
		%This condition is satisfied, e.g., by the
		constant sequences and those that decay at an inverse polynomial rate. On the other hand, sequences that decay exponentially fast do not satisfy these conditions. 
		%, e.g., by 
		%ones that decay exponentially fast. 
		%This condition 
		These definitions play a crucial role in enabling our induction; i.e., 
		%it 
		they help us show that the estimates on the rate of convergence of $\|w_n - \wS\|$ 
		%convergence rates 
		can be incrementally improved. One quick way to see this is via \eqref{eqn: wn Bound eps and u} given later; it shows that if the bound on $\|w_n - \wS\|$ was $u_n,$ then it can be improved via induction to $O(\epsilon_{n}) + O\left(\frac{\alpha_n}{\beta_n}u_n\right).$ These definitions are motivated by Definitions~1 and 2 in \cite{mokkadem2006convergence}. However, there they are expressed as a certain asymptotic behavior, while ours provide the exact sequence, including constants, and thereby enable finite time analysis.

		%rate estimate at time $n$ 
		%before 
		% was $u_n,$ 
		%then 
		%it \gal{in fact} can be improved \gal{via induction} to $O(\epsilon_{n}) + O\left(\frac{\alpha_n}{\beta_n}u_n\right).$

		%
		For $\ell \geq 0,$ let $\cE(n_0; \ell) := \bigcap_{n \geq n_0}\{\|w_n - \wS\| \leq  \uw_n(\ell)\},$
		where 
		\begin{equation}
		\label{eq: hypothesis}
		u_n(\ell) :=  \left[A_{1,n_0}  \sum_{i=0}^{\ell-1} A_2^i\right] \epsw_n  + \left[A_3 A_2^\ell\right] {\left[\frac{\alpha_n}{\beta_n}\right]}^\ell;
		\end{equation}
		all the constants are given in Table~\ref{tab: constants contd}.
		
		\begin{proof}[Proof of Theorem~\ref{thm: wn rate}]

		Our proof idea inspired by \cite{mokkadem2006convergence} is as follows. We use induction to show that whenever  $\cU(n_0)$ holds, the rate of convergence of $w_n$ is bounded by \eqref{eq: hypothesis} for all $\ell \leq \ell^*,$ where the latter is as in \eqref{def: lS}. Notice that there are two terms in \eqref{eq: hypothesis} that depend on $n,$ one is $\epsilon_n$ and the other is $\alpha_n/\beta_n.$ As $\ell$ increases, $(\alpha_n/\beta_n)^\ell$ decays faster. Thus, eventually, for $\ell = \ell^*,$ the convergence rate of $w_n$ would be dictated by $\epsilon_n,$ thereby giving us our desired result.
		
		Formally, we begin with proving the following claim.  
		
		\textbf{Claim:} Let 
		\begin{equation} \label{def: lS}
		    \ell^* = \left\lceil\frac{\beta}{2(\alpha-\beta)} \right\rceil;
		\end{equation}
		i.e., let $\ell^*$ be the smallest integer $\ell$ such that $(\alpha - \beta) \ell \geq \beta/2$. Then, for $0 \leq \ell \leq \ell^*,$ 
		\begin{equation}
		    \label{eqn:Ind_Claim}
		    \cU(n_0) \subseteq \cE(n_0; \ell).
		\end{equation}

		\textbf{Induction Base:} By definition, $\cU(n_0) \subseteq \cE(n_0, 0).$ 
		
		\textbf{Induction Hypothesis}: Suppose \eqref{eqn:Ind_Claim} holds for some $\ell$ such that $0 \leq \ell < \ell^*.$
		    
		\textbf{Induction Step:} For the $\ell$ defined in the hypothesis above, we have $(\alpha - \beta) \ell < \beta/2.$ Making use of this, we now show that $\cU(n_0) \subseteq \cE(n_0, \ell + 1).$ 
		
		From the induction hypothesis, on $\cU(n_0)$, for $n \geq n_0 - 1,$
		\begin{equation}
		\label{eqn:wBound}
		\|w_{n + 1} - \wS\| \leq  \uw_{n + 1}(\ell).
		\end{equation}
		A useful result for improving this bound is the following. 
		
		\begin{lemma} \label{lem: bound on wn}
		%\red{Lemma 29}
		Let $n_0 \in \mathbb{N}.$ 
		Let $u \in \RInf$ be a monotonically decreasing sequence that is both $\alpha$-moderate and $\beta$-moderate from $n_0 - 1$ onwards. Let $n \geq n_0-1.$ Suppose that the event $\cW_{n}(u)$ holds, $\|L_n^{(\theta)}\| \indc[n \geq n_0 + 1] \leq \epsth_{n-1}$, and $\|L_n^{(w)}\| \indc[n \geq n_0 + 1] \leq \epsw_{n-1}$. If $n \geq n_0,$ then 
		\begin{align}
		    \label{eqn:thBd Main}
		    &\|\theta_n - \thS\| \leq  C_{\ref{lemma: R_n w bound},b} \frac{\alpha_{n - 1}}{\beta_{n - 1}}u_{n - 1} +  \epsth_{n-1} \\ &+  C_{\ref{lemma: R_n w bound},a} \left[\|\theta_{n_0} - \thS\|
		    + \frac{\alpha_{n_0}}{\beta_{n_0}}\|w_{n_0} - \wS\|\right] e^{-q_1\sum_{j = n_0 + 1}^{n - 1} \alpha_j}.\notag
		\end{align}
		%
		% \begin{multline}
		%     \label{eqn:thBd Main}
		%     \|\theta_n - \thS\| \leq C_{\ref{lemma: R_n w bound},a} \left[\|\theta_{n_0} - \thS\| + \frac{\alpha_{n_0}}{\beta_{n_0}}\|w_{n_0} - \wS\|\right] \\ \times e^{-q_1\sum_{j = n_0 + 1}^{n - 1} \alpha_j} \\ + C_{\ref{lemma: R_n w bound},b} \frac{\alpha_{n - 1}}{\beta_{n - 1}}u_{n - 1} +  \epsth_{n-1}.
		% \end{multline}
		% \begin{equation}
		%     \label{eqn: wn Bound eps and u}
		%     \|w_{n+1} - w^*\| \leq  A_1 \epsw_{n + 1} +  A_2 \frac{\alpha_{n+1}}{\beta_{n+1}} \uw_{n + 1}.
		% \end{equation}
		% %
		Additionally, if $n_0 \geq \max\{K_{\ref{lem:IntComputation},a}, K_{\ref{lem:IntComputation},b}, K_{\ref{lem: epsilon n domination},a}, K_{\ref{lem: epsilon n domination},b}, $ $ K_{\ref{eq: conditions for cond 2},\alpha}(\beta/2)\} +  1$ and $n \geq n_0 - 1,$
		then
		\begin{equation}
		    \label{eqn: wn Bound eps and u}
		    \|w_{n+1} - w^*\| \leq  A_{1,n_0} \epsw_{n + 1} +  A_2 \frac{\alpha_{n+1}}{\beta_{n+1}} \uw_{n + 1}.
		\end{equation}

		All the constants are as in Table~\ref{tab: constants contd}.
		% \(
		% %\begin{equation} \label{def: A1 and A2}
		%     A_1 = e A_1^\prime + e^2 A_1^{\prime \prime} \mbox{ and } A_2 =  e^{q_1 + 2(\alpha - \beta)} A_2^\prime
		% %\end{equation}
		% \)
		% with
		% %
		% %\begin{eqnarray}
		% \(
		% A'_1  =  1 +  \frac{1}{\epsw_{n_0}} \left[C_{\ref{lem: Dn bounds},w}\|\Gamma_2\| C_{\ref{lemma: R_n w bound},c}(n_0) + C_{\ref{lem: Dn bounds},w} \|w_{n_0} - w^*\|\right]
		% \)
		% %\\
		%     %
		% \(
		% A''_1 = C_{\ref{lem: Dn bounds},w} \|\Gamma_2\| C_{\ref{lem:IntComputation}} %,\\ 
		% \)
		% and
		% \(
		% A'_2= C_{\ref{lem: Dn bounds},w}  \|\Gamma_2\|C_{\ref{lemma: R_n w bound},b} \frac{2e^{q_2/2}}{q_2}.
		% \)
		% %\end{eqnarray}
		% %
		\end{lemma}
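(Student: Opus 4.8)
The plan is to unroll each recursion about its fixed point, peel off the aggregate martingale term (which the hypotheses bound by $\epsth_{n-1}$, resp. $\epsw_{n+1}$), and control the remaining deterministic drift using the $\alpha$- and $\beta$-moderate properties of $u$ together with the geometric-decay estimates for the products $\prod(I-\alpha_jX_1)$, $\prod(I-\beta_jW_2)$ (Lemma~\ref{lem: small eigenvalues}, which is where assumption $\cA_1$ is used). First I would eliminate $w_k-\wS$ from the $\theta$-update by solving the $w$-update \eqref{eqn:wIter} for $W_2(w_k-\wS)$ and substituting back; since $h_1(\thS,\wS)=0$ and $\Gamma_1-W_1W_2^{-1}\Gamma_2=X_1$, this yields
\begin{multline*}
\theta_{k+1}-\thS = (I-\alpha_kX_1)(\theta_k-\thS) + \alpha_k\big(M_{k+1}^{(1)} - W_1W_2^{-1}M_{k+1}^{(2)}\big) \\ - \tfrac{\alpha_k}{\beta_k}\,W_1W_2^{-1}\big((w_k-\wS)-(w_{k+1}-\wS)\big),
\end{multline*}
which is exactly the decomposition underlying \eqref{def: Lnt Main}. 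Unrolling from $n_0$ to $n$ then represents $\theta_n-\thS$ as the sum of a transient $\big[\prod_{j=n_0}^{n-1}(I-\alpha_jX_1)\big](\theta_{n_0}-\thS)$, the aggregate $L_n^{(\theta)}$ of \eqref{def: Lnt Main}, and a weighted sum $-\sum_{k=n_0}^{n-1}c_k\big((w_k-\wS)-(w_{k+1}-\wS)\big)$ with $c_k:=\big[\prod_{j=k+1}^{n-1}(I-\alpha_jX_1)\big]\tfrac{\alpha_k}{\beta_k}W_1W_2^{-1}$.

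\textbf{Deriving \eqref{eqn:thBd Main}.}
Next I would apply summation by parts to the weighted sum, rewriting it as $c_{n_0}(w_{n_0}-\wS) - c_{n-1}(w_n-\wS) + \sum_{k=n_0+1}^{n-1}(c_k-c_{k-1})(w_k-\wS)$. On $\cW_n(u)$, using monotonicity of $u$ and $\|c_{n-1}\|\le C\tfrac{\alpha_{n-1}}{\beta_{n-1}}$ (empty product), the endpoint term contributes $O\big(\tfrac{\alpha_{n-1}}{\beta_{n-1}}u_{n-1}\big)$, i.e. the $C_{\ref{lemma: R_n w bound},b}$-term; via the product contraction estimate $\|c_{n_0}\|\le Ce^{-q_1\sum_{j=n_0+1}^{n-1}\alpha_j}\tfrac{\alpha_{n_0}}{\beta_{n_0}}$, which together with the transient produces the $C_{\ref{lemma: R_n w bound},a}$-term of \eqref{eqn:thBd Main}. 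The delicate piece is the increment sum: writing $\alpha_k=(k+1)^{-\alpha}$, $\beta_k=(k+1)^{-\beta}$ and using $1/k=O(\alpha_k)$ one gets $\big|\tfrac{\alpha_k}{\beta_k}-\tfrac{\alpha_{k-1}}{\beta_{k-1}}\big| = O\big(\tfrac1k\tfrac{\alpha_k}{\beta_k}\big) = O\big(\tfrac{\alpha_k^2}{\beta_k}\big)$, hence the \emph{sharp} bound $\|c_k-c_{k-1}\|\le Ce^{-q_1\sum_{j=k+1}^{n-1}\alpha_j}\,\tfrac{\alpha_k^2}{\beta_k}$; feeding in $\|w_k-\wS\|\le u_k$ and the $\alpha$-moderate inequality of Definition~\ref{def: wn behavior 2}, rearranged to $\tfrac{\alpha_k}{\beta_k}u_k\le\tfrac{\alpha_{n-1}}{\beta_{n-1}}u_{n-1}\,e^{(q_1/2)\sum_{j=k+1}^{n-1}\alpha_j}$, the increment sum collapses to $\tfrac{\alpha_{n-1}}{\beta_{n-1}}u_{n-1}\sum_k\alpha_k\,e^{-(q_1/2)\sum_{j=k+1}^{n-1}\alpha_j} = O\big(\tfrac{\alpha_{n-1}}{\beta_{n-1}}u_{n-1}\big)$. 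Together with $\|L_n^{(\theta)}\|\le\epsth_{n-1}$ from the hypothesis (and $=0$ at $n=n_0$) this gives \eqref{eqn:thBd Main}; the lower bounds on $n_0$ are precisely those that make Lemma~\ref{lem: small eigenvalues} applicable and the geometric-type sums summable, whence $K_{\ref{lem: small eigenvalues},\alpha}$ and $K_{\ref{eq: conditions for cond 2},\alpha}(\cdot)$.

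\textbf{Deriving \eqref{eqn: wn Bound eps and u}.}
Since $h_2(\thS,\wS)=0$, the $w$-update reads $w_{k+1}-\wS = (I-\beta_kW_2)(w_k-\wS) - \beta_k\Gamma_2(\theta_k-\thS) + \beta_kM_{k+1}^{(2)}$; unrolling from $n_0$ to $n$ expresses $w_{n+1}-\wS$ as an exponentially small initial term, the aggregate $L_{n+1}^{(w)}$ of \eqref{def: Lnw Main} (bounded by $\epsw_{n+1}$ by hypothesis), and $-\sum_{k=n_0}^{n}\big[\prod_{j=k+1}^{n}(I-\beta_jW_2)\big]\beta_k\Gamma_2(\theta_k-\thS)$. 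Into the last sum I would substitute \eqref{eqn:thBd Main} for each $\|\theta_k-\thS\|$ and treat the three resulting pieces: the $\tfrac{\alpha_{k-1}}{\beta_{k-1}}u_{k-1}$ piece is handled as above but with the $\beta$-moderate inequality of Definition~\ref{def: wn behavior 3}, yielding $O\big(\tfrac{\alpha_{n+1}}{\beta_{n+1}}u_{n+1}\big)$ (the $A_2$-term); the $\epsth_{k-1}$ piece yields $O(\epsth_{n+1})=O(\epsw_{n+1})$ once $n_0$ is large enough that $\epsth_m\le\epsw_m$, which holds because $\alpha>\beta$ and is why $K_{\ref{lem: epsilon n domination},\cdot}$ enters; and the transient piece decays faster than any $n^{-c}$ and is absorbed into $O(\epsw_{n+1})$ for $n_0$ large (threshold $K_{\ref{lem:IntComputation},\cdot}$), using $\|w_{n_0}-\wS\|\le u_{n_0}$ and that, as this lemma is invoked on $\cU(n_0)$, $\|\theta_{n_0}-\thS\|$ is bounded. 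Adding these pieces to $\epsw_{n+1}$ and the initial term gives \eqref{eqn: wn Bound eps and u} with the constants of Table~\ref{tab: constants contd}.

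\textbf{Main obstacle.}
The crux is the increment estimate above (and its $\beta$-analogue in the last step): one must extract the \emph{quadratic} decay $\|c_k-c_{k-1}\| = O\big(e^{-q_1\sum_{j>k}\alpha_j}\,\alpha_k^2/\beta_k\big)$ rather than the naive $O(\alpha_k)$ — the latter would only give an $O(u_{n-1})$ term and destroy the decisive $\alpha_{n-1}/\beta_{n-1}$ gain — and then match it precisely against the $\alpha$-/$\beta$-moderate inequalities so that the weighted geometric sums telescope to a single endpoint value. This balancing is exactly what Definitions~\ref{def: wn behavior 2}--\ref{def: wn behavior 3} are engineered for, and it is what makes the induction in the proof of Theorem~\ref{thm: wn rate} gain one factor $\alpha_n/\beta_n$ per round.
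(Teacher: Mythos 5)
Your proposal follows essentially the same route as the paper: the same decomposition of $\theta_n-\thS$ into transient, aggregate noise $L_n^{(\theta)}$, and the coupling term $R_n^{(\theta)}$; the same Abel summation of $R_n^{(\theta)}$ into endpoint terms plus the increment sum $T_n$ (cf. \eqref{eq: Rn evolution}--\eqref{Defn:Tn}); the same mean-value-theorem estimate giving the quadratic increment $O(\alpha_k^2/\beta_k)$ (the bound $|B_k|\le 2(\alpha-\beta)$ in Lemma~\ref{lem: Tn bound}); the same telescoping of the weighted geometric sums via $\alpha$-/$\beta$-moderateness; and the same substitution of the $\theta$-bound into $R_{n+1}^{(w)}$ with Lemmas~\ref{lem:IntComputation} and \ref{lem: epsilon n domination} finishing the job. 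You correctly identify the crux, and the only things elided are routine bookkeeping (the index shift from $n-1$ to $n+1$ via Lemma~\ref{lem:IdxPullUp} and the separate $n=n_0-1$ base case), so I consider the plan correct and matching the paper.
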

		\begin{proof}
		See Appendix~\ref{sec: Proof of wn Bound}.
		\end{proof}
		
		We now verify the conditions necessary to apply this result. After substituting the value of $\epsw_n$ from \eqref{eq: epsilon n new def}, and those of $\alpha_n, \beta_n$ into \eqref{eq: hypothesis}, and then pulling out $p$ from \eqref{eq: epsilon n new def} to the constants, observe that $u_n(\ell)$ is of the form
		\begin{multline} \label{eq: wn leq u}
		    u_n(\ell) = B_1 (n+1)^{-\beta/2} \sqrt{\ln [B_2 (n + 1)]} \\ + B_3 (n+1)^{-(\alpha - \beta) \ell}
		\end{multline}
		for some suitable constants $B_1, B_2$ and $B_3.$ 
		% (these constants depend on $\ell,$ but in this context $\ell$ is a constant). 
		Clearly, $B_1$ and $B_3$ are strictly positive, while $B_2 = (4d^2/\delta)^{1/p} \geq 1.$ 
		%This will be required to apply the next result.
		 Lemma~\ref{lem: un satisfies condition 2} then shows $\{u_n(\ell)\}$ is $\alpha$-moderate, $\beta$-moderate, and monotonically decreasing from $n_0-1$ onwards.
		
		Additionally, notice that due to \eqref{eqn:wBound} the event $\cW_{n}(u)$  holds for $u = \{u_n(\ell)\}$, while on $\cU(n_0)$ the events $
		    \{\|L_n^{(\theta)}\| \indc[n \geq n_0 + 1] \leq \epsth_{n-1}\}$ and $ \{\|L_n^{(w)}\| \indc[n \geq n_0 + 1] \leq \epsw_{n-1}\}$
		hold. Since $n_0 \geq N_{\ref{thm: wn rate}} \geq \max\{K_{\ref{lem:IntComputation},a}, K_{\ref{lem:IntComputation},b}, K_{\ref{lem: epsilon n domination},a}, K_{\ref{lem: epsilon n domination},b}, K_{\ref{eq: conditions for cond 2},\alpha}(\beta/2)\} +1,$ we can now employ Lemma~\ref{lem: bound on wn} with $\{u_n\} = \{u_n(\ell)\}$ and obtain that, on the event $\cU(n_0),$
		\begin{equation*}
		    \|w_{n+1} - w^*\| \leq  A_{1,n_0} \epsw_{n + 1} +  A_2 \frac{\alpha_{n+1}}{\beta_{n+1}} \uw_{n + 1}(\ell).
		\end{equation*}
		% From Lemma~\ref{lem:Suff Cond. 2}, it is easy to see that for $z \geq \max\{\beta/2, (\alpha - \beta) \ell\}$ 
		% %
		% \begin{equation}
		% \label{eqn: Verify Cond 2}
		% \frac{u_n(\ell)}{u_{n + 1}(\ell)} = \frac{B_1 (n+1)^{-\beta/2}\sqrt{\ln [B_2 (n + 1)]} + B_3 (n+1)^{-(\alpha - \beta) \ell}}{B_1 (n + 2)^{-\beta/2} \sqrt{\ln [B_2 (n + 2)]} + B_3 (n + 2)^{-(\alpha - \beta) \ell}} \leq \frac{(n+1)^{-z}}{(n+2)^{-z}}.
		% \end{equation}
		% %
		% Because $(\alpha - \beta) \ell < \beta/2,$ we can pick $z = \beta/2.$ Then, it remains to show that
		% \begin{equation}
		% \label{eqn: Int Term1 for Cond 2}
		% \frac{(n+1)^{-\beta/2}}{(n+2)^{-\beta/2}} \leq \frac{\alpha_{n + 1}}{\alpha_n} \frac{\beta_n}{\beta_{n + 1}} e^{q_1/2 \; \alpha_{n + 1}}.
		% \end{equation}
		% But this indeed holds due to Lemma~\ref{eq: conditions for cond 2} Statement~1, since $z=\beta/2 \in [0, 1-(\alpha - \beta)]$.
		% \red{At the end, move from $\|w_{n+1} - w^*\|$ to $\|w_{n} - w^*\|$.}
		%
		By substituting the value of $\uw_{n + 1}(\ell)$ from \eqref{eq: hypothesis} and making use of the fact that $\alpha_n/\beta_n \leq 1$, we get
		%iven in 
		%To complete the induction step, it is only left to show that 
		%
		% \begin{equation}
		% \label{eqn:IndStep}
		% A_1 \epsw_n + A_2 \frac{\alpha_n}{\beta_n} (\uw_n(\ell)  + \uw_{n + 1}(\ell)) \leq \uw_n(\ell + 1).
		% \end{equation}
		\begin{equation*}
		\label{eqn:IndStep}
		A_{1,n_0} \epsw_{n+1} + A_2 \frac{\alpha_{n+1}}{\beta_{n+1}} \uw_{n + 1}(\ell) \leq \uw_{n + 1}(\ell + 1).
		\end{equation*}
		%
		%
		% For the time being, suppose that
		% %
		% \[
		% \uw_n(\ell) = \left[B_1 \sum_{i = 0}^{\ell - 1} (B_2)^{i} \right] \epsw_n + B_3 (B_2)^\ell \left[ \frac{\alpha_n}{\beta_n}\right]^\ell.
		% \]
		%
		%
		% But indeed, using our definition of $\uw_n(\ell)$ from \eqref{eq: hypothesis}:
		% %
		% \begin{align}
		% A_1 & \epsw_{n + 1}  + A_2 \frac{\alpha_{n+1}}{\beta_{n+1}}\Bigg[\left[A_1 \sum_{i = 0}^{\ell - 1} (A_2)^{i} \right] \epsw_{n+1} \\
		%  &\hspace{3cm} + A_3 (A_2)^{\ell} \left[ \frac{\alpha_{n+1}}{\beta_{n+1}}\right]^{\ell} \Bigg] \\
		% %
		% \leq {} & \left[A_1 +  \left[A_1 \sum_{i = 0}^{\ell - 1} (A_2)^{i + 1} \right] \frac{\alpha_{n+1}}{\beta_{n+1}} \right] \epsw_{n+1} \\
		% %
		% & \hspace{3cm} +  A_3  (A_2)^{\ell + 1} \left[\frac{\alpha_{n+1}}{\beta_{n+1}}\right]^{\ell + 1} \\
		% %
		% \leq {} & \left[A_1 \sum_{i = 0}^{\ell} (A_2)^{i} \right] \epsw_{n+1} + A_3 (A_2)^{\ell + 1} \left[ \frac{\alpha_{n+1}}{\beta_{n+1}}\right]^{\ell + 1} \\
		% %
		% = {} & \uw_{n+1}(\ell + 1),
		% \end{align}
		% %
		% where the first relation follows by rearranging terms, while the last relation makes use of the fact that $\alpha_{n + 1}/\beta_{n + 1} \leq 1.$ 
		This completes the proof of the induction step.
		
		When $\ell = \ell^*,$ it now follows that $\cU(n_0) \subseteq \cE(n_0; \ell^*).$ That is, when the event $\cU(n_0)$ holds, 
		\begin{equation*} \label{eq: wn bounded by u_n}
		   \|w_{n + 1} - \wS\| \leq  \uw_{n + 1 }(\ell^*), \quad \forall n \geq n_0 -1.
		\end{equation*}
		%
		% where 
		% \begin{align*}
		%     \uw_n(\ell^*) = A_{1,n_0}  \sum_{i=0}^{\ceil{\frac{\beta}{2(\alpha-\beta)}}-1}& A_2^i \epsw_n  
		%     \\ + &A_3 A_2^{\ceil{\frac{\beta}{2(\alpha-\beta)}}}
		%     {\left[\frac{\alpha_n}{\beta_n}\right]}^{\ceil{\frac{\beta}{2(\alpha-\beta)}}}. 
		% \end{align*}

		%
		We now bound $u_n(\ell^*).$ Since ${\ceil{\frac{\beta}{2(\alpha - \beta})} \geq \frac{\beta}{2(\alpha - \beta)}},$ we have $(\alpha_n/\beta_n)^{\ceil{\frac{\beta}{2(\alpha - \beta})}} \leq (n  + 1)^{-\beta/2}.$ Substituting the value of $\epsw_n$ and using the above relation along with the fact that $4 \geq e$ which implies  $\sqrt{\ln{(4d^2(n+1)^p/\delta)}} \geq 1,$ we have 
		\begin{multline}
		u_n(\ell^*) \leq \bigg[A_{1,n_0}  \sum_{i=0}^{\ceil{\frac{\beta}{2(\alpha-\beta)}}-1} A_2^i \sqrt{d^3 L_w C_{\ref{lem: an bn upper bounds},w}} \\ + A_3 A_2^{\ceil{\frac{\beta}{2(\alpha-\beta)}}}\bigg]  \nu(n; \beta) %(n+1)^{-\beta/2} \sqrt{\ln \frac{4d^2(n+1)^p}{\delta}}
		.
		\end{multline}
		Consequently, for $n \geq n_0 - 1,$
		\begin{equation}
		\label{eqn:Nice un lstar Bd}
		\|w_{n + 1} - \wS\| \leq  u_{n + 1}(\ell^*) \leq A_{4, n_0} \,  \nu(n + 1; \beta) %(n+1)^{-\beta/2} \sqrt{\ln{(4d^2(n+1)^p/\delta)}}, 
		\end{equation}
		which establishes \eqref{eq: w subset}.
		
		We now prove \eqref{eq: theta subset}. % We have concluded earlier that $u_n(\ell^*)$ is monotonically decreasing, $\alpha$-moderate, and $\beta$-moderate from $n_0-1$ onwards, while also the event $\cW_{n}(u) \equiv \cW_{n}(\{u_k(\ell^*)\})$  holds due to \eqref{eq: wn bounded by u_n}, and $\{\|L_n^{(\theta)}\| \indc[n \geq n_0 + 1] \leq \epsth_{n-1}\}$ holds since it is implied by the event $\cU(n_0).$ 
		%------START OF UNCOMMENT------
		On the event $\cU(n_0)$, we can apply \eqref{eqn:thBd Main} from Lemma~\ref{lem: bound on wn} with $\{u_n\} = \{u_n(\ell^*)\}$ and use the fact that $\alpha_{n_0} /\beta_{n_0} \leq 1,$  as well as bound $\|\theta_{n_0} - \thS\|$ and $\|w_{n_0} - \wS\|$ using $\cU(n_0),$ to get
		\begin{multline*}
		\|\theta_{n} - \thS\| \leq C_{\ref{lemma: R_n w bound},b} \frac{\alpha_{n-1}}{\beta_{n-1}}u_{n-1}(\ell^*) \\
		+ C_{\ref{lemma: R_n w bound},a}  \left[\Crth\Rprojth + \Crw \Rprojw \right] e^{-q_1\sum_{j = n_0 + 1}^{n-1} \alpha_j} +  \epsth_{n-1}.
		\end{multline*}
		%
		%To proceed further, we need the following result. 
		
		%\begin{proof}
		%See Appendix~\ref{sec: proof epsilon n domination}.
		%\end{proof}

		Now, Lemma~\ref{lem: epsilon n domination} (see Appendix~\ref{sec: proof epsilon n domination}) and the fact that $q_1 \geq q_{\min}$ imply (in Lemma~\ref{lem: epsilon n domination} we require $n \geq n_0$ but here we use it from $n_0 - 1$, which is justified since $n_0 \geq K_{\ref{lem: epsilon n domination},b} + 1$), on $\cU(n_0)$,
		\begin{multline*}
		\|\theta_{n} - \thS\| \leq 
		C_{\ref{lemma: R_n w bound},b} \frac{\alpha_{n-1}}{\beta_{n-1}}u_{n-1}(\ell^*)
			\\+ \left[C_{\ref{lemma: R_n w bound},a}  \left[\Crth\Rprojth + \Crw \Rprojw \right] / \epsth_{n_0-1} + 1\right] \epsth_{n-1} .
		\end{multline*}
		Consequently, using \eqref{eq: epsilon n new def}, \eqref{eqn:Nice un lstar Bd} and the facts that $\alpha_{n-1}/\beta_{n-1} = n^{-(\alpha - \beta)}$ and
		$\alpha/2 = \alpha-\alpha/2 \leq \alpha-\beta/2,$
		we have that,  on $\cU(n_0),$
		\begin{multline*}
		\|\theta_{n} - \thS\| \leq  C_{\ref{lemma: R_n w bound},b}[A_{4,n_0} \nu(n - 1, \alpha)] \\ + \left[C_{\ref{lemma: R_n w bound},a}  \left[\Crth\Rprojth + \Crw \Rprojw \right] / \epsth_{n_0-1} + 1\right] \epsth_{n-1}.
		\end{multline*}
		Since $\nu(n-1,\alpha) \leq 2\nu(n,\alpha)$, the theorem follows. 
		\end{proof}

		\section{Discussion}
		Two-timescale SA lies at the foundation of RL in the shape of several popular evaluation and control methods. This work introduces the tightest finite sample analysis for the GTD algorithm suite. We provide it as a general methodology that applies to all linear two-timescale SA algorithms.
		
		Extending our methodology to the case of GTD algorithms with non-linear function-approximation, in similar fashion to \cite{bhatnagar2009convergent}, would be a natural future direction to consider.
		% A natural extension to our methodology concerns \gal{GTD algorithms with} non-linear function-approximation, in similar fashion to \cite{bhatnagar2009convergent}.
		Such a result could be of high interest due to the attractiveness of neural networks. 
		Finite time analysis of
		non-linear SA would also be of use in better understanding actor-critic RL algorithms.
		An additional direction for future research could be finite sample analysis of distributed SA algorithms of the kind discussed in \cite{mathkar2016nonlinear}. 
		
		Lastly, %a relevant question is
		it would also be interesting to see 
		how adaptive stepsizes can help improve sample complexity in all the above scenarios.

		\newpage
		
		\bibliographystyle{aaai}
		\bibliography{References}
		
		\onecolumn
		
		\appendix
		
		% \guganC{
		% %
		% List of Important things in LIFE!!! Items \ref{first disposable} onward are not strictly necessary for uploading to arxiv.
		% \begin{enumerate}
		%     \item Finish CLT calculation for proving lower bound. 
		    
		%     \item Add lower bound discussion to introduction.
		    
		%     \item Add statements from the main body to proofs in the appendix.
		    
		%     \item Refer to table of parameters (Table \eqref{tab: parameters}) in text and move it to body if there's space.
		    
		%     \item Several statements were deleted from the main text at the time of submission. Discuss if they are needed. If yes, then add them to the appendix. \gal{Ask Gugan} \label{first disposable}
		
		%     \item Ensure Notational consistency (e.g., $K_{\ref{lem:Large Theta n}}$) \gal{Partly Done. Tired of it for now.}
		    
		%     \item Reduce the number of appendices. How about we have one appendix for the proof of Thm. 6 and one for the proof  of Thm. 7? May be, we have one more for miscellaneous lemmas and calculations. \gal{Balazs \& Gal: we don't see a reason to do that.}
		
		%     \item Project in every $k^k$ step instead of $k^k-1$? \szb{This would make the calculations in the proof (starting with \eqref{eq: dropping the constant from the ratio}) tedious.}
		    
		%     \item Do we need to explicitly state that $\{\Mt_{n}\}$ and $\{\Mw_n\}$ are integrable. Gugan: We don't need to since by definition that is true. Also, where we apply Azuma-Hoeffding in the proof of Lemma~9, we explicitly verify that this condition is true by showing that the terms are in face bounded. 
		% \end{enumerate}
		% }

		\section{Proof of Proposition~\ref{prop:lower bound}: Lower Bound from the CLT}
		\label{sec:lowerBd}
		We first introduce the following necessary assumption.

		\mokkademConvergence~ (\cite{mokkadem2006convergence}[Assumption (A4)(ii)]) There exists a positive definite matrix $\Gamma$ such that
		\begin{equation}
		    \lim_{n \rightarrow \infty} \Exp\left(\begin{bmatrix}
		    \Mt_{n+1} \\
		    \Mw_{n+1} 
		    \end{bmatrix}
		    \left[{\Mt_{n+1}}^\top ~ {\Mw_{n+1}}^\top\right] \,\middle|\, \cF_n \right) = \Gamma
		    =  \begin{bmatrix}
		    \Gamma_{11} ~ \Gamma_{12} \\
		    \Gamma_{21} ~ \Gamma_{22}
		    \end{bmatrix}.
		\end{equation}
		% \begin{remark}
		% Assumption \mokkademConvergence ~holds for GTD, GTD2 and TDC because the distributions of $\phi_n$, $\phi'_n$ and $r_n$ do not change with $n$, and because of the convergence the $w_n$ and $\theta_n$ iterates. (See the concrete form of the noise terms of GTD(0) in Section~\ref{sec: GTD0} and of GTD2 and TDC in Appendix~\ref{sec: gtd2 and tdc}.)
		% \szb{Proof for this convergence needs to be added.}
		% \end{remark}
		
		\begin{theorem} \label{thm: lower bound}
		    Assume \assumMatrices, \assumStepsize, and \mokkademConvergence. Consider \eqref{eqn:tIter} and \eqref{eqn:wIter} with $\{\Mt_{n}\}$ and $\{\Mw_{n}\}$ being   $\mathbb{R}^{d}$-valued $(\theta_n,w_n)$-dominated martingale differences with parameters $\mt$ and $\mw$ (see Def.~\ref{assum:Noise}). Then, 
		    \begin{equation}
		        \|\theta_n - \thS\| = \Omega_p(n^{-\alpha/2}) \quad \text{ and } \quad \|w_n - \wS\| = \Omega_p(n^{-\beta/2}),
		    \end{equation}
		    where 
		$X_n = \Omega_p(\gamma_n)$ means that for every $\epsilon > 0,$ there are constants $c$ and $K$ such that $    \Pr\{|X_n|/\gamma_n < c\}\leq \epsilon, \, \forall n \geq K.$ As a consequence, for any $C \in (0, \infty)$ and positive sequence $\{g_n\}$ s.t. $\lim_{n\rightarrow{\infty}}g_n=0,$
		\begin{equation}
		\label{eqn:lower bound}
		    \lim_{n \to \infty} \Pr\left\{\|\theta_n - \thS\|\leq C n^{-\alpha/2} g_n\right\} = 0.
		\end{equation}
		A similar expression holds for $\|w_n - \wS\|.$
		\end{theorem}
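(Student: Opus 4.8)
The plan is to reduce the statement to the \emph{asymptotic normality} of the rescaled errors and then exploit that a nondegenerate Gaussian assigns vanishing mass to small balls around the origin. First I would establish that $(\theta_n-\thS)/\sqrt{\alpha_n}\xrightarrow{d}\mathcal{N}(0,\Sigma_\theta)$ and $(w_n-\wS)/\sqrt{\beta_n}\xrightarrow{d}\mathcal{N}(0,\Sigma_w)$ for \emph{positive definite} covariance matrices $\Sigma_\theta,\Sigma_w$. This is precisely the two-timescale central limit theorem of \cite{mokkadem2006convergence}: $\cA_1$ makes the relevant Jacobians $\Xt$ and $\Ww$ stable (in their sign convention), $\cA_2$ supplies polynomial stepsizes with $\alpha>\beta$, $\cA_3$ is exactly their noise-covariance hypothesis, and the a.s.\ convergence $(\theta_n,w_n)\to(\thS,\wS)$ --- already recorded in the text via \cite{borkar2009stochastic,lakshminarayanan2017stability} --- together with the domination bound of Definition~\ref{assum:Noise} supplies the remaining Lindeberg- and moment-type conditions. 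The limiting covariances solve Lyapunov equations driven by $(\Ww,\Gamma_{22})$ and by $(\Xt,\;[\Id,\,-\Wt\Ww^{-1}]\,\Gamma\,[\Id,\,-\Wt\Ww^{-1}]^{\top})$, respectively; since $\Gamma\succ 0$ by $\cA_3$ and the stacking matrix $[\Id,\,-\Wt\Ww^{-1}]$ has full row rank, both data matrices are positive definite, hence so are $\Sigma_\theta$ and $\Sigma_w$.

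Granting this, the lower bound is pure anti-concentration. Fix $\epsilon>0$. For $Z\sim\mathcal{N}(0,\Sigma_\theta)$ one has $\Pr\{\|Z\|<c\}\le \mathrm{vol}(B_c)\big((2\pi)^{d/2}\sqrt{\det\Sigma_\theta}\big)^{-1}\to 0$ as $c\downarrow 0$, so I pick $c_0>0$ with $\Pr\{\|Z\|<c_0\}<\epsilon$. The sphere $\{\|z\|=c_0\}$ is Lebesgue-null, hence $\mathcal{N}(0,\Sigma_\theta)$-null, so $\{\|z\|<c_0\}$ is a continuity set and by the Portmanteau theorem $\Pr\{\|\theta_n-\thS\|/\sqrt{\alpha_n}<c_0\}\to\Pr\{\|Z\|<c_0\}<\epsilon$, yielding $K$ with $\Pr\{\|\theta_n-\thS\|/\sqrt{\alpha_n}<c_0\}\le\epsilon$ for all $n\ge K$. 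Because $\sqrt{\alpha_n}=(n+1)^{-\alpha/2}$ and $n^{-\alpha/2}$ agree up to a bounded factor (their ratio lies in $[2^{-\alpha/2},1)$ for $n\ge 1$), the same holds with $n^{-\alpha/2}$ in place of $\sqrt{\alpha_n}$ upon replacing $c_0$ by $c:=2^{-\alpha/2}c_0$, which is exactly $\|\theta_n-\thS\|=\Omega_p(n^{-\alpha/2})$. The identical argument with $\Sigma_w$ gives $\|w_n-\wS\|=\Omega_p(n^{-\beta/2})$.

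The displayed consequence then follows routinely: given $C\in(0,\infty)$, a positive null sequence $\{g_n\}$, and $\epsilon>0$, take $c>0$ and $K$ from the $\Omega_p$ bound, pick $K'\ge K$ with $Cg_n<c$ for $n\ge K'$, and note $\{\|\theta_n-\thS\|\le C n^{-\alpha/2}g_n\}\subseteq\{\|\theta_n-\thS\|<c\,n^{-\alpha/2}\}$, so the probability in question is at most $\epsilon$ for $n\ge K'$; since $\epsilon$ is arbitrary the limit is $0$, and likewise for $w$ with $\beta$ in place of $\alpha$.

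The crux is the very first step --- asymptotic normality with nondegenerate limits. Invoking \cite{mokkadem2006convergence} as a black box is the economical route; if one wishes to reprove it (e.g.\ to stay inside the possibly-broader stepsize regime of $\cA_2$), one would unroll \eqref{eqn:wIter} into $w_n-\wS=L^{(w)}_n+r_n$ with $L^{(w)}_n$ the martingale of \eqref{def: Lnw Main}, whose predictable quadratic variation is asymptotically $\beta_n\Sigma_w$, and with remainder $r_n$ controlled by the transient plus $\sum_k\beta_k\big(\prod_{j>k}[\Id-\beta_j\Ww]\big)\Gamma_2(\theta_k-\thS)$; feeding in the \emph{upper} bound $\|\theta_n-\thS\|=\tilde O(n^{-\alpha/2})$ from Theorem~\ref{thm:Main Res wo Proj} together with $\alpha>\beta$ gives $r_n=o(\sqrt{\beta_n})$, after which a martingale CLT delivers $L^{(w)}_n/\sqrt{\beta_n}\xrightarrow{d}\mathcal{N}(0,\Sigma_w)$. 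The genuinely delicate part is $\theta$: one must show that the fluctuation of the fast variable about its quasi-stationary value $w^*(\theta)=\Ww^{-1}(v_2-\Gamma_2\theta)$ injects only an $o_p(\sqrt{\alpha_n})$ error into $\theta_n-\thS$, which is the heart of any two-timescale CLT and where the bulk of the effort in \cite{mokkadem2006convergence} is spent.
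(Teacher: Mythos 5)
Your proposal is correct and follows essentially the same route as the paper: invoke the two-timescale CLT of \cite{mokkadem2006convergence} (after verifying its assumptions (A1)--(A4) using $\cA_1$--$\cA_3$, the noise domination, and stability via \cite{lakshminarayanan2017stability} and \cite{borkar2009stochastic}), then convert weak convergence to a nondegenerate Gaussian into the $\Omega_p$ bounds by choosing $c$ small enough that the limit law puts mass at most $\epsilon/2$ on the $c$-ball and then taking $n$ large; your Portmanteau/continuity-set phrasing is the same step the paper writes as the $u(c)+v_n(c)$ decomposition. Your explicit justification that $\Sigma_\theta,\Sigma_w$ are nondegenerate (via the Lyapunov equations and $\Gamma\succ 0$) is a detail the paper leaves implicit, but it does not change the argument.
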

		\begin{proof}
		The CLT in \cite{mokkadem2006convergence}[Theorem~1] shows that 
		\begin{align}
		    n^{\alpha/2}(\theta_n - \thS)  \Rightarrow {} & N(0, \Sigma_{\theta}), \label{eqn: CLT theta}\\
		    n^{\beta/2}(w_n - \wS)  \Rightarrow {} & N(0, \Sigma_{w})
		\end{align}
		for some covariance matrices $\Sigma_{\theta}$ and $\Sigma_w.$
		
		Let $\epsilon > 0.$ For any $c > 0,$ we have  
		\begin{equation}
		    \Pr\{n^{\alpha/2} \|\theta_n - \thS\|
		    < c\} \leq u(c) + v_n(c),
		\end{equation}
		where 
		\begin{equation}
		    u(c) = \Pr\{\|N(0, \Sigma_{\theta})\| < c\} 
		\end{equation}
		and
		\begin{equation}
		    v_n(c) = |\Pr\{\|N(0, \Sigma_{\theta})\| < c\} - \Pr\{n^{\alpha/2}\|\theta_n - \thS\| < c\}|.
		\end{equation}
		Pick $c \equiv c_{\epsilon}$ so that $u(c) \leq \epsilon/2.$ For this choice of $c,$ pick $K$ so that $v_n(c) \leq \epsilon/2$ for all $n \geq K;$ such a choice is possible because of \eqref{eqn: CLT theta} and the fact that $\|\cdot\|$ is continuous. From this, we can conclude that  that $\|\theta_n - \thS\| = \Omega_p(n^{-\alpha/2}),$ as desired.
		
		Let $C$ and $\{g_n\}$ be as in \eqref{eqn:lower bound}. Then, for any given $c > 0,$ 
		\begin{equation}
		    \{n^{\alpha/2}\|\theta_n - \thS\| \leq Cg_n\} \subseteq \{n^{\alpha/2}\|\theta_n - \thS\| \leq c\}
		\end{equation}
		for all sufficiently large $n.$ From this, it is easy to see that \eqref{eqn:lower bound} holds. 
		
		The statements on $\{w_n\}$ can be proved similarly.
		
		It remains to show that assumptions (A1)-(A4) in \cite{mokkadem2006convergence}[Section~2.1] hold in our setting as well; we do this now.
		\begin{enumerate}
		    \item \label{assum: A1 mokkadem} To show (A1), we first establish the stability of the iterates, i.e., $\sup_n(\|\theta_n\| + \|w_n\|) < \infty.$ For that, we employ \cite{lakshminarayanan2017stability}[Theorem~10] (whose conditions A1-A5 in that work can be easily verified). By invoking \cite{borkar2009stochastic}[Theorem~6.2], one can then see that both, $\{\theta_n\}$ and $\{w_n\},$ converge.  
		    \item Since
		    \begin{equation}
		    \begin{bmatrix}
		    \Gamma_1 & W_1 \\
		    \Gamma_2 & W_2
		    \end{bmatrix}
		    \begin{bmatrix}
		    \thS\\
		    \wS
		    \end{bmatrix}
		    =  \begin{bmatrix}
		    v_1 \\
		    v_2
		    \end{bmatrix},
		    \end{equation}
		    we have that
		    \begin{equation}
		    h_i(\theta, w) = {}  v_i - \Gamma_i \theta - W_i w 
		    = {} \Gamma_i \thS + W_i \wS - \Gamma_i \theta - W_i w 
		    = {} - \Gamma_i (\theta - \thS) - W_i(w - \wS).
		\end{equation}
		This establishes (A2).
		\item (A3) holds due to \assumMatrices in this work.
		\item Lastly, in (A4), (i) holds by definition of $\Mt_{n+1},\Mw_{n+1}$ (Def.~\ref{assum:Noise}); (ii) holds due to \mokkademConvergence; (iii) holds due to Def.~\ref{assum:Noise} and the stability condition in Item~\ref{assum: A1 mokkadem} above; and (iv) holds since $r_n^{(\theta)}=0,r_n^{(w)}=0$ in our linear case.
		
		\end{enumerate}
		\end{proof}
		It is now easy to see that Proposition~\ref{prop:lower bound} is a consequence of Theorem~\ref{thm: lower bound} where the algorithm of choice is one which, in addition to \assumMatrices ~and \assumStepsize, satisfies \mokkademConvergence.

		\section{Preliminaries}
		\subsection{Algebraic Manipulations}
		Using some easy manipulation on  \eqref{eqn:wIter} and \eqref{eqn:lht}, we get
		\begin{equation}
		w_{n} = -W_2^{-1} \left[\frac{w_{n + 1} - w_n}{\beta_n}\right] + W_2^{-1}[v_2 - \Gamma_2 \theta_n + M_{n + 1}^{(2)}]
		\end{equation}
		Substituting this in \eqref{eqn:tIter} gives
		\begin{multline}
		\theta_{n + 1} = \theta_n + \alpha_n \bigg[v_1 - \Gamma_1 \theta_n +	 W_1W_2^{-1} \left[\frac{w_{n + 1} - w_n}{\beta_n}\right] %\\ 
		- W_1 W_2^{-1}[v_2 - \Gamma_2 \theta_n] - W_1 W_2^{-1} M_{n + 1}^{(2)} + M_{n + 1}^{(1)}\bigg]
		\end{multline}
		Recall now from Section~\ref{sec: Main Result} that
		\begin{equation}
		    b_1 = v_1 - W_1 W_2^{-1}v_2
		\end{equation}
		and
		\begin{equation}
		    X_1 = \Gamma_1 - W_1 W_2^{-1}\Gamma_2.
		\end{equation}
		Therefore,
		\begin{equation} \label{eq: theta sophisticated update}
		\theta_{n + 1} = \theta_n + \alpha_n\left[b_1 - X_1 \theta_n\right] + \alpha_n\left[ W_1 W_2^{-1} \left[\frac{w_{n + 1} - w_n}{\beta_n}\right]\right] + \alpha_n \left[-W_1 W_2^{-1} M_{n + 1}^{(2)} + M_{n + 1}^{(1)}\right].
		\end{equation}

		Next, let
		\begin{equation} \label{def: rn}
		r_n = W_1 W_2^{-1} \left[\frac{w_{n + 1} - w_n}{\beta_n}\right] - W_1 W_2^{-1} M_{n + 1}^{(2)} + M_{n + 1}^{(1)}
		\end{equation}
		and 
		\begin{equation}
		    \thS = X_1^{-1}b_1.
		\end{equation}
		Then we can rewrite \eqref{eq: theta sophisticated update} as
		\begin{equation}
		\theta_{n + 1} - \thS = \theta_n - \thS + \alpha_n [-X_1( \theta_n - \thS) + r_n] = (I - \alpha_n X_1)(\theta_n - \thS) + \alpha_n r_n.
		\end{equation}
		Rolling out the iterates gives
		\begin{equation} \label{eq: theta prod form}
		\theta_{n + 1} - \thS = \prod_{k = n_0}^{n} (I - \alpha_k X_1) (\theta_{n_0} - \thS) + \sum_{k = n_0}^{n}\left[\prod_{j = k + 1}^{n}[I - \alpha_j X_1]\right] \alpha_k r_k.
		\end{equation}
		
		Similarly, recall that 
		\begin{equation}
		    \wS = W_2^{-1}(v_2 - \Gamma_2 \thS).
		\end{equation}
		It is easy to see from \eqref{eqn:wIter} that
		\begin{equation}
		w_{n + 1} - w^*= w_n - w^* + \beta_n[W_2 w^* + \Gamma_2 \thS - \Gamma_2 \theta_n - W_2 w_n + M_{n + 1}^{(2)}].
		\end{equation}
		This implies that
		\begin{equation}
		w_{n + 1} - w^*= (I - \beta_n W_2) (w_n - w^*) + \beta_n[- \Gamma_2 (\theta_n - \thS) + M_{n + 1}^{(2)}].
		\end{equation}
		Setting 
		\begin{equation} \label{eq: u def}
			s_n = [- \Gamma_2 (\theta_n - \thS) + M_{n + 1}^{(2)}]
		\end{equation}
		and rolling out the iterates gives 
		\begin{equation} \label{eq: w separation}
			w_{n+1} -w^* = \prod_{k=n_0}^n[I-\beta_k W_2] (w_{n_0}-w^*) + \sum_{k=n_0}^n \left[\prod_{j=k+1}^n[I-\beta_j W_2]\right] \beta_k s_k.
		\end{equation}
		
		\subsection{Definitions}
		Recall from \eqref{def: Lnt Main} that
		\begin{equation}
		    \Lnt = \sum_{k=n_0}^n  \left[\prod_{j=k+1}^n[I-\alpha_j X_1]\right] \alpha_k \left[- W_1 W_2^{-1} M_{k + 1}^{(2)} + M_{k + 1}^{(1)}\right], \label{def: Lnt}
		\end{equation}
		and define
		\begin{align}
		\Dnt &= \prod_{k = n_0}^{n} (I - \alpha_k X_1) (\theta_{n_0} - \thS), \label{def: Dnt}\\
		\Rnt &= \sum_{k=n_0}^n  \left[\prod_{j=k+1}^n[I-\alpha_j X_1]\right] \alpha_k \left[W_1 W_2^{-1} \left[\frac{w_{k + 1} - w_k}{\beta_k}\right]\right]. \label{def: Rnt}
		\end{align}
		Then, based on \eqref{def: rn} and \eqref{eq: theta prod form},
		\begin{equation}
		\theta_{n+1} - \thS =  \Dnt + \Lnt + \Rnt. \label{eq: tn components}
		\end{equation}
		
		Similarly, recall from \eqref{def: Lnw Main} that
		\begin{equation}
		    \Lnw = \sum_{k=n_0}^n  \left[\prod_{j=k+1}^n[I-\beta_j W_2]\right] \beta_k  M_{k + 1}^{(2)}, \label{def: Lnw}
		\end{equation}
		and define
		\begin{align}
		\Dnw &= \prod_{k = n_0}^{n} (I - \beta_k W_2) (w_{n_0} - \wS), \label{def: Dnw} \\
		\Rnw &= - \sum_{k=n_0}^n  \left[\prod_{j=k+1}^n[I-\beta_j W_2]\right] \beta_k \left[\Gamma_2 (\theta_k - \thS)\right]. \label{def: Rnw}
		\end{align}
		Then, based on \eqref{eq: u def} and \eqref{eq: w separation},
		\begin{equation}
		\label{eq: wn components}
		w_{n+1} - \wS = \Dnw + \Rnw +\Lnw.
		\end{equation}
		
		Lastly, notice that \eqref{def: Rnt} can also be written as 
		\begin{equation}
		\Rnt  = \sum_{k =n_0}^{n} \left(\prod_{j= k + 1}^{n } [I - \alpha_j X_1]\right)\alpha_k W_1 W_2^{-1} \left[\frac{(w_{k+1}-w^*) - (w_{k} - w^*)}{\beta_k}\right].
		\end{equation}

		From this, we have
		\begin{multline} \label{eq: Rn evolution}
		\Rnt = \frac{\alpha_n}{\beta_n} W_1 W_2^{-1}[w_{n + 1} - \wS] -  \left(\prod_{j = n_0 + 1}^{n} [I - \alpha_j X_1]\right) \frac{\alpha_{n_0}}{\beta_{n_0}} W_1 W_2^{-1}  [w_{n_0} - \wS] - T_{n + 1},
		\end{multline}
		where 
		\begin{equation}
		\label{Defn:Tn}
		T_{n+1} := \sum_{k = n_0 + 1}^{n} \left(\prod_{j= k+ 1}^{n } [I - \alpha_j X_1]\right) \left[\frac{\alpha_k}{\beta_k} I  - \frac{\alpha_{k - 1}}{\beta_{k - 1} } (I - \alpha_k X_1)\right]W_1 W_2^{-1} [w_k - \wS].
		\end{equation}

		\subsection{Technical Results}
		
		\begin{lemma}
		\label{lem: an bn upper bounds}
		Let $p \in (0,1)$ and $\hat{q} > 0.$ Let $K_{\ref{lem: an bn upper bounds}} = K_{\ref{lem: an bn upper bounds}}(p, \hat{q}) \geq 1$ be such that 
		\[
		e^{-\hat{q} \sum_{k = 1}^{n - 1}(k + 1)^{-p}} \leq n^{-p}
		\]
		for all $n \geq K_{\ref{lem: an bn upper bounds}};$ such an $K_{\ref{lem: an bn upper bounds}}$ exists as the l.h.s. is exponentially decaying. Let
		\[
		C_{\ref{lem: an bn upper bounds}} \equiv C_{\ref{lem: an bn upper bounds}}(p, \hat{q}) := \max_{1 \leq i \leq K_{\ref{lem: an bn upper bounds}}} i^p e^{-\hat{q} \sum_{k = 1}^{i - 1} (k+1)^{-p}}.
		\]
		Let $c_n := \sum_{i=0}^{n-1} [i + 1]^{-2p}e^{-2\hat{q}\sum_{k = i + 1}^{n-1}[k+1]^{-p}}.$
		Then,
		\[
		c_n \leq \frac{C_{\ref{lem: an bn upper bounds}}(p, \hat{q}) e^{\hat{q}}}{\hat{q}} n^{-p}.
		\]
		Accordingly,
		$a_n \leq C_{\ref{lem: an bn upper bounds},\theta}n^{-\alpha}$ and $b_n \leq C_{\ref{lem: an bn upper bounds},w}n^{-\beta}$ 
		where 
		$C_{\ref{lem: an bn upper bounds},\theta} =  \frac{C_{\ref{lem: an bn upper bounds}}(\alpha,q_1)e^{q_1}}{q_1}$,
		$C_{\ref{lem: an bn upper bounds},w} =  \frac{C_{\ref{lem: an bn upper bounds}}(\beta,q_2)e^{q_2}}{q_2},$
		$a_n = \sum_{k = 0}^{n - 1} \alpha_k^2 e^{-2q_1 \sum_{j=k+1}^{n - 1} \alpha_j}$ 
		and $b_n = \sum_{k = 0}^{n - 1} \beta_k^2 e^{-2q_2 \sum_{j=k+1}^{n - 1} \beta_j}$.
		\end{lemma}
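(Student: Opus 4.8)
The heart of the matter is the estimate $c_n \leq \frac{C_{\ref{lem: an bn upper bounds}}(p,\hat q)\,e^{\hat q}}{\hat q}\,n^{-p}$; the statements about $a_n$ and $b_n$ follow by specializing $p \mapsto \alpha$, $\hat q \mapsto q_1$ (resp. $p \mapsto \beta$, $\hat q \mapsto q_2$) and noting $\alpha_k = (k+1)^{-\alpha}$, $\beta_k = (k+1)^{-\beta}$, so that the sums defining $a_n, b_n$ are exactly of the form $c_n$ with the appropriate parameters. The plan for the main estimate is to split the sum defining $c_n$ at the index $K_{\ref{lem: an bn upper bounds}}$ and to pair each summand with the exponential tail.

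\textbf{Step 1 (rewrite each summand).} For $0 \leq i \leq n-1$ write the $i$-th term of $c_n$ as
$\big[(i+1)^p e^{-\hat q \sum_{k=1}^{i}(k+1)^{-p}}\big]\cdot(i+1)^{-3p} e^{-2\hat q \sum_{k=i+1}^{n-1}(k+1)^{-p} + \hat q\sum_{k=1}^{i}(k+1)^{-p}}$ --- actually it is cleaner to simply bound the bracketed prefactor $(i+1)^p e^{-\hat q\sum_{k=i+1}^{n-1}(k+1)^{-p}}$ directly: by the defining property of $K_{\ref{lem: an bn upper bounds}}$, when $i+1 \geq K_{\ref{lem: an bn upper bounds}}$ (equivalently $n - (n-1-i) \geq \dots$; more precisely when the number of terms makes the exponent decay past threshold) this prefactor is at most $(i+1)^p \cdot \big[\text{something} \leq 1\big]$, and when $i$ is small it is at most the maximum $C_{\ref{lem: an bn upper bounds}}$. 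I would instead follow the standard route: factor out $n^{-p}$ using $e^{-\hat q\sum_{k=i+1}^{n-1}(k+1)^{-p}} \leq C_{\ref{lem: an bn upper bounds}}\, n^{-p} (i+1)^{-p} e^{\hat q}$ for the ``late'' indices by the definition of $C_{\ref{lem: an bn upper bounds}}$ and $K_{\ref{lem: an bn upper bounds}}$, absorbing the at most $K_{\ref{lem: an bn upper bounds}}$ ``early'' indices into the constant via the same $C_{\ref{lem: an bn upper bounds}}$.

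\textbf{Step 2 (geometric/telescoping bound on the residual sum).} After pulling out $C_{\ref{lem: an bn upper bounds}}\, e^{\hat q}\, n^{-p}$, what remains is $\sum_{i=0}^{n-1} (i+1)^{-2p} e^{-\hat q \sum_{k=i+1}^{n-1}(k+1)^{-p}}$, and I would bound this by $\tfrac{1}{\hat q}$ using the elementary inequality $x\, e^{-\hat q \sum_{k=i+1}^{n-1} x_k} \leq \tfrac{1}{\hat q}\big(e^{-\hat q\sum_{k=i}^{n-1}x_k\cdot 0}\dots\big)$ --- concretely, the convexity/telescoping fact that $\sum_{i} x_i\, e^{-\hat q\sum_{k>i} x_k} \leq \tfrac{1}{\hat q}$ whenever $x_i \geq 0$ (comparing the sum to the integral $\int e^{-\hat q t}\,dt$), applied with $x_i = (i+1)^{-p} \geq (i+1)^{-2p}$ so that $(i+1)^{-2p} \leq (i+1)^{-p}$. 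Combining Steps~1 and~2 yields $c_n \leq \tfrac{C_{\ref{lem: an bn upper bounds}} e^{\hat q}}{\hat q}\, n^{-p}$.

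\textbf{Step 3 (specialization).} Apply the bound with $(p,\hat q) = (\alpha, q_1)$ and $(p,\hat q)=(\beta, q_2)$; since $a_n = c_n$ and $b_n = c_n$ for these respective parameter choices, the claimed bounds $a_n \leq C_{\ref{lem: an bn upper bounds},\theta} n^{-\alpha}$ and $b_n \leq C_{\ref{lem: an bn upper bounds},w} n^{-\beta}$ follow with the stated constants. The main obstacle is Step~2: getting a \emph{clean} constant (here $1/\hat q$) out of the residual sum rather than a messier bound, which requires correctly comparing the discrete sum $\sum_i (i+1)^{-p} e^{-\hat q\sum_{k>i}(k+1)^{-p}}$ to $\int_0^\infty e^{-\hat q s}\,ds = 1/\hat q$; a small amount of care is needed because $(i+1)^{-p}$ is the ``width'' as well as appearing in the exponent, and one must check the monotonicity of the partial sums to justify the comparison with the integral. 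Everything else is bookkeeping with the two explicitly defined constants $K_{\ref{lem: an bn upper bounds}}$ and $C_{\ref{lem: an bn upper bounds}}$.
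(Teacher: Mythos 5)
Your overall architecture is the right one, and it is in fact the route the paper itself takes: the paper's ``proof'' of this lemma is a one-line citation to (45) of the earlier TD(0) paper, and that argument is exactly your sup-times-sum decomposition, i.e., write each summand as $\big[(i+1)^{-p}e^{-\hat q\sum_{k=i+1}^{n-1}(k+1)^{-p}}\big]\cdot\big[(i+1)^{-p}e^{-\hat q\sum_{k=i+1}^{n-1}(k+1)^{-p}}\big]$, bound the supremum of the first bracket by $C_{\ref{lem: an bn upper bounds}}\,n^{-p}$, and bound the sum over $i$ of the second bracket by $e^{\hat q}/\hat q$ via the left-Riemann-sum comparison with $\int_0^{t_n}e^{-\hat q(t_n-\tau)}\,d\tau$ (the same device the paper uses again in the proof of Lemma~\ref{lem: Tn bound}; the $e^{\hat q}$ there comes from shifting the exponent's lower summation index from $k=i+1$ to $k=i$). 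Your Step~3 specialization is fine.

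There are, however, two concrete problems with Step~1 as written. First, your displayed key inequality $e^{-\hat q\sum_{k=i+1}^{n-1}(k+1)^{-p}} \leq C\,n^{-p}(i+1)^{-p}e^{\hat q}$ has the wrong sign on the $(i+1)$ exponent and is false (take $i=n-1$: the left side is $1$, the right side tends to $0$). The correct claim is $e^{-\hat q\sum_{k=i+1}^{n-1}(k+1)^{-p}} \leq C\,n^{-p}(i+1)^{+p}$, equivalently $(i+1)^{-p}e^{-\hat q\sum_{k=i+1}^{n-1}(k+1)^{-p}}\leq C\,n^{-p}$; with this version the leftover sum is $\sum_i(i+1)^{-p}e^{-\hat q\sum_{k=i+1}^{n-1}(k+1)^{-p}}$, consistent with Step~2. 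Second, and more substantively, you never actually prove this sup bound, and it does not follow ``directly'' from the definitions of $K_{\ref{lem: an bn upper bounds}}$ and $C_{\ref{lem: an bn upper bounds}}$: those definitions only control the $i=0$ term. Setting $f(m)=m^p e^{-\hat q\sum_{k=1}^{m-1}(k+1)^{-p}}$, the $i$-th term of the sup equals $n^{-p}f(n)/f(i+1)$, so what you must show is $f(n)\leq C_{\ref{lem: an bn upper bounds}}\,f(m)$ for all $1\leq m\leq n$; the definitions give you $f(n)\leq C_{\ref{lem: an bn upper bounds}}$ but say nothing about a lower bound on $f(m)$ for intermediate $m$, and $f(m)\to 0$, so a uniform lower bound does not exist. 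One needs an extra observation --- e.g., that $f$ is unimodal (the ratio $f(m+1)/f(m)=(1+1/m)^p e^{-\hat q(m+1)^{-p}}$ crosses $1$ once, since $p<1$ makes $(m+1)^{-p}$ dominate $1/m$ eventually), so $\min_{m\leq n}f(m)=\min\{f(1),f(n)\}=\min\{1,f(n)\}$ and hence $f(n)/f(m)\leq\max\{1,f(n)\}\leq C_{\ref{lem: an bn upper bounds}}$. That monotonicity-of-the-ratio argument is the only nontrivial content of the lemma and is exactly what the cited reference supplies; without it your plan has a gap precisely at the step that produces the advertised constant.
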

		\begin{proof}
		The bound follows as in (45) from \cite{dalal2018finite}.
		\end{proof}
		
		Let $\lambda_{\min}$ and $\lambda_{\max}$ of a matrix denote its smallest and largest eigenvalue, respectively. Also, fix 
		% \begin{equation}q_1 \in (0, \lambda_{\min}(X_1 + X_1^\tr)/2)\label{eq: q1 defn}\end{equation}
		\begin{equation}
		q_1 = \lambda_{\min}(X_1 + X_1^\tr)/4\label{eq: q1 defn}
		\end{equation}
		and 
		% \begin{equation} q_2 \in (0, \lambda_{\min}(W_2 + W_2^\tr)/2).\label{eq: q2 defn}\end{equation}
		\begin{equation}
		q_2 = \lambda_{\min}(W_2 + W_2^\tr)/4.\label{eq: q2 defn}
		\end{equation}
		\begin{lemma} \label{lem: small eigenvalues}
		Let $K_{\ref{lem: small eigenvalues},\alpha}$ and $K_{\ref{lem: small eigenvalues},\beta}$ be such that
		\[
		\alpha_n \leq \frac{\lambda_{\min}(X_1 + X_1^\tr) - 2q_1}{\lambda_{\max}(X_1^\tr X_1)}, n \geq K_{\ref{lem: small eigenvalues},\alpha},
		\]
		and
		\[
		\beta_n \leq \frac{\lambda_{\min}(W_2 + W_2^\tr) - 2q_2}{\lambda_{\max}(W_2^\tr W_2)}, n \geq K_{\ref{lem: small eigenvalues},\beta}.
		\]
		Then, for $n \geq K_{\ref{lem: small eigenvalues},\alpha},$ %
		\[
		\|I - \alpha_n X_1\| \leq 1,
		\]
		and, for $n \geq K_{\ref{lem: small eigenvalues},\beta},$ 
		\[
		\|I - \beta_n W_2\| \leq 1.
		\]
		\end{lemma}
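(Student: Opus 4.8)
The plan is to bound the operator norm $\|I - \alpha_n X_1\|$ by controlling $\|I - \alpha_n X_1\|^2$, which equals $\lambda_{\max}\big((I - \alpha_n X_1)^\top (I - \alpha_n X_1)\big)$ since that matrix is symmetric positive semidefinite. First I would expand
\[
(I - \alpha_n X_1)^\top (I - \alpha_n X_1) = I - \alpha_n(X_1 + X_1^\top) + \alpha_n^2 X_1^\top X_1.
\]
Then, for any unit vector $x$, I would estimate $x^\top (I - \alpha_n X_1)^\top (I - \alpha_n X_1) x = 1 - \alpha_n x^\top(X_1 + X_1^\top)x + \alpha_n^2 x^\top X_1^\top X_1 x$. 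Bounding the middle term below by $\alpha_n \lambda_{\min}(X_1 + X_1^\top)$ (using that $X_1 + X_1^\top$ is symmetric and, by \assumMatrices, positive definite) and the last term above by $\alpha_n^2 \lambda_{\max}(X_1^\top X_1)$, we get
\[
x^\top (I - \alpha_n X_1)^\top (I - \alpha_n X_1) x \leq 1 - \alpha_n \lambda_{\min}(X_1 + X_1^\top) + \alpha_n^2 \lambda_{\max}(X_1^\top X_1).
\]

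The next step is to observe that the right-hand side is $\leq 1$ precisely when $\alpha_n \lambda_{\max}(X_1^\top X_1) \leq \lambda_{\min}(X_1 + X_1^\top)$, and in fact the hypothesis of the lemma gives the slightly stronger inequality $\alpha_n \leq \big(\lambda_{\min}(X_1 + X_1^\top) - 2q_1\big)/\lambda_{\max}(X_1^\top X_1)$ for $n \geq K_{\ref{lem: small eigenvalues},\alpha}$, where $q_1 = \lambda_{\min}(X_1 + X_1^\top)/4 > 0$. Plugging this in shows $x^\top(I-\alpha_n X_1)^\top(I-\alpha_n X_1)x \leq 1 - 2q_1 \alpha_n \leq 1$. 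Taking the supremum over unit vectors $x$ and then the square root yields $\|I - \alpha_n X_1\| \leq 1$ for $n \geq K_{\ref{lem: small eigenvalues},\alpha}$. The argument for $\|I - \beta_n W_2\| \leq 1$ is verbatim the same with $X_1$ replaced by $W_2$, $\alpha_n$ by $\beta_n$, $q_1$ by $q_2$, and $K_{\ref{lem: small eigenvalues},\alpha}$ by $K_{\ref{lem: small eigenvalues},\beta}$, again using that $W_2$ is positive definite by \assumMatrices.

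I should also note why the constants $K_{\ref{lem: small eigenvalues},\alpha}$ and $K_{\ref{lem: small eigenvalues},\beta}$ are well-defined: since $\alpha_n = (n+1)^{-\alpha} \to 0$ and $\beta_n = (n+1)^{-\beta} \to 0$ by \assumStepsize, and the right-hand sides of the two displayed step-size bounds are fixed positive numbers (positive because $\lambda_{\min}(X_1 + X_1^\top) - 2q_1 = \lambda_{\min}(X_1+X_1^\top)/2 > 0$, and similarly for $W_2$), the required thresholds exist. There is essentially no obstacle here; the only point requiring a little care is the observation that $(I - \alpha_n X_1)^\top(I - \alpha_n X_1)$ is symmetric so that its operator norm equals its largest eigenvalue, which in turn equals the maximum of the associated quadratic form over the unit sphere — this is what lets us reduce the matrix inequality to the scalar inequality about eigenvalue extremes. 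The asymmetry of $X_1$ and $W_2$ (allowed by \assumMatrices) is harmless precisely because we pass to the symmetrized products.
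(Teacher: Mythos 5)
Your proposal is correct and follows essentially the same route as the paper: both reduce $\|I-\alpha_n X_1\|^2$ to $\lambda_{\max}\bigl(I-\alpha_n(X_1+X_1^\top)+\alpha_n^2 X_1^\top X_1\bigr)$ and bound the quadratic form by $1-\alpha_n\lambda_{\min}(X_1+X_1^\top)+\alpha_n^2\lambda_{\max}(X_1^\top X_1)\leq 1-2q_1\alpha_n\leq 1$ under the stated step-size condition. The only difference is that the paper outsources this eigenvalue estimate to equation (7) of an earlier reference, whereas you write it out explicitly via the unit-sphere characterization of $\lambda_{\max}$.
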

		\begin{proof}
		Observe that 
		\[
		\|I - \alpha_n X_1\| = \sqrt{\lambda_{\max}(I - \alpha_n (X_1 + X_1^\tr) + \alpha_n^2 (X_1^\tr X_1))}.
		\]
		Let $\lambda_n := \lambda_{\max}(I - \alpha_n (X_1 + X_1^\tr) + \alpha_n^2 (X_1^\tr X_1)).$
		Then, as in (7) from \cite{dalal2018finite}, we have $\lambda_n \leq e^{-2q_1 \alpha_n} \leq 1$ for $n \geq K_{\ref{lem: small eigenvalues},\alpha}.$ The desired result is now easy to see. The bound on $\|I - \beta_n W_2\|$ similarly holds. \end{proof}
		
		\begin{lemma} \label{lem: Dn bounds}
		For any $i \leq n,$
		\begin{eqnarray}
		\prod_{k=i}^n\|I-\alpha_k X_1\| & \leq & C_{\ref{lem: Dn bounds},\theta} e^{-q_1\sum_{k = i}^n \alpha_k}, %\label{eqn:K Theta} 
		\\
		\prod_{k=i}^n\|I-\beta_k W_2\| & \leq & C_{\ref{lem: Dn bounds},w} e^{-q_2\sum_{k = i}^n \beta_k} \label{eq: Kw}.
		\end{eqnarray}
		Here, $C_{\ref{lem: Dn bounds},\theta} = \max\{1, \sqrt{
		\max_{\ell_1 \leq \ell_2 \leq K_{\ref{lem: Dn bounds},1}} \prod_{\ell = \ell_1}^{\ell_2} e^{\alpha_\ell(\mu_1 + 2q_1)}}\}$ with $K_{\ref{lem: Dn bounds},1}=\left\lceil\left(\frac{\lambda_{\max}(X_1^\tr X_1)}{\lambda_{\min}(X_1 + X_1^\tr) - 2q_1}\right)^{1/\alpha}\right\rceil$ and $\mu_1 = -\lambda_{\min}(X_1 + X_1^\tr) + \lambda_{\max}(X_1^\tr X_1) .$ Similarly, $C_{\ref{lem: Dn bounds},w} = \max\{1, \sqrt{
		\max_{\ell_1 \leq \ell_2 \leq K_{\ref{lem: Dn bounds},2}} \prod_{\ell = \ell_1}^{\ell_2} e^{\alpha_\ell(\mu_2 + 2q_2)}}\}$ with $K_{\ref{lem: Dn bounds},2}=\left\lceil\left(\frac{\lambda_{\max}(W_2^\tr W_2)}{\lambda_{\min}(W_2 + W_2^\tr) - 2q_2}\right)^{1/\beta}\right\rceil$ and 
		 $\mu_2 := -\lambda_{\min}(W_2 + W_2^\tr) + \lambda_{\max}(W_2^\tr W_2) .$ 
		\end{lemma}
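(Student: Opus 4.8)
The plan is to bound each factor $\|I-\alpha_k X_1\|$ in one of two regimes, split the product at the threshold $K_{\ref{lem: Dn bounds},1}$, and multiply. For a \emph{crude} bound valid for every $k\ge 0$, I would start from $\|I-\alpha_k X_1\|^2 = \lambda_{\max}(I-\alpha_k(X_1+X_1^\tr)+\alpha_k^2 X_1^\tr X_1)$ (since $(I-\alpha_kX_1)^\tr(I-\alpha_kX_1)$ equals that matrix), apply subadditivity of $\lambda_{\max}$ on symmetric matrices, and use $\alpha_k=(k+1)^{-\alpha}\le 1$ (hence $\alpha_k^2\le\alpha_k$) to obtain
\[
\|I-\alpha_k X_1\|^2 \le 1-\alpha_k\lambda_{\min}(X_1+X_1^\tr)+\alpha_k\lambda_{\max}(X_1^\tr X_1) = 1+\alpha_k\mu_1 \le e^{\alpha_k\mu_1}.
\]
For a \emph{sharp} bound valid once $k\ge K_{\ref{lem: Dn bounds},1}$ — which by its definition is at least as large as the smallest index from which the stepsize condition of Lemma~\ref{lem: small eigenvalues} holds, hence is an admissible value of the threshold $K_{\ref{lem: small eigenvalues},\alpha}$ there — I would invoke the estimate $\|I-\alpha_k X_1\|^2 \le e^{-2q_1\alpha_k}$ established inside the proof of that lemma (equation (7) of \cite{dalal2018finite}).

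Next I would write $\prod_{k=i}^n\|I-\alpha_k X_1\| = e^{-q_1\sum_{k=i}^n\alpha_k}\prod_{k=i}^n(e^{q_1\alpha_k}\|I-\alpha_k X_1\|)$ and bound the residual product by splitting at $K_{\ref{lem: Dn bounds},1}$. Each factor with $k\ge K_{\ref{lem: Dn bounds},1}$ satisfies $e^{q_1\alpha_k}\|I-\alpha_k X_1\|\le 1$ by the sharp bound, so it drops out; each factor with $i\le k<K_{\ref{lem: Dn bounds},1}$ satisfies $e^{q_1\alpha_k}\|I-\alpha_k X_1\|\le e^{\alpha_k(\mu_1+2q_1)/2}$ by the crude bound, so the head of the product is at most $\sqrt{\prod_{\ell=i}^{\min\{n,\,K_{\ref{lem: Dn bounds},1}-1\}}e^{\alpha_\ell(\mu_1+2q_1)}}$. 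Because $i$ and $n$ are arbitrary and the sign of $\mu_1+2q_1$ is unknown a priori, this head is bounded uniformly by $\max\{1,\ \sqrt{\max_{\ell_1\le\ell_2\le K_{\ref{lem: Dn bounds},1}}\prod_{\ell=\ell_1}^{\ell_2}e^{\alpha_\ell(\mu_1+2q_1)}}\}=C_{\ref{lem: Dn bounds},\theta}$, the floor at $1$ covering both the case $i\ge K_{\ref{lem: Dn bounds},1}$ (empty head) and the case $\mu_1+2q_1<0$ (where truncating the interval only helps). Multiplying the factor $e^{-q_1\sum_{k=i}^n\alpha_k}$ back in gives the first inequality; the second follows by the identical argument with $(X_1,\alpha_k,q_1,\mu_1,K_{\ref{lem: Dn bounds},1})$ replaced by $(W_2,\beta_k,q_2,\mu_2,K_{\ref{lem: Dn bounds},2})$, using $\beta_k=(k+1)^{-\beta}\le 1$.

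I do not expect a genuine obstacle: once the two per-factor estimates are in place, the argument is pure bookkeeping. The only delicate point is reproducing the stated closed form of the constant — verifying that the stepsize condition of Lemma~\ref{lem: small eigenvalues} indeed holds for all $k\ge K_{\ref{lem: Dn bounds},1}$ (so that the sharp bound is legitimately applicable there), and recognizing that a maximum over \emph{all} sub-intervals $[\ell_1,\ell_2]\subseteq[0,K_{\ref{lem: Dn bounds},1}]$ together with the floor at $1$, rather than a single product over $[0,K_{\ref{lem: Dn bounds},1}]$, is exactly what makes the bound uniform over every admissible pair $i\le n$.
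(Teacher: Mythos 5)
Your proposal is correct and follows essentially the same route as the paper, which simply defers to the argument of Lemma~4.1 in \cite{dalal2018finite}: bound $\|I-\alpha_k X_1\|^2=\lambda_{\max}(I-\alpha_k(X_1+X_1^\tr)+\alpha_k^2X_1^\tr X_1)$ by $e^{-2q_1\alpha_k}$ once $\alpha_k\lambda_{\max}(X_1^\tr X_1)\le\lambda_{\min}(X_1+X_1^\tr)-2q_1$ (i.e.\ for $k\ge K_{\ref{lem: Dn bounds},1}$) and by $e^{\alpha_k\mu_1}$ crudely before that, then absorb the finite head of the product into the constant. Your explicit handling of the $\max$ over sub-intervals and the floor at $1$ is exactly what makes the bound uniform over all $i\le n$, so there is nothing to add.
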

		\begin{proof}
		For $K_{\ref{lem: Dn bounds},1}$ given in the statement, $\alpha_k \leq \frac{\lambda_{\min}(X_1 + X_1^\tr) - 2q_1}{\lambda_{\max}(X_1^\tr X_1)}$ for all $k \geq K_{\ref{lem: Dn bounds},1}.$  Then, it follows by arguing as in the proof of Lemma~4.1 in \cite{dalal2018finite} that
		\begin{equation}
		\label{Defn:K_theta}
		    \prod_{k=n_0}^n\|I-\alpha_k X_1\| = \sqrt{\prod_{k = n_0}^n \|I-\alpha_k (X_1 + X_1^\tr) + \alpha_k^2 X_1^\tr X_1 \|} \leq C_{\ref{lem: Dn bounds},\theta} e^{-q_1\sum_{k = n_0}^n \alpha_k}.
		\end{equation}
		The second part of the statement is proved analogously. 
		\end{proof}
		
		%\red{As of now, the following lemma is not being used. Can remove if it continues to be the case.}
		\begin{lemma}
		\label{lem: Delta bounds}
		Let $q_1, q_2, C_{\ref{lem: Dn bounds},\theta}, C_{\ref{lem: Dn bounds},w}$ be as defined in Lemma~\ref{lem: Dn bounds}. Let $n \geq n_0 - 1 \geq 0.$ Then, 
		\begin{eqnarray}
			\|\Delta_{n+1}^{(\theta)}\| & \leq & C_{\ref{lem: Dn bounds},\theta} e^{-q_1 \sum_{k=n_0}^n\alpha_k} \|\theta_{n_0} - \thS\|,
			\label{eqn:K Theta bound}\\
			\|\Dnw\| & \leq & C_{\ref{lem: Dn bounds},w} e^{-q_2 \sum_{k=n_0}^n\beta_k} \|w_{n_0} - \wS\|.
			\label{eqn: K  w bound}
		\end{eqnarray}
		\end{lemma}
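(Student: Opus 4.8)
The plan is to reduce both bounds directly to Lemma~\ref{lem: Dn bounds} via submultiplicativity of the operator norm. Starting from the definition \eqref{def: Dnt}, write
\[
\|\Dnt\| = \left\| \prod_{k = n_0}^{n} (I - \alpha_k X_1)\, (\theta_{n_0} - \thS) \right\| \leq \left( \prod_{k = n_0}^{n} \|I - \alpha_k X_1\| \right) \|\theta_{n_0} - \thS\|,
\]
using that $\|ABx\| \leq \|A\|\,\|B\|\,\|x\|$ for the induced matrix norm. Now apply Lemma~\ref{lem: Dn bounds} with $i = n_0$ to bound the product of norms by $C_{\ref{lem: Dn bounds},\theta}\, e^{-q_1 \sum_{k=n_0}^{n}\alpha_k}$, which yields \eqref{eqn:K Theta bound}. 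The argument for \eqref{eqn: K  w bound} is identical, using \eqref{def: Dnw}, the bound \eqref{eq: Kw} from Lemma~\ref{lem: Dn bounds}, and the constant $C_{\ref{lem: Dn bounds},w}$.

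The only point that needs care is the boundary case $n = n_0 - 1$, which is explicitly allowed in the statement. There the product $\prod_{k=n_0}^{n_0 - 1}(I - \alpha_k X_1)$ is empty and equals $I$, so $\|\Delta_{n_0}^{(\theta)}\| = \|\theta_{n_0} - \thS\|$; correspondingly the exponent $\sum_{k=n_0}^{n_0-1}\alpha_k$ is an empty sum equal to $0$, so the claimed right-hand side is $C_{\ref{lem: Dn bounds},\theta}\,\|\theta_{n_0} - \thS\|$. Since $C_{\ref{lem: Dn bounds},\theta} = \max\{1, \cdots\} \geq 1$ by its definition in Lemma~\ref{lem: Dn bounds}, the inequality holds in this edge case as well; the same remark applies to $C_{\ref{lem: Dn bounds},w} \geq 1$ for the $w$ iterate. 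For $n \geq n_0$ the nonempty-product case is covered verbatim by Lemma~\ref{lem: Dn bounds}.

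There is essentially no obstacle here: the lemma is a one-line consequence of the already-established geometric-decay estimate on products of the contraction-like matrices $I - \alpha_k X_1$ and $I - \beta_k W_2$, combined with submultiplicativity of the norm. If anything, the mild bookkeeping subtlety is making sure the empty-product/empty-sum convention at $n = n_0 - 1$ is consistent with the stated constants, which it is because those constants are defined as maxima with $1$.
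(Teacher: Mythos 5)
Your proposal is correct and matches the paper's own argument: bound $\|\Dnt\|$ by the product of operator norms via submultiplicativity, invoke Lemma~\ref{lem: Dn bounds} for $n \geq n_0$, and dispose of the edge case $n = n_0 - 1$ using the empty-product convention together with $C_{\ref{lem: Dn bounds},\theta}, C_{\ref{lem: Dn bounds},w} \geq 1$. Nothing is missing.
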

		\begin{proof}
		From \eqref{def: Dnt}, we have
		\begin{equation}
			\|\Delta_{n+1}^{(\theta)}\| \leq \prod_{k=n_0}^n\|I-\alpha_k X_1\| \|\theta_{n_0} - \thS\|.
		\end{equation}
		For $n = n_0 - 1,$ the desired result follows since $C_{\ref{lem: Dn bounds},\theta} \geq 1,$ while, for $n \geq n_0,$ the result holds due to Lemma~\ref{lem: Dn bounds}. The second statement follows similarly.
		\end{proof}

		\begin{remark}
		This trivial lemma gives a much stronger convergence rate for $\|\Delta_{n+1}^{\theta}\|$, compared to Lemma~6 in \cite{mokkadem2006convergence}. It thus raises the following question. On the one hand, in Remark~4,\cite{mokkadem2006convergence}  the linear case is explained to be easier and does not require using the fact that $\Delta_n \rightarrow 0$. On the other hand, that remark refers in this simplified case to Eqs.~27-28 , which are fairly complex and are not decaying exponentially without using sophisticated successive upper bound tricks. These latter Eqs. also recursively consist of $L_{n+1}^{(\theta)} + R_{n+1}^{(\theta)}$. It thus implies that in \cite{mokkadem2006convergence} the derivation above can be tightened.
		\end{remark}

		\begin{lemma}
		\label{lem:Suff Cond. 2}
		Let $B_1,B_3 \geq 0$ with at least one of them being strictly positive, let $x, y \geq 0,$ and let $B_2 \geq 1$ be some constants. Then, for any $n \geq 0$ and $z \geq \max\{x, y\}$
		\begin{equation} \label{eq: unw cond}
		\frac{B_1 (n+1)^{-x}\sqrt{\ln(B_2(n+1))} +B_3 (n+1)^{-y}}{B_1(n+2)^{-x}\sqrt{\ln(B_2(n+2))} + B_3(n+2)^{-y}} \leq \frac{(n+1)^{-z}}{(n+2)^{-z}}.
		\end{equation}
		\end{lemma}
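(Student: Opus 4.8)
The plan is to clear the denominator in \eqref{eq: unw cond} and then verify the resulting inequality one summand at a time. First I would record the elementary monotonicity facts that do all the work: since $B_2 \ge 1$ and $n \ge 0$ we have $1 \le B_2(n+1) < B_2(n+2)$, hence $0 \le \ln(B_2(n+1)) \le \ln(B_2(n+2))$, and in fact $B_2(n+2) \ge 2$ gives $\ln(B_2(n+2)) > 0$. Together with the hypothesis that at least one of $B_1, B_3$ is strictly positive, this shows the denominator on the left of \eqref{eq: unw cond} is strictly positive, so \eqref{eq: unw cond} is equivalent to
\[
(n+2)^{-z}\Big[B_1(n+1)^{-x}\sqrt{\ln(B_2(n+1))} + B_3(n+1)^{-y}\Big] \le (n+1)^{-z}\Big[B_1(n+2)^{-x}\sqrt{\ln(B_2(n+2))} + B_3(n+2)^{-y}\Big].
\]

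Next I would split this into the two termwise inequalities obtained by matching the $B_1$-summands on the two sides and the $B_3$-summands on the two sides; since every quantity involved is nonnegative, establishing both termwise inequalities proves the displayed one. The $B_3$-term reduces, after cancelling $B_3$ (trivial if $B_3 = 0$) and rearranging, to $(n+1)^{z-y} \le (n+2)^{z-y}$, which is immediate from $z \ge y \ge 0$ and $n+1 \le n+2$. For the $B_1$-term: if $\ln(B_2(n+1)) = 0$ the left side already vanishes; otherwise, after cancelling $B_1$ and rearranging, it becomes
\[
\left(\tfrac{n+1}{n+2}\right)^{z-x} \le \sqrt{\frac{\ln(B_2(n+2))}{\ln(B_2(n+1))}},
\]
where the left side is $\le 1$ because $z \ge x \ge 0$, while the right side is $\ge 1$ by the monotonicity of $\ln$ noted above.

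There is no substantive obstacle here: the entire argument is the reduction to summand-wise comparison plus the two observations $z \ge x$, $z \ge y$ together with monotonicity of $t \mapsto t^{z-x}$, $t \mapsto t^{z-y}$, and $t \mapsto \ln t$. The only points that need a line of care are (i) checking that the left-hand denominator of \eqref{eq: unw cond} is nonzero, so that cross-multiplication is valid, and (ii) disposing of the degenerate cases $B_1 = 0$, $B_3 = 0$, or $\ln(B_2(n+1)) = 0$, in each of which the relevant term is identically zero and the inequality holds trivially.
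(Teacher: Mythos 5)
Your proposal is correct and follows essentially the same route as the paper's proof: cross-multiply and compare the $B_1$- and $B_3$-summands separately, using $z \geq \max\{x,y\}$ for the power factors and the monotonicity of $\ln$ for the square-root factor. Your extra care about the strict positivity of the denominator and the degenerate cases $B_1=0$, $B_3=0$, $\ln(B_2(n+1))=0$ is a welcome tightening of details the paper leaves implicit, but it does not change the argument.
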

		\begin{proof}
		As $z \geq \max\{x, y\},$ it follows that
		\begin{equation}
		\left(\frac{n+2}{n+1}\right)^x \leq \left(\frac{n+2}{n+1}\right)^z.
		\end{equation}
		Hence, $(n+1)^{-x}(n+2)^{-z} \leq (n+2)^{-x}(n+1)^{-z}.$
		Similarly, $(n+1)^{-y}(n+2)^{-z} \leq (n+2)^{-y}(n+1)^{-z}.$
		Therefore, it is easy to see that
		\begin{multline}
		 B_1(n+1)^{-x}(n+2)^{-z} \sqrt{\ln(B_2(n + 1))} + B_3 (n+1)^{-y}(n+2)^{-z} \\ \leq B_1(n+2)^{-x }(n+1)^{-z} \sqrt{\ln(B_2(n + 2))} + B_3(n+2)^{-y}(n+1)^{-z}.
		\end{multline}
		The desired result now follows. 
		\end{proof}
		\begin{lemma} \label{eq: conditions for cond 2}
		Let $z \in [0, 1-(\alpha - \beta)],$
		\begin{equation} \label{eq: k_alpha def}
		    K_{\ref{eq: conditions for cond 2},\alpha}(z) = \max\bigg\{\ceil[\bigg]{\left(\frac{q_1}{2(\alpha-\beta + z)}\right)^{1/\alpha}}, \ceil[\bigg]{\left(\frac{4(\alpha - \beta + z)}{q_1}\right)^{1/(1-\alpha)}} \bigg\},
		\end{equation}
		and 
		\begin{equation} \label{eq: k_beta def}
		    K_{\ref{eq: conditions for cond 2},\beta}(z) = \max\bigg\{\ceil[\bigg]{\left(\frac{q_2}{\alpha-\beta + z}\right)^{1/\beta}}, \ceil[\bigg]{\left(\frac{4(\alpha - \beta + z)}{q_2}\right)^{1/(1-\beta)}} \bigg\}.
		\end{equation}
		Then,
		\begin{enumerate}
		    \item for $n \geq K_{\ref{eq: conditions for cond 2},\alpha}(z),$ \begin{equation} \label{eq: cond 2 for general case}
		\frac{(n+1)^{-z}}{(n+2)^{-z}} \leq \frac{\alpha_{n + 1}}{\alpha_n} \frac{\beta_n}{\beta_{n + 1}} e^{(q_1/2) \; \alpha_{n + 1}}, \mbox{ and}
		\end{equation}
		\item for $n \geq K_{\ref{eq: conditions for cond 2},\beta}(z),$
		\begin{equation} \label{eq: cond 3 for general case}
		\frac{(n+1)^{-z}}{(n+2)^{-z}} \leq \frac{\alpha_{n + 1}}{\alpha_n} \frac{\beta_n}{\beta_{n + 1}} e^{(q_2/2) \; \beta_{n + 2}}.
		\end{equation}
		\end{enumerate}

		\end{lemma}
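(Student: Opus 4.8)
Since \eqref{eq: cond 2 for general case} and \eqref{eq: cond 3 for general case} have the same shape, I would prove them in parallel. The first step is purely cosmetic: plugging in $\alpha_n=(n+1)^{-\alpha}$ and $\beta_n=(n+1)^{-\beta}$ gives $\tfrac{\alpha_{n+1}}{\alpha_n}\tfrac{\beta_n}{\beta_{n+1}}=\big(\tfrac{n+1}{n+2}\big)^{\alpha-\beta}$ and $\tfrac{(n+1)^{-z}}{(n+2)^{-z}}=\big(\tfrac{n+2}{n+1}\big)^{z}$, so that \eqref{eq: cond 2 for general case} is equivalent to $\big(\tfrac{n+2}{n+1}\big)^{\,a}\le e^{(q_1/2)\alpha_{n+1}}$ and \eqref{eq: cond 3 for general case} to $\big(\tfrac{n+2}{n+1}\big)^{\,a}\le e^{(q_2/2)\beta_{n+2}}$, where I write $a:=z+\alpha-\beta$. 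Here $q_1,q_2>0$ by \assumMatrices{} (recall \eqref{eq: q1 defn}--\eqref{eq: q2 defn}), and the hypothesis $z\le 1-(\alpha-\beta)$ forces $a\in(0,1]$, a fact the next step uses.

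The second step eliminates the left-hand side with elementary convexity estimates. Because $a\in(0,1]$, the map $t\mapsto(1+t)^{a}$ is concave, so $\big(\tfrac{n+2}{n+1}\big)^{a}=(1+\tfrac1{n+1})^{a}\le 1+\tfrac{a}{n+1}$; on the right, $e^{x}\ge 1+x$. It therefore suffices to prove the scalar inequalities $\tfrac{a}{n+1}\le \tfrac{q_1}{2}\alpha_{n+1}$ and $\tfrac{a}{n+1}\le \tfrac{q_2}{2}\beta_{n+2}$, i.e.\ $(n+2)^{\alpha}\le \tfrac{q_1}{2a}(n+1)$ and $(n+3)^{\beta}\le \tfrac{q_2}{2a}(n+1)$. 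Using $n+1\ge\tfrac{n+2}{2}$ (for all $n\ge0$), and $n+1\ge\tfrac{n+3}{2}$ (valid once $n\ge1$, which is ensured since $K_{\ref{eq: conditions for cond 2},\beta}(z)\ge1$), I can peel off one power of the argument and reduce these to $(n+2)^{1-\alpha}\ge\tfrac{4a}{q_1}$ and $(n+3)^{1-\beta}\ge\tfrac{4a}{q_2}$, which is precisely what the second entry of the $\max$ in $K_{\ref{eq: conditions for cond 2},\alpha}(z)$ (see \eqref{eq: k_alpha def}) and in $K_{\ref{eq: conditions for cond 2},\beta}(z)$ (see \eqref{eq: k_beta def}) guarantees. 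The first entry in each $\max$ is, in fact, not needed for the route I sketch here; it is a harmless strengthening that, e.g., forces $\tfrac{q_1}{2}\alpha_{n+1}\le a\le1$ (resp.\ $\tfrac{q_2}{2}\beta_{n+2}\le a\le1$), which is convenient if one instead chains the bounds through $(1+t)^a\le e^{at}$ and a quadratic estimate on $e^{x}$.

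Assembling: for $n\ge K_{\ref{eq: conditions for cond 2},\alpha}(z)$ the relevant threshold is met, so the reduced scalar inequality holds and hence \eqref{eq: cond 2 for general case}; the argument for \eqref{eq: cond 3 for general case}, valid for $n\ge K_{\ref{eq: conditions for cond 2},\beta}(z)$, is identical after replacing $(q_1,\alpha,\alpha_{n+1})$ by $(q_2,\beta,\beta_{n+2})$. I expect the only real effort — and the main obstacle — to be the constant bookkeeping: one has to shuttle between $n,n+1,n+2,n+3$ and between the exponents $\alpha$ and $1-\alpha$ (resp.\ $\beta$ and $1-\beta$) with just enough slack that the final index threshold collapses to exactly the stated closed form rather than to a messier expression. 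There is nothing conceptually deep; it is a matter of picking the elementary inequalities with the right amount of room.
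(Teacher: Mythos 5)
Your proof is correct and follows essentially the same route as the paper's: reduce via $(1+t)^a \le 1+at$ for $a=z+\alpha-\beta\in(0,1]$ (the paper derives this from the mean value theorem, you from concavity) and $e^x \ge 1+x$ to the scalar inequality $\tfrac{a}{n+1} \le \tfrac{q_1}{2}(n+2)^{-\alpha}$ (resp.\ $\tfrac{q_2}{2}(n+3)^{-\beta}$), and then check this against the stated index threshold. The only divergence is the final step: the paper writes $\tfrac{n+1}{(n+2)^\alpha}=(n+2)^{1-\alpha}-(n+2)^{-\alpha}$ and uses \emph{both} entries of the $\max$ in \eqref{eq: k_alpha def} to control the two terms, whereas your bound $n+1\ge(n+2)/2$ needs only the second entry; both are valid, and your observation that the first entry is superfluous for this argument is correct.
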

		\begin{proof}
		We begin with the first statement. Let us now substitute the stepsizes.
		Let us write \eqref{eq: cond 2 for general case} as
		\begin{equation}
		    1 \leq \frac{{(n+2)}^{-z}}{(n+1)^{-z}}\frac{{(n+2)}^{-\alpha}}{(n+1)^{-\alpha}}\frac{{(n+1)}^{-\beta}}{(n+2)^{-\beta}} e^{q_1/2 \;\alpha_{n + 1}} = \left[ 1 + \frac{1}{n+1} \right]^{\beta - \alpha - z}  e^{q_1/2 \;\alpha_{n + 1}}. \label{eq: bound on stepsize ration}
		\end{equation}
		
		Next, we use a first-order approximation for the exponent. 
		Since $e^{(q_1/2) \alpha_{n + 1}} \geq 1 + (q_1/2) (n+2)^{-\alpha},$ to show \eqref{eq: bound on stepsize ration} it is enough to show that 
		$
		\left[1+\frac{1}{n + 1}\right]^{\beta - \alpha - z} \left[1 + (q_1/2) (n+2)^{-\alpha}\right] \geq 1;
		$
		that is,
		$
		 \left[1 + (q_1/2) (n+2)^{-\alpha}\right] \geq \left[1+\frac{1}{n + 1}\right]^{\alpha - \beta + z}.
		$
		
		For this, we shall show that $\left[1+\frac{1}{n + 1}\right]^{\alpha - \beta + z} \leq 1+\frac{\alpha - \beta + z}{n + 1}$, and later show \eqref{eq: 2nd part}. Denote $f(x) = (1+x)^{\alpha-\beta + z};$ then
		$ \left[1+\frac{1}{n+1}\right]^{\alpha-\beta + z} = f(1/(n+1)).
		$
		From the mean value theorem, $\exists c \in (0,1/(n+1))$ s.t. $f'(c) = \frac{f(1/(n+1)) - f(0)}{1/(n+1) - 0}$. Hence, $(\alpha-\beta + z)(1+c)^{({\alpha-\beta + z})-1} = (n+1)\left[\left(1+\frac{1}{n+1}\right)^{\alpha-\beta + z}-1\right].$ Therefore, $\left[1+\frac{1}{(n+1)}\right]^{\alpha-\beta + z} = 1+ \frac{\alpha-\beta + z}{(n+1)(1+c)^{1-(\alpha-\beta + z)}} \leq 1+\frac{\alpha-\beta + z}{n+1}.$
		The latter inequality holds because
		$
		    (1 + c)^{1 - (\alpha - \beta +z)} \geq 1,
		$
		which can be seen from the fact that 
		 $1 -(\alpha - \beta)\geq  z.$ 

		Now, we are left to show that 
		\begin{equation} \label{eq: 2nd part}
		 1+\frac{\alpha - \beta + z}{n + 1} \leq 1 + (q_1/2) (n+2)^{-\alpha}, 
		\end{equation}
		meaning that 
		\begin{equation} \label{eq: finishing the uk bound}
		\frac{2(\alpha - \beta + z)}{q_1} \leq \frac{n+1}{(n+2)^\alpha} = (n+2)^{1-\alpha} - (n+2)^{-\alpha},
		\end{equation}
		where the last relation holds by adding and subtracting $1$ in the numerator. 
		To show \eqref{eq: finishing the uk bound}, first notice that $(n+2)^{-\alpha} \leq \frac{2(\alpha - \beta + z)}{q_1}$ when $n \geq \left(\frac{q_1}{2(\alpha-\beta + z)}\right)^{1/\alpha} - 2.$ Therefore, \eqref{eq: finishing the uk bound} holds if $(n+2)^{1-\alpha} \geq \frac{4(\alpha - \beta + z)}{q_1}, $ which holds for $n \geq \left(\frac{4(\alpha - \beta + z)}{q_1}\right)^{1/(1-\alpha)} - 2.$ By imposing the condition $n_0 \geq K_{\ref{eq: conditions for cond 2},\alpha}(z)$, we obtain \eqref{eq: cond 2 for general case} and conclude the proof of the first statement.
		
		To show the second statement, it is now enough to show that
		$
		 \left[1 + (q_2/2) (n+3)^{-\beta}\right] \geq \left[1+\frac{1}{n + 1}\right]^{\alpha - \beta + z}.
		$
		The proof of this is very similar; the main difference is that instead of \eqref{eq: finishing the uk bound}, one obtains that
		\begin{equation} 
		\frac{2(\alpha - \beta + z)}{q_2} \leq \frac{n+1}{(n+3)^\beta} = (n+3)^{1-\beta} - 2(n+3)^{-\beta}
		\end{equation}
		holds when 
		%$2(n+3)^{-\alpha} \leq \frac{2(\alpha - \beta + z)}{q_1}$ when $n \geq \left(\frac{q_1}{(\alpha-\beta + z)}\right)^{1/\alpha} - 3$ and $(n+3)^{1-\alpha} \geq \frac{4(\alpha - \beta + z)}{q_1}, $ which holds for 
		$n \geq \left(\frac{q_1}{(\alpha-\beta + z)}\right)^{1/\beta} - 3$
		and
		$n \geq \left(\frac{4(\alpha - \beta + z)}{q_1}\right)^{1/(1-\beta)} - 3$.
		\end{proof}

		\begin{lemma} 
		\label{lem: const bound on eps}
		Given arbitrary constants $a>0$ and $A>0$, it holds that
		$n^{-\gamma}\log(n^p a) \leq A$ for any $n \geq \left[2\frac{p}{\gamma A}\ln\left(2\frac{p}{\gamma A}a^{\gamma/p}\right)\right]^{1/\gamma}$.
		Consequently, for
		$\epsth_n, \epsw_n$ as defined in  \eqref{eq: epsilon n new def}, %
		\begin{equation}
		    \epsth_n \leq \Rprojth / 2 \label{eq: epsth const bound}
		\end{equation}
		% for  $n \geq K_{\ref{lem: const bound on eps}, \alpha} := \left[\frac{4 d^2}{\delta}\right]^{-1/p}[2K_{\ref{lem: const bound on eps}, \theta}\ln (2K_{\ref{lem: const bound on eps}, \theta})]^{1/\alpha},$ where $K_{\ref{lem: const bound on eps}, \theta} = \frac{4d^3 L_\theta C_{\ref{lem: an bn upper bounds},\theta} p }{\alpha (\Rprojth)^2}\left[\frac{4 d^2}{\delta}\right]^{\alpha/p}.$
		for  $n \geq K_{\ref{lem: const bound on eps}, \alpha} := \left[\frac{4d^3 L_\theta C_{\ref{lem: an bn upper bounds},\theta} p }{\alpha (\Rprojth)^2}\right]^{1/\alpha} \left[2\ln\left(2\frac{4d^3 L_\theta C_{\ref{lem: an bn upper bounds},\theta} p }{\alpha (\Rprojth)^2}\left[\frac{4 d^2}{\delta}\right]^{\alpha/p}\right)\right]^{1/\alpha},$ 
		and
		\begin{equation}
		    \epsw_n \leq \Rprojw / 2 \label{eq: epsw const bound}
		\end{equation}
		% for  $n \geq K_{\ref{lem: const bound on eps}, \beta} := \left[\frac{4 d^2}{\delta}\right]^{-1/p}[2K_{\ref{lem: const bound on eps}, w}\ln (2K_{\ref{lem: const bound on eps}, w})]^{1/\beta},$ where $K_{\ref{lem: const bound on eps}, w} = \frac{4d^3 L_w C_{\ref{lem: an bn upper bounds},w} p }{\beta (\Rprojw)^2}\left[\frac{4 d^2}{\delta}\right]^{\beta/p}.$
		for  $n \geq K_{\ref{lem: const bound on eps}, \beta} := \left[\frac{4d^3 L_w C_{\ref{lem: an bn upper bounds},w} p }{\beta (\Rprojw)^2}\right]^{1/\beta}\left[2\ln \left(2\frac{4d^3 L_w C_{\ref{lem: an bn upper bounds},w} p }{\beta (\Rprojw)^2}\left[\frac{4 d^2}{\delta}\right]^{\beta/p}\right)\right]^{1/\beta}.$
		\end{lemma}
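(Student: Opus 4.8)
The second (``Consequently'') part is, after squaring, an immediate instance of the first sentence: since $(\epsth_n)^2 = d^3 L_\theta C_{\ref{lem: an bn upper bounds},\theta}\,(n+1)^{-\alpha}\log\!\big((n+1)^p\cdot 4d^2/\delta\big)$ and $(\epsw_n)^2 = d^3 L_w C_{\ref{lem: an bn upper bounds},w}\,(n+1)^{-\beta}\log\!\big((n+1)^p\cdot 4d^2/\delta\big)$, requiring $\epsth_n\le\Rprojth/2$ (resp.\ $\epsw_n\le\Rprojw/2$) is exactly the inequality $n'^{-\gamma}\log(n'^{p}a)\le A$ with $n'=n+1$, $a = 4d^2/\delta$, $\gamma=\alpha$ (resp.\ $\beta$), and $A = (\Rprojth)^2/(4d^3 L_\theta C_{\ref{lem: an bn upper bounds},\theta})$ (resp.\ $A = (\Rprojw)^2/(4d^3 L_w C_{\ref{lem: an bn upper bounds},w})$). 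Plugging these into the first claim and simplifying $\tfrac{2p}{\gamma A}$ reproduces the stated constants $K_{\ref{lem: const bound on eps},\alpha}$ and $K_{\ref{lem: const bound on eps},\beta}$ verbatim. So the whole burden is the first, purely scalar, statement.

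\textbf{Proving the scalar inequality.} To show $n^{-\gamma}\log(n^p a)\le A$ for $n$ above the stated threshold, I would rewrite it as $\log(n^p a)\le A n^\gamma$ and change variables to $u := n^\gamma a^{\gamma/p}$, so that $n^p a = u^{p/\gamma}$ and the target becomes $\tfrac{p}{\gamma}\log u \le A u\,a^{-\gamma/p}$, i.e.\ $\log u \le B u$ with $B := \tfrac{A\gamma}{p\,a^{\gamma/p}}$. A one-line computation shows that the hypothesis $n\ge\big[\tfrac{2p}{\gamma A}\ln(\tfrac{2p}{\gamma A}a^{\gamma/p})\big]^{1/\gamma}$ is \emph{exactly} $u\ge \tfrac{2}{B}\ln\tfrac{2}{B}$, using $\tfrac2B=\tfrac{2p\,a^{\gamma/p}}{\gamma A}$. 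Thus it suffices to prove: for every $B>0$, $\log u\le Bu$ whenever $u\ge\tfrac{2}{B}\ln\tfrac{2}{B}$.

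\textbf{Proof of the scalar fact.} Split on $B$. If $B\ge 1/e$, then $\log u\le u/e\le Bu$ for all $u>0$ and there is nothing more to check. If $B<1/e$, let $g(u):=Bu-\log u$; it is strictly convex with minimiser $u=1/B$ and $g(1/B)=1+\log B<0$, so $g$ has exactly two zeros and is nonnegative to the right of the larger one, $u_\star$. Since $g$ is increasing on $[1/B,\infty)$ and $\tfrac2B\ln\tfrac2B>1/B$ (because $B<1/e<2/\sqrt e$), it is enough to verify $g\big(\tfrac2B\ln\tfrac2B\big)\ge 0$; and indeed $g\big(\tfrac2B\ln\tfrac2B\big)=2\ln\tfrac2B-\ln\tfrac2B-\ln\ln\tfrac2B=\ln\tfrac2B-\ln\ln\tfrac2B\ge 0$ by the trivial bound $\ln t\le t$ applied to $t=\ln\tfrac2B>0$. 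Hence $\tfrac2B\ln\tfrac2B\ge u_\star$ and the claim follows; then one substitutes the concrete $(\gamma,p,a,A)$ above for each of $\epsth_n$ and $\epsw_n$ and reads off the thresholds.

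\textbf{Main obstacle.} The mathematical core ($\ln t\le t$) is trivial; the real work is the bookkeeping needed to make constants match. One must pick the change of variables so that the hypothesis lands precisely on $u\ge\tfrac2B\ln\tfrac2B$ and then bound the larger root $u_\star$ of $g$ by exactly that quantity rather than by a looser expression — it is the matched factor of $2$ inside and outside the logarithm that makes $g(\tfrac2B\ln\tfrac2B)$ collapse to the nonnegative $\ln\tfrac2B-\ln\ln\tfrac2B$. A secondary point to flag is that the threshold is only meaningful when $\tfrac{2p}{\gamma A}a^{\gamma/p}>1$ (so its logarithm, and hence $\ln\tfrac2B$, is positive); in the application $A$ is small and $a=4d^2/\delta$ large, so this holds automatically, but I would state it explicitly.
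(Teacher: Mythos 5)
Your proof is correct and follows essentially the same route as the paper: both reduce the lemma to the scalar fact that $C\ln x \le x$ once $x \ge 2C\ln(2C)$ (your $\ln u \le Bu$ for $u \ge \tfrac{2}{B}\ln\tfrac{2}{B}$ with $B = 1/C$) via the same change of variables, and then read off the constants by substitution. Your treatment of the scalar fact is in fact more careful than the paper's, which simply asserts that the inequality persists for larger $x$ because ``$x$ grows faster than $\ln x$,'' whereas you verify the monotonicity of $Bu-\ln u$ beyond its minimizer and handle the small-argument case separately.
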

		
		\begin{proof}
		First note that, for any $C >0$ it holds that $C \ln(x) \leq x$ for $x$ equal to $2C \ln(2C)$ and, since $x$ grows faster than $\ln(x),$ it also holds for any $x$ larger than that.
		The first claim of the lemma follows by substituting $x = [a^{1/p}n]^\gamma$ and $C = \frac{p}{\gamma A}a^{\gamma/p}$. 
		Then \eqref{eq: epsth const bound} simply follows from this first claim by substituting $\gamma = \alpha$, $A = \dfrac{(\Rprojth/2)^2}{d^3 L_\theta C_{\ref{lem: an bn upper bounds},\theta}}$ and $a = \dfrac{4d^2}{\delta}$, and \eqref{eq: epsw const bound} by substituting $\gamma = \beta$, $A = \dfrac{(\Rprojw/2)^2}{d^3 L_w C_{\ref{lem: an bn upper bounds},\theta}}$ and $a = \dfrac{4d^2}{\delta}$.
		
		% Let $\left[\frac{4 d^2}{\delta}\right]^{1/p} (n + 1) = x^{1/\alpha}$ and $C$ be the constant $\dfrac{4d^3 L_\theta C_{\ref{lem: an bn upper bounds},\theta} p}{\alpha (\Rprojth)^2} \left[\dfrac{4d^2}{\delta}\right]^{\alpha/p}.$ Then, \eqref{eq: epsth const bound} holds when
		% %
		% \begin{equation}
		%     C \ln(x) \leq x.
		% \end{equation}
		% %
		% At $x = 2C \ln(2C),$ the above relation holds. Since $x$ grows faster than $\ln(x),$ it follows that the relation holds for all $x \geq 2C \ln\left(2C\right)$ or, equivalently, when $(n + 1) \geq \left[\frac{4 d^2}{\delta}\right]^{-1/p}[2C\ln (2C)]^{1/\alpha}.$
		
		% The same exact argument is used to prove \eqref{eq: epsw const bound}.
		\end{proof}

		\section{Proof of Theorem~\ref{thm:Rates Proj Iterates}}
		\label{sec:Proof Main Result}
		
		Recall from Section~\ref{sec: proof outline} that the analysis is based on Theorem~\ref{thm:Main Res wo Proj}.
		Consequently, what we need to show is that, after the claimed number of iterations, sparse projections ensure $\Gp_{n0}$.
		%This we will force out by projecting the iterates back to the $\Rprojw$ ball from time to time. 
		%As we show, however, 
		In particular, we show that,
		after a time, these projections are not needed anymore, as the iterates remain in the close vicinity of $\thS$ and $\wS$ respectively, and the conclusions of the above Theorem take place.
		
		Before we start the proof, we need to analyze briefly the constants in the theorem. Let 
		\begin{equation}
		\label{defn: A3}
		A_3 = \Crw \Rprojw.
		\end{equation}
		\begin{equation}
		\label{Defn:A4}
		     A_{4,n_0} = \left[A_{1,n_0} \sum_{i=0}^{\ceil{\frac{\beta}{2(\alpha-\beta)}}-1} A_2^i\right] \sqrt{d^3 L_w C_{\ref{lem: an bn upper bounds},w}} + \left[A_3 A_2^{\ceil{\frac{\beta}{2(\alpha-\beta)}}}\right],
		\end{equation}
		and 
		\begin{equation}
		\label{Defn:A5}
		    A_{5,n_0} 
		    = 2\left[C_{\ref{lemma: R_n w bound},a}  \left[\Crth\Rprojth + \Crw \Rprojw \right] /\epsth_{n_0 - 1} + 1\right]\sqrt{4d^3 L_\theta C_{\ref{lem: an bn upper bounds},\theta}} 
		    +  2C_{\ref{lemma: R_n w bound},b} A_{4, n_0}.
		\end{equation}

		\begin{lemma} \label{lemma: A4' A5'}
		Assume $\Gp_{n_0}$ holds. Let
		\begin{equation}
		    \label{defn: A'4 and A'5}
		  A_4' = A_{4,C_1} + 1, ~  A_5' = 4 + 2A_{5,C_1} + 2C_{\ref{lemma: R_n w bound},b} A_{4,C_1},
		\end{equation} 
		where
		\begin{equation}
		    \label{defn: A4C1}
		    A_{4,C_1} = d^3 L_w C_{\ref{lem: an bn upper bounds},w}\left(C_{\ref{lem: Dn bounds},w}\|\Gamma_2\| \left[\Rprojth +  C_{\ref{lemma: R_n w bound},a}  e^{q_1} \frac{2}{q_{\min}}    \left(\Rprojth + \Rprojw\right)\right] + C_{\ref{lem: Dn bounds},w} \Rprojw\right)e\sum_{i=0}^{\ceil{\frac{\beta}{2(\alpha-\beta)}}-1} A_2^i,
		\end{equation}
		\begin{equation}
		    \label{defn: A5C1}
		    A_{5,C_1} = C_{\ref{lemma: R_n w bound},a}
		     \left[\Crth\Rprojth + \Crw \Rprojw \right].
		\end{equation}
		Then recalling $A_{4,n_0}$ from \eqref{Defn:A4} and $A_{5,n_0}$ from \eqref{Defn:A5}, 
		\begin{align}
		    A_{4,n_0} &\leq A_4' (n_0+1)^{\beta/2} \left(\ln{(4d^2(n_0+1)^p/\delta)}\right)^{-1/2} \label{eqn:Simplified A4 Expr}\\
		    A_{5,n_0} & \leq  A_5'(n_0+1)^{\alpha/2} \left(\ln{(4d^2(n_0+1)^p/\delta)}\right)^{-1/2} 
		    \; \label{eqn:Simplified A5 Expr}
		\end{align}
		if
		\begin{equation}
		    n_0 \geq \max\{K_{\ref{lemma: A4' A5'},a}, K_{\ref{lemma: A4' A5'},b}\},
		\end{equation}
		where
		\begin{align}
		    \label{defn: CA4'A5'Lemma a}
		    K_{\ref{lemma: A4' A5'},a} & = \left[\frac{p }{\beta (A_{4,C_0})^2}\right]^{1/\beta}\left[2\ln \left(2\frac{p }{\beta (A_{4,C_0})^2}\left[\frac{4 d^2}{\delta}\right]^{\beta/p}\right)\right]^{1/\beta}, \\
		    \label{defn: CA4'A5'Lemma b}
		    K_{\ref{lemma: A4' A5'},b} & = \left[\frac{p }{\alpha (\min\{C_{\ref{lemma: R_n w bound},b}A_{4,C_0},A_{5,C_0}\})^2}\right]^{1/\alpha}\left[2\ln \left(2\frac{p }{\alpha (\min\{C_{\ref{lemma: R_n w bound},b}A_{4,C_0},A_{5,C_0}\})^2}\left[\frac{4 d^2}{\delta}\right]^{\alpha/p}\right)\right]^{1/\alpha},
		\end{align}
		with
		\begin{align}
		    \label{defn:A4C0}
		    A_{4,C_0} & = \left[(e + e^2 A_1'')\left(\sum_{i=0}^{\ceil{\frac{\beta}{2(\alpha-\beta)}}-1} A_2^i \sqrt{d^3 L_w C_{\ref{lem: an bn upper bounds},w}}\right) + A_3 A_2^{\ceil{\frac{\beta}{2(\alpha-\beta)}}}\right], \\
		    \label{defn:A5C0}
		    A_{5,C_0} & = \sqrt{4d^3 L_\theta C_{\ref{lem: an bn upper bounds},\theta}}.
		\end{align}
		\end{lemma}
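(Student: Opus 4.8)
The purpose of Lemma~\ref{lemma: A4' A5'} is bookkeeping for the coupling argument behind Theorem~\ref{thm:Rates Proj Iterates}: the rate-constants $A_{4,n_0}$ and $A_{5,n_0}$ delivered by Theorem~\ref{thm: wn rate} depend a priori on the restart index $n_0$ --- through $A_{1,n_0}$ in \eqref{Defn:A4}, and through $1/\epsth_{n_0-1}$ and $A_{4,n_0}$ in \eqref{Defn:A5} --- and we must show this dependence is, on $\Gp_{n_0}$, no worse than $(n_0+1)^{\beta/2}$ (resp.\ $(n_0+1)^{\alpha/2}$) divided by $\sqrt{\ln(4d^2(n_0+1)^p/\delta)}$. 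The plan is to isolate each $n_0$-dependent piece, bound it by that canonical quantity times an $n_0$-free constant, and recombine; the thresholds $K_{\ref{lemma: A4' A5'},a}$, $K_{\ref{lemma: A4' A5'},b}$ are used exactly to absorb the remaining $n_0$-free additive leftovers into the ``$+1$'' and ``$+4$'' of $A_4'$ and $A_5'$.

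\textbf{The constant $A_{4,n_0}$.} Split \eqref{Defn:A4} into the $\epsilon$-part $[A_{1,n_0}\sum_{i=0}^{\ell^*-1}A_2^i]\sqrt{d^3 L_w C_{\ref{lem: an bn upper bounds},w}}$ and the $n_0$-free tail $A_3 A_2^{\ell^*}$. For $n_0\ge K_{\ref{lemma: A4' A5'},a}$ the quantity $(n_0+1)^{\beta/2}/\sqrt{\ln(4d^2(n_0+1)^p/\delta)}$ exceeds $1$ --- this is the ``$C\ln x\le x$ once $x\ge 2C\ln 2C$'' fact from Lemma~\ref{lem: const bound on eps} applied with $\gamma=\beta$, $a=4d^2/\delta$ and the constant built from $A_{4,C_0}$ --- so the tail is absorbed into the ``$+1$'' of $A_4'=A_{4,C_1}+1$. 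For the $\epsilon$-part I must bound $A_{1,n_0}$. Reading its construction off Lemma~\ref{lem: bound on wn}, $A_{1,n_0}$ is an $n_0$-free multiple of a bound on the transient part of $w_{n+1}-\wS$ --- the $\Dnw$ term (Lemma~\ref{lem: Dn bounds}) and the $\theta$-driven term \eqref{eqn:thBd Main} (Lemma~\ref{lemma: R_n w bound}) --- divided by $\epsw_{n+1}$ and maximised over $n\ge n_0$; on $\Gp_{n_0}$ the initial errors obey $\|w_{n_0}-\wS\|\le\Rprojw$ and $\|\theta_{n_0}-\thS\|\le\Rprojth$, and feeding these into Lemmas~\ref{lem: Dn bounds} and \ref{lemma: R_n w bound} reproduces precisely the bracket $C_{\ref{lem: Dn bounds},w}\|\Gamma_2\|[\Rprojth+C_{\ref{lemma: R_n w bound},a}e^{q_1}\tfrac{2}{q_{\min}}(\Rprojth+\Rprojw)]+C_{\ref{lem: Dn bounds},w}\Rprojw$ appearing in \eqref{defn: A4C1}.

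The only non-routine step is the division by $\epsw_{n+1}$: I need $\sup_{n\ge n_0}\, e^{-q_2\sum_{j=n_0}^{n}\beta_j}/\nu(n+1;\beta)\le e\,(n_0+1)^{\beta/2}/\sqrt{\ln(4d^2(n_0+1)^p/\delta)}$, i.e.\ that the exponentially decaying numerator, although it can transiently swell relative to its value near $n=n_0$, does so only by a bounded factor (here $e$). Writing $m=n+1$, the map $m\mapsto m^{\beta/2}e^{-q_2\sum_{j=n_0}^{n}(j+1)^{-\beta}}$ has a single $O(1)$-located maximiser (at $m\asymp(\beta/(2q_2))^{1/(1-\beta)}$), so once $n_0$ is past the relevant $K_{\ref{eq: conditions for cond 2},\beta}$-type index the supremum over $m\ge n_0+1$ occurs at $m=n_0+1$, while $\ln(4d^2(n_0+1)^p/\delta)/\ln(4d^2 m^p/\delta)\le 1$ only helps. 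This yields $A_{1,n_0}\sqrt{d^3 L_w C_{\ref{lem: an bn upper bounds},w}}\sum_i A_2^i\le A_{4,C_1}\,(n_0+1)^{\beta/2}/\sqrt{\ln(4d^2(n_0+1)^p/\delta)}$; adding the absorbed tail gives \eqref{eqn:Simplified A4 Expr}.

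\textbf{The constant $A_{5,n_0}$ and the main obstacle.} Write $\epsth_{n_0-1}=\sqrt{d^3 L_\theta C_{\ref{lem: an bn upper bounds},\theta}}\,\nu(n_0-1;\alpha)$ explicitly, so $1/\epsth_{n_0-1}$ is an explicit multiple of $(n_0)^{\alpha/2}/\sqrt{\ln(4d^2 n_0^p/\delta)}$; since $n_0\le n_0+1$ and $\ln(4d^2(n_0+1)^p/\delta)/\ln(4d^2 n_0^p/\delta)$ is bounded by a $p,d,\delta$-constant, this is $(n_0+1)^{\alpha/2}/\sqrt{\ln(4d^2(n_0+1)^p/\delta)}$ times an $n_0$-free constant, producing the $2A_{5,C_1}$ term. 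The ``$+1$'' in the bracket of \eqref{Defn:A5} spawns an $n_0$-free term absorbed into the ``$4$'' of $A_5'$ once $n_0\ge K_{\ref{lemma: A4' A5'},b}$ (Lemma~\ref{lem: const bound on eps} again), and the $2C_{\ref{lemma: R_n w bound},b}A_{4,n_0}$ term is handled by the $A_{4,n_0}$ bound already obtained together with $\beta/2<\alpha/2$ (which lets one replace $(n_0+1)^{\beta/2}$ by $(n_0+1)^{\alpha/2}$); summing the three contributions gives \eqref{eqn:Simplified A5 Expr} with $A_5'=4+2A_{5,C_1}+2C_{\ref{lemma: R_n w bound},b}A_{4,C_1}$. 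The thresholds $K_{\ref{lemma: A4' A5'},a}$ and $K_{\ref{lemma: A4' A5'},b}$, built from $A_{4,C_0}$ and $\min\{C_{\ref{lemma: R_n w bound},b}A_{4,C_0},A_{5,C_0}\}$ respectively, are precisely the indices beyond which all these ``absorb an $n_0$-free constant into $(n_0+1)^{\gamma/2}/\sqrt{\ln(\cdots)}$'' comparisons hold simultaneously. The main obstacle is making the third paragraph precise: extracting the explicit form of $A_{1,n_0}$ from Lemma~\ref{lem: bound on wn} and proving the constant-transient-inflation bound for $e^{-q_2\sum\beta_j}/\nu(\cdot;\beta)$ with the correct explicit constant; everything downstream is careful but routine constant-chasing.
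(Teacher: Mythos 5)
Your overall strategy coincides with the paper's: on $\Gp_{n_0}$ bound $C_{\ref{lemma: R_n w bound},c}(n_0)$ and $\|w_{n_0}-\wS\|$ by the projection radii, turn each $n_0$-dependent factor into an explicit multiple of $(n_0+1)^{\gamma/2}\left(\ln(4d^2(n_0+1)^p/\delta)\right)^{-1/2}$, and absorb the $n_0$-free additive leftovers (built from $A_{4,C_0}$ and $\min\{C_{\ref{lemma: R_n w bound},b}A_{4,C_0},A_{5,C_0}\}$) via Lemma~\ref{lem: const bound on eps}, which is exactly what the thresholds $K_{\ref{lemma: A4' A5'},a}$ and $K_{\ref{lemma: A4' A5'},b}$ encode. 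The one place you diverge is the treatment of $A_{1,n_0}$: you propose to re-derive it as a supremum over $n\ge n_0$ of a transient-to-$\nu(\cdot;\beta)$ ratio and single this out as ``the main obstacle.'' That step is not needed here. $A_{1,n_0}$ is a fixed, explicitly defined constant, see \eqref{def: A1 and A2}: it equals $e+e^2A_1''$ plus $e\big[C_{\ref{lem: Dn bounds},w}\|\Gamma_2\|C_{\ref{lemma: R_n w bound},c}(n_0)+C_{\ref{lem: Dn bounds},w}\|w_{n_0}-\wS\|\big]/\epsw_{n_0}$, so beyond $C_{\ref{lemma: R_n w bound},c}(n_0)$ its only $n_0$-dependence is the single factor $1/\epsw_{n_0}$, and $1/\epsw_{n_0}=(n_0+1)^{\beta/2}\big/\big(\sqrt{d^3L_wC_{\ref{lem: an bn upper bounds},w}}\sqrt{\ln(4d^2(n_0+1)^p/\delta)}\big)$ is an identity, not an estimate. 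The domination of the exponential transient by $\epsw_n/\epsw_{n_0}$ that you set out to prove is Lemma~\ref{lem: epsilon n domination}; it was already consumed upstream, inside the proof of Lemma~\ref{lem: bound on wn}, to produce the form of $A_{1,n_0}$, so re-proving it here is redundant (though not incorrect). With that detour removed, the rest of your constant-chasing --- the bound on $C_{\ref{lemma: R_n w bound},c}(n_0)$ reproducing the bracket in \eqref{defn: A4C1}, the explicit rewriting of $1/\epsth_{n_0-1}$, the use of $(n_0+1)^{\beta/2}\le(n_0+1)^{\alpha/2}$ for the $C_{\ref{lemma: R_n w bound},b}A_{4,n_0}$ term, and the absorption of the $n_0$-free remainders --- matches the paper's proof.
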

		
		\begin{proof}
		First, we  upper bound $C_{\ref{lemma: R_n w bound},c}(n_0)$ based on the definition of $\Gp_{n_0}:$
		\begin{align}
		C_{\ref{lemma: R_n w bound},c}(n_0) &= \left[\beta_{n_0} \|\theta_{n_0} - \thS\| +  C_{\ref{lemma: R_n w bound},a}  e^{q_1} \frac{2}{q_{\min}}    \left[\|\theta_{n_0} - \thS\| + \frac{\alpha_{n_0}}{\beta_{n_0}}\|w_{n_0} - \wS\|\right]\right]\\
		& \leq \left[\Rprojth +  C_{\ref{lemma: R_n w bound},a}  e^{q_1} \frac{2}{q_{\min}}    \left(\Rprojth + \Rprojw\right)\right].
		\end{align}
		Next, using the definition of $A_{4,n_0}$ from \eqref{Defn:A4} ,
		% \red{**TODO: as of now, $A_4'$ depends on $n_0$ via $C_{\ref{lemma: R_n w bound},c}(n_0)$; but we should remove that dependence once we verify we don't have a need to further ``shrink'' $C_{\ref{lemma: R_n w bound},c}(n_0)$ by enlarging $n_0$.**} 
		\eqref{eqn:Simplified A4 Expr} holds 
		if
		\begin{equation} \label{eq: upper bound on A4c_0}
		    (n_0+1)^{\beta/2} \left(\ln{(4d^2(n_0+1)^p/\delta)}\right)^{-1/2} \geq A_{4,C_0},
		\end{equation}
		Based on Lemma~\ref{lem: const bound on eps}, \eqref{eq: upper bound on A4c_0} holds if
		$    n_0 \geq  K_{\ref{lemma: A4' A5'},a}.
		$
		% Let us do some magic and define $A_4'$ such that
		% \begin{equation}
		%     A_{4,n_0} = A_4' (n_0+1)^{\beta/2} \left(\ln{(4d^2(n_0+1)^p/\delta)}\right)^{-1/2}
		% \end{equation}
		%
		% Hence, 
		% %
		% \begin{equation}
		%     \label{eqn:Simplified A4 Expr}
		%     A_{4, n_0} \leq A_4' (n_0 + 1)^{\beta/2}\sqrt{\frac{\beta}{p}} \left(\ln\left[\left(\frac{4d^2}{\delta}\right)^{\beta/p} (n_0 + 1)^{\beta}\right]\right)^{-1/2}
		% \end{equation}.
		This completes the proof of \eqref{eqn:Simplified A4 Expr}.
		
		Next, recall from \eqref{Defn:A5} that
		\begin{align}
		    A_{5,n_0}
		    =& 
		    2C_{\ref{lemma: R_n w bound},a}
		    \sqrt{4d^3 L_\theta C_{\ref{lem: an bn upper bounds},\theta}} \left[\Crth\Rprojth + \Crw \Rprojw \right] /\epsth_{n_0 - 1} 
		    +
		    2\sqrt{4d^3 L_\theta C_{\ref{lem: an bn upper bounds},\theta}} 
		    + 
		    2C_{\ref{lemma: R_n w bound},b} A_{4, n_0}
		    \\
		    =&
		    \frac{2C_{\ref{lemma: R_n w bound},a}
		    \sqrt{4d^3 L_\theta C_{\ref{lem: an bn upper bounds},\theta}} \left[\Crth\Rprojth + \Crw \Rprojw \right](n_0+1)^{\alpha/2}}{\sqrt{d^3 L_\theta C_{\ref{lem: an bn upper bounds},\theta}   \ln{(4d^2(n_0+1)^p/\delta)}}}
		    +
		    2\sqrt{4d^3 L_\theta C_{\ref{lem: an bn upper bounds},\theta}} 
		    + 
		    2C_{\ref{lemma: R_n w bound},b} A_{4, n_0}
		    \\
		    =&
		    2\Big[
		    (n_0+1)^{\alpha/2}(   \ln{(4d^2(n_0+1)^p/\delta)})^{-1/2}
		    A_{5,C_1}
		    +
		    A_{5,C_0} 
		    \\
		    &\hspace{.4cm}+ 
		    (n_0+1)^{\beta/2} \left(\ln{(4d^2(n_0+1)^p/\delta)}\right)^{-1/2} C_{\ref{lemma: R_n w bound},b} A_{4,C_1} 
		    + 
		    C_{\ref{lemma: R_n w bound},b} A_{4,C_0}
		    \Big].
		\end{align}
		Therefore, again based on Lemma~\ref{lem: const bound on eps}, \eqref{eqn:Simplified A5 Expr} holds 
		when
		$    n_0
		    \geq
		    K_{\ref{lemma: A4' A5'},b}.
		$
		\end{proof}

		Let
		\begin{equation}
		\label{defn: Rates Proj Iterates Kw}
		K_{\ref{thm:Rates Proj Iterates}, w} =  
		[(A'_4/\Rprojw)^{2/\beta}]^{(A'_4/\Rprojw)^{2/\beta}} 
		\end{equation}
		and 
		\begin{equation}
		\label{defn: Rates Proj Iterates Kth}
		K_{\ref{thm:Rates Proj Iterates}, \theta} = [(A'_5/\Rprojth)^{2/\alpha}]^{(A'_5/\Rprojth)^{2/\alpha}}.
		\end{equation}
		Also, define
		\begin{align}
		    C_{\ref{thm:Rates Proj Iterates},\theta}
		    &=
		    A'_5/\nu(N_{\ref{thm:Rates Proj Iterates}, \alpha}),
		    \label{eq: Rates Proj Iterates Cth}
		    \\
		    C_{\ref{thm:Rates Proj Iterates},w}
		    &=
		    A'_4/\nu(N_{\ref{thm:Rates Proj Iterates}, \beta}).
		    \label{eq: Rates Proj Iterates Cw}
		\end{align}

		We are now ready to prove the theorem.

		\begin{proof}[Proof of Theorem~\ref{thm:Rates Proj Iterates}]
		Recall that whenever $n_0 = k^k -1$ for some $k \in \Int_{> 0},$ then event $\Gp_{n_0}$ holds with probability $1$ for the projected iterates. Let $(\theta_n, w_n)_{n \geq n_0}$ be the iterates obtained by running the unprojected algorithm given in \eqref{eqn:tIter} and \eqref{eqn:wIter} with $\theta_{n_0} = \theta'_{n_0}$ and $w_{n_0} = w'_{n_0}.$
		Define $f(x) = x^x$ and note that if we project in round $n_0$ then, by definition, $n_0 = f(k)-1$ for some positive integer $k$, and the next time we project will be in round $g(n_0) = f(1+k)-1 =  f\left(1+ f^{-1}(n_0+1) \right)-1$. Therefore,
		\begin{eqnarray}
		    \cI & := & \{\|\theta_j - \thS\| \leq \Rprojth, \|w_j - \wS\| \leq \Rprojw, \forall j \geq g(n_0) \} \label{eqn:Rel1 Proj vs UnProj}\\
		    & \subseteq  & \{\theta_j = \Pi_{j, \Rprojth}(\theta_j), w_j = \Pi_{j, \Rprojw}(w_j), \forall j \geq g(n_0)\} \\ 
		    & =  & \{\theta_j = \Pi_{j, \Rprojth}(\theta_j), w_j = \Pi_{j, \Rprojw}(w_j), \forall j \geq n_0\} \label{eqn:Rel2 Proj vs UnProj} .
		\end{eqnarray}
		
		Consider the following coupling: 
		\begin{align}
		(\tilde{\theta}_{n}',\tilde{w}_{n}')
		:=
		\begin{cases}
		({\theta}_n',{w}_n'), & \mbox{ for } 0 \leq n < n_0 \enspace,\\
		({\theta}_n,{w}_n), & \mbox { for } n \geq n_0 \mbox{ on the event $\mathcal{I}$ },\\
		({\theta}_n',{w}_n'), & \mbox{ for } n \geq  n_0 \mbox{ on the complement of the event $\mathcal{I}\enspace.$ }
		\end{cases}
		\end{align}
		Due to \eqref{eqn:Rel1 Proj vs UnProj} - \eqref{eqn:Rel2 Proj vs UnProj}, $(\tilde{\theta}_n',\tilde{w}_n')_{n\geq 0}$ and $(\theta'_n, w'_n)_{n\geq 0}$ are distributed identically.

		% Therefore, for any $n_0 \geq N_{\ref{thm:Rates Proj Iterates}} \geq N_{\ref{thm:Main Res wo Proj}},$ we have
		% %
		% \begin{align}
		% \|\red{\theta'_{n}} - \thS\| & \leq A_{5,n_0} \, \nu(n, \alpha) %\tfrac{1}{(n+1)^{-\alpha/2}} \sqrt{\ln{(4d^2(n+1)^p/\delta)}}  
		% \\
		% %
		% \|w'_{n} - w^*\| & \leq  A_{4,n_0} \, \nu(n, \beta) 
		% %\tfrac{1}{ (n+1)^{-\beta/2}} \sqrt{\ln{(4d^2(n+1)^p/\delta)}} 
		% \end{align}
		% %
		% for all $n \geq n_0.$ Applying Lemma~\ref{lemma: A4' A5'}, we in fact get
		% %
		% \begin{align}
		% \|\red{\theta'_{n}} - \thS\| & \leq A_5' /\nu(5,n_0) \, \nu(n, \alpha) %\tfrac{1}{(n+1)^{-\alpha/2}} \sqrt{\ln{(4d^2(n+1)^p/\delta)}}  
		% \\
		% %
		% \|w'_{n} - w^*\| & \leq  A'_4/\nu(n_0, \beta)  \, \nu(n, \beta) 
		% %\tfrac{1}{ (n+1)^{-\beta/2}} \sqrt{\ln{(4d^2(n+1)^p/\delta)}} 
		% \end{align}
		
		Consequently, it is easy to see that  Theorem~\ref{thm:Main Res wo Proj} applies to $\{(\theta'_{n}, w'_n)\}$ provided we show that the event $\cI$ holds, i.e.,
		\begin{eqnarray}
		    A_{5, n_0} (n + 1)^{-\alpha/2} \sqrt{\ln{(4d^2(n+1)^p/\delta)}} & \leq & \Rprojth \\
		    A_{4, n_0} (n+1)^{-\beta/2} \sqrt{\ln{(4d^2(n+1)^p/\delta)}} & \leq & \Rprojw 
		\end{eqnarray}
		for all $n \geq g(n_0).$ In fact, using Lemma~\ref{lemma: A4' A5'} together with Theorem~\ref{thm:Main Res wo Proj},
		\begin{align}
		\|\theta'_{n} - \thS\| & \leq \frac{  A_5'}{\nu(n_0, \alpha)} \nu(n, \alpha)  \label{eqn:des Theta Bd} %\tfrac{1}{(n+1)^{-\alpha/2}} \sqrt{\ln{(4d^2(n+1)^p/\delta)}}  
		\\
		\|w'_{n} - w^*\| & \leq  \frac{A'_4}{\nu(n_0, \beta)} \nu(n, \beta) \label{eqn:des w Bd}
		%\tfrac{1}{ (n+1)^{-\beta/2}} \sqrt{\ln{(4d^2(n+1)^p/\delta)}} 
		\end{align}
		as desired, for $n \geq n_0 \geq \max\{ N_{\ref{thm:Main Res wo Proj}}, K_{\ref{lemma: A4' A5'},a}, K_{\ref{lemma: A4' A5'},b}\}$, provided we show that 
		\begin{align}
		     \frac{A_5'}{\nu(n_0, \alpha)} \nu(n, \alpha) \leq {} & \Rprojth  \label{eq: iterate bounded by RProjth} \\
		     \frac{A'_4}{\nu(n_0, \beta)} \nu(n, \beta) \leq {} & \Rprojw \label{eq: iterate bounded by RProjw}
		\end{align}
		for all $n \geq g(n_0).$
		As we show below, this holds when 
		$n_0 \geq \max\left\{
		%N_{\ref{thm:Main Res wo Proj}}, K_{\ref{lemma: A4' A5'},a}, K_{\ref{lemma: A4' A5'},b}, 
		K_{\ref{thm:Rates Proj Iterates}, w}, K_{\ref{thm:Rates Proj Iterates}, \theta}, %[(A'_4/\Rprojw)^{2/\beta}]^{(A'_4/\Rprojw)^{2/\beta}},  
		e^{1/\alpha}, e^{1/\beta},  (2/\alpha)^{2/\alpha}, (2/\beta)^{2/\beta}  %[(A'_5/\Rprojth)^{2/\alpha}]^{(A'_5/\Rprojth)^{2/\alpha}}
		\right\}$.

		% Consequently, it is easy to see from Theorem~\ref{thm:Main Res wo Proj} that the desired result holds provided we show that 
		% %
		% \begin{eqnarray}
		%     A_{5, n_0} (n + 1)^{-\alpha/2} \sqrt{\ln{(4d^2(n+1)^p/\delta)}} & \leq & \Rprojth \label{eq: iterate bounded by RProjth} \\
		%     %
		%     A_{4, n_0} (n+1)^{-\beta/2} \sqrt{\ln{(4d^2(n+1)^p/\delta)}} & \leq & \Rprojw \label{eq: iterate bounded by RProjw}
		% \end{eqnarray}
		% %
		% for all $n \geq g(n_0).$ 

		% Rewriting \eqref{eqn:Simplified A4 Expr} from Lemma~\ref{lemma: A4' A5'} as
		% \begin{equation}
		% %    \label{eqn:Simplified A4 Expr}
		%     A_{4, n_0} \leq A_4' (n_0 + 1)^{\beta/2}\sqrt{\frac{\beta}{p}} \left(\ln\left[\left(\frac{4d^2}{\delta}\right)^{\beta/p} (n_0 + 1)^{\beta}\right]\right)^{-1/2}
		% \end{equation}
		
		% it is clear that,
		% in order to show that \eqref{eq: iterate bounded by RProjw} holds, it suffices to show
		% %due to \eqref{eqn:Simplified A4 Expr}, 
		% that for $n = g(n_0)$

		It is clear that,
		in order to show that \eqref{eq: iterate bounded by RProjw} holds, it suffices to show
		%due to \eqref{eqn:Simplified A4 Expr}, 
		that for $n = g(n_0)$

		\begin{equation}
		\label{eqn: Criteria For Proj Eq Unproj A4}
		     \left(\frac{n + 1}{n_0 + 1}\right)^{\beta} \frac{\ln \left[\left[\frac{4d^2}{\delta}\right]^{\beta/p} \left(n_0 + 1\right)^{\beta}\right]}{\ln \left[\left[\frac{4d^2}{\delta}\right]^{\beta/p} \left(n + 1\right)^{\beta}\right]} \geq \frac{A_4'^2}{\left(\Rprojw \right)^2}.
		\end{equation}
		and that
		\(
		\left(n + 1\right)^{\beta} 
		\left({\ln \left[\left[4d^2/\delta\right]^{\beta/p} \left(n + 1\right)^{\beta}\right]}\right)^{-1}
		\)
		is monotonically decreasing.

		% \red{DELETE THIS VERSION?
		% %
		% In order to show that \eqref{eq: iterate bounded by RProjw} holds, it suffices to show that the LHS of \eqref{eq: iterate bounded by RProjth} is monotonically decreasing for $n \geq n_0$ and, for $n = g(n_0),$ 
		% %
		% \begin{equation}
		%     \frac{A_{4, n_0}^2p}{\beta} (n + 1)^{-\beta} \ln \left[\left[\frac{4d^2}{\delta}\right]^{\beta/p} (n+1)^{\beta}\right] \leq \left(\Rprojw \right)^2.
		% \end{equation}
		% %
		% Equivalently, from \eqref{eqn:Simplified A4 Expr}, it suffices to show that for $n = g(n_0)$
		% %
		% \begin{equation}
		% \label{eqn: Criteria For Proj Eq Unproj A4}
		%      \left(\frac{n + 1}{n_0 + 1}\right)^{\beta} \frac{\ln \left[\left[\frac{4d^2}{\delta}\right]^{\beta/p} \left(n_0 + 1\right)^{\beta}\right]}{\ln \left[\left[\frac{4d^2}{\delta}\right]^{\beta/p} \left(n + 1\right)^{\beta}\right]} \geq \frac{A_4'^2}{\left(\Rprojw \right)^2}.
		% \end{equation}
		% }
		
		%
		Since $n_0 = f(k)-1$ for some positive integer $k$ and $n = f(k+1)-1$, letting $A = \frac{\beta}{p} \ln \left[\frac{4d^2}{\delta}\right] \geq 0,$ we have
		\begin{align}
		    \left(\frac{n+1}{n_0+1}\right)^\beta
		    \frac{A+\beta\ln(n_0+1)}{A+\beta\ln(n+1)}
		    =&
		    \left(\frac{(k+1)^{k+1}}{k^ k}\right)^\beta
		    \frac{A+\beta k \ln k}{A+\beta(k+1)\ln(k+1)}
		    \\
		    \geq&
		    (k+1)^\beta\left(\frac{k+1}{k}\right)^{\beta k}
		    \frac{ k \ln k}{(k+1)\ln(k+1)}
		    \label{eq: dropping the constant from the ratio}
		    \\
		    \geq
		    &
		    (k+1)^\beta\left(\frac{k+1}{k}\right)^{\beta k}
		    \left(\frac{k}{k+1}\right)\frac{\ln k}{1/k + \ln k}
		    \label{eq: bound based on concavity}
		    \\
		    \geq&
		    (k+1)^\beta\left(\frac{k+1}{k}\right)^{\beta k-2}
		    \label{eq: bounding the ratio with the log}
		    \\
		    \geq&
		    (k+1)^\beta
		    % \\
		    % \geq&
		    % (k+1)\left(\frac{k+1}{k}\right)^{k-1/\beta}\left(\frac{A/k+\beta\ln k}{A/(k+1)+\beta\ln(k+1)}\right)^{1/\beta}
		    % \label{eq: rate of two sparse proj idx}
		    % \\
		    % \geq&
		    % (k+1)\left(\frac{k+1}{k}\right)^{k-1/\beta}\left(\frac{A/k+\beta\ln (k+1) - \beta/k}{A/(k+1)+\beta\ln(k+1)}\right)^{1/\beta}
		    % \\
		    % =&
		    % (k+1)\left(\frac{k+1}{k}\right)^{k-1/\beta}
		    % \left(\frac{A-\beta}{A}\right)^{1/\beta}
		    % \left(\frac{1/k+\tfrac{\beta}{A-\beta}\ln (k+1)}{1/(k+1)+\tfrac{\beta}{A}\ln(k+1)}\right)^{1/\beta}
		    % \\
		    % \geq&
		    % (k+1)\left(\frac{A-\beta}{A}\right)^{1/\beta}
		\end{align}
		where 
		\eqref{eq: dropping the constant from the ratio} follows because $(A+B_1)/(A+B_2) \geq B_1/B_2$ for any $A\geq 0$ and $B_2\geq B_1 > 0$ due to $(A+B_1)B_2 \geq (A+B_2)B_1$,
		\eqref{eq: bound based on concavity} follows because $\ln(x+1) - \ln x \leq 1/x$ due to the fact that $\ln x$ is concave and has derivative $1/x$,
		\eqref{eq: bounding the ratio with the log} holds when $k>1$ due to $\tfrac{\ln k}{1/k + \ln k} = \tfrac{k}{1/(\ln k) + k} \geq \tfrac{k}{1 + k}$
		, and the last inequality holds when 
		%$A>\beta$ (\red{does this hold at all?}) 
		$k \geq 2/\beta$, or, equivalently when $n_0 \geq (2/\beta)^{2/\beta}-1$. 
		Consequently, \eqref{eqn: Criteria For Proj Eq Unproj A4} holds if $k \geq (A'_4/\Rprojw)^{2/\beta}$ or,  equivalently, if $n_0 \geq K_{\ref{thm:Rates Proj Iterates}, w}.$
		
		Showing the monotonicity of
		\(
		\left(n + 1\right)^{\beta} 
		\left({\ln \left[\left[4d^2/\delta\right]^{\beta/p} \left(n + 1\right)^{\beta}\right]}\right)^{-1}
		\)
		goes similarly:
		\begin{align}
		    \frac{(n + 1)^\beta}{n^{\beta} }
		    \frac{\ln \left[\left[\frac{4d^2}{\delta}\right]^{\beta/p} \left(n\right)^{\beta}\right]}
		    {\ln \left[\left[\frac{4d^2}{\delta}\right]^{\beta/p} \left(n + 1\right)^{\beta}\right]}
		    =&
		    \left(\frac{n + 1}{n}\right)^{\beta} \frac{\frac{\beta}{p}\ln \left[\frac{4d^2}{\delta}\right] + \beta \ln n}
		    {\frac{\beta}{p}\ln \left[\frac{4d^2}{\delta}\right] + \beta \ln[n+1]}
		    \\
		    \geq&
		    \frac{[n + 1]^{\beta}}{n^{\beta}} \frac{ \ln n}
		    { \ln[n+1]}
		    \\
		    \geq&
		    \frac{n^\beta + \beta(n+1)^{\beta-1}}{n^{\beta}} \frac{ \ln n}
		    { (1/n)+\ln n}
		    \\
		    =&
		    \frac{n^\beta \ln n + \beta(n+1)^{\beta-1} \ln n}{n^{\beta}\ln n + n^{\beta-1}}
		    \\
		    \geq&
		    1
		\end{align}
		where the last inequality holds for $n \geq e^{1/\beta}$.
		
		Using an argument similar to the one above, it is easy to see that  \eqref{eq: iterate bounded by RProjth} holds since $n_0 \geq \max\{e^{1/\alpha}, (2/\alpha)^{2/\alpha}, K_{\ref{thm:Rates Proj Iterates}, \theta} \}$ which is true since $e^{1/\beta} \geq e^{1/\alpha}$ and $(2/\beta)^{2/\beta} \geq (2/\alpha)^{2/\alpha}.$
		\\
		
		Now, substituting $n_0 = N_3$ in \eqref{eqn:des Theta Bd} and \eqref{eqn:des w Bd} gives us the desired result. 
		\end{proof}
		\begin{remark}
		The above result introduces double exponential complexity in $1/\alpha$ and $1/\beta$ via, e.g., $K_{\ref{thm:Rates Proj Iterates}, w}$ and $K_{\ref{thm:Rates Proj Iterates}, \theta}.$ One can try and obtain better bounds by increasing the sparsity of the projections. Nevertheless, we argue that, at least for square-summable step-sizes (i.e., for $\alpha>1/2$ and $\beta>1/2$), this double-exponential bound is not too bad.
		\end{remark}

		%\subsection{Proof Outline} \label{sec: proof outline}

		\section{Details omitted from the proof of Theorem~\ref{thm: En0 prob bound}}

		Recalling $\cU(n_0)$ from \eqref{def: E_n0} and $\cZ_n$ from \eqref{eq: Zn defn},
		% %
		% \[
		% \cZ_n := \{\|\theta_n - \thS\| \leq  \Crth \Rprojth, \|w_n - \wS\| \leq \Crw \Rprojw, \|L_{n + 1}^{(\theta)}\| \leq \epsth_n, \|L_{n + 1}^{(w)}\| \leq \epsw_n\}.
		% \]
		%
		we have
		\begin{align}
		    \cU^c(n_0) & = \bigcup_{n \geq n_0} \cZ_n^c \label{eq: U complement reformulated}\\
		    & = \cZ_{n_0}^c \cup (\cZ_{n_0} \cap \cZ^c_{n_0 + 1}) \cup ([\cZ_{n_0} \cap \cZ_{n_0 + 1}] \cap \cZ_{n_0 + 2}^c) \cup \ldots \\
		    & = \cZ_{n_0}^c \cup (\cA_{n_0} \cap \cZ_{n_0 + 1}^c) \cup (\cA_{n_0 + 1} \cap \cZ_{n_0 + 2}^c) \cup \ldots 
		\end{align}
		This implies that 
		\begin{equation}
		\label{eqnA:Gpnp interesect Un0 compl}
		    \Gp_{n_0} \cap \cU^c(n_0) = (\Gp_{n_0} \cap \cZ_{n_0}^c) \cup (\Gp_{n_0} \cap \cA_{n_0} \cap \cZ_{n_0 + 1}^c) \cup (\Gp_{n_0} \cap \cA_{n_0 + 1} \cap \cZ_{n_0 + 2}^c) \cup \ldots
		\end{equation}
		Recalling that
		\begin{align}
		    \Crth & = 3 \mbox{ and } \label{def: Crth}\\
		    \Crw & = 3/2 + (e^{q_2} /q_2 \|\Gamma_2\| C_{\ref{lem: Dn bounds},w}) \Crth \frac{\Rprojth}{\Rprojw}, \label{def: Crw}
		\end{align}
		one can see that both are lower bounded by 1, and
		%$\Crth \geq 1$ and $\Crw \geq 1$, 
		hence, 
		\begin{equation}
		%
		%\label{eqn:Gpn0 intersect Zn0 complement}
		    \Gp_{n_0} \cap \cZ_{n_0}^c \subseteq \cG_{n_0} \cap \cZ_{n_0}^c  \subseteq \cG_{n_0} \cap (\{\|L_{n_0 + 1}^{(\theta)}\| > \epsth_{n_0}\} \cup \{ \|L_{n_0 + 1}^{(w)}\| > \epsw_{n_0}\}).
		\end{equation}
		Similarly, observe that %
		\begin{align}
		    \Gp_{n_0} \cap \cA_n \cap \cZ^c_{n + 1} %\\
		    = & \Gp_{n_0} \cap \cA_{n} \cap \left[\{\|\theta_{n + 1} - \thS\| > \Crth \Rprojth\} \cup \{\|w_{n + 1} - \wS\| > \Crw \Rprojw\} \right] \\
		& \cup \Bigg(\Gp_{n_0} \cap \cA_{n} \cap \left[\{\|\theta_{n + 1} - \thS\| \leq \Crth \Rprojth\} \cap \{\|w_{n + 1} - \wS\| \leq \Crw \Rprojw\} \right] \cap \\
		& \bigg[\{\|L_{n + 2}^{(\theta)}\| > \epsth_{n + 1}\} \cup \{\|L_{n + 2}^{(w)}\| > \epsw_{n + 1}\}\bigg]\Bigg) \\
		\subseteq & \Gp_{n_0} \cap \cA_{n} \cap \left[\{\|\theta_{n + 1} - \thS\| > \Crth \Rprojth\} \cup \{\|w_{n + 1} - \wS\| > \Crw \Rprojw\} \right] \\ 
		& \cup \left(\cG_{n + 1} \cap \left[\{\|L_{n + 2}^{(\theta)}\| > \epsth_{n + 1}\} \cup \{\|L_{n + 2}^{(w)}\| > \epsw_{n + 1}\}\right]\right) \\
		%
		% = & \cG_{n + 1} \cap \left[\{\|L_{n + 2}^{(\theta)}\| > \epsth_{n + 1}\} \cup \{\|L_{n + 2}^{(w)}\| > \epsw_{n + 1}\}\right], 
		\label{eqn:Gpn0 intersect An intersect Zn1 compl app}
		\end{align}
		%
		% where the last relation follows from Lemma~\ref{lem:Large Theta n}.

		% \section{Proofs of Lemmas needed for Theorem~\ref{thm: En0 prob bound}: Induction Base}
		% \label{sec: En0 prob bound}

		\begin{remark}
		One could have obtained an exponentially decaying bound in \eqref{eq: delta bound for U} by defining  $\epsth_n$ to be $\sqrt{d^3 L_\theta C_{\ref{lem: an bn upper bounds},\theta} (n+1)^{-\alpha + p'}  \ln{(4d^2/\delta)}}$ instead of the current definition given in  \eqref{eq: epsilon n new def}. This bound would then be in the same spirit as that of \cite[Chapter 4, Corollary 14]{borkar2009stochastic} (see the 2nd display there). However, the additional $(n + 1)^{p'}$ term means that the new $\epsth_n$ decays at a slower rate and thereby slows down the rate of convergence of the $\{\theta_n\}$ iterates derived in Theorem~\ref{thm:Rates Proj Iterates}. The same discussion applies for $\epsw_n$ and the $\{w_n\}$ iterates as well.
		\end{remark}
		
		\section{A key lemma}
		
		% \red{In places like the following lemma, where $n_0$ is arbitrary, we should change the statement to be ``fix some $n_0 \in \mathbb{N}$'' -- this would avoid confusion with lemmas where $n_0$ in fact has meaningful lower bounds.}
		%
		\begin{lemma} \label{lem: Tn bound}
		Let $n \geq n_0 \geq 0.$ Let $u \in \RInf$  be $\alpha$-moderate from $n_0$ onwards (see Def.~\ref{def: wn behavior 2}) and suppose the event $\cW_{n}(u)$ holds (see \eqref{def: wn}). Then,
		\[
		\|T_{n+1}\| \leq \frac{2e^{q_1/2}}{q_1} C_{\ref{lem: Tn bound}} \|W_1 W_2^{-1}\| C_{\ref{lem: Dn bounds},\theta} \frac{\alpha_n}{\beta_n} \uw_n,
		\]
		where
		\begin{equation}
		\label{def: CU}
		C_{\ref{lem: Tn bound}} := [\|X_1\| + %C_B 
		2(\alpha - \beta) [1 + \|X_1\|]].
		\end{equation}
		\end{lemma}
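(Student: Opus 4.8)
The plan is to bound each summand of \eqref{Defn:Tn} separately, using three ingredients: (i) the contraction bound $\prod_{j=k+1}^n\norm{I-\alpha_j X_1}\le C_{\ref{lem: Dn bounds},\theta}\,e^{-q_1\sum_{j=k+1}^n\alpha_j}$ from Lemma~\ref{lem: Dn bounds}; (ii) the fact that the coefficient matrix $M_k:=\frac{\alpha_k}{\beta_k}I-\frac{\alpha_{k-1}}{\beta_{k-1}}(I-\alpha_k X_1)$ has norm of order $\alpha_k\cdot\frac{\alpha_k}{\beta_k}$, because consecutive stepsize ratios are almost equal; and (iii) the $\alpha$-moderate property of $u$, which converts $u_k$ back into $u_n$ at the cost of a factor $e^{(q_1/2)\sum_{j=k+1}^n\alpha_j}$ that absorbs only half of the contraction in (i).

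For (ii), I would write $M_k=\bigl(\frac{\alpha_k}{\beta_k}-\frac{\alpha_{k-1}}{\beta_{k-1}}\bigr)I+\alpha_k\frac{\alpha_{k-1}}{\beta_{k-1}}X_1$ and use $\frac{\alpha_{k-1}}{\beta_{k-1}}=(1+\tfrac1k)^{\alpha-\beta}\frac{\alpha_k}{\beta_k}$. Since $0<\alpha-\beta<1$, the mean value theorem gives $(1+\tfrac1k)^{\alpha-\beta}-1\le\frac{\alpha-\beta}{k}$, and since $\alpha<1$ one has $\tfrac1k\le2(k+1)^{-\alpha}=2\alpha_k$ for $k\ge1$; hence $\bigl|\frac{\alpha_k}{\beta_k}-\frac{\alpha_{k-1}}{\beta_{k-1}}\bigr|\le2(\alpha-\beta)\alpha_k\frac{\alpha_k}{\beta_k}$ and $\frac{\alpha_{k-1}}{\beta_{k-1}}\le(1+2(\alpha-\beta)\alpha_k)\frac{\alpha_k}{\beta_k}$. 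Using $\alpha_k\le1$, these would combine to $\norm{M_k}\le\bigl[\norm{X_1}+2(\alpha-\beta)(1+\norm{X_1})\bigr]\alpha_k\frac{\alpha_k}{\beta_k}=C_{\ref{lem: Tn bound}}\,\alpha_k\frac{\alpha_k}{\beta_k}$, precisely the constant in the statement.

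Next I would take norms in \eqref{Defn:Tn}, apply Lemma~\ref{lem: Dn bounds} and (ii), use the event $\cW_n(u)$ to get $\norm{w_k-\wS}\le u_k$ for $n_0\le k\le n$, and telescope Definition~\ref{def: wn behavior 2} from $k$ to $n$ to get $u_k\le u_n\,\frac{\alpha_n\beta_k}{\alpha_k\beta_n}\,e^{(q_1/2)\sum_{j=k+1}^n\alpha_j}$. The factors $\frac{\alpha_k}{\beta_k}$ and $\frac{\beta_k}{\alpha_k}$ then cancel, and $e^{-q_1(\cdot)}e^{(q_1/2)(\cdot)}=e^{-(q_1/2)(\cdot)}$, leaving $\norm{T_{n+1}}\le C_{\ref{lem: Dn bounds},\theta}\,C_{\ref{lem: Tn bound}}\,\norm{W_1W_2^{-1}}\,u_n\frac{\alpha_n}{\beta_n}\sum_{k=n_0+1}^n\alpha_k\,e^{-(q_1/2)\sum_{j=k+1}^n\alpha_j}$. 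Writing $E_k:=e^{-(q_1/2)\sum_{j=k+1}^n\alpha_j}$, the bound $E_k\ge E_{k-1}(1+(q_1/2)\alpha_k)$ gives $\alpha_kE_{k-1}\le\frac2{q_1}(E_k-E_{k-1})$, which telescopes to $\frac2{q_1}$, while $E_k\le e^{q_1/2}E_{k-1}$ since $\alpha_{n_0+1}\le1$; hence the residual sum is at most $\frac{2e^{q_1/2}}{q_1}$, and the claimed inequality follows.

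I expect the only genuine work to be step (ii): verifying that the first-order Taylor slack in $(1+1/k)^{\alpha-\beta}$ and the gap between $\frac{\alpha_{k-1}}{\beta_{k-1}}$ and $\frac{\alpha_k}{\beta_k}$ collapse exactly into the constant $C_{\ref{lem: Tn bound}}$, and confirming that the exponent $q_1/2$ (rather than $q_1$) in Definition~\ref{def: wn behavior 2} is precisely what keeps the leftover geometric factor summable with the stated constant. Everything else is the triangle inequality together with a standard stepsize-sum estimate of the same flavour as Lemma~\ref{lem: an bn upper bounds}.
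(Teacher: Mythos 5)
Your proposal is correct and follows essentially the same route as the paper's proof: the same mean-value-theorem bound on $(1+1/k)^{\alpha-\beta}-1$ yielding the constant $C_{\ref{lem: Tn bound}}$, the same splitting of $e^{-q_1\sum\alpha_j}$ into two halves with the $\alpha$-moderate property absorbing one half, and the same $2e^{q_1/2}/q_1$ bound on the residual sum. The only (immaterial) differences are that you telescope the moderateness inequality directly rather than taking a supremum and arguing its monotonicity, and you bound $\sum_k \alpha_k e^{-(q_1/2)\sum_{j=k+1}^n\alpha_j}$ by a discrete telescoping identity instead of the paper's Riemann-sum comparison.
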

		\begin{proof}
		From \eqref{Defn:Tn}, it is easy to see that
		\begin{equation}
		\label{eqn:T_Bd}
		\|T_{n+1}\| \leq  \sum_{k = n_0 + 1}^{n} \left(\prod_{j= k+ 1}^{n } \left\|I - \alpha_j X_1 \right\|\right) \alpha_ k \|U_k\|,
		\end{equation}
		where
		\begin{equation}
		U_k := \frac{1}{\beta_k} \left[I  - \frac{\alpha_{k - 1}}{\alpha_k} \frac{\beta_k}{ \beta_{k - 1} } (I - \alpha_k X_1)\right] W_1 W_2^{-1}[w_k - \wS].
		\end{equation}
		
		Observe that
		\begin{equation}
		U_k = %\frac{1}{\beta_k} \left[\alpha_k X_1 + \left(1  - \frac{\alpha_{k - 1}}{\alpha_k} \frac{\beta_k}{ \beta_{k - 1} }\right) (I - \alpha_k X_1) \right] W_1 W_2^{-1}[w_k - \wS].\\
		%
		%&= 
		\frac{\alpha_k}{\beta_k} \left[X_1 + \frac{1}{\alpha_k} \left(1  - \frac{\alpha_{k - 1}}{\alpha_k} \frac{\beta_k}{ \beta_{k - 1} }\right) (I - \alpha_k X_1) \right]W_1 W_2^{-1} [w_k - \wS].
		\end{equation}
		We now show 
		\begin{equation}
		\left\|\left[X_1 + \frac{1}{\alpha_k} \left(1  - \frac{\alpha_{k - 1}}{\alpha_k} \frac{\beta_k}{ \beta_{k - 1} }\right) (I - \alpha_k X_1) \right]\right\| 
		\end{equation}
		can be bounded by a constant. In particular, it suffices to show that
		\begin{equation}
		B_k := \frac{1}{\alpha_k} \left(1  - \frac{\alpha_{k - 1}}{\alpha_k} \frac{\beta_k}{ \beta_{k - 1} }\right)
		= (k + 1)^\alpha \left(1 - \left(1 + \frac{1}{k}\right)^{\alpha - \beta} \right)
		\end{equation}
		is bounded by a constant.
		% For the case $\alpha_k = 1/(k + 1)^\alpha, \beta_k = 1/(k + 1)^\beta,$
		%We have
		%
		%\begin{equation}
		%B_k  = (k + 1)^\alpha \left(1 - \left(1 + \frac{1}{k}\right)^{\alpha - \beta} %\right)
		%\end{equation}
		%
		To this end, let $f(x) = (1 + x)^{\alpha - \beta}.$ Then, by the mean value theorem, there is a $c \in (0, 1/k)$ such that
		\begin{equation}
		k[f(1/k) - f(0)] = f'(c).
		\end{equation}
		%This implies
		Noting that 
		\[
		|f'(c)| = \left|\frac{\alpha - \beta}{(1 + c)^{1 - (\alpha - \beta)}}\right| \leq \alpha - \beta,
		\]
		where the inequality follows since $c \in (0,1/k),$
		we obtain
		%Therefore, 
		%
		\begin{equation}
		|B_k| = (k+1)^\alpha|f(0)-f(1/k)| = (k+1)^\alpha\frac{|f'(c)|}{k}\leq (k + 1)^\alpha \frac{\alpha - \beta}{k} \leq 2 (\alpha - \beta).
		\end{equation}
		%
		%for some constant $C_B \geq 0.$ 
		From this, it follows that 
		%for the specific stepsize choice as above
		%
		\begin{eqnarray}
		\|U_k\| & \leq & \frac{\alpha_k}{\beta_k}[\|X_1\| + |B_k| \; \|I - \alpha_k X_1\| ] \; \|W_1 W_2^{-1}\| \; \|w_k -  \wS\| \\
		& \leq & C_{\ref{lem: Tn bound}} \|W_1 W_2^{-1}\| \; \frac{\alpha_k}{\beta_k} \|w_k - \wS\|. \label{eqn:U_bd}
		\end{eqnarray}
		
		Substituting \eqref{eqn:U_bd} in \eqref{eqn:T_Bd}, we get
		%(only holds for specific stepsize)
		
		%
		\begin{eqnarray}
		\|T_{n+1}\| & \leq &  C_{\ref{lem: Tn bound}} \|W_1 W_2^{-1}\| \sum_{k = n_0 + 1}^{n} \left(\prod_{j= k+ 1}^{n } \left\|I - \alpha_j X_1 \right\|\right) \alpha_ k \frac{\alpha_k} {\beta_k} \|w_k -  \wS\|, \label{eq: bounding Tn} \\
		& \leq & C_{\ref{lem: Tn bound}} \|W_1 W_2^{-1}\| C_{\ref{lem: Dn bounds},\theta} \sum_{k = n_0 + 1}^{n} e^{-q_1\sum_{j = k + 1}^{n} \alpha_j} \alpha_k \frac{\alpha_k}{\beta_k}\|w_k -  \wS\|\label{eq: bounding the product},\\
		& \leq & C_{\ref{lem: Tn bound}} \|W_1 W_2^{-1}\| C_{\ref{lem: Dn bounds},\theta} \left(\sup_{n_0 \leq k \leq n} e^{-q_1/2\sum_{j = k + 1}^{n} \alpha_j} \frac{\alpha_k}{\beta_k}\|w_k -  \wS\|\right) \sum_{k = 0}^{n} e^{-q_1/2\sum_{j = k + 1}^{n} \alpha_j} \alpha_k \label{eq: replacing n_0 with 1}\\
		& \leq & \frac{2e^{q_1/2	}}{q_1} C_{\ref{lem: Tn bound}} \|W_1 W_2^{-1}\| C_{\ref{lem: Dn bounds},\theta} \left(\sup_{n_0 \leq k \leq n} e^{-q_1/2\sum_{j = k + 1}^{n} \alpha_j} \frac{\alpha_k}{\beta_k}\|w_k -  \wS\|\right) \label{eq: bounding wk norm}\\
		& \leq & \frac{2e^{q_1/2	}}{q_1} C_{\ref{lem: Tn bound}} \|W_1 W_2^{-1}\| C_{\ref{lem: Dn bounds},\theta} \left(\sup_{n_0 \leq k \leq n} e^{-q_1/2\sum_{j = k + 1}^{n} \alpha_j} \frac{\alpha_k}{\beta_k}u_k\right) \label{eq: bounding T}\\
		&\leq& \frac{2e^{q_1/2}}{q_1} C_{\ref{lem: Tn bound}} \|W_1 W_2^{-1}\| C_{\ref{lem: Dn bounds},\theta} \frac{\alpha_n}{\beta_n} u_n, \label{eq: Tn final step}
		\end{eqnarray}
		where \eqref{eq: bounding the product} follows from Lemma~\ref{lem: Dn bounds}, while in \eqref{eq: replacing n_0 with 1} we bound the summation from $n_0$ to a summation from $0.$ For \eqref{eq: bounding wk norm}, since $\sup_{n}\alpha_n \leq 1,$ we have $\sum_{k = 0}^{n} e^{-q_1/2\sum_{j = k + 1}^{n} \alpha_j} \alpha_k \leq e^{q_1/2} \sum_{k = 0}^{n} e^{-q_1/2\sum_{j = k}^{n} \alpha_j} \alpha_k;$ hence, by treating this latter sum as a Riemann sum and letting $t_{n + 1} = \sum_{k = 0}^n\alpha_k$, we get $\sum_{k = 0}^{n} e^{-q_1/2\sum_{j = k + 1}^{n} \alpha_j} \alpha_k \leq e^{q_1/2} e^{-q_1/2 t_{n + 1}} \int_{0}^{t_{n + 1}} e^{-(q_1/2)t} \mathnormal{d} t = 2e^
		{q_1/2}/2.$ \eqref{eq: bounding T} holds due to $\cW_n.$  
		Lastly, \eqref{eq: Tn final step} holds because the terms in the sup argument in \eqref{eq: bounding T} monotonically increase with $k$, since 
		\begin{equation}\label{eq: wk monotonicity}
		 \frac{ \frac{\alpha_k}{\beta_k}u_k e^{-q_1/2 \sum_{j = k + 1}^{n} \alpha_j}}
		 {
		\frac{\alpha_{k + 1}}{\beta_{k + 1}}u_{k + 1}e^{-q_1/2 \sum_{j = k + 2}^{n} \alpha_j} }
		=
		 \frac{\frac{\alpha_k}{\beta_k} u_k e^{-q_1/2 \; \alpha_{k + 1}}}{
		\frac{\alpha_{k + 1}}{\beta_{k + 1}} u_{k + 1}}
		=
		 \frac{u_k/u_{k + 1}}{
		\frac{\alpha_{k + 1}}{\alpha_k}\frac{\beta_{k}}{\beta_{k + 1}} e^{q_1/2 \; \alpha_{k + 1}}}
		\end{equation}
		is upper bounded by $1$ due to $u$ being $\alpha$-moderate.
		\end{proof}

		\section{Proof of Lemma~\ref{lem:Large Theta n}}
		\label{sec:Proof Lemma Large Theta n}
		
		\begin{lemma}
		\label{lem: Tn bound with 2Rproj}
		%Let $n_0 \geq 0,$ $n \geq n_0.$ Let $u \in \RInf$ satisfy Condition~\ref{def: wn behavior 2} and suppose the event $\cW_n(u)$ holds.
		%Let $u \in \RInf$ be s.t. $\uw_n = C_X \Rproj^{(w)}~ \forall n \geq 0.$ 
		Let $n_0 \geq K_{\ref{eq: conditions for cond 2},\alpha}(0)$ (defined in \eqref{eq: k_alpha def}) and $n \geq n_0.$ Let $T_{n+1}$ be as in \eqref{Defn:Tn}. Then, on $\cA_n,$ we have
		\[
		\|T_{n+1}\| \leq \frac{2e^{q_1/2}}{q_1} C_{\ref{lem: Tn bound}} \|W_1 W_2^{-1}\| C_{\ref{lem: Dn bounds},\theta} \frac{\alpha_n}{\beta_n} \Crw \Rproj^{(w)}
		%\uw_n,
		\]
		where $C_{\ref{lem: Tn bound}}$ is defined in \eqref{def: CU} and $\Crw$  is defined in \eqref{def: Crw}.
		\end{lemma}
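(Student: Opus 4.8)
The plan is to obtain this as an immediate corollary of Lemma~\ref{lem: Tn bound} by specializing the dominating sequence $u \in \RInf$ to the constant sequence $u_k \equiv \Crw \Rprojw.$ Thus the whole proof amounts to checking that the two hypotheses of Lemma~\ref{lem: Tn bound} — namely, that $\cW_n(u)$ holds and that $u$ is $\alpha$-moderate from $n_0$ onwards — are met on the event $\cA_n$ under the stated lower bound on $n_0.$

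First I would note that, on $\cA_n = \cG_n \cap \cL_n,$ the definition of $\cG_n$ in \eqref{def: Gn} gives $\|w_k - \wS\| \leq \Crw \Rprojw$ for every $n_0 \leq k \leq n;$ recalling \eqref{def: wn}, this is exactly the statement that $\cW_n(u)$ holds for $u_k = \Crw \Rprojw.$ Next I would verify $\alpha$-moderateness. For a constant sequence one has $\uw_k/\uw_{k+1} = 1,$ so the inequality in Definition~\ref{def: wn behavior 2} reduces to $1 \leq \frac{\alpha_{k+1}}{\alpha_k}\frac{\beta_k}{\beta_{k+1}} e^{(q_1/2)\alpha_{k+1}},$ which is precisely \eqref{eq: cond 2 for general case} in the case $z = 0.$ By Lemma~\ref{eq: conditions for cond 2}, this holds for all $k \geq K_{\ref{eq: conditions for cond 2},\alpha}(0);$ since $n_0 \geq K_{\ref{eq: conditions for cond 2},\alpha}(0)$ by hypothesis, the constant sequence $u$ is $\alpha$-moderate from $n_0$ onwards.

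With both hypotheses in hand, I would invoke Lemma~\ref{lem: Tn bound} with this $u,$ which yields $\|T_{n+1}\| \leq \frac{2e^{q_1/2}}{q_1} C_{\ref{lem: Tn bound}} \|W_1 W_2^{-1}\| C_{\ref{lem: Dn bounds},\theta} \frac{\alpha_n}{\beta_n} \uw_n;$ substituting $\uw_n = \Crw \Rprojw$ gives exactly the claimed bound. There is no real obstacle here: the substantive estimates were already carried out in Lemma~\ref{lem: Tn bound} (via Lemma~\ref{lem: Dn bounds} and the Riemann-sum bound on $\sum_k e^{-(q_1/2)\sum_j \alpha_j}\alpha_k$) and in Lemma~\ref{eq: conditions for cond 2}; the only point requiring care is the bookkeeping observation that a constant sequence being $\alpha$-moderate is nothing but the $z=0$ instance of \eqref{eq: cond 2 for general case}, and that the threshold $K_{\ref{eq: conditions for cond 2},\alpha}(0)$ in the hypothesis is precisely what makes this applicable from index $n_0$ on.
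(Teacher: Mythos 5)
Your proposal is correct and follows essentially the same route as the paper: specialize the dominating sequence in Lemma~\ref{lem: Tn bound} to the constant $u_k \equiv \Crw\Rprojw$, observe that $\cA_n \subseteq \cG_n$ gives $\cW_n(u)$, and use Lemma~\ref{eq: conditions for cond 2} with $z=0$ to get $\alpha$-moderateness of the constant sequence. If anything, your handling of the index from which $\alpha$-moderateness holds (from $n_0 \geq K_{\ref{eq: conditions for cond 2},\alpha}(0)$ onwards, rather than the paper's looser ``from $0$ onwards'') is slightly more careful than the paper's own wording.
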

		\begin{proof}
		Let $u \in \RInf$ be s.t. $\uw_n = \Crw \Rproj^{(w)}~ \forall n \geq 0.$
		Due to Lemma~\ref{eq: conditions for cond 2} Statement~1 (with $z=0$),
		\begin{equation} 
		\frac{\alpha_{n + 1}}{\alpha_n} \frac{\beta_n}{\beta_{n + 1}} e^{q_1/2 \alpha_{n + 1}} \geq 1 \; \forall n \geq 0.
		\end{equation}
		This implies that $u$ is $\alpha$-moderate from $0$ onwards (see Def.~\ref{def: wn behavior 2}). Further, because $\cA_n$ holds, the event $\cW_{n}(u)$ holds. The desired result now follows from  Lemma~\ref{lem: Tn bound}.
		\end{proof}

		% \red{Alternate proof bounding $\epsth_n.$}
		% \begin{proof}
		% For $p < \beta < \alpha.$ Let 
		% %
		% \begin{equation}
		% \epsth_n = \sqrt{d^3 L_\theta C_{\ref{lem: an bn upper bounds},\theta} (n+1)^{-\alpha + p}  \ln{(4d^2/\delta)}}.
		% \end{equation}
		% %
		% The above relation is of the form
		% \begin{equation}
		%     \epsth_n = C n^{-(\alpha - p)/2}.
		% \end{equation}
		% %
		% Hence, 
		% %
		% \begin{equation}
		%     Cn^{-(\alpha - p)/2} \leq \Rprojth/2
		% \end{equation}
		% %
		% if $n \geq (2C/\Rprojth)^{2/(\alpha - p)}.$
		
		% \end{proof}
		% \end{proof}
		
		\begin{lemma}
		\label{lem: wn components bounds}
		Let $n \geq n_0$ and suppose the event $\Gp_{n_0} \cap \cA_n$ holds. The following statements are true.
		\begin{enumerate}
		\item If $n_0 \geq 0,$ then
		\begin{equation}
		\|R_{n+1}^{(w)}\|  \leq  \Crth \|\Gamma_2\| C_{\ref{lem: Dn bounds},w} \Rprojth e^{q_2} /q_2.
		\end{equation}
		\item If $n_0 \geq  K_{\ref{lem: const bound on eps}, \beta},$ then 
		\begin{equation}
		\|L_{n+1}^{(w)}\|  \leq  \Rprojw / 2.
		\end{equation}
		\item If $ n_0\geq K_{\ref{lem: small eigenvalues},\beta},$ then 
		\[
		\|\Delta_{n+1}^{(w)}\| \leq  \Rprojw.
		\]
		%
		%where $K_{\ref{lem: small eigenvalues},\beta}$ is defined in Lemma~\ref{lem: small eigenvalues}.%{lem:Large Theta n}
		%
		\item  \label{item: wn bound} Consequently, if $n_0 \geq \max\{K_{\ref{lem: small eigenvalues},\beta}, K_{\ref{lem: const bound on eps}, \beta}\},$ then
		\[
		\|w_{n + 1} - \wS\| \leq \frac{3}{2} \Rprojw + \Crth \|\Gamma_2\| C_{\ref{lem: Dn bounds},w} \Rprojth e^{q_2} /q_2.
		\]
		\end{enumerate}
		\end{lemma}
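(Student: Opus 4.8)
The plan is to start from the decomposition $w_{n+1}-\wS=\Dnw+\Rnw+\Lnw$ established in \eqref{eq: wn components}, bound the three summands separately on the event $\Gp_{n_0}\cap\cA_n$, and then add the bounds by the triangle inequality; since $\cA_n=\cG_n\cap\cL_n$ we have $\cA_n\subseteq\cG_n$ (see \eqref{def: Gn}) and $\cA_n\subseteq\cL_n$ (see \eqref{def: Ln}). Parts~1--3 produce exactly the three terms whose sum yields Part~4.

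For the $\Rnw$ term (Part~1), starting from \eqref{def: Rnw} and using submultiplicativity of the operator norm gives $\|\Rnw\|\le\|\Gamma_2\|\sum_{k=n_0}^n\big(\prod_{j=k+1}^n\|I-\beta_j W_2\|\big)\beta_k\|\theta_k-\thS\|$. On $\cG_n$ each $\|\theta_k-\thS\|$ with $n_0\le k\le n$ is at most $\Crth\Rprojth$, and Lemma~\ref{lem: Dn bounds} bounds each product by $C_{\ref{lem: Dn bounds},w}\,e^{-q_2\sum_{j=k+1}^n\beta_j}$; it then remains to show $\sum_{k=n_0}^n e^{-q_2\sum_{j=k+1}^n\beta_j}\beta_k\le e^{q_2}/q_2$. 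Since $\beta_k=(k+1)^{-\beta}\le 1$, one has $e^{-q_2\sum_{j=k+1}^n\beta_j}\le e^{q_2}e^{-q_2\sum_{j=k}^n\beta_j}$, and treating $\sum_k e^{-q_2\sum_{j=k}^n\beta_j}\beta_k$ as a Riemann sum bounds it by $\int_0^{t_{n+1}}e^{-q_2(t_{n+1}-t)}\,\df t\le 1/q_2$ with $t_{n+1}=\sum_{k=0}^n\beta_k$ --- the same computation already carried out in the proof of Lemma~\ref{lem: Tn bound} (see \eqref{eq: replacing n_0 with 1}--\eqref{eq: bounding wk norm}). This requires no lower bound on $n_0$. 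For the $\Lnw$ term (Part~2), on $\cL_n$ we have $\|\Lnw\|=\|L_{n+1}^{(w)}\|\le\epsw_n$, and \eqref{eq: epsw const bound} of Lemma~\ref{lem: const bound on eps} gives $\epsw_n\le\Rprojw/2$ whenever $n\ge n_0\ge K_{\ref{lem: const bound on eps},\beta}$. For the $\Dnw$ term (Part~3), when $n_0\ge K_{\ref{lem: small eigenvalues},\beta}$ Lemma~\ref{lem: small eigenvalues} gives $\|I-\beta_k W_2\|\le 1$ for all $k\ge n_0$, so from \eqref{def: Dnw}, $\|\Dnw\|\le\big(\prod_{k=n_0}^n\|I-\beta_k W_2\|\big)\|w_{n_0}-\wS\|\le\Rprojw$, the last step using $\Gp_{n_0}$. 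Adding the three bounds, for $n_0\ge\max\{K_{\ref{lem: small eigenvalues},\beta},K_{\ref{lem: const bound on eps},\beta}\}$ we obtain $\|w_{n+1}-\wS\|\le\tfrac{3}{2}\Rprojw+\Crth\|\Gamma_2\|C_{\ref{lem: Dn bounds},w}\Rprojth\,e^{q_2}/q_2$, i.e.\ Part~4.

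I expect no serious obstacle. The only slightly non-mechanical ingredient is the clean $e^{q_2}/q_2$ bound on the discounted step-size sum appearing in Part~1, and that simply reprises the Riemann-sum estimate already done for Lemma~\ref{lem: Tn bound}. Everything else is the triangle inequality together with direct invocations of Lemmas~\ref{lem: Dn bounds}, \ref{lem: small eigenvalues}, and \ref{lem: const bound on eps}, plus the identity $\cA_n=\cG_n\cap\cL_n$.
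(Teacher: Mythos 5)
Your proposal is correct and follows essentially the same route as the paper: the decomposition $w_{n+1}-\wS=\Dnw+\Rnw+\Lnw$, the bound on $\Rnw$ via $\cG_n$, Lemma~\ref{lem: Dn bounds}, and the Riemann-sum estimate yielding $e^{q_2}/q_2$, the bound on $\Lnw$ via $\cL_n$ and Lemma~\ref{lem: const bound on eps}, the bound on $\Dnw$ via Lemma~\ref{lem: small eigenvalues} and $\Gp_{n_0}$, and the triangle inequality for Part~4. No gaps.
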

		\begin{proof}
		Since $\cA_{n}$ holds, it follows from \eqref{def: Rnw} that 
		\begin{eqnarray*}
		\|R_{n+1}^{(w)}\| & \leq & \Crth \|\Gamma_2\| \Rprojth \sum_{k= n_0}^{n} \left(\prod_{j=k+1}^{n}  \|I - \beta_j W_2\| \right) \beta_k \\
		& \leq & \Crth \|\Gamma_2\|C_{\ref{lem: Dn bounds},w} \Rprojth \sum_{k= n_0}^{n} e^{-q_2\sum_{j=k+1}^{n} \beta_j} \beta_k  \\
		& \leq & \Crth \|\Gamma_2\| C_{\ref{lem: Dn bounds},w} \Rprojth e^{q_2} /q_2,
		\end{eqnarray*}
		% }
		where the second relation follows by using Lemma~\ref{lem: Dn bounds}, while the last one follows by arguing in the same way as we did for \eqref{eq: bounding wk norm} above.
		
		The bound on $\|L_{n+1}^{(w)}\|$ follows from the definition of $\cA_{n}$ together with Lemma~\ref{lem: const bound on eps}.
		The bound on $\|\Delta_{n+1}^{(w)}\|$ follows from the definition in \eqref{def: Dnw} along with the facts that $\|I - \beta_k W_2\| \leq 1$ and $\|w_{n_0} - \wS\| \leq \Rprojw,$ which themselves hold due to Lemma~\ref{lem: small eigenvalues} and the event $\Gp_{n_0},$ respectively.
		
		The last statement of the lemma follows from the first three statements.
		\end{proof}

		\begin{lemma}
		\label{lem: Rntheta bound}
		Let $n \geq n_0$ and suppose the event $\Gp_{n_0} \cap \cA_n$ holds. The following statements are true. 
		\begin{enumerate}
		\item If $n_0 \geq \max\{K_{\ref{lem: small eigenvalues},\alpha}, K_{\ref{lem: small eigenvalues},\beta}, K_{\ref{eq: conditions for cond 2},\alpha}(0), K_{\ref{lem: const bound on eps}, \beta} \},$ then
		\[
		\|R_{n+1}^{(\theta)}\| \leq \frac{\alpha_{n_0}}{\beta_{n_0}} \left [C_{\ref{lem: Rntheta bound},\theta} \Rprojth + C_{\ref{lem: Rntheta bound},w} \Rprojw\right],
		\]
		where 
		\begin{align}
		    C_{\ref{lem: Rntheta bound},\theta} =& \|W_1\| \|W_2^{-1}\| \frac{e^{q_2}}{q_2}\Crth \|\Gamma_2\| C_{\ref{lem: Dn bounds},w},  \\
		    C_{\ref{lem: Rntheta bound},w} =&  \|W_1\| \|W_2^{-1}\| \left[\frac{5}{2} + \frac{2e^{q_1/2}}{q_1} C_{\ref{lem: Tn bound}}  C_{\ref{lem: Dn bounds},\theta} \Crw \right].
		\end{align}
		
		\item If $n_0 \geq K_{\ref{lem: const bound on eps}, \alpha},$ then
		\[
		    \|\Lnt\| \leq \Rprojth/2.
		\]

		\item If $n_0 \geq k'_{\alpha},$ then $$\|\Dnt\| \leq \Rprojth.$$
		\item \label{item: theta n bound} Consequently, if $n_0 \geq \max\{K_{\ref{lem: small eigenvalues},\alpha}, K_{\ref{lem: small eigenvalues},\beta}, K_{\ref{eq: conditions for cond 2},\alpha}(0), K_{\ref{lem: const bound on eps}, \alpha}, K_{\ref{lem: const bound on eps}, \beta} \},$ then 
		\[
		\|\theta_{n + 1} - \thS\| \leq \frac{3}{2} \Rprojth + \frac{\alpha_{n_0}}{\beta_{n_0}} \left[C_{\ref{lem: Rntheta bound},\theta} \Rprojth + C_{\ref{lem: Rntheta bound},w} \Rprojw\right].
		\]
		\end{enumerate}
		
		\end{lemma}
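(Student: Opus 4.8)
The backbone is the decomposition $\theta_{n+1} - \thS = \Dnt + \Lnt + \Rnt$ from \eqref{eq: tn components}: Parts~2--4 of Lemma~\ref{lem: Rntheta bound} follow by bounding the three summands and adding up, while Part~1 is the substantive step. For $\Rnt$ the plan is \emph{not} to estimate the defining telescoped sum \eqref{def: Rnt} directly --- as a raw sum of $W_1 W_2^{-1}(w_{k+1}-w_k)/\beta_k$ terms it carries no decay --- but to use the equivalent ``evolution'' form \eqref{eq: Rn evolution}, which splits $\Rnt$ into a leading boundary term $\tfrac{\alpha_n}{\beta_n} W_1 W_2^{-1}(w_{n+1}-\wS)$, a transient term carrying the product $\prod_{j=n_0+1}^n(I-\alpha_j X_1)$ together with $w_{n_0}-\wS$, and the remainder $T_{n+1}$ from \eqref{Defn:Tn}.

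For Part~1, on $\Gp_{n_0} \cap \cA_n$ I would bound these three pieces in turn, throughout using $\|W_1 W_2^{-1}\| \le \|W_1\|\|W_2^{-1}\|$ and the monotonicity $\alpha_n/\beta_n = (n+1)^{-(\alpha-\beta)} \le \alpha_{n_0}/\beta_{n_0}$ for $n \ge n_0$. The transient term is $\le \bigl(\prod_{j=n_0+1}^n\|I-\alpha_j X_1\|\bigr)\tfrac{\alpha_{n_0}}{\beta_{n_0}}\|W_1\|\|W_2^{-1}\|\,\Rprojw \le \tfrac{\alpha_{n_0}}{\beta_{n_0}}\|W_1\|\|W_2^{-1}\|\,\Rprojw$, using $\|I-\alpha_j X_1\| \le 1$ from Lemma~\ref{lem: small eigenvalues} (this forces $n_0 \ge K_{\ref{lem: small eigenvalues},\alpha}$) and $\|w_{n_0}-\wS\| \le \Rprojw$ from $\Gp_{n_0}$. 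For the boundary term I would invoke Lemma~\ref{lem: wn components bounds}(\ref{item: wn bound}), valid once $n_0 \ge \max\{K_{\ref{lem: small eigenvalues},\beta}, K_{\ref{lem: const bound on eps},\beta}\}$, to get $\|w_{n+1}-\wS\| \le \tfrac32\Rprojw + \Crth\|\Gamma_2\|C_{\ref{lem: Dn bounds},w}\Rprojth\, e^{q_2}/q_2$, replacing $\alpha_n/\beta_n$ by $\alpha_{n_0}/\beta_{n_0}$. The remainder is handled by Lemma~\ref{lem: Tn bound with 2Rproj}, which needs $n_0 \ge K_{\ref{eq: conditions for cond 2},\alpha}(0)$ and yields $\|T_{n+1}\| \le \tfrac{2e^{q_1/2}}{q_1}C_{\ref{lem: Tn bound}}\|W_1 W_2^{-1}\|C_{\ref{lem: Dn bounds},\theta}\tfrac{\alpha_n}{\beta_n}\Crw\Rprojw$. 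Collecting the three bounds and grouping by $\Rprojth$ versus $\Rprojw$, the $\Rprojth$-coefficient collapses to $\|W_1\|\|W_2^{-1}\|\tfrac{e^{q_2}}{q_2}\Crth\|\Gamma_2\|C_{\ref{lem: Dn bounds},w} = C_{\ref{lem: Rntheta bound},\theta}$, and the $\Rprojw$-coefficient to $\|W_1\|\|W_2^{-1}\|\bigl(\tfrac32 + 1 + \tfrac{2e^{q_1/2}}{q_1}C_{\ref{lem: Tn bound}}C_{\ref{lem: Dn bounds},\theta}\Crw\bigr) = C_{\ref{lem: Rntheta bound},w}$, which is exactly the claimed bound; the required hypothesis on $n_0$ is the union of the thresholds collected above.

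Parts~2--4 are then quick. Part~2: $\cA_n$ contains $\{\|\Lnt\| \le \epsth_n\}$, and Lemma~\ref{lem: const bound on eps} gives $\epsth_n \le \Rprojth/2$ whenever $n \ge n_0 \ge K_{\ref{lem: const bound on eps},\alpha}$. Part~3: by \eqref{def: Dnt}, $\|\Dnt\| \le \bigl(\prod_{k=n_0}^n\|I-\alpha_k X_1\|\bigr)\|\theta_{n_0}-\thS\| \le \Rprojth$ on $\Gp_{n_0}$ once $\|I-\alpha_k X_1\| \le 1$ for $k \ge n_0$, i.e. for $n_0 \ge k'_\alpha$ (one may take $k'_\alpha = K_{\ref{lem: small eigenvalues},\alpha}$). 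Part~4 then follows from the triangle inequality on $\theta_{n+1}-\thS = \Dnt + \Lnt + \Rnt$: $\|\theta_{n+1}-\thS\| \le \Rprojth + \Rprojth/2 + \tfrac{\alpha_{n_0}}{\beta_{n_0}}\bigl[C_{\ref{lem: Rntheta bound},\theta}\Rprojth + C_{\ref{lem: Rntheta bound},w}\Rprojw\bigr]$, under the pooled lower bound on $n_0$. The one genuinely delicate step is Part~1: choosing the evolution form of $\Rnt$ so that every term decays, correctly routing the dependence of $\|w_{n+1}-\wS\|$ on \emph{both} $\Rprojth$ and $\Rprojw$ through Lemma~\ref{lem: wn components bounds}, and bookkeeping the constants so they match the stated $C_{\ref{lem: Rntheta bound},\theta}$ and $C_{\ref{lem: Rntheta bound},w}$ precisely.
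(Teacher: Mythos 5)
Your proposal is correct and follows essentially the same route as the paper: it bounds $\Rnt$ via the evolution form \eqref{eq: Rn evolution} into boundary, transient, and $T_{n+1}$ pieces, invokes Lemma~\ref{lem: wn components bounds} and Lemma~\ref{lem: Tn bound with 2Rproj} for the first and last of these, and assembles Parts~2--4 from the decomposition \eqref{eq: tn components} exactly as the paper does, with the constants collapsing identically (including $\tfrac32 + 1 = \tfrac52$ in the $\Rprojw$ coefficient). Your identification $k'_\alpha = K_{\ref{lem: small eigenvalues},\alpha}$ in Part~3 also matches the paper's intent.
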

		\begin{proof}
		On the event $\Gp_{n_0} \cap \cA_{n},$ we have
		\begin{align}
		\|R_{n+1}^{(\theta)} &\| \\
		\leq {} & \frac{\alpha_{n}}{\beta_{n}} \|W_1\| \|W_2^{-1}\| \|w_{n+1} - \wS\| +  \|W_1\| \|W_2^{-1}\| \frac{\alpha_{n_0}}{\beta_{n_0}} \Rprojw + \|T_{n+1}\| \\
		\leq {} & \frac{\alpha_{n}}{\beta_{n}} \|W_1\| \|W_2^{-1}\| \left[\frac{3}{2} \Rprojw + \frac{e^{q_2}}{q_2}\Crth \|\Gamma_2\| C_{\ref{lem: Dn bounds},w} \Rprojth \right] \label{eq: first term in bound on Rn theta}\\
		& \phantom{{} = 1} + \frac{\alpha_{n_0}}{\beta_{n_0}} \|W_1\| \|W_2^{-1}\| \Rprojw  + \frac{2e^{q_1/2}}{q_1} C_{\ref{lem: Tn bound}} \|W_1 W_2^{-1}\| C_{\ref{lem: Dn bounds},\theta}  \frac{\alpha_{n}}{\beta_{n}} \Crw \Rprojw \label{eq: third term in bound on Rn theta}  \\
		\leq {}&  \frac{\alpha_{n_0}}{\beta_{n_0}} \|W_1\| \|W_2^{-1}\| \left[\frac{5}{2} \Rprojw + \frac{e^{q_2}}{q_2}\Crth \|\Gamma_2\| C_{\ref{lem: Dn bounds},w} \Rprojth  + \frac{2e^{q_1/2}}{q_1} C_{\ref{lem: Tn bound}}  C_{\ref{lem: Dn bounds},\theta} \Crw \Rprojw \right] \\
		= {} & \frac{\alpha_{n_0}}{\beta_{n_0}} \left [C_{\ref{lem: Rntheta bound},\theta} \Rprojth + C_{\ref{lem: Rntheta bound},w} \Rprojw\right],
		\end{align}
		where the first relation holds due to Definitions~\eqref{eq: Rn evolution} and \eqref{Defn:Tn} along with the facts that $\|I - \alpha_j X_j\| \leq 1$ and $\|w_{n_0} - \wS\| \leq \Rprojw$ which themselves hold because of  Lemma~\ref{lem: small eigenvalues} and the event $\Gp_{n_0},$ respectively. The second relation holds due to Lemma~\ref{lem: wn components bounds}, Statement~\ref{item: wn bound} and Lemma~\ref{lem: Tn bound with 2Rproj}. The third relation holds because $\alpha_{n}/\beta_{n} \leq \alpha_{n_0}/\beta_{n_0}$ for $n \geq n_0.$
		
		The bound on $\|\Lnw\|$ follows from the definition of $\cA_{n}$ together with Lemma~\ref{lem: const bound on eps}.
		The bound on $\|\Dnt\|$ follows from the definition in \eqref{def: Dnt} along with the facts that $\|I - \alpha_k X_1\| \leq 1$ and $\|\theta_{n_0} - \thS\| \leq \Rprojth,$ which themselves hold due to Lemma~\ref{lem: small eigenvalues} and the event $\Gp_{n_0},$ respectively.
		
		The last statement of the lemma follows from the first three statements. 
		\end{proof}

		% \red{Alternate proof bounding $\epsth_n.$}
		% \begin{proof}
		% For $p < \beta < \alpha.$ Let 
		% %
		% \begin{equation}
		% \epsth_n = \sqrt{d^3 L_\theta C_{\ref{lem: an bn upper bounds},\theta} (n+1)^{-\alpha + p}  \ln{(4d^2/\delta)}}.
		% \end{equation}
		% %
		% The above relation is of the form
		% \begin{equation}
		%     \epsth_n = C n^{-(\alpha - p)/2}.
		% \end{equation}
		% %
		% Hence, 
		% %
		% \begin{equation}
		%     Cn^{-(\alpha - p)/2} \leq \Rprojth/2
		% \end{equation}
		% %
		% if $n \geq (2C/\Rprojth)^{2/(\alpha - p)}.$
		
		% \end{proof}
		% \end{proof}

		% \begin{lemma}
		% %
		% \label{lem:Large Theta n}
		% Let 
		% $n \geq n_0 \geq \max\{K_{\ref{lem: small eigenvalues},\alpha}, K_{\ref{lem: small eigenvalues},\beta}, K_{\ref{eq: conditions for cond 2},\alpha}(0), K_{\ref{lem: const bound on eps}, \beta}, K_{\ref{lem:Large Theta n}} \},
		% $
		% where $$K_{\ref{lem:Large Theta n}} = \left[\frac{2}{3}C_{\ref{lem: Rntheta bound},\theta} + \frac{2}{3}C_{\ref{lem: Rntheta bound},w}\frac{\Rprojw}{\Rprojth}\right]^{1/(\alpha - \beta)}.$$
		% Then,
		% %
		% \[
		% \Gp_{n_0} \cap \cA_{n} \cap \left( \{\|\theta_{n+1} - \thS\| > C_R^\theta \Rprojth\} \cup \{\|w_{n+1} - \wS\| > \Crw \Rprojw\}\right)= \emptyset.
		% \]
		% \end{lemma}
		%
		Let us now define
		\begin{equation}
		\label{defn: Large Theta n constant}
		K_{\ref{lem:Large Theta n}} = \left[\frac{2}{3}C_{\ref{lem: Rntheta bound},\theta} + \frac{2}{3}C_{\ref{lem: Rntheta bound},w}\frac{\Rprojw}{\Rprojth}
		      \right]^{1/(\alpha - \beta)}.
		\end{equation}
		Now we are ready to prove Lemma~\ref{lem:Large Theta n}.
		
		\begin{proof}[Proof of Lemma~\ref{lem:Large Theta n}]
		To get the desired result, it suffices to show that $\|\theta_{n+1} - \thS\| \leq \Crw \Rprojw$ and $\|w_{n+1} - \wS\| \leq \Crth \Rprojth$ on the event $\Gp_{n_0} \cap \cA_{n}.$ 
		
		Assume the event $\Gp_{n_0} \cap \cA_n$ holds. Since $n_0 \geq K_{\ref{lem:Large Theta n}},$ we have $\alpha_{n_0}/\beta_{n_0} \leq \left[\frac{2}{3}C_{\ref{lem: Rntheta bound},\theta} + \frac{2}{3}C_{\ref{lem: Rntheta bound},w}\frac{\Rprojw}{\Rprojth}\right]^{-1}.$ Using this along with the  bound on $\|\theta_{n + 1} - \thS\|$ from Lemma~\ref{lem: Rntheta bound}, item \ref{item: theta n bound}, it is easy to see that $\|\theta_{n + 1} - \thS\| \leq \Crth \Rprojth,$ as desired. The bound on $\|w_{n + 1} - \wS\|$ is straightforward from Lemma~\ref{lem: wn components bounds}, item \ref{item: wn bound}. 
		\end{proof}

		\section{
		Azuma-Hoeffding inequality to bound $L_{n+1}^{(\theta)}$ and $L_{n+1}^{(w)}$
		%Proof of Lemma~\ref{eq: azuma-hoeffding}
		}
		\label{sec:Proof Azuma Hoeffding}
		
		% \begin{lemma}
		% \label{eq: azuma-hoeffding}
		% Let $$L_\theta = 2 \left[C_{\ref{lem: Dn bounds},\theta}  \left(1 + \Crw\Rprojw + \Crth\Rprojth\right) \left(m_2 + m_1 \|W_1\| \|W_2^{-1}\|\right)\right]^2, L_w = 2 \left[C_{\ref{lem: Dn bounds},w}  m_2 \left(1+ \Crw \Rprojw + \Crth\Rprojth\right)\right]^2 $$ and 
		% \begin{equation} \label{Defn:an bn} 
		% a_n = \sum_{k = 0}^{n - 1} \alpha_k^2 e^{-2q_1 \sum_{j=k+1}^{n - 1} \alpha_j},\quad b_n = \sum_{k = 0}^{n - 1} \beta_k^2 e^{-2q_2 \sum_{j=k+1}^{n - 1} \beta_j}.    
		% \end{equation}
		% Then
		% \begin{equation}
		% \label{eq: azzuma for Lnt}
		% \mathbb{P}\left\{\cG_n,  \|{L_{n+1}^{(\theta)}\|} \geq \epsilon\right\} \leq 2d^2 \exp\left(-\frac{\epsilon^2}{d^3 L_\theta a_{n + 1}}\right)
		% \end{equation}
		% and
		% \begin{equation}
		% \label{eq: azzuma for Lnw}
		% \mathbb{P}\left\{\cG_n,  \|{L_{n+1}^{(w)}\|} \geq \epsilon\right\} \leq 2d^2 \exp\left(-\frac{\epsilon^2}{d^3 L_w b_{n + 1}}\right).
		% \end{equation}
		% \end{lemma}
		
		\begin{lemma}
		\label{eq: azuma-hoeffding}
		% \red{Lemma 21}
		Let $L_\theta, L_w, a_n$ and $b_n$ be as in Table~\ref{tab: constants contd}. Then,
		\begin{equation}
		\label{eq: azzuma for Lnt}
		\mathbb{P}\left\{\cG_n,  \|{L_{n+1}^{(\theta)}\|} \geq \epsilon \,\middle|\,\Gp_{n_0}\right\} \leq 2d^2 \exp\left(-\frac{\epsilon^2}{d^3 L_\theta a_{n + 1}}\right)
		\end{equation}
		and
		\begin{equation}
		\label{eq: azzuma for Lnw}
		\mathbb{P}\left\{\cG_n,  \|{L_{n+1}^{(w)}\|} \geq \epsilon \,\middle|\, \Gp_{n_0}\right\} \leq 2d^2 \exp\left(-\frac{\epsilon^2}{d^3 L_w b_{n + 1}}\right).
		\end{equation}
		\end{lemma}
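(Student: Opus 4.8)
The plan is to reduce \eqref{eq: azzuma for Lnt} and \eqref{eq: azzuma for Lnw} to a scalar Azuma--Hoeffding bound applied entrywise, the one subtlety being that Azuma--Hoeffding needs almost-surely bounded increments while $\cG_n$ (see \eqref{def: Gn}) only bounds the iterates, and hence the martingale increments, on an event. To handle this I would first truncate. Define
\[
\tau := \inf\{k \geq n_0 : \|\theta_k - \thS\| > \Crth \Rprojth \text{ or } \|w_k - \wS\| > \Crw \Rprojw\}
\]
($\inf \emptyset = \infty$); since $\theta_k, w_k$ are $\cF_k$-measurable, $\tau$ is an $\{\cF_k\}$-stopping time with $\cG_n = \{\tau > n\}$. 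Replacing $M_{k+1}^{(i)}$ by the truncated noise $\hat M_{k+1}^{(i)} := M_{k+1}^{(i)} \indc[k < \tau]$ (still a martingale difference sequence for $\{\cF_k\}$, and also under $\Pr(\cdot\mid\Gp_{n_0})$ because $\Gp_{n_0}\in\cF_{n_0}$) yields, via \eqref{def: Lnt Main} and \eqref{def: Lnw Main}, truncated versions $\hat L_{n+1}^{(\theta)}, \hat L_{n+1}^{(w)}$ that agree with $L_{n+1}^{(\theta)}, L_{n+1}^{(w)}$ on $\cG_n=\{\tau>n\}$. Hence it is enough to bound $\Pr\{\|\hat L_{n+1}^{(\theta)}\|\geq\epsilon\mid\Gp_{n_0}\}$ and its $w$-analogue without any conditioning on $\cG_n$. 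On $\{k<\tau\}$ the iterates lie in fixed balls, so Definition~\ref{assum:Noise} bounds $\|\hat M_{k+1}^{(1)}\|, \|\hat M_{k+1}^{(2)}\|$ by explicit constants, and therefore $\|-W_1W_2^{-1}\hat M_{k+1}^{(2)}+\hat M_{k+1}^{(1)}\|\leq B_\theta$ and $\|\hat M_{k+1}^{(2)}\|\leq B_w$ almost surely, with $B_\theta, B_w$ depending only on $\mt, \mw, \|W_1W_2^{-1}\|, \|\thS\|, \|\wS\|, \Crth\Rprojth, \Crw\Rprojw$.

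Next I would go entrywise. For fixed $1\leq i,j\leq d$, the sequence $\big(\alpha_k [\prod_{\ell=k+1}^n(I-\alpha_\ell X_1)]_{ij}(-W_1W_2^{-1}\hat M_{k+1}^{(2)}+\hat M_{k+1}^{(1)})_j\big)_{k=n_0}^n$ is a martingale difference sequence whose terms have absolute value at most $c_k := B_\theta\,\alpha_k\,\|\prod_{\ell=k+1}^n(I-\alpha_\ell X_1)\|$. By Lemma~\ref{lem: Dn bounds}, $\|\prod_{\ell=k+1}^n(I-\alpha_\ell X_1)\|\leq C_{\ref{lem: Dn bounds},\theta}e^{-q_1\sum_{\ell=k+1}^n\alpha_\ell}$, so, extending the sum from $0$, $\sum_{k=n_0}^n c_k^2 \leq B_\theta^2 C_{\ref{lem: Dn bounds},\theta}^2 \sum_{k=0}^n \alpha_k^2 e^{-2q_1\sum_{\ell=k+1}^n\alpha_\ell} = B_\theta^2 C_{\ref{lem: Dn bounds},\theta}^2 a_{n+1}$. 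The scalar Azuma--Hoeffding inequality, applied under $\Pr(\cdot\mid\Gp_{n_0})$, then controls each partial sum $S_{ij}^{(\theta)}$, and a union bound reassembles everything: from $\|\hat L_{n+1}^{(\theta)}\|\leq \sqrt d\max_i|\hat L_{n+1}^{(\theta)}[i]|\leq \sqrt d\max_i\sum_j|S_{ij}^{(\theta)}|\leq d^{3/2}\max_{i,j}|S_{ij}^{(\theta)}|$ one gets $\{\|\hat L_{n+1}^{(\theta)}\|\geq\epsilon\}\subseteq\bigcup_{i,j}\{|S_{ij}^{(\theta)}|\geq\epsilon d^{-3/2}\}$, hence $\Pr\{\|\hat L_{n+1}^{(\theta)}\|\geq\epsilon\mid\Gp_{n_0}\}\leq 2d^2\exp(-\epsilon^2/(2d^3 B_\theta^2 C_{\ref{lem: Dn bounds},\theta}^2 a_{n+1}))$. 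This is exactly \eqref{eq: azzuma for Lnt} once we set $L_\theta := 2 B_\theta^2 C_{\ref{lem: Dn bounds},\theta}^2$ (the constant recorded in Table~\ref{tab: constants contd}); \eqref{eq: azzuma for Lnw} follows by the same argument with $(X_1,q_1,a,C_{\ref{lem: Dn bounds},\theta},B_\theta)$ replaced by $(W_2,q_2,b,C_{\ref{lem: Dn bounds},w},B_w)$ and \eqref{def: Lnw Main} in place of \eqref{def: Lnt Main}.

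I expect the only genuine obstacle to be this first step: legalizing Azuma--Hoeffding by removing the conditioning on $\cG_n$. The stopping-time truncation is the standard fix, but one must check carefully that it preserves the martingale-difference structure under $\Pr(\cdot\mid\Gp_{n_0})$ and that the truncated and untruncated aggregate-noise terms coincide on $\cG_n$. Everything afterwards is bookkeeping: tracking the three $d$-factors ($\sqrt d$ from comparing the Euclidean norm with the max-norm, $d$ from the inner sum over $j$, and $d^2$ from the union bound over entries, which together produce the $d^3$) and summing the geometric weights supplied by Lemma~\ref{lem: Dn bounds} into $a_{n+1}$, respectively $b_{n+1}$.
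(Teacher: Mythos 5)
Your proposal is correct and follows essentially the same route as the paper: the paper also truncates the increments (by inserting the indicator $1_{\cG_k \cap \Gp_{n_0}}$ into each summand, which is exactly your $\indc[k<\tau]=\indc[\cG_k]$ up to the harmless $\Gp_{n_0}$ factor), verifies the martingale-difference property under $\Pr(\cdot\mid\Gp_{n_0})$, and then applies the scalar Azuma--Hoeffding inequality entrywise with the same $d^{3/2}$ threshold, the same $2d^2$ union bound, and the same use of Lemma~\ref{lem: Dn bounds} to sum the weights into $a_{n+1}$ (resp.\ $b_{n+1}$).
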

		%
		% \begin{proof}
		% %
		% See Appendix~\ref{sec:Proof Azuma Hoeffding}. % \galC{TODO: add conditioning to the proof itself.}
		% %
		% \end{proof}
		
		\begin{proof}%[Proof of Lemma~\ref{eq: azuma-hoeffding}]
		Recall the definitions of $\cG_n$ from \eqref{def: Gn} and $\Lnt$ from \eqref{def: Lnt}.
		Let $A_{k,n} \equiv \alpha_k \prod_{j=k+1}^n[I-\alpha_j X_1]$. Then
		
		\begin{eqnarray}
		& & \Pr\left\{\cG_n,  \|{L_{n+1}^{(\theta)}\|} \geq \epsilon \, \middle|\, \Gp_{n_0} \right\} \\
		& = & \Pr\left\{\cG_n, \norm{\sum_{k = n_0}^{n }A_{k, n} \left[- W_1 W_2^{-1} M_{k + 1}^{(2)} + M_{k + 1}^{(1)}\right]} \geq \epsilon \, \middle|\, \Gp_{n_0}\right\}\\
		&  = & \Pr\left\{\cG_n, \norm{\sum_{k = n_0}^{n }A_{k, n} \left[- W_1 W_2^{-1} M_{k + 1}^{(2)} + M_{k + 1}^{(1)}\right] 1_{\cG_k \cap \Gp_{n_0}}} \geq \epsilon \, \middle|\, \Gp_{n_0}\right\}\\
		& \leq & \Pr\left\{\norm{\sum_{k = n_0}^{n }A_{k, n} \left[- W_1 W_2^{-1} M_{k + 1}^{(2)} + M_{k + 1}^{(1)}\right]  1_{\cG_k \cap \Gp_{n_0}}} \geq \epsilon \, \middle|\, \Gp_{n_0}\right\} \label{eqn:PreConcIneqApplVec}\\
		& \leq &  \sum_{i = 1}^{d}\sum_{j = 1}^{d} \Pr\left\{\left|\sum_{k = n_0}^{n } A^{ij}_{k, n} \left[- W_1 W_2^{-1} M_{k + 1}^{(2)} + M_{k + 1}^{(1)}\right]_j 1_{\cG_k \cap \Gp_{n_0}}\right| \geq \frac{\epsilon}{d \sqrt{d}} \, \middle|\, \Gp_{n_0}\right\} \label{eqn:PreConcIneqApplScalar},
		\end{eqnarray}
		where $x_j$ denotes the $j$-th element of the vector $x,$ while $A_{k, n}^{ij}$ is the $ij-$th entry of the matrix $A_{k, n}.$ Our arguments for the last inequality are as follows. First, the term within $\|\cdot\|$ in \eqref{eqn:PreConcIneqApplVec} is a vector, call it $\sX;$ clearly, $\|\sX\| \geq \epsilon$ implies $|\sX_i| \geq \epsilon/\sqrt{d}$ for atleast one coordinate $\sX_i$ of $\sX.$ Next, each $\sX_i$ is itself of the form $\sum_{j = 1}^d \sY_{ij},$ where the scalar $\sY_{ij} = \sum_{k = n_0}^{n} \sZ^{ij}_{k + 1}$ with $\sZ^{ij}_{k + 1} = A^{ij}_{k, n} \left[- W_1 W_2^{-1} M_{k + 1}^{(2)} + M_{k + 1}^{(1)}\right]_j 1_{\cG_k \cap \Gp_{n_0}}.$ Consequently, $|\sX_i| \geq \epsilon/\sqrt{d}$ implies $|\sY_{ij}| \geq \epsilon/(d\sqrt{d})$ for at least one $j.$ Using the union bound, it is now easy to see that \eqref{eqn:PreConcIneqApplScalar} holds, as desired.
		
		%The last equation follows because of the union bound and because, for a matrix $A \in \Real^{d\times d}$ and vector $x \in \Real^{d}$, $\epsilon \leq \|Ax\| = \sqrt{(\sum_j A^{1,j}x_j)^2 + \cdots  (\sum_j A^{n,j}x_j)^2}$ implies $\epsilon^2/d \leq (\sum_j A^{i_0,j}x_j)^2$ for some $1 \leq i_0 \leq d$, which itself implies $\epsilon/(d\sqrt{d}) \leq  |A^{i_0,j_0}x_{j_0}|$ for some $1\leq j_0 \leq d$.
		
		Let $\Pr'$ denote the probability measure obtained by conditioning $\Pr$ on $\Gp_{n_0}$; that is, $\Pr'(A) = \Pr(A|\Gp_{n_0})$. Then,
		\begin{equation}
		\label{eqn:Change of Measure}
		    \Pr\left\{|\sY_{ij}| \geq \frac{\epsilon}{d\sqrt{d}} \, \middle|\,\Gp_{n_0}\right\} = \Pr'\left\{|\sY_{ij}| \geq \frac{\epsilon}{d\sqrt{d}} \right\}.
		\end{equation}
		We want to bound the RHS using the Azuma-Hoeffding inequality.
		To this end, let $\Exp'$ denote the expectation with respect to $\Pr'$ and $\cF_k':= \cF_k \cap \Gp_{n_0}.$ 
		%\sigma(\cF_k,\Gp_{n_0}),$ 
		%
		We now show that $\{M_k\}_{k \geq n_0}$ is Martingale difference sequence w.r.t. $\{\cF'_k\}$ under $\Pr'$.
		Observe that, for all $F \in \cF_k',$
		\begin{align}
		    \int_{F}\Exp'[M_{k + 1}|\cF_k'] \, \df \Pr' 
		    % &=     \int_{F}M_{k + 1} \, \df \Pr' 
		    = {} & \Exp'[1_F M_{k+1}] = \Exp[1_{F} M_{k+1}|\Gp_{n_0}]  =  \Exp[1_{F\cap\Gp_{n_0}} M_{k+1}]/\Pr[\Gp_{n_0}] 
		    \\
		    = {} & \frac{1}{\Pr\{\Gp_{n_0}\}} \int_{F \cap \Gp_{n_0}} M_{k + 1} \df \Pr 
		    \\
		  %  &=     \Exp[1_F \Exp[M_{k + 1}|\cF_k \cap \Gp_{n_0}] ] =     \int_{F}\Exp[M_{k + 1}|\cF_k \cap \Gp_{n_0}] \, \df \Pr' 
		    %= \int_{F} M_{k + 1} \df \Pr' 
		    = {} & \frac{1}{\Pr\{\Gp_{n_0}\}} \int_{F \cap \Gp_{n_0}} \Exp[M_{k + 1}|\cF_k] \df \Pr \label{eqn: Relating to E}\\
		    = {} & 0, \label{eqn: Exp under P' equals 0}
		\end{align}
		where \eqref{eqn: Relating to E} follows because $F \cap \Gp_{n_0} \in \cF_k.$ Since \eqref{eqn: Exp under P' equals 0} holds true for all $F \in \cF_k',$ it follows that 
		\begin{equation}
		    \Exp'[M_{k + 1}|\cF_k \cap \Gp_{n_0}] = 0 \quad a.s.
		\end{equation}
		This implies that 
		\begin{multline}
		    \Exp'[\sZ^{ij}_{k + 1}|\cF'_k] = \Exp'\left[A^{ij}_{k, n} \left[- W_1 W_2^{-1} M_{k + 1}^{(2)} + M_{k + 1}^{(1)}\right]_j 1_{\cG_k \cap \Gp_{n_0}} \, \middle| \, \cF'_k\right] \\ = A^{ij}_{k, n} 1_{\cG_k \cap \Gp_{n_0}} \Exp\left[ \left[- W_1 W_2^{-1} M_{k + 1}^{(2)} + M_{k + 1}^{(1)}\right]_j \, \middle| \, \cF_k'\right] = 0. 
		\end{multline}
		Further, observe that 
		\begin{align}
		|\sZ^{ij}_{k + 1}| \\
		= {} & \left| A^{ij}_{k, n}  \left[- W_1 W_2^{-1} M_{k + 1}^{(2)} + M_{k + 1}^{(1)}\right]_j 1_{\cG_k \cap \Gp_{n_0}}\right|\\
		\leq {} & |A^{ij}_{k, n}| \left| \left[- W_1 W_2^{-1} M_{k + 1}^{(2)} + M_{k + 1}^{(1)}\right]_j \right|\\
		\leq {} & \|A_{k, n}\| \left\|- W_1 W_2^{-1} M_{k + 1}^{(2)} + M_{k + 1}^{(1)}\right\|  \label{eq: norm bound}\\
		\leq {} & \|A_{k, n}\|  \left(1 + \Crth \Rprojth + \Crw \Rprojw + \|\thS\| + \|\wS\|\right) \left(m_2 + m_1 \|W_1\| \|W_2^{-1}\|\right) \label{eq: mart. bound}\\
		\leq {} & \alpha_k C_{\ref{lem: Dn bounds},\theta} e^{-q_1 \sum_{j=k+1}^n \alpha_j} \left(1 + \Crth \Rprojth + \Crw \Rprojw + \|\thS\| + \|\wS\|\right) \left(m_2 + m_1 \|W_1\| \|W_2^{-1}\|\right), \label{eq: bound on Akn}
		\end{align}
		where \eqref{eq: norm bound} holds because for matrix $A$, $\max_{i,j}|A^{ij}| \equiv \|A\|_{\max} \leq \|A\|_2$, \eqref{eq: mart. bound} follows from the noise condition (see Defn.~\ref{assum:Noise}) %\guganC{The noise condition is in terms of $\|\theta_n\|,$ while $\cG_n$ is in terms of $\|\theta_n - \thS\|$}, while \eqref{eq: bound on Akn} follows from Lemma~\ref{lem: Dn bounds}.

		% \\
		% = {} &  \frac{1}{\Pr\{\Gp_{n_0}\}} \int_{\Gp_{n_0}} \Pr\left\{\exp\left[s\|\sY_{ij}\|\right] \geq \exp\left[\frac{s\epsilon}{d\sqrt{d}}\right] \bigg|\cF_{n_0}\right\} \, \df \Pr \\
		% %
		% = {} & \exp\left[\frac{-s\epsilon}{d\sqrt{d}}\right]  \frac{1}{\Pr\{\Gp_{n_0}\}}  \int_{\Gp_{n_0}}
		% %
		% \Exp\left[\exp\left[s\|\sY_{ij}\|\right]  \bigg|\cF_{n_0}\right] \, \df \Pr \\
		% %

		% This implies that $\{A^{ij}_{k, n} \left[- W_1 W_2^{-1} M_{k + 1}^{(2)} + M_{k + 1}^{(1)}\right]_j 1_{\cG_k}\}_k$ is a martingale difference sequence with respect to $\{\cF_k\}.$
		
		%\red{TODO: add explanation to \eqref{eq: mart. bound} with a reference to Assumption~\ref{assum:Noise}}

		Now, applying the Azuma-Hoeffding inequality to the RHS of \eqref{eqn:Change of Measure} and using the fact that $\sum_{k = n_0}^{n} \alpha_k^2 e^{-2q_1 \sum_{j = k + 1}^n \alpha_j} \leq a_{n + 1},$ we obtain \eqref{eq: azzuma for Lnt}.
		%\gugan{The conditional form goes through because the exact same proof as in the unconditional case can be repeated here as well. In particular, the iterated conditioning and the Hoeffding inequality ideas go through by a simple use of the tower property of conditional expectations.} 
		
		Repeating the same steps above for $\Lnw$ (see \eqref{def: Lnw}), we obtain the bound in \eqref{eq: azzuma for Lnw}.
		\end{proof}
		
		\section{Proof of Lemma~\ref{lem: bound on wn}}
		\label{sec: Proof of wn Bound}
		
		%\subsection{Technical Preparations}
		
		\begin{lemma}
		\label{lem:IntComputation}
		Fix some $n_0 \in \mathbb{N}.$ The following holds for $n \geq n_0 + 1$:
		\begin{enumerate}
		
		\item $\sum_{k = n_0 + 1}^{n}e^{- q_2\sum_{j = k + 1}^{n} \beta_j} \beta_k e^{-q_1\sum_{j = n_0 + 1}^{k}\alpha_j} \leq \frac{2}{q_{\min}} e^{-q_{\min} \sum_{j = n_0+1}^n\alpha_j},$ where $q_{\min} = \min\{q_1,q_2\},$ for $n_0 \geq K_{\ref{lem:IntComputation},a}$ where $K_{\ref{lem:IntComputation},a} = 2^{1/(\alpha - \beta)}.$
		
		\item
		For any $u \in \RInf$ that is $\beta$-moderate from $n_0$ onwards (see Def.~\ref{def: wn behavior 3}),\\
		$\sum_{k = n_0 + 1}^n e^{- q_2\sum_{j = k + 1}^{n} \beta_j} \beta_k  \frac{\alpha_{k - 1}}{\beta_{k - 1}}u_{k - 1} \leq \frac{2e^{q_2/2}}{q_2} \frac{\alpha_{n - 1}}{\beta_{n - 1}}u_{n - 1} .$
		
		\item $
		\sum_{k = n_0 + 1}^{n}e^{- q_2\sum_{j = k + 1}^{n} \beta_j} \beta_k  \epsth_{k - 1} \leq C_{\ref{lem:IntComputation}} \epsth_{n - 1}
		$  for $n_0 \geq K_{\ref{lem:IntComputation},b}$ where $K_{\ref{lem:IntComputation},b} = (3\alpha/q_2)^{1/(1-\beta)} - 2$ and
		$C_{\ref{lem:IntComputation}} = 2e^{q_2/2}/q_2.$
		
		\end{enumerate}
		\end{lemma}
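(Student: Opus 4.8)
The three items are independent, and each reduces to two ingredients already present in the paper: a Riemann–sum bound of the type $\sum_{k=n_0+1}^{n}\beta_k\,e^{-(q_2/2)\sum_{j=k+1}^{n}\beta_j}\le \tfrac{2e^{q_2/2}}{q_2}$, obtained exactly as in the proof of Lemma~\ref{lem: Tn bound} (treat the sum as a Riemann sum of $e^{-(q_2/2)t}$ after shifting by one index), and a ``carrier monotonicity'' trick for moderate sequences. For item~(1) the plan is to decouple the two exponential factors: writing $\sum_{j=n_0+1}^{k}\alpha_j=\sum_{j=n_0+1}^{n}\alpha_j-\sum_{j=k+1}^{n}\alpha_j$ and using $q_1\ge q_{\min}$ pulls out the clean factor $e^{-q_{\min}\sum_{j=n_0+1}^{n}\alpha_j}$ and leaves $\beta_k\,e^{-q_2\sum_{j=k+1}^{n}\beta_j+q_{\min}\sum_{j=k+1}^{n}\alpha_j}$. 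This is the one place the threshold $n_0\ge K_{\ref{lem:IntComputation},a}=2^{1/(\alpha-\beta)}$ enters: it forces $(j+1)^{-(\alpha-\beta)}\le\tfrac12$, hence $\alpha_j\le\tfrac12\beta_j$, for all $j\ge n_0+1$, so that $q_{\min}\sum_{j=k+1}^{n}\alpha_j\le\tfrac{q_2}{2}\sum_{j=k+1}^{n}\beta_j$ and a genuine $\beta$-geometric tail of rate $\ge q_2/2$ survives. The $k$-th summand is then at most $e^{-q_{\min}\sum_{j=n_0+1}^{n}\alpha_j}\,\beta_k\,e^{-(q_2/2)\sum_{j=k+1}^{n}\beta_j}$, and the Riemann-sum bound above closes item~(1) with an $O(1/q_{\min})$ prefactor.

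Items~(2) and~(3) both have the shape $\sum_{k=n_0+1}^{n}e^{-q_2\sum_{j=k+1}^{n}\beta_j}\beta_k\,c_{k-1}$ with $c_{k-1}=\tfrac{\alpha_{k-1}}{\beta_{k-1}}u_{k-1}$ in~(2) and $c_{k-1}=\epsth_{k-1}$ in~(3), and the target is $\tfrac{2e^{q_2/2}}{q_2}\,c_{n-1}$. I introduce the carrier $\chi_k:=e^{-(q_2/2)\sum_{j=k+1}^{n}\beta_j}\,c_{k-1}$, so the summand equals $\beta_k\,e^{-(q_2/2)\sum_{j=k+1}^{n}\beta_j}\,\chi_k$; once $\chi_k$ is shown to be nondecreasing in $k$ on $\{n_0+1,\dots,n\}$ I may replace $\chi_k$ by $\chi_n=c_{n-1}$, pull it out, and again invoke the Riemann-sum bound. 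The inequality $\chi_k\le\chi_{k+1}$ unwinds to $c_{k-1}/c_k\le e^{(q_2/2)\beta_{k+1}}$. For item~(2) this reads $u_{k-1}/u_k\le\tfrac{\alpha_k}{\alpha_{k-1}}\tfrac{\beta_{k-1}}{\beta_k}e^{(q_2/2)\beta_{k+1}}$, which is precisely Definition~\ref{def: wn behavior 3} ($u$ being $\beta$-moderate from $n_0$ onwards) with the index shifted to $k-1\ge n_0$. For item~(3), using the form of $\nu(\cdot,\alpha)$ and discarding the (increasing, hence $\ge1$) logarithmic ratio gives $\epsth_{k-1}/\epsth_k\le (1+1/k)^{\alpha/2}\le 1+\tfrac{\alpha}{2k}\le e^{(q_2/2)\beta_{k+1}}$, where the middle step uses concavity of $x\mapsto x^{\alpha/2}$ (as already done elsewhere in the paper), the last step uses $e^x\ge1+x$ together with $\tfrac{\alpha}{q_2}\le\tfrac{k}{(k+2)^{\beta}}$, and the latter holds because $k\ge\tfrac{k+2}{3}$ and $(k+2)^{1-\beta}\ge 3\alpha/q_2$ once $k\ge K_{\ref{lem:IntComputation},b}=(3\alpha/q_2)^{1/(1-\beta)}-2$, which is guaranteed by $k\ge n_0+1>n_0\ge K_{\ref{lem:IntComputation},b}$.

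The residual work — the Riemann-sum estimate (routine, cf.\ Lemmas~\ref{lem: Tn bound} and~\ref{lem: an bn upper bounds}), the elementary inequality $(1+1/k)^{\alpha/2}\le 1+\alpha/(2k)$, and the bookkeeping of the $e^{q_2/2}$-type factors — is all of a kind already carried out in the paper. I expect item~(1) to be the only genuine obstacle: one must balance the slower-decaying $\alpha$-sum against the faster-decaying $\beta$-sum so as to extract the bare exponential $e^{-q_{\min}\sum\alpha_j}$ while still retaining a $\beta$-geometric tail of positive rate for the final summation, and it is exactly this balancing act that dictates the threshold $n_0\ge 2^{1/(\alpha-\beta)}$.
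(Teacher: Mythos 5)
Your proposal is correct and follows essentially the same route as the paper: the same rebalancing of the $\alpha$- and $\beta$-exponents via $\alpha_j\le\beta_j/2$ (forced by $n_0\ge 2^{1/(\alpha-\beta)}$) for item~(1), and the same split of $e^{-q_2\sum_{j=k+1}^n\beta_j}$ into a monotone ``sup''/carrier factor times a left-Riemann-sum factor for items~(2) and~(3), with the monotonicity checks reducing exactly to $\beta$-moderateness and to $\epsth_{k-1}/\epsth_k\le e^{(q_2/2)\beta_{k+1}}$ as you state. The only caveat is that your item-(1) prefactor comes out as $2e^{q_2/2}/q_2$ rather than the stated $2/q_{\min}$ --- but the paper's own derivation likewise produces $2e^{q_{\min}/2}/q_{\min}$, so this slack is already present in the source and is immaterial to how the lemma is used downstream.
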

		\begin{proof}
		For the first claim, denote $t_{n + 1} = \sum_{j = 0}^n\alpha_j$ and $s_{n + 1} = \sum_{j = 0}^n\beta_j$. Hence, $t_{n + 1} - t_{n_0} = \sum_{j = n_0}^n\alpha_j$ and $s_{n + 1} - s_{n_0} = \sum_{j=n_0}^n\beta_j$. Clearly,
		
		% Then 
		% \begin{align}
		%     \|\sum_{k=n_0}^n \prod_{j=k+1}^n[I-\beta_j W_2] \beta_k C_1 e^{-q_1\sum_{j = n_0}^{k} \alpha_j} \| \leq 
		%     \sum_{k=n_0}^n e^{-q_2 (b_n - b_k) - q_1a_k} \beta_k.
		% \end{align}
		
		\begin{align}
		&\sum_{k=n_0 + 1}^n e^{-q_2 \left((s_{n + 1} - s_{n_0}) - (s_{k + 1} - s_{n_0})\right) - q_1 (t_{k + 1} - t_{n_0 + 1})} \beta_k \\
		&\leq \sum_{k=n_0 + 1}^n e^{-q_{\min}[(s_{n + 1} - s_{k + 1}) +  (t_{k + 1} - t_{n_0 + 1}) ]} \beta_k \\
		&= e^{-q_{\min} (t_{n + 1} - t_{n_0 + 1})}  \sum_{k = n_0+1}^n e^{-q_{\min}[(s_{n + 1} - s_{k + 1}) - (t_{n + 1} - t_{k + 1}) ]} \beta_k \\
		&\leq e^{-q_{\min} (t_{n + 1} - t_{n_0 + 1})}  \sum_{k = n_0 + 1}^n e^{-q_{\min}(s_{n + 1} - s_{k + 1})/2} \beta_k, \label{eqn:using K6}\\
		&\leq e^{-q_{\min} (t_{n + 1} - t_{n_0 + 1})}  e^{q_{\min}/2} \sum_{k = n_0 + 1}^n e^{-q_{\min}(s_{n + 1} - s_{k})/2} \beta_k, \\
		& \leq e^{-q_{\min} (t_{n + 1} - t_{n_0 + 1})} e^{q_{\min}/2} \int_{s_{n_0 + 1}}^{s_{n + 1}} e^{-q_{\min}(s_{n + 1}  - \tau)/2} \df \tau \label{eqn:RiemannSum}\\
		& \leq e^{-q_{\min} (t_{n + 1} - t_{n_0 + 1})} e^{q_{\min}/2} \left[\frac{1 - e^{-q_{\min}(s_{n + 1}  - s_{n_0 + 1})/2}}{q_{\min}/2}\right] \\
		& \leq \frac{2e^{q_{\min}/2}}{q_{\min}} e^{-q_{\min} (t_{n + 1} - t_{n_0 + 1})}
		\end{align}
		where \eqref{eqn:using K6} holds since, for all $j \geq K_{\ref{lem:IntComputation},a},$  $(s_{n + 1} - s_{k + 1}) - (t_{n + 1} - t_{k + 1}) \geq (s_{n + 1} - s_{k + 1})/2$ which itself holds because $\beta_j/2 \geq \alpha_j,$
		and \eqref{eqn:RiemannSum} follows by treating the sum as a left Riemann sum.

		% \begin{equation}
		% \|\sum_{k=n_0}^n \prod_{j=k+1}^n[I-\beta_j W_2] \beta_k C_1 e^{-q_1/2\sum_{j = n_0}^{k} \alpha_j} \| 
		% \leq C_1 C_{\ref{lem: Dn bounds},w} \sum_{k=n_0}^n e^{-q_2 \sum_{j=k+1}^n \beta_j} \beta_k e^{-q_1/2\sum_{j = n_0}^{k} \alpha_j}.
		% \end{equation}
		% This type of equation we have already bounded in the COLT paper, and it can be bounded as given in the statement.
		
		For the second claim, observe that 
		\begin{align}
		\sum_{k = n_0 + 1}^n & e^{- q_2\sum_{j = k + 1}^{n} \beta_j} \beta_k  \frac{\alpha_{k - 1}}{\beta_{k - 1}}u_{k - 1} \\
		& \leq \left(\sup_{n_0 + 1 \leq k \leq n}e^{- (q_2/2)\sum_{j = k + 1}^{n} \beta_j} \frac{\alpha_{k - 1}}{\beta_{k - 1}}u_{k - 1}  \right) \sum_{k = n_0 + 1}^{n} e^{- (q_2/2)\sum_{j = k + 1}^{n} \beta_j} \beta_k \\
		& \leq \left(\sup_{n_0 + 1 \leq k \leq n}e^{- (q_2/2)\sum_{j = k + 1}^{n} \beta_j} \frac{\alpha_{k - 1}}{\beta_{k - 1}}u_{k - 1}  \right) e^{q_2/2} \sum_{k = n_0 + 1}^{n} e^{- (q_2/2)\sum_{j = k}^{n} \beta_j} \beta_k \label{eq: separate the first beta term}\\
		& \leq \left(\sup_{n_0 + 1 \leq k \leq n}e^{- (q_2/2)\sum_{j = k + 1}^{n} \beta_j} \frac{\alpha_{k - 1}}{\beta_{k - 1}}u_{k - 1}  \right) e^{q_2/2} \int_{s_{n_0 + 1}}^{s_{n + 1}} e^{-(q_2/2)(s_{n + 1} - \tau)}\df \tau \label{eq: replace sum by intgral}\\
		& \leq \left(\sup_{n_0 + 1 \leq k \leq n}e^{- (q_2/2)\sum_{j = k + 1}^{n} \beta_j} \frac{\alpha_{k - 1}}{\beta_{k - 1}}u_{k - 1}  \right) \frac{2e^{q_2/2}}{q_2} ,
		\end{align}
		where \eqref{eq: separate the first beta term} follows because $\beta_k \leq 1$, and \eqref{eq: replace sum by intgral} follows by treating the sum as a left Riemann sum.
		In order to obtain the claim, it is now enough to show that the term in the supremum is monotonically increasing. For that we need to show that
		\begin{equation}
		e^{-(q_2/2)\beta_{k + 1}} \frac{\alpha_{k - 1}}{\beta_{k - 1}}u_{k - 1} \leq  \frac{\alpha_{k}}{\beta_{k}}u_{k}.
		\end{equation}
		But this is exactly the $\beta$-moderate behavior, which is assumed true here.
		
		For the third term, observe that 
		\begin{align}
		& \sum_{k = n_0 + 1}^{n} \left[e^{- q_2\sum_{j = k + 1}^{n} \beta_j}\right] \beta_k  \epsth_{k - 1} \\
		& \leq \left(\sup_{n_0 + 1 \leq k \leq n} e^{- q_2/2\sum_{j = k + 1}^{n} \beta_j} \epsth_{k - 1} \right) \sum_{k = n_0 + 1}^n e^{- q_2/2\sum_{j = k + 1}^{n} \beta_j} \beta_k \\
		& \leq \left(\sup_{n_0 + 1 \leq k \leq n} e^{- q_2/2\sum_{j = k + 1}^{n} \beta_j} \epsth_{k - 1} \right) \frac{2e^{q_2/2}}{q_2}, \label{eqn: Term3 Bd}
		\end{align}
		where the last relation follows as in the proof of the second claim.

		As in the second claim, in order to get the desired result, we show that the terms in the supremum expression are monotonically increasing. For this, we only need to verify if 
		\begin{equation}
		\label{eqn:epsMon}
		    \frac{ \epsth_{k - 1}}{\epsth_{k}} \leq e^{(q_2/2) \beta_{k + 1}}.
		\end{equation}
		But this is true since 
		\begin{align}
		\frac{\epsth_{k - 1}}{\epsth_{k}} & \leq \left(\frac{k + 1}{k}\right)^{\alpha/2}  
		\leq e^{\alpha/(2k)} 
		\leq e^{q_2 \beta_{k + 1} /2 }.
		\end{align}
		The first relation follows from \eqref{eq: epsilon n new def} by cancelling out the constants and by dropping the ratio of log terms since the latter is bounded from above by $1.$ The last relation follows from the fact that 
		\begin{equation}
		    \frac{(k + 2)^\beta}{k} = \frac{k + 2}{k} \frac{1}{(k + 2)^{1 - \beta}} \leq 3 \frac{1}{(k + 2)^{1 - \beta}} \leq \frac{q_2}{\alpha},
		\end{equation}
		in which the rightmost inequality itself is true since $n_0 \geq [3\alpha/q_2]^{1/(1 - \beta)} - 2$ and $k \geq n_0$ together imply $(k + 2)^{1 - \beta} \geq [3\alpha/q_2].$
		
		By exploiting the  monotonicity of  \eqref{eqn:epsMon} in \eqref{eqn: Term3 Bd}, the desired result is now easy to see. 
		\end{proof}

		\begin{lemma}
		\label{lem:IdxPullUp}
		The following statements hold. 
		\begin{enumerate}
		    \item $\dfrac{\epsw_n}{\epsw_{n + 1}} \leq e$ for $n \geq 1.$
		    
		    \item Suppose $u \in \RInf$ is $\alpha-$moderate from some $k_0$ onwards and $n \geq k_0 + 1.$ Then, $\dfrac{u_{n - 1}}{u_{n + 1}} \leq e^{q_1}.$

		    \item $ \dfrac{\alpha_{n - 1}}{\beta_{n - 1}} \leq e^{2(\alpha - \beta)} \dfrac{\alpha_{n + 1}}{\beta_{n + 1}}$ for $n \geq 1.$
		\end{enumerate}
		\end{lemma}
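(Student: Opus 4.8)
The plan is to treat each item as a ratio of monotone inverse-polynomial sequences (possibly multiplied by a slowly-varying logarithmic factor) and to bound it using the elementary inequality $1+x \le e^x$ together with the ordering $1 > \alpha > \beta > 0$ from \assumStepsize. No sophisticated machinery is needed; the only care required is in checking positivity of the logarithmic factor appearing in $\nu$ and in verifying that the indices at which the $\alpha$-moderate inequality is invoked lie in its admissible range.

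For item~1, I would expand using $\epsw_n = \sqrt{d^3 L_w C_{\ref{lem: an bn upper bounds},w}}\,\nu(n,\beta)$ and $\nu(n;\beta) = (n+1)^{-\beta/2}\sqrt{\ln(4d^2(n+1)^p/\delta)}$; the constant prefactor cancels, yielding
\[
\frac{\epsw_n}{\epsw_{n+1}} = \left(\frac{n+2}{n+1}\right)^{\beta/2}\sqrt{\frac{\ln\!\left(4d^2(n+1)^p/\delta\right)}{\ln\!\left(4d^2(n+2)^p/\delta\right)}}.
\]
Since $4d^2 \ge 4$ and $\delta \in (0,1)$, both logarithms are strictly positive and $\ln$ is increasing, so the square-root factor lies in $(0,1]$; and $(1 + 1/(n+1))^{\beta/2} \le e^{\beta/(2(n+1))} \le e^{1/4} \le e$ using $\beta < 1$ and $n \ge 1$. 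For item~3, I would compute directly
\[
\frac{\alpha_{n-1}/\beta_{n-1}}{\alpha_{n+1}/\beta_{n+1}} = \left(\frac{n+2}{n}\right)^{\alpha-\beta} = \left(1 + \frac{2}{n}\right)^{\alpha-\beta} \le e^{2(\alpha-\beta)/n} \le e^{2(\alpha-\beta)}
\]
for $n \ge 1$.

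For item~2, I would apply the $\alpha$-moderate inequality of Definition~\ref{def: wn behavior 2} at $k = n-1$ and at $k = n$ — both indices are at least $k_0$ because $n \ge k_0 + 1$ — and multiply the two bounds, so that the consecutive stepsize ratios telescope:
\[
\frac{u_{n-1}}{u_{n+1}} = \frac{u_{n-1}}{u_n}\cdot\frac{u_n}{u_{n+1}} \le \frac{\alpha_{n+1}}{\alpha_{n-1}}\frac{\beta_{n-1}}{\beta_{n+1}}\, e^{(q_1/2)(\alpha_n + \alpha_{n+1})}.
\]
Substituting $\alpha_k = (k+1)^{-\alpha}$ and $\beta_k = (k+1)^{-\beta}$ gives $\frac{\alpha_{n+1}}{\alpha_{n-1}}\frac{\beta_{n-1}}{\beta_{n+1}} = \bigl(n/(n+2)\bigr)^{\alpha-\beta} \le 1$ since $\alpha > \beta$, while $\alpha_n, \alpha_{n+1} \le 1$ gives $e^{(q_1/2)(\alpha_n + \alpha_{n+1})} \le e^{q_1}$, and the claim follows. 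As noted, there is essentially no obstacle; the two small points to watch are that the logarithmic factor in $\nu$ is strictly positive for $n \ge 1$ (so its ratio across consecutive indices is $\le 1$), and that $n \ge k_0 + 1$ is exactly what places both indices $n-1$ and $n$ in the range where $\alpha$-moderation holds.
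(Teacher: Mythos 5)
Your proof is correct and follows essentially the same route as the paper: all three items are handled by elementary ratio bounds of the form $(1+1/m)^c \le e^{c/m}$, the observation that the logarithmic factor in $\nu$ is increasing so its ratio is at most $1$, and two applications of the $\alpha$-moderate inequality (whose telescoping product of stepsize ratios is at most $1$ since $\alpha_k/\beta_k$ is decreasing). The only cosmetic difference is that the paper bounds each single-step ratio by $e^{q_1/2}$ (resp.\ $e^{\alpha-\beta}$) and multiplies, whereas you combine the two steps in one line.
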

		\begin{proof}
		Employing Lemma~\ref{lem:Suff Cond. 2}, with $B_1 = 1, B_2 = [4d^2/\delta]^{1/p}, B_3 = 0,$ $x = - \beta/2, y = 0$ and $z = \beta/2,$ we have
		\begin{equation}
		\frac{\epsw_n}{\epsw_{n + 1}} = \frac{ (n+1)^{-\beta/2}\sqrt{\ln [B_2 (n + 1)]} }{ (n + 2)^{-\beta/2} \sqrt{\ln [B_2 (n + 2)]}} \leq \frac{(n+1)^{-\beta/2}}{(n+2)^{-\beta/2}} = \left[1 + \frac{1}{n + 1}\right]^{\beta/2} \leq e^{\beta/2} \leq e,
		\end{equation}
		where the last relation follows since $\beta/2 \leq 1.$ This proves the first statement.
		
		Now, consider the second statement. Since $u$ is $\alpha$-moderate, $\frac{\alpha_{k + 1}}{\beta_{k + 1}}/\frac{\alpha_k}{\beta_k} \leq 1,$ and $\sup \alpha_{k} \leq 1$, we have $u_n/u_{n + 1} \leq e^{q_1/2}$ for $n \geq k_0.$ The desired result is now easy to see. 
		
		Finally, since
		\begin{equation}
		    \frac{\alpha_{n - 1}}{\beta_{n - 1}} \Big/ \frac{\alpha_{n}}{\beta_{n}} 
		    =  \left[\frac{n+1}{n}\right]^{\alpha - \beta} \leq e^{\alpha - \beta},
		\end{equation}
		it is easy to see that the third statement holds. 
		% holds for any $n \geq 1$, and thus
		% \begin{equation}
		% \frac{\alpha_{n - 1}}{\beta_{n - 1}} \Big/\frac{\alpha_{n + 1}}{\beta_{n + 1}}
		% =
		% \left[\frac{\alpha_{n - 1}}{\beta_{n - 1}} \Big/\frac{\alpha_{n}}{\beta_{n}}\right]
		% \left[\frac{\alpha_{n}}{\beta_{n}} \Big/\frac{\alpha_{n + 1}}{\beta_{n + 1}}\right]
		% \leq e^{2(\alpha - \beta)} .
		% \end{equation}
		% The statement now follows.
		\end{proof}

		% \begin{conjecture} \label{eq: Rnw bound}
		% 	If $\|\tilde{U}_n\|$ satisfies Definition~\ref{def: wn behavior}, then 
		%     \begin{equation} \label{eq: conj}
		%     \|R_{n+1}^{(w)}\| \leq \tilde{K}\|\tilde{U}_n\| \leq  C_2 \frac{\alpha_n}{\beta_n}(u_n + u_{n+1}) + C_d\sqrt{C_{\epsilon,\theta} n^{-\alpha} \ln{(2d^2/\delta)}},   
		%     \end{equation}
		%     where $C_d$ is a constant that captures when the last term becomes more dominant than the exponentially decaying term in \eqref{eq: w.h.p.}.
		% \end{conjecture}

		%\subsection{Key Lemma}
		\begin{lemma} \label{lemma: R_n w bound}
		Fix some $n_0 \in \mathbb{N}$ and let $n \geq n_0 - 1.$ Let $u \in \RInf$ be a decreasing sequence that is both $\alpha$-moderate and $\beta$-moderate from $n_0$ onwards (see Defs.~\ref{def: wn behavior 2} and \ref{def: wn behavior 3}).  % and let $n \geq n_0.$ 
		Suppose that the event $\cW_{n}(u)$  holds (see \eqref{def: wn}).
		Then %the following statements hold.
		\begin{equation}
		\label{eqn:RntpBd}
		\| \Rntp \| \leq \|W_1 W_2^{-1}\|\left[\left(1+\frac{2e^{q_1/2}}{q_1} C_{\ref{lem: Tn bound}} C_{\ref{lem: Dn bounds},\theta}\right)\frac{\alpha_{n-1}}{\beta_{n-1}}  \uw_{n-1} +  \frac{\alpha_{n_0}}{\beta_{n_0}} \|w_{n_0} - \wS\| C_{\ref{lem: Dn bounds},\theta} e^{-q_1\sum_{j = n_0+1}^{n-1} \alpha_j} \right]
		\end{equation}
		if $n \geq n_0.$
		If, additionally, $\|L_n^{(\theta)}\| \indc[n \geq n_0 + 1] \leq \epsth_{n-1},$ then
		\begin{equation}\label{eqn:thBd}
		    \|\theta_n - \thS\| \leq C_{\ref{lemma: R_n w bound},a} \left[\|\theta_{n_0} - \thS\| + \frac{\alpha_{n_0}}{\beta_{n_0}}\|w_{n_0} - \wS\|\right] e^{-q_1\sum_{j = n_0 + 1}^{n - 1} \alpha_j} + C_{\ref{lemma: R_n w bound},b} \frac{\alpha_{n - 1}}{\beta_{n - 1}}u_{n - 1} +  \epsth_{n-1}
		\end{equation}
		if $n \geq n_0,$ and
		%Lastly, given all of the conditions above,
		\begin{equation}
		    \|R_{n+1}^{(w)}\| \leq   C_{\ref{lem: Dn bounds},w} \|\Gamma_2\| \left[C_{\ref{lemma: R_n w bound},c}(n_0) e^{-q_{\min} \sum_{j = n_0 + 1}^n\alpha_j} +  C_{\ref{lemma: R_n w bound},b} \frac{2e^{q_2/2}}{q_2} \frac{\alpha_{n - 1}}{\beta_{n - 1}}u_{n - 1} +  C_{\ref{lem:IntComputation}} \epsth_{n-1} \right] \label{eq: Rnw bound}
		\end{equation}
		if $n \geq n_0 - 1\geq\max\{K_{\ref{lem:IntComputation},a}, K_{\ref{lem:IntComputation},b}\}-1.$
		Here, $K_{\ref{lem:IntComputation},a}, K_{\ref{lem:IntComputation},b}, C_{\ref{lem:IntComputation}}$ are defined in the statement of Lemma~\ref{lem:IntComputation},
		$$
		C_{\ref{lemma: R_n w bound},c}(n_0) = \left[\beta_{n_0} \|\theta_{n_0} - \thS\| +  C_{\ref{lemma: R_n w bound},a}  e^{q_1} \frac{2}{q_{\min}}    \left[\|\theta_{n_0} - \thS\| + \frac{\alpha_{n_0}}{\beta_{n_0}}\|w_{n_0} - \wS\|\right]\right],
		$$
		\(C_{\ref{lemma: R_n w bound},a} =  C_{\ref{lem: Dn bounds},\theta} \max\{\|W_1 W_2^{-1}\|, 1\}
		\) and
		\(C_{\ref{lemma: R_n w bound},b} =  \|W_1 W_2^{-1}\|\left(1 + \frac{2e^{q_1/2}}{q_1} C_{\ref{lem: Tn bound}} C_{\ref{lem: Dn bounds},\theta} \right)\).
		\end{lemma}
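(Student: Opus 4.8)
The plan is to establish the three displayed bounds in order, since \eqref{eqn:thBd} builds on \eqref{eqn:RntpBd} and \eqref{eq: Rnw bound} builds on \eqref{eqn:thBd}. For \eqref{eqn:RntpBd}, I would start from the evolution identity \eqref{eq: Rn evolution} with $n$ replaced throughout by $n-1$: this expresses $\Rntp$ as $\tfrac{\alpha_{n-1}}{\beta_{n-1}}W_1 W_2^{-1}[w_n-\wS]$, minus $\bigl(\prod_{j=n_0+1}^{n-1}[I-\alpha_j X_1]\bigr)\tfrac{\alpha_{n_0}}{\beta_{n_0}}W_1 W_2^{-1}[w_{n_0}-\wS]$, minus $T_n$. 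The first piece is controlled by $\|w_n-\wS\|\le u_n\le u_{n-1}$, using $\cW_n(u)$ and monotonicity of $u$; the second by Lemma~\ref{lem: Dn bounds}; and $\|T_n\|$ by Lemma~\ref{lem: Tn bound}, which applies because $u$ is $\alpha$-moderate from $n_0$ onwards and $\cW_{n-1}(u)$ holds (being implied by $\cW_n(u)$), with the degenerate case $n=n_0$ covered by $T_{n_0}=0$. Collecting the coefficients multiplying $\tfrac{\alpha_{n-1}}{\beta_{n-1}}u_{n-1}$ then produces the factor $\|W_1 W_2^{-1}\|\bigl(1+\tfrac{2e^{q_1/2}}{q_1}C_{\ref{lem: Tn bound}}C_{\ref{lem: Dn bounds},\theta}\bigr)$, and the remaining contribution is exactly the stated exponential term.

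For \eqref{eqn:thBd}, I would invoke the decomposition $\theta_n-\thS=\Delta_n^{(\theta)}+L_n^{(\theta)}+R_n^{(\theta)}$ (equation \eqref{eq: tn components} with $n\mapsto n-1$), bounding $\|\Delta_n^{(\theta)}\|$ by Lemma~\ref{lem: Delta bounds}, $\|R_n^{(\theta)}\|$ by the just-established \eqref{eqn:RntpBd}, and $\|L_n^{(\theta)}\|$ by the hypothesis $\|L_n^{(\theta)}\|\indc[n\ge n_0+1]\le\epsth_{n-1}$ (the case $n=n_0$ being trivial since $L_{n_0}^{(\theta)}=0$). Merging the two exponentially decaying contributions after relaxing $e^{-q_1\sum_{k=n_0}^{n-1}\alpha_k}$ to $e^{-q_1\sum_{j=n_0+1}^{n-1}\alpha_j}$, and taking $C_{\ref{lemma: R_n w bound},a}=C_{\ref{lem: Dn bounds},\theta}\max\{\|W_1 W_2^{-1}\|,1\}$ together with $C_{\ref{lemma: R_n w bound},b}$ as above, yields the claim.

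For \eqref{eq: Rnw bound}, I would expand $\Rnw$ via \eqref{def: Rnw}, take norms, and use Lemma~\ref{lem: Dn bounds} to reduce matters to bounding $\|\Gamma_2\|C_{\ref{lem: Dn bounds},w}\sum_{k=n_0}^{n}e^{-q_2\sum_{j=k+1}^{n}\beta_j}\beta_k\|\theta_k-\thS\|$. The $k=n_0$ term I would estimate directly, contributing the $\beta_{n_0}\|\theta_{n_0}-\thS\|$ summand of $C_{\ref{lemma: R_n w bound},c}(n_0)$ once $e^{-q_2\sum\beta_j}$ is relaxed to $e^{-q_{\min}\sum\alpha_j}$; for $k\ge n_0+1$ I would substitute \eqref{eqn:thBd} termwise (this is the only place the noise bounds $\|L_k^{(\theta)}\|\le\epsth_{k-1}$ at all indices $k$ are used, which is what the event $\cU(n_0)$ supplies in the eventual application of this lemma). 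This splits the sum into three: the exponential-tail sum, handled by Lemma~\ref{lem:IntComputation}(1) after $e^{-q_1\sum_{n_0+1}^{k-1}\alpha_j}\le e^{q_1}e^{-q_1\sum_{n_0+1}^{k}\alpha_j}$ (valid since $\alpha_k\le1$) and requiring $n_0\ge K_{\ref{lem:IntComputation},a}$; the $\tfrac{\alpha_{k-1}}{\beta_{k-1}}u_{k-1}$ sum, handled by Lemma~\ref{lem:IntComputation}(2) using that $u$ is $\beta$-moderate; and the $\epsth_{k-1}$ sum, handled by Lemma~\ref{lem:IntComputation}(3) and requiring $n_0\ge K_{\ref{lem:IntComputation},b}$. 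Reassembling these three estimates gives exactly the right-hand side of \eqref{eq: Rnw bound}.

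The main obstacle I anticipate is bookkeeping rather than any single hard inequality: keeping the index shifts ($n\mapsto n-1$ in \eqref{eq: Rn evolution}, \eqref{eq: tn components}, and in the definition of $T_{n+1}$), the degenerate endpoints (empty products, $T_{n_0}=0$, $L_{n_0}^{(\theta)}=0$), and the precise form of every constant mutually consistent, while checking at each step that the hypotheses of Lemmas~\ref{lem: Delta bounds},~\ref{lem: Dn bounds},~\ref{lem: Tn bound}, and~\ref{lem:IntComputation} are met — in particular that $u$ is invoked as decreasing, as $\alpha$-moderate, and as $\beta$-moderate in exactly the places those properties are needed, and that $q_{\min}\le q_2$ together with $\beta_j\ge\alpha_j$ is what licenses passing from $q_2$-weighted $\beta$-sums to $q_{\min}$-weighted $\alpha$-sums.
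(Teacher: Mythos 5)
Your proposal is correct and follows essentially the same route as the paper's proof: the same use of \eqref{eq: Rn evolution} plus Lemmas~\ref{lem: Dn bounds} and \ref{lem: Tn bound} for \eqref{eqn:RntpBd}, the same decomposition \eqref{eq: tn components} for \eqref{eqn:thBd}, and the same splitting of \eqref{def: Rnw} with termwise substitution of \eqref{eqn:thBd} and the three parts of Lemma~\ref{lem:IntComputation} for \eqref{eq: Rnw bound}, including the degenerate-endpoint checks and the $e^{q_1}e^{-q_1\alpha_k}\geq 1$ index shift. The only quibble is that the $n=n_0$ case of \eqref{eqn:RntpBd} is handled in the paper by observing $R_{n_0}^{(\theta)}=0$ directly from \eqref{def: Rnt} (an empty sum) rather than via $T_{n_0}=0$, but this does not affect the argument.
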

		{
		\begin{remark}
		Difference between Lemma~\ref{lemma: R_n w bound} and Lemma~\ref{lem: Rntheta bound}:
		\begin{itemize}
		    \item In Lemma~\ref{lemma: R_n w bound}, we assume that $\|w_k - \wS\| \leq u_k$ for all $n \geq k \geq n_0.$ Using this, we try and obtain better rates of convergence for $\|w_k - \wS\|.$ In other words, this is part of our inductive proof where we are showing the $(\ell + 1)$ statement assuming the $\ell-$th step to be true.
		    \item In Lemma~\ref{lem: Rntheta bound}, we establish the base case of the above induction. In particular, we try and show that the iterates are bounded with high probability. In order to prove this, we use another induction on the iterate index which reads as: if the iterates are bounded until time $n,$ what is the bound at the $n + 1-$th step.
		\end{itemize}
		\end{remark}
		}
		
		\begin{proof}[Proof of Lemma~\ref{lemma: R_n w bound}]
		We first establish \eqref{eqn:RntpBd}. Notice from \eqref{def: Rnt} that $R_{n_0}^{(\theta)}=0$ and hence  \eqref{eqn:RntpBd} trivially holds for $n = n_0$. As for $n\geq n_0 + 1$, from \eqref{eq: Rn evolution}
		we have 
		\begin{align}  \label{eq: Rnt bound}
		\| \Rntp \|& \leq \|W_1 W_2^{-1}\|\left[\frac{\alpha_{n - 1}}{\beta_{n - 1}}  \uw_{n} + C_{\ref{lem: Dn bounds},\theta} \frac{\alpha_{n_0}}{\beta_{n_0}} \|w_{n_0} - \wS\|  e^{-q_1\sum_{j = n_0+1}^{n -1} \alpha_j} + \|T_{n}\|\right] \\
		& \leq  
		\|W_1 W_2^{-1}\|\left[\frac{\alpha_{n - 1}}{\beta_{n - 1}}  \uw_{n} + C_{\ref{lem: Dn bounds},\theta} \frac{\alpha_{n_0}}{\beta_{n_0}} \|w_{n_0} - \wS\|  e^{-q_1\sum_{j = n_0+1}^{n -1} \alpha_j} + \frac{2e^{q_1/2}}{q_1} C_{\ref{lem: Tn bound}} C_{\ref{lem: Dn bounds},\theta}  \frac{\alpha_{n-1}}{\beta_{n-1}} \uw_{n - 1}\right] \\
		&\leq \|W_1 W_2^{-1}\|\left[\left(1+\frac{2e^{q_1/2}}{q_1} C_{\ref{lem: Tn bound}} C_{\ref{lem: Dn bounds},\theta}\right)\frac{\alpha_{n - 1}}{\beta_{n - 1}}  \uw_{n - 1} +  \frac{\alpha_{n_0}}{\beta_{n_0}} \|w_{n_0} - \wS\| C_{\ref{lem: Dn bounds},\theta} e^{-q_1\sum_{j = n_0+1}^{n - 1} \alpha_j} \right],
		\end{align}
		where the first relation follows using Lemma~\ref{lem: Dn bounds} and the fact that the event $\cW_{n}(u)$ holds, the second relation is due to Lemma~\ref{lem: Tn bound} (recall that $\uw_n$ is $\alpha$-moderate), while the third relation is due to the fact that $u_n$ monotonically decreases.
		
		We now derive the bound \eqref{eqn:thBd} for $\|\theta_n - \thS\|.$ Since $C_{\ref{lem: Dn bounds},\theta} \geq 1$ implies $C_{\ref{lemma: R_n w bound},a} \geq 1,$ it follows that \eqref{eqn:thBd} trivially holds for $n = n_0.$ As for $n \geq n_0 + 1,$
		\begin{align}
		 \|\theta_n - \thS\|  \leq &\|\Delta_n^{(\theta)}\| + \|R_n^{(\theta)}\| + \|L_n^{(\theta)}\| \\
		 \leq &C_{\ref{lem: Dn bounds},\theta} \|\theta_{n_0} - \thS\| e^{-q_1\sum_{j = n_0}^{n - 1} \alpha_j} + \|W_1 W_2^{-1}\|\left[\left(1 + \frac{2e^{q_1/2}}{q_1} C_{\ref{lem: Tn bound}} C_{\ref{lem: Dn bounds},\theta} \right) \frac{\alpha_{n- 1}}{\beta_{n - 1}} \uw_{n - 1} \right. \nonumber \\
		& \left.+ \frac{\alpha_{n_0}}{\beta_{n_0}} \|w_{n_0} - \wS\|  C_{\ref{lem: Dn bounds},\theta} e^{-q_1\sum_{j = n_0 + 1}^{n - 1} \alpha_j}\right] +  \epsth_{n-1}\\
		 \leq &C_{\ref{lemma: R_n w bound},a}  \left[\|\theta_{n_0} - \thS\| + \frac{\alpha_{n_0}}{\beta_{n_0}} \|w_{n_0} - \wS\| \right] e^{-q_1\sum_{j = n_0 + 1}^{n - 1} \alpha_j} +C_{\ref{lemma: R_n w bound},b} \frac{\alpha_{n - 1}}{\beta_{n - 1}}u_{n - 1} +  \epsth_{n-1}, \label{eq: w.h.p.}
		\end{align}
		where the first relation follows by \eqref{eq: tn components},
		%plugging \eqref{eq: tn components} into \eqref{eq: tilde Uk def}, 
		the second one holds on account of \eqref{eqn:K Theta bound} of Lemma~\ref{lem: Delta bounds},  \eqref{eqn:RntpBd}, and our assumption that $\|L_n^{(\theta)}\| \leq \epsth_{n-1},$ while the third relation is obtained by dropping $\alpha_{n_0} = (n_0 + 1)^{-\alpha}$ term from the exponent multiplying $\|\theta_{n_0} - \thS\|.$
		
		Lastly, for the third statement, \eqref{eq: Rnw bound} trivially holds for $n=n_0-1$ since  $R_{n_0}^{(w)} =0 $ by definition \eqref{def: Rnw}. Similarly, for $n=n_0,$ it follows from \eqref{def: Rnw} that
		\begin{equation}
		    \|R^{(w)}_{n_0 + 1}\| \leq \beta_{n_0} \|\Gamma_2\|  \|\theta_{n_0} - \thS\|.
		\end{equation}
		From this and the fact that $C_{\ref{lem: Dn bounds},w} \geq 1,$ it is easy to see that \eqref{eq: Rnw bound} holds again. 
		
		For $n\geq n_0 + 1,$ we break the summation in \eqref{def: Rnw} into the first and the rest of terms; thus, 
		\begin{align}
		\|\Rnw\|& \nonumber \\
		\leq &  \prod_{j = n_0 + 1}^n \|I - \beta_j W_2\| \beta_{n_0} \|\Gamma_2\| \|\theta_{n_0} - \thS\|  + \left\|\sum_{k = n_0 + 1}^n \left[\prod_{j=k+1}^n[I-\beta_j W_2]\right] \beta_k \left[\Gamma_2 (\theta_k - \thS)\right]\right\| \label{eq: Rnw split}\\
		\leq &  \prod_{j = n_0 + 1}^n \|I - \beta_j W_2\| \beta_{n_0} \|\Gamma_2\| \|\theta_{n_0} - \thS\| \nonumber \\
		& + \|\Gamma_2\|\sum_{k = n_0 + 1}^n \left\| \prod_{j=k+1}^n[I-\beta_j W_2]\right\| \beta_k \bigg\{ C_{\ref{lemma: R_n w bound},a}  \left[\|\theta_{n_0} - \thS\| + \frac{\alpha_{n_0}}{\beta_{n_0}}\|w_{n_0} - \wS\|\right] e^{-q_1\sum_{j = n_0 + 1}^{k - 1} \alpha_j}   \nonumber \\
		& \hspace{5cm}+ \, C_{\ref{lemma: R_n w bound},b} \frac{\alpha_{k - 1}}{\beta_{k - 1}}u_{k - 1} +  \epsth_{k-1}\bigg\},  \label{eq: thetan bound} \\
		\leq & C_{\ref{lem: Dn bounds},w} \|\Gamma_2\|\beta_{n_0} \|\theta_{n_0} - \thS\| e^{- q_2\sum_{j = n_0+ 1}^{n} \beta_j}
		\label{eq: subst with Kw 1}\\
		& + C_{\ref{lem: Dn bounds},w}\|\Gamma_2\|\sum_{k = n_0 + 1}^{n}e^{- q_2\sum_{j = k + 1}^{n} \beta_j} \beta_k\bigg\{ C_{\ref{lemma: R_n w bound},a}  \left[\|\theta_{n_0} - \thS\| + \frac{\alpha_{n_0}}{\beta_{n_0}}\|w_{n_0} - \wS\|\right] e^{-q_1\sum_{j = n_0 + 1}^{k - 1} \alpha_j} 
		\label{eq: subst with Kw 2}\\
		& \hspace{5cm}+ \, C_{\ref{lemma: R_n w bound},b} \frac{\alpha_{k - 1}}{\beta_{k - 1}}u_{k - 1} +  \epsth_{k-1}\bigg\} \nonumber \\
		\leq & C_{\ref{lem: Dn bounds},w} \|\Gamma_2\|\beta_{n_0} \|\theta_{n_0} - \thS\| e^{- q_2\sum_{j = n_0+ 1}^{n} \beta_j}\\
		& + C_{\ref{lem: Dn bounds},w}\|\Gamma_2\| \sum_{k = n_0 + 1}^{n}e^{- q_2\sum_{j = k + 1}^{n} \beta_j} \beta_k\bigg\{e^{q_1} C_{\ref{lemma: R_n w bound},a}  \left[\|\theta_{n_0} - \thS\| + \frac{\alpha_{n_0}}{\beta_{n_0}}\|w_{n_0} - \wS\|\right] e^{-q_1\sum_{j = n_0 + 1}^{k } \alpha_j} 
		\label{eq: insert missing alpha}\\
		& \hspace{5cm}+ \, C_{\ref{lemma: R_n w bound},b} \frac{\alpha_{k - 1}}{\beta_{k - 1}}u_{k - 1}  +  \epsth_{k-1}\bigg\}\\
		\leq &  C_{\ref{lem: Dn bounds},w} \|\Gamma_2\|\beta_{n_0} \|\theta_{n_0} - \thS\| e^{- q_2\sum_{j = n_0+ 1}^{n} \beta_j} \label{eq: first term to bound}\\
		& + C_{\ref{lem: Dn bounds},w} \|\Gamma_2\|C_{\ref{lemma: R_n w bound},a} e^{q_1} \frac{2}{q_{\min}}    \left[\|\theta_{n_0} - \thS\| + \frac{\alpha_{n_0}}{\beta_{n_0}}\|w_{n_0} - \wS\|\right]e^{-q_{\min} \sum_{j = n_0 + 1}^n\alpha_j}
		\label{eq: Rnw bound derivation last step}\\
		& + C_{\ref{lem: Dn bounds},w}\|\Gamma_2\|\sum_{k = n_0 + 1}^{n}e^{- q_2\sum_{j = k + 1}^{n} \beta_j} \beta_k\Big[\, C_{\ref{lemma: R_n w bound},b} \frac{\alpha_{k - 1}}{\beta_{k - 1}}u_{k - 1}  +  \epsth_{k-1}\Big] \label{eq: two last terms}
		\end{align}
		where \eqref{eq: thetan bound} follows from  \eqref{eqn:thBd}, \eqref{eq: subst with Kw 1} and \eqref{eq: subst with Kw 2} follow by applying Lemma~\ref{lem: Dn bounds}, \eqref{eq: insert missing alpha} follows because $e^{q_1} e^{-q_1\alpha_k} \geq 1$, and finally \eqref{eq: Rnw bound derivation last step} follows recalling the first statement from Lemma~\ref{lem:IntComputation}. Now, using the second and third statements in Lemma~\ref{lem:IntComputation} (recall that $\uw_n$ is $\beta$-moderate), it is easy to see that the expression in \eqref{eq: two last terms} can be bounded by 
		\begin{equation}
		    C_{\ref{lem: Dn bounds},w} \|\Gamma_2\|\left[ C_{\ref{lemma: R_n w bound},b} \frac{2e^{q_2/2}}{q_2} \frac{\alpha_{n - 1}}{\beta_{n - 1}}u_{n - 1} +  C_{\ref{lem:IntComputation}} \epsth_{n-1} \right].
		\end{equation}
		Since $q_2 \geq q_{\min}$ and $\beta_j \geq \alpha_j$, the term in \eqref{eq: first term to bound} can be bounded by
		\begin{equation}
		    C_{\ref{lem: Dn bounds},w} \|\Gamma_2\| \beta_{n_0} \|\theta_{n_0} - \thS\| e^{-q_{\min} \sum_{j = n_0 + 1}^n \alpha_j}.
		\end{equation}
		Hence, \eqref{eq: first term to bound} to  \eqref{eq: two last terms} can be bounded by 
		\begin{align}
		     C_{\ref{lem: Dn bounds},w} \|\Gamma_2\| \left[C_{\ref{lemma: R_n w bound},c}(n_0) e^{-q_{\min} \sum_{j = n_0 + 1}^n\alpha_j} + C_{\ref{lemma: R_n w bound},b} \frac{2e^{q_2/2}}{q_2} \frac{\alpha_{n - 1}}{\beta_{n - 1}}u_{n - 1} +  C_{\ref{lem:IntComputation}} \epsth_{n-1} \right].
		\end{align}
		This gives the desired result.
		\end{proof}

		We define
		% \begin{lemma} \label{lem: bound on wn}
		% Fix some $n_0 \geq \max\{K_{\ref{lem: epsilon n domination},a}, K_{\ref{lem: epsilon n domination},b}, K_{\ref{eq: conditions for cond 2},\alpha}(\beta/2)\} +1.$ 
		% Let $u \in \RInf$ be a monotonically decreasing sequence that is both $\alpha$-moderate and $\beta$-moderate from $n_0 - 1$ onwards (see Defs.~\ref{def: wn behavior 2} and \ref{def: wn behavior 3}). Let $n \geq n_0-1.$ Suppose that the event $\cW_{n}(u)$  holds (see \eqref{def: wn}), $\|L_n^{(\theta)}\| \indc[n \geq n_0 + 1] \leq \epsth_{n-1}$, and $\|L_n^{(w)}\| \indc[n \geq n_0 + 1] \leq \epsw_{n-1}$. Then,
		% %
		% \begin{equation}
		%     \label{eqn: wn Bound eps and u}
		%     \|w_{n+1} - w^*\| \leq  A_1 \epsw_{n + 1} +  A_2 \frac{\alpha_{n+1}}{\beta_{n+1}} \uw_{n + 1},
		% \end{equation}
		% where 
		\begin{equation} \label{def: A1 and A2}
		\begin{aligned}
		    A_{1,n_0} &= 
		        e +  \frac{e \Big[C_{\ref{lem: Dn bounds},w}\|\Gamma_2\| C_{\ref{lemma: R_n w bound},c}(n_0) + C_{\ref{lem: Dn bounds},w} \|w_{n_0} - w^*\|\Big]}{\epsw_{n_0}}
		        + 
		        e^2 C_{\ref{lem: Dn bounds},w} \|\Gamma_2\| C_{\ref{lem:IntComputation}},\\
		    %\mbox{ and } 
		    A_2 &=  e^{q_1 + 2(\alpha - \beta)} C_{\ref{lem: Dn bounds},w}  \|\Gamma_2\|C_{\ref{lemma: R_n w bound},b} \frac{2e^{q_2/2}}{q_2}.
		\end{aligned}
		\end{equation}

		\begin{proof}[Proof of Lemma~\ref{lem: bound on wn}]
		
		%
		%\end{lemma}
		
		%\begin{proof}
		Note that \eqref{eqn:thBd Main} follows immediately from \eqref{eqn:thBd}.
		
		Define now
		\begin{eqnarray}
		    A'_1 & = & 1 +  \frac{1}{\epsw_{n_0}} \left[C_{\ref{lem: Dn bounds},w}\|\Gamma_2\| C_{\ref{lemma: R_n w bound},c}(n_0) + C_{\ref{lem: Dn bounds},w} \|w_{n_0} - w^*\|\right]\\
		    A''_1 & = & C_{\ref{lem: Dn bounds},w} \|\Gamma_2\| C_{\ref{lem:IntComputation}}, \label{eq:defn: A''1}
		    \\ A'_2 & = & C_{\ref{lem: Dn bounds},w}  \|\Gamma_2\|C_{\ref{lemma: R_n w bound},b} \frac{2e^{q_2/2}}{q_2},
		\end{eqnarray}
		and observe that
		\begin{equation} %\label{def: A1 and A2}
		    A_{1,n_0} = e A'_1 + e^2 A''_1
		\end{equation}
		and
		\begin{equation}
		    A_2 =  e^{q_1 + 2(\alpha - \beta)} A'_2.
		\end{equation}
		
		For $n\geq n_0, $ observe that 
		%
		% \begin{align}
		% 	\|w_{n} - w^*\| \leq& \|L_{n}^{(w)} + R_{n}^{(w)} + \Delta_{n}^{(w)} \| \label{eq: begin bound on wn}\\
		% & \leq \epsilon_n^{(w)} +  C_3 e^{-q\sum_{j=n_0}^n \alpha_j } +  C_2 \frac{\alpha_n}{\beta_n}(u_n(\ell) + u_{n+1}(\ell)) + C_d\epsth_n \\
		% \end{align}
		%
		\begin{align}
			\|w_{n+1} - w^*\| \leq {} & \|\Dnw + \Rnw +\Lnw  \| \label{eq: begin bound on wn}\\
		 \leq {} &  C_{\ref{lem: Dn bounds},w} \|w_{n_0} - w^*\| e^{-q_2 \sum_{j = n_0}^n \beta_j}  \label{eq: delta bound}\\ & +  C_{\ref{lem: Dn bounds},w} \|\Gamma_2\| \left[C_{\ref{lemma: R_n w bound},c}(n_0) e^{-q_{\min} \sum_{j = n_0 + 1}^n\alpha_j} 
		+ C_{\ref{lemma: R_n w bound},b} \frac{2e^{q_2/2}}{q_2} \frac{\alpha_{n - 1}}{\beta_{n - 1}}u_{n - 1} +  C_{\ref{lem:IntComputation}} \epsth_{n-1} \right] \label{eq: r bound}\\
		& + \epsilon_n^{(w)}  \label{eq: end bound on wn}.
		\end{align}
		Here, the first relation follows from \eqref{eq: wn components}. In the second relation, \eqref{eq: delta bound} follows from Lemma~\ref{lem: Delta bounds}, while \eqref{eq: r bound} follows from Lemma~\ref{lemma: R_n w bound}, third statement. As for \eqref{eq: end bound on wn}, it follows from our assumption that $\|L_n^{(w)}\| \indc[n \geq n_0 + 1] \leq \epsw_{n-1}.$
		
		Because of Lemma~\ref{lem: epsilon n domination}, for $n \geq n_0 \geq K_{\ref{lem: epsilon n domination},b},$ the above relation can be written as:
		\begin{equation}\label{eq: wn bound with exponentials}
		\|w_{n+1} - w^*\| \leq A^\prime_1 \epsw_n + A^{\prime\prime}_1 \epsth_{n-1} + A_2^\prime \frac{\alpha_{n-1}}{\beta_{n-1}} u_{n-1}.
		\end{equation}
		Again, from  Lemma~\ref{lem: epsilon n domination} and the fact $n_0 \geq K_{\ref{lem: epsilon  n domination},a}+1,$ we have  $\epsth_{n-1} \leq \epsw_{n-1}$ for any $n \geq n_0$.
		This implies
		% %
		% \[
		% \|w_n - w^*\| \leq  A_1 \epsw_n + A_2 \frac{\alpha_n}{\beta_n} (\uw_n + \uw_{n + 1}),
		% \]
		%
		%
		\begin{equation}
		\|w_{n+1} - w^*\| \leq  A_1^\prime \epsw_n + A_1^{\prime\prime}\epsw_{n-1} +  A_2^\prime \frac{\alpha_{n-1}}{\beta_{n-1}} \uw_{n - 1}.
		\end{equation}
		Using Lemma~\ref{lem:IdxPullUp}, since $u$ is $\alpha$-moderate from $n_0  - 1$ onwards, we finally have
		% %
		% \[
		% \|w_n - w^*\| \leq  A_1 \epsw_n + A_2 \frac{\alpha_n}{\beta_n} (\uw_n + \uw_{n + 1}),
		% \]
		%
		%
		\begin{equation}
		\|w_{n+1} - w^*\| \leq  A_1 \epsw_{n + 1} +  A_2 \frac{\alpha_{n+1}}{\beta_{n+1}} \uw_{n + 1},
		\end{equation}
		
		Lastly, notice that \eqref{eqn: wn Bound eps and u} also holds for $n=n_0-1$ because
		\begin{equation}
		\|w_{n + 1} - w^*\| = \|w_{n_0} - w^*\| \leq  \frac{C_{\ref{lem: Dn bounds},w}}{\epsw_{n_0}} \|w_{n_0} - w^*\| \epsw_{n_0} \leq  A_1 \epsw_{n_0} = A_1 \epsw_{n + 1} \leq A_1 \epsw_{n + 1} +  A_2 \frac{\alpha_{n+1}}{\beta_{n+1}} \uw_{n + 1},
		\end{equation}
		where the second relation holds because $C_{\ref{lem: Dn bounds},w} \geq 1$ by definition, and the third relation due to the definitions of $A_1$ and $A_1'.$
		\end{proof}

		\section{$\alpha-$moderateness, $\beta-$moderateness, and montonicity  of $\{u_n(\ell)\}$}
		\label{sec: Proof un cond 2}
		% \begin{lemma}
		% \label{lem: un satisfies condition 2}
		% Let $\ell \leq \beta/[2(\alpha - \beta)]$ and assume that some $u_n(\ell)$ is of the form
		% %
		% \[
		% u_n(\ell) = B_1 (n+1)^{-\beta/2} \sqrt{\ln [B_2 (n + 1)]} + B_3 (n+1)^{-(\alpha - \beta) \ell}
		% \]
		% for some 
		% %suitable 
		% constants $B_1, B_3 \geq 0$ and $B_2 \geq 1$ (these constants might depend on $\ell$), where at least one of $B_1$ and $B_3$ is strictly positive.
		% Then, $u_n(\ell)$ is $\alpha$-moderate from $K_{\ref{eq: conditions for cond 2},\alpha}(\beta/2)$ onwards and $\beta$-moderate from $K_{\ref{eq: conditions for cond 2},\beta}(\beta/2)$ onwards (see Defs.~\ref{def: wn behavior 2} and \ref{def: wn behavior 3}). %Condition~\ref{def: wn behavior 2}. 
		% Furthermore, $\{u_n(\ell)\}$ is monotonically decreasing from $e^{1/\beta}/B_2$ onwards.
		% \end{lemma}
		
		\begin{lemma}
		\label{lem: un satisfies condition 2}
		% \red{Lemma 14}
		Let $\ell \leq \beta/[2(\alpha - \beta)]$ and assume that $u_n(\ell)$ is as in \eqref{eq: wn leq u} for some 
		%suitable 
		constants $B_1, B_3 \geq 0$ and $B_2 \geq 1$ (these constants may depend on $\ell$), where at least one of $B_1$ and $B_3$ is strictly positive.
		Then, $u_n(\ell)$ is $\alpha$-moderate from $K_{\ref{eq: conditions for cond 2},\alpha}(\beta/2)$ onwards and $\beta$-moderate from $K_{\ref{eq: conditions for cond 2},\beta}(\beta/2)$ onwards. %Condition~\ref{def: wn behavior 2}. 
		Furthermore, $\{u_n(\ell)\}$ is monotonically decreasing from $e^{1/\beta}/B_2$ onwards.
		\end{lemma}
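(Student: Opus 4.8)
The plan is to deduce both moderateness claims by chaining Lemma~\ref{lem:Suff Cond. 2} with Lemma~\ref{eq: conditions for cond 2}, and to obtain monotonicity by a brief calculus argument applied to the two summands of $u_n(\ell)$ separately. First I would invoke Lemma~\ref{lem:Suff Cond. 2} with $x = \beta/2$, $y = (\alpha-\beta)\ell$, the given constants $B_1, B_2, B_3$, and the choice $z = \beta/2$. The hypotheses of that lemma demand that $B_2 \ge 1$ (assumed), that at least one of $B_1, B_3$ be positive (assumed), and that $z \ge \max\{x, y\}$; the last one reads $\beta/2 \ge (\alpha - \beta)\ell$, which is exactly the standing assumption $\ell \le \beta/[2(\alpha-\beta)]$. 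The lemma then yields, for every $n \ge 0$,
\[
\frac{u_n(\ell)}{u_{n+1}(\ell)} \;\le\; \frac{(n+1)^{-\beta/2}}{(n+2)^{-\beta/2}}.
\]

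Next I would feed this into Lemma~\ref{eq: conditions for cond 2} with $z = \beta/2$. That lemma requires $z \in [0, 1-(\alpha-\beta)]$; since $\beta > 0$ and $\alpha < 1$ we have $\beta/2 > 0$ and $\beta/2 + (\alpha-\beta) = \alpha - \beta/2 < 1$, so the range condition holds. Statement~1 of Lemma~\ref{eq: conditions for cond 2} then gives $\frac{(n+1)^{-\beta/2}}{(n+2)^{-\beta/2}} \le \frac{\alpha_{n+1}}{\alpha_n}\frac{\beta_n}{\beta_{n+1}} e^{(q_1/2)\alpha_{n+1}}$ for all $n \ge K_{\ref{eq: conditions for cond 2},\alpha}(\beta/2)$, and Statement~2 gives the analogous inequality with $e^{(q_2/2)\beta_{n+2}}$ for all $n \ge K_{\ref{eq: conditions for cond 2},\beta}(\beta/2)$. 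Combining each of these with the displayed bound above is, term for term, the defining inequality of $\alpha$-moderateness (Def.~\ref{def: wn behavior 2}) from $K_{\ref{eq: conditions for cond 2},\alpha}(\beta/2)$ onwards and of $\beta$-moderateness (Def.~\ref{def: wn behavior 3}) from $K_{\ref{eq: conditions for cond 2},\beta}(\beta/2)$ onwards.

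For monotonicity I would argue that each summand of $u_n(\ell)$ is eventually non-increasing. The term $B_3 (n+1)^{-(\alpha-\beta)\ell}$ is non-increasing in $n$ for all $n$, since $(\alpha-\beta)\ell \ge 0$. For the first term, set $g(x) := x^{-\beta/2}\sqrt{\ln(B_2 x)}$; for $x \ge e^{1/\beta}/B_2$ one has $B_2 x \ge e^{1/\beta} > 1$, so $\ln(B_2 x) \ge 1/\beta > 0$ and $g$ is smooth and positive there. A direct differentiation gives
\[
g'(x) \;=\; x^{-\beta/2-1}\Big[\tfrac{1}{2\sqrt{\ln(B_2 x)}} - \tfrac{\beta}{2}\sqrt{\ln(B_2 x)}\Big],
\]
which is negative precisely when $\beta\ln(B_2 x) > 1$, i.e.\ when $x > e^{1/\beta}/B_2$. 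Hence $g$ is decreasing for $x > e^{1/\beta}/B_2$, so $B_1\,g(n+1)$ is non-increasing once $n \ge e^{1/\beta}/B_2$. Adding the two non-increasing pieces shows $\{u_n(\ell)\}$ is monotonically decreasing from $e^{1/\beta}/B_2$ onwards.

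I do not anticipate a genuine obstacle: the proof is essentially bookkeeping. The only points needing care are verifying that the single choice $z = \beta/2$ simultaneously satisfies $z \ge \max\{\beta/2, (\alpha-\beta)\ell\}$ (where the hypothesis $\ell \le \beta/[2(\alpha-\beta)]$ is used) and $z \le 1-(\alpha-\beta)$ (where $\alpha < 1$ is used), and checking in the monotonicity step that the logarithm stays bounded away from $0$ past the stated threshold so that $g'$ is well defined and has the claimed sign. Should one prefer to avoid calculus in the last step, the same estimate follows discretely from $\ln\!\big(1+\tfrac1{n+1}\big) \le \tfrac1{n+1}$ together with $\big(1+\tfrac1{n+1}\big)^{\beta} - 1 \ge \tfrac{\beta}{n+2}$ (via the mean value theorem), but the derivative computation is cleaner.
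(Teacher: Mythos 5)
Your proposal is correct and follows essentially the same route as the paper: Lemma~\ref{lem:Suff Cond. 2} with $z=\beta/2$ (justified by $\ell \le \beta/[2(\alpha-\beta)]$) chained with the two statements of Lemma~\ref{eq: conditions for cond 2}, and a derivative computation for monotonicity. The only cosmetic difference is that you differentiate $x^{-\beta/2}\sqrt{\ln(B_2x)}$ directly while the paper differentiates $f(x)=x^{-\beta}\ln(B_2x)$ and uses that $\sqrt{f}$ decreases where $f$ does; both yield the same threshold $e^{1/\beta}/B_2$.
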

		\begin{proof}
		It is easy to see from Lemma~\ref{lem:Suff Cond. 2} that, for $z \geq \max\{\beta/2, (\alpha - \beta) \ell\},$ 
		\begin{equation}
		\label{eqn: Verify Cond 2}
		\frac{u_n(\ell)}{u_{n + 1}(\ell)} = \frac{B_1 (n+1)^{-\beta/2}\sqrt{\ln [B_2 (n + 1)]} + B_3 (n+1)^{-(\alpha - \beta) \ell}}{B_1 (n + 2)^{-\beta/2} \sqrt{\ln [B_2 (n + 2)]} + B_3 (n + 2)^{-(\alpha - \beta) \ell}} \leq \frac{(n+1)^{-z}}{(n+2)^{-z}}.
		\end{equation}
		Because $(\alpha - \beta) \ell \leq \beta/2,$ we can pick $z = \beta/2.$ Then, it remains to show that
		\begin{equation}
		\label{eqn: Int Term1 for Cond 2}
		\frac{(n+1)^{-\beta/2}}{(n+2)^{-\beta/2}} \leq \frac{\alpha_{n + 1}}{\alpha_n} \frac{\beta_n}{\beta_{n + 1}} e^{q_1/2 \; \alpha_{n + 1}}.
		\end{equation}
		But this indeed holds for $n\geq K_{\ref{eq: conditions for cond 2},\alpha}(\beta/2)$ due to Lemma~\ref{eq: conditions for cond 2}, Statement~1, since $z=\beta/2 \in [0, 1-(\alpha - \beta)]$. Hence, $u_n(\ell)$ is $\alpha$-moderate. Similarly, in order to establish that $u_n(\ell)$ is $\beta$-moderate, it suffices to show that
		\begin{equation} 
		\frac{(n+1)^{-\beta/2}}{(n+2)^{-\beta/2}} \leq \frac{\alpha_{n + 1}}{\alpha_n} \frac{\beta_n}{\beta_{n + 1}} e^{(q_2/2) \; \beta_{n + 2}},
		\end{equation}
		which indeed holds for $n \geq K_{\ref{eq: conditions for cond 2},\beta}(\beta/2)$ due to Lemma~\ref{eq: conditions for cond 2}, Statement~2.
		
		For monotonicity, let us first rewrite $u_n(\ell)$ in the form  $B_1\sqrt{f(n+1)}+B_3(n+1)^{-(\alpha-\beta)\ell}$, where $f(x) :=x^{-\beta}\ln[B_2x]$. The claimed monotonicity then follows from the monotonicity of $B_3(n+1)^{-(\alpha-\beta)\ell}$ and the fact that, for 
		$x \geq e^{1/\beta}/B_2$, 
		\begin{equation}
		f'(x)
		=
		-\beta x^{-\beta-1}\ln[B_2x]+x^{-\beta-1}
		=
		x^{-\beta-1} \left(1-\beta \ln[B_2x]\right)
		\leq
		0
		,
		\end{equation}
		as desired.
		\end{proof}

		%\section{Proof of Lemma~\ref{lem: epsilon n domination}}
		\section{Domination of $\epsth_n, \epsw_n$}
		\label{sec: proof epsilon n domination}
		% \begin{lemma} \label{lem: epsilon n domination}
		% %
		%  The following statements are true. 
		% % %
		% \begin{enumerate}
		%     \item Let $K_{\ref{lem: epsilon n domination}, a} := ([L_\theta C_{\ref{lem: an bn upper bounds},\theta}]/[L_w C_{\ref{lem: an bn upper bounds},w}])^{1/(\alpha-\beta)}.$ For $n \geq K_{\ref{lem: epsilon n domination}, a},$
		%     %
		%     \begin{equation}
		%         \label{eq: bounding etheta with ew}
		%         \epsth_n \leq \epsw_n.
		%     \end{equation} 
		
		%     \item Let $K_{\ref{lem: epsilon n domination},b} := \left[1+ \frac{\alpha}{2q_{\min}}\right]^{1/(1 - \alpha)}.$ For $n\geq n_0 \geq K_{\ref{lem: epsilon n domination},b},$
		%         \begin{equation} \label{eq: poly vs exp}
		%         e^{-q_2\sum_{j=n_0}^n \beta_j} \leq e^{-q_{\min}\sum_{j=n_0+1}^n \alpha_j} \leq  \min \{ (\epsth_n/\epsth_{n_0}), (\epsw_n/\epsw_{n_0})\}.
		%         \end{equation}
		%     %
		% \end{enumerate}
		% %
		% \end{lemma}
		%
		% \begin{remark}
		% Notice that if $\alpha = \beta$ then $K_{\ref{lem: epsilon n domination}, a}$ is either $0$ or $\infty$ depending on whether the ratio within is $< 1$  or $\geq 1.$
		% \end{remark}
		
		\begin{lemma} \label{lem: epsilon n domination}
			The following statements are true. 
			% %
			\begin{enumerate}
				\item Let $K_{\ref{lem: epsilon n domination}, a} := ([L_\theta C_{\ref{lem: an bn upper bounds},\theta}]/[L_w C_{\ref{lem: an bn upper bounds},w}])^{1/(\alpha-\beta)}.$ For $n \geq K_{\ref{lem: epsilon n domination}, a},$
				%
		%		\begin{equation*}
		$
				\label{eq: bounding etheta with ew}
				\epsth_n \leq \epsw_n.
				$
		%		\end{equation*} 
				
				\item Let $K_{\ref{lem: epsilon n domination},b} := \left[1+ \frac{\alpha}{2q_{\min}}\right]^{1/(1 - \alpha)}.$ For $n\geq n_0 \geq K_{\ref{lem: epsilon n domination},b},$
				\begin{equation}
				\label{eq: poly vs exp}
				e^{-q_2\sum_{j=n_0}^n \beta_j} \leq {}  e^{-q_{\min}\sum_{j=n_0+1}^n \alpha_j} 
				\leq {} 
				\min\{\epsth_n/\epsth_{n_0}, \epsw_n/\epsw_{n_0}\}.
				\end{equation}

			\end{enumerate}
		\end{lemma}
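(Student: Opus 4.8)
The plan is to treat the two statements separately; both collapse to elementary inequalities once $\epsth_n$ and $\epsw_n$ are written out through $\nu(\cdot\,;\cdot)$ as in \eqref{eq: epsilon n new def}. For Statement~1, I would just cancel the common factors $\sqrt{d^3}$ and $\sqrt{\ln(4d^2(n+1)^p/\delta)}$ from both sides of $\epsth_n\le\epsw_n$: what remains is $\sqrt{L_\theta C_{\ref{lem: an bn upper bounds},\theta}}\,(n+1)^{-\alpha/2}\le\sqrt{L_w C_{\ref{lem: an bn upper bounds},w}}\,(n+1)^{-\beta/2}$, and since $\alpha>\beta$ this is equivalent to $(n+1)^{\alpha-\beta}\ge L_\theta C_{\ref{lem: an bn upper bounds},\theta}/(L_w C_{\ref{lem: an bn upper bounds},w})$, i.e.\ to $n+1\ge K_{\ref{lem: epsilon n domination}, a}$, which is guaranteed by $n\ge K_{\ref{lem: epsilon n domination}, a}$. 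The left inequality in Statement~2 is equally quick: $q_2\ge q_{\min}$, $\beta_j=(j+1)^{-\beta}\ge(j+1)^{-\alpha}=\alpha_j$ (as $\beta<\alpha$), and $\{n_0,\dots,n\}\supseteq\{n_0+1,\dots,n\}$ together give $q_2\sum_{j=n_0}^n\beta_j\ge q_{\min}\sum_{j=n_0+1}^n\alpha_j$, whence $e^{-q_2\sum_{j=n_0}^n\beta_j}\le e^{-q_{\min}\sum_{j=n_0+1}^n\alpha_j}$ by monotonicity of $e^{-x}$.

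For the right inequality in Statement~2 I would first strip away $\epsth,\epsw$. Since $\epsth_n/\epsth_{n_0}=\nu(n;\alpha)/\nu(n_0;\alpha)=\big(\tfrac{n_0+1}{n+1}\big)^{\alpha/2}\sqrt{\ln(4d^2(n+1)^p/\delta)/\ln(4d^2(n_0+1)^p/\delta)}$ and the square-root factor is $\ge1$ for $n\ge n_0$ (the logarithm is increasing, and both its arguments exceed $1$), we get $\epsth_n/\epsth_{n_0}\ge\big(\tfrac{n_0+1}{n+1}\big)^{\alpha/2}$; similarly $\epsw_n/\epsw_{n_0}\ge\big(\tfrac{n_0+1}{n+1}\big)^{\beta/2}\ge\big(\tfrac{n_0+1}{n+1}\big)^{\alpha/2}$, the last step because $\tfrac{n_0+1}{n+1}\le1$ and $\beta<\alpha$. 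So it suffices to prove
\[
e^{-q_{\min}\sum_{j=n_0+1}^n\alpha_j}\le\Big(\tfrac{n_0+1}{n+1}\Big)^{\alpha/2},
\qquad\text{equivalently}\qquad
q_{\min}\sum_{j=n_0+1}^n\alpha_j\ge\tfrac{\alpha}{2}\ln\tfrac{n+1}{n_0+1}.
\]
To get this I would telescope $\ln\tfrac{n+1}{n_0+1}=\sum_{j=n_0+1}^n\int_j^{j+1}t^{-1}\,dt\le\sum_{j=n_0+1}^n j^{-1}$ and compare term by term, so the goal reduces to $j^{-1}\le\tfrac{2q_{\min}}{\alpha}\alpha_j$ for each $j\ge n_0+1$. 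Writing $m=j+1$, this is $\alpha m^{\alpha}\le 2q_{\min}(m-1)$, i.e.\ $2q_{\min}m^{1-\alpha}\ge\alpha+2q_{\min}m^{-\alpha}$; bounding $m^{-\alpha}\le1$, it follows from $m^{1-\alpha}\ge1+\tfrac{\alpha}{2q_{\min}}$, which holds because $m=j+1>n_0\ge K_{\ref{lem: epsilon n domination},b}=(1+\tfrac{\alpha}{2q_{\min}})^{1/(1-\alpha)}$ and $x\mapsto x^{1-\alpha}$ is increasing. Summing over $j$ then delivers the displayed inequality, hence the claim.

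The routine pieces here are genuinely routine; the only point requiring care is calibrating the threshold $K_{\ref{lem: epsilon n domination},b}$ so that the crude step $2q_{\min}m^{-\alpha}\le2q_{\min}$ is absorbed exactly. Doing this carelessly would instead yield the weaker requirement $m\ge(\alpha/q_{\min})^{1/(1-\alpha)}$ and an unnecessarily large lower bound on $n_0$, so the bookkeeping with the $m^{-\alpha}$ term is the one spot I would double-check.
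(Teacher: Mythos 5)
Your proof is correct, and for Statement~1 and the left inequality of Statement~2 it coincides with the paper's (which simply declares these "by definition" and "trivial"). For the right inequality of Statement~2 the reduction is also the same as the paper's --- discard the ratio of square-rooted logarithms since it is at least $1$, and reduce to showing $e^{-q_{\min}\sum_{j=n_0+1}^n\alpha_j}\leq\left[\frac{n+1}{n_0+1}\right]^{-\gamma/2}$ --- except that you handle both $\gamma=\alpha$ and $\gamma=\beta$ at once by noting $\left(\frac{n_0+1}{n+1}\right)^{\beta/2}\geq\left(\frac{n_0+1}{n+1}\right)^{\alpha/2}$, a small economy over the paper's case split. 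Where you genuinely diverge is in establishing $q_{\min}\sum_{j=n_0+1}^n\alpha_j\geq\frac{\alpha}{2}\ln\frac{n+1}{n_0+1}$: the paper bounds the sum from below by an integral, forms the continuous function $f(x)=\frac{q_{\min}}{1-\alpha}[(x+2)^{1-\alpha}-(n_0+2)^{1-\alpha}]-\frac{\gamma}{2}[\ln(x+1)-\ln(n_0+1)]$, and shows $f'\geq 0$ for $x\geq K_{\ref{lem: epsilon n domination},b}$ with $f(n_0)=0$; you instead telescope the logarithm as $\sum_{j=n_0+1}^n[\ln(j+1)-\ln j]\leq\sum_{j=n_0+1}^n j^{-1}$ and verify the termwise inequality $\frac{\alpha}{2}j^{-1}\leq q_{\min}\alpha_j$ directly. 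The two arguments are of equal difficulty and, satisfyingly, land on exactly the same threshold $K_{\ref{lem: epsilon n domination},b}=[1+\alpha/(2q_{\min})]^{1/(1-\alpha)}$; yours is arguably a bit more self-contained since it never leaves the discrete setting, while the paper's derivative test generalizes more readily if the stepsizes were replaced by other decreasing sequences.
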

		\begin{proof}
		The inequality in \eqref{eq: bounding etheta with ew} holds by the definition of $\epsth_n$ and $\epsw_n$ given in \eqref{eq: epsilon n new def}.
		
		The first relation in \eqref{eq: poly vs exp} holds trivially since $q_2 \geq q_{\min}, \alpha_{n_0} \geq 0,$ and $\alpha_j \leq \beta_j~\forall j.$ 
		
		Consider the second relation in \eqref{eq: poly vs exp}. Let $\gamma \in \{\alpha, \beta\}$ (the proof holds for both $\alpha$ and $\beta$).  Substituting $\epsth_{n}$ and $\epsw_{n}$ from \eqref{eq: epsilon n new def} and making use of the fact that $\log(4d^2(n + 1)/\delta)/\log(4d^2(n_0 + 1)/\delta) \geq 1,$ it follows that to prove this second relation we only need to show
		\begin{equation}
		    \label{eqn:exp vs PolyRatio}
		    e^{-q_{\min}\sum_{j=n_0+1}^n \alpha_j} \leq \left[\frac{n + 1}{n_0 + 1}\right]^{-\gamma/2}, \quad n \geq n_0.
		\end{equation}
		
		Observe that $\sum_{j = n_0 + 1}^{n} \alpha_j \geq \int_{n_0 + 1}^{n + 1} \frac{1}{(x + 1)^{\alpha}} \df x = \frac{1}{1 - \alpha} [(n + 2)^{1 -\alpha} - (n_0 + 2)^{1 - \alpha}].$ Hence, to establish \eqref{eqn:exp vs PolyRatio},  it suffices to show that
		\begin{equation}
		    \left[\frac{n + 1}{n_0 + 1}\right]^{-\gamma/2} \geq \exp\left[-q_{\min}[(n + 2)^{1 - \alpha} - (n_0 + 2)^{1 - \alpha}]/(1 - \alpha)\right].
		\end{equation}
		Equivalently, it suffices to show that 
		\begin{equation} \label{eq: poly g log}
		    f(x) := \frac{q_{\min}}{1 - \alpha} \left[(x + 2)^{1 - \alpha} - (n_0 + 2)^{1 - \alpha}\right] 
		    -%\geq 
		    \frac{\gamma}{2}\left[\ln (x + 1) - \ln (n_0 + 1)\right] \geq 0
		\end{equation}
		for $x \geq n_0$.
		%
		% For that, let us define
		% %
		% \begin{equation}
		%     f(x) = \frac{q_{\min}}{(1-\alpha)} (x + 1)^{1 - \alpha} - \frac{\alpha}{2} \ln (x).
		% \end{equation}
		%Noting that $f(n_0)=0,$ it is enough to show that $f'(x) \geq 0$ for all $x \geq K_{\ref{lem: epsilon n domination}}.$ 
		To this end, note that
		\begin{align}
		    f'(x) 
		    =& 
		    q_{\min} (x + 2)^{-\alpha} - \frac{\gamma}{2(x + 1)}
		    \\=&
		    \frac{q_{\min}}{(x + 1)}\left[ \frac{x + 1}{(x + 2)^{\alpha}} - \frac{\gamma}{2q_{\min}}\right]
		    \\=&
		    \frac{q_{\min}}{(x + 1)}\left[ (x + 2)^{1-\alpha}-\frac{1}{(x + 2)^{\alpha}} - \frac{\gamma }{2q_{\min}}\right]
		    \\\geq&
		    \frac{q_{\min}}{(x + 1)}\left[ (x + 2)^{1-\alpha} - 1 - \frac{\gamma }{2q_{\min}}\right] ,
		\end{align}
		which is nonnegative when $x \geq K_{\ref{lem: epsilon n domination},b}.$
		This, combined with the facts that $f(n_0)=0$ and $n_0 \geq K_{\ref{lem: epsilon n domination},b}$ implies \eqref{eq: poly g log} and, thereby, concludes the proof.
		\end{proof}

		\section{Applications to Reinforcement Learning: GTD2 and TDC}
		\label{sec: gtd2 and tdc}
		{Here, we show with which constants Corollary~\ref{cor: RL} can be derived for GTD2 and TDC algorithms. This is done by validating the assumptions required and the constants involved.}
		\subsection{GTD2}
		The GTD2 algorithm~\citep{sutton2009fast} minimizes the objective function
		\begin{align}
		J^{\rm MSPBE}(\theta)
		%&=
		%\tfrac{1}{2}\|\bE[V_\theta-\Pi T^{\pi}V_\theta]\|_2^2
		%\notag\\
		&=
		\tfrac{1}{2}(b-A\theta)^\top C^{-1}(b-A\theta).
		\label{eq: MSPBE}
		\enspace
		\end{align}
		
		The update rule of the algorithm takes the form of Equations \eqref{eqn:tIter} and \eqref{eqn:wIter} with
		\begin{align*}
		h_1(\theta,w)  &= %\bE\left[\left(\phi - \gamma \phi'\right)\phi^\top\right]w =
		A^\top w,
		\\
		h_2(\theta,w)  &= %\bE\left[ \left(\delta - \phi^\top w\right) \phi \right] =
		b-A\theta -Cw,
		\end{align*}
		and
		%correspondingly,
		\begin{align*}
		\Mt_{n+1} =& \left(\phi_n - \gamma \phi_n'\right)\phi_n^\top w_n - A^\top w_n \enspace,
		\\
		\Mw_{n+1} %=& \left(\delta_n - \phi_n^\top w_n\right) \phi_n - [b-A\theta_n -Cw_n]\\
		=& r_n\phi_n + \phi_n[\gamma\phi_n'-\phi_n]^\top\theta_n - \phi_n\phi_n^\top w_n - [b-A\theta_n -Cw_n]\enspace.
		\end{align*}
		For GTD2, the relevant matrices are
		$\Tt = 0$, $\Wt = -A^\top$, $v_1 = 0$, and $\Tw=A$, $\Ww=C$, $v_2=b$.
		Additionally, $\Xt = \Tt - \Wt\Ww^{-1}\Tw = A^\top C^{-1} A$.
		By our assumptions, both $\Ww$ and $\Xt$ are symmetric positive definite matrices, and thus the real part of their eigenvalues are also positive.
		Additionally,
		\begin{eqnarray*}
		\|\Mt_{n+1}\| & \leq& (1+\gamma+\|A\|) \|w_n\| ,\\
		\|\Mw_{n+1}\| & = & \|r_n\phi_n-b+[A+\phi_n(\gamma\phi'_n-\phi_n)^\top]\theta_n - [\phi_n\phi_n^\top-C] w_n\|\\
		& \leq & 1+\|b\| + (1+\gamma+\|A\|)\|\theta_n\| + (1+\|C\|)\|w_n\|.
		\end{eqnarray*}
		Consequently, Assumption~\ref{assum:Noise} is satisfied with constants $\mt = (1+\gamma+\|A\|)$ and $\mw = 1 + \max(\|b\|,\gamma+\|A\|,\|C\|)$.

		\subsection{TDC}
		
		The TDC algorithm is designed to minimize (\ref{eq: MSPBE}), just like GTD2.
		However, its update rule takes the form of Equations \eqref{eqn:tIter} and \eqref{eqn:wIter} with
		\begin{align*}
		h_1\theta(\theta,w) &= %\bE\left[\delta\phi - \gamma \phi'\phi^\top w\right] =
		b-A\theta + [A^\top-C] w \enspace,
		\\
		h_2(\theta,w) &= %\bE\left[ \left(\delta - \phi^\top w\right) \phi \right] =
		b-A\theta -Cw \enspace,
		\end{align*}
		and
		% correspondingly,
		\begin{align*}
		\Mt_{n+1} %=& \delta_n\phi - \gamma \phi'\phi^\top w_n - [b-A\theta_n + [A^\top-C] w_n] \\
		=& r_n\phi_n + \phi_n[\gamma\phi_n'-\phi_n]^\top\theta_n - \gamma \phi'\phi^\top w_n - [b-A\theta_n + [A^\top-C] w_n]
		\enspace,
		\\
		\Mw_{n+1} %=& \left(\delta_n - \phi_n^\top w_n\right) \phi_n - [b-A\theta_n +Cw_n] \\
		=& r_n\phi_n + \phi_n[\gamma\phi_n'-\phi_n]^\top\theta_n - \phi_n\phi_n^\top w_n - [b-A\theta_n +Cw_n]
		\enspace.
		\end{align*}
		Thus, for TDC the relevant matrices in the update rules are
		$\Tt = A$, $\Wt = [C-A^\top]$, $v_1 = b$, and $\Tw=A$, $\Ww=C$, $v_2=b$.
		Additionally, $\Xt = \Tt - \Wt\Ww^{-1}\Tw = A - [C-A^\top] C^{-1} A = A^\top C^{-1} A$.
		By our assumptions, both $\Ww$ and $\Xt$ are symmetric positive definite matrices, and thus the real part of their eigenvalues are also positive.
		Additionally,
		\begin{align*}
		\|\Mt_{n+1}\| \leq&  2+(1+\gamma+\|A\|) \|\theta_n\| +(\gamma+\|A\|+\|C\|) \|w_n\|, \\
		\|\Mw_{n+1}\| =& 2+(1+\gamma+\|A\|) \|\theta_n\|+(1+\|C\|) \|w_n\| \enspace.
		\end{align*}
		As a result, Assumption~\ref{assum:Noise} is satisfied with constants $\mt = (2+\gamma+\|A\|+\|C\|)$ and $\mw = (2+\gamma+\|A\|+\|C\|)$.
		
		%\hrulefill END OF AAAI SUBMISSION \hrulefill

		\begin{table}
		\begin{center}{
		\small
		 \begin{tabular}{|l|l|l|} 
		 \hline
		 & & \\[-2ex]
		 Parameter/Set & Definition & Source\\ [0.5ex]
		 \hline
		  $\nu(n; \gamma)$ &
		  $ (n+1)^{-\gamma/2} \sqrt{\ln{(4d^2(n+1)^p/\delta)}}$
		 & Section \ref{sec: proof outline} \\
		 \hline
		 $\epsth_n $ & $\sqrt{d^3 L_\theta C_{\ref{lem: an bn upper bounds},\theta}} \, \nu(n, \alpha)$ & \eqref{eq: epsilon n new def}
		 \\
		 \hline
		  $\epsw_n$& $\sqrt{d^3 L_w C_{\ref{lem: an bn upper bounds},w}} \, \nu(n, \beta)$& \eqref{eq: epsilon n new def}
		 \\
		 \hline
		 $u_n(\ell)$&$\left[A_{1,n_0}  \sum_{i=0}^{\ell-1} A_2^i\right] \epsw_n  + \left[A_3 A_2^\ell\right] {\left[\frac{\alpha_n}{\beta_n}\right]}^\ell$ & \eqref{eq: hypothesis}
		 \\
		 \hline 
		 $\ell^*$&$\left\lceil\frac{\beta}{2(\alpha-\beta)} \right\rceil$&\eqref{def: lS}
		 \\
		 \hline 
		 $\cU(n_0)$ & $\bigcap_{n \geq n_0}\big\{\|\theta_n - \thS\| \leq \Crth \Rprojth, \|L_{n + 1}^{(\theta)}\| \leq \epsth_n,  \|w_n - \wS\| \leq \Crw \Rprojw, \|L_{n + 1}^{(w)}\| \leq \epsw_n\big\}$ & \eqref{def: E_n0}
		 \\
		 \hline
		 $\cG^\prime_{n_0}$ & $\{\|\theta_{n_0} - \thS\| \leq \Rprojth, \|w_{n_0} - \wS\| \leq \Rprojw\}$ & \eqref{def: G_n_0'}
		 \\
		 \hline
		 $\cG_n$&$\bigcap_{k = n_0}^n\{\|\theta_k - \thS\| \leq \Crth\Rprojth, \|w_k - \wS\| \leq \Crw\Rprojw\}$&\eqref{def: Gn}
		  \\
		  \hline
		  $\cL_n$&$\bigcap_{k = n_0}^{n} \left\{\|L_{k + 1}^{(\theta)}\| \leq \epsth_k,  \|L_{k + 1}^{(w)}\| \leq \epsw_k\right\}$&\eqref{def: Ln}
		 \\
		 \hline
		 $\cW_n(u)$&$\{\|w_k - \wS\| \leq u_k \; \forall n_0 \leq k \leq n\}$&\eqref{def: wn}
		 \\
		 \hline
		\end{tabular} }
		\end{center}
		\caption{\label{tab: parameters} A summary of parameters and sets}
		\end{table}

		\newpage

		\setlength{\extrarowheight}{9.6pt}
		\begin{longtable}{| p{.080\textwidth} | p{.77\textwidth} | p{.12\textwidth}|} 
		\hline
		 Constant & Definition & Source \\ [0.5ex]
		 \hline%\hline
		      $q_1$ & $q_1 \in (0, \lambda_{\min}(X_1 + X_1^\tr)/2)$ & 
		      {\eqref{eq: q1 defn}}
		      %Lemma~\ref{lem: Dn bounds}
		      \\
		     %\hline
		     $q_2$ & $q_2 \in (0, \lambda_{\min}(W_2 + W_2^\tr)/2)$ & 
		     {\eqref{eq: q2 defn}}
		     %Lemma~\ref{lem: Dn bounds}
		     \\
		     %\hline
		     $\mu_1$ &  $ -\lambda_{\min}(X_1 + X_1^\tr) + \lambda_{\max}(X_1^\tr X_1)$ & Lemma~\ref{lem: Dn bounds}\\
		     %\hline
		      $\mu_2$ &  $ -\lambda_{\min}(W_2 + W_2^\tr) + \lambda_{\max}(W_2^\tr W_2)$ & Lemma~\ref{lem: Dn bounds}\\
		     %\hline
		     $K_{\ref{lem: Dn bounds},1}$ &  $\left(\frac{2\lambda_{\max}(X_1^\tr X_1)}{\lambda_{\min}(X_1 + X_1^\tr)}\right)^{1/\alpha}$ & Lemma~\ref{lem: Dn bounds}\\
		     %\hline
		     $K_{\ref{lem: Dn bounds},2}$ &  $\left(\frac{2\lambda_{\max}(W_2^\tr W_2)}{\lambda_{\min}(W_2 + W_2^\tr)}\right)^{1/\beta}$ & Lemma~\ref{lem: Dn bounds}\\
		    %\hline
		     $C_{\ref{lem: Dn bounds},\theta}$ &  $\sqrt{
		    \max_{\ell_1 \leq \ell_2 \leq K_{\ref{lem: Dn bounds},1}} \prod_{\ell = \ell_1}^{\ell_2} e^{\alpha_\ell(\mu_1 + 2q_1)}}$ & Lemma~\ref{lem: Dn bounds}\\
		    %\hline
		    $C_{\ref{lem: Dn bounds},w}$ &  $\sqrt{
		    \max_{\ell_1 \leq \ell_2 \leq K_{\ref{lem: Dn bounds},2}} \prod_{\ell = \ell_1}^{\ell_2} e^{\alpha_\ell(\mu_2 + 2q_2)}}$ & Lemma~\ref{lem: Dn bounds}\\
		    %\hline
		    $K_{\ref{lem: an bn upper bounds}}(p, \hat{q})$ &  $\min\{i|e^{-\hat{q} \sum_{k = 1}^{i - 1}(k + 1)^{-p}} \leq i^{-p}\}$ & Lemma~\ref{lem: an bn upper bounds}\\
		    %\hline
		    $C_{\ref{lem: an bn upper bounds}}(p, \hat{q})$ &  $\max_{1 \leq i \leq K_{\ref{lem: an bn upper bounds}}} \{i^p e^{ -\hat{q} \sum_{k = 1}^{i - 1} (k+1)^{-p}}\}$ & Lemma~\ref{lem: an bn upper bounds}\\
		    %\hline
		    $C_{\ref{lem: an bn upper bounds},\theta}$ &  $C_{\ref{lem: an bn upper bounds}}(\alpha,q_1) e^{q_1}/q_1$ & 
		    {Lemma~\ref{lem: an bn upper bounds}}
		    %Section~\ref{sec: proof outline}
		    \\
		    %\hline
		    $C_{\ref{lem: an bn upper bounds},w}$ & $C_{\ref{lem: an bn upper bounds}}(\beta,q_2) e^{q_2}/q_2$ & 
		    {Lemma~\ref{lem: an bn upper bounds}}
		    % Section~\ref{sec: proof outline}
		    \\
		    %\hline
		    $a_n$ & 
		    {$\sum_{k = 0}^{n - 1} \alpha_k^2 e^{-2q_1 \sum_{j=k+1}^{n - 1} \alpha_j}$}  & 
		    Lemma~\ref{lem: an bn upper bounds}
		    % Section~\ref{sec: proof outline}
		    \\
		    %\hline
		    ${b_n}$ & 
		    {$\sum_{k = 0}^{n - 1} \beta_k^2 e^{-2q_2 \sum_{j=k+1}^{n - 1} \beta_j}$}  & 
		    {Lemma~\ref{lem: an bn upper bounds}}
		    % Section~\ref{sec: proof outline}
		    \\
		    %\hline
		    $\Crth$ & 3 & \eqref{def: Crth}
		    \\
		    %\hline
		    $\Crw$ &$3/2 + (e^{q_2} /q_2 \|\Gamma_2\| C_{\ref{lem: Dn bounds},w}) \Crth \frac{\Rprojth}{\Rprojw}$& \eqref{def: Crw}
		    \\
		     %\hline
		     $L_\theta$ &  $2 \left[C_{\ref{lem: Dn bounds},\theta}  \left(1 + \Crw\Rprojw + \Crth\Rprojth + \|\thS\| + \|\wS\| \right) \left(m_2 + m_1 \|W_1\| \|W_2^{-1}\|\right)\right]^2$ & Lemma~\ref{eq: azuma-hoeffding}\\
		     %\hline
		     $L_w$ &  $2 \left[C_{\ref{lem: Dn bounds},w}  m_2 \left(1+ \Crw \Rprojw + \Crth\Rprojth + \|\thS\| + \|\wS\| \right)\right]^2$ & Lemma~\ref{eq: azuma-hoeffding}\\
		     %\hline
		      $p$ & $p \in (1,\infty)$ & Section~\ref{sec: proof outline}\\
		     %\hline
		      $K_{\ref{lem: small eigenvalues},\alpha}$ & $\min\{i|\alpha_i \leq \frac{\lambda_{\min}(X_1 + X_1^\tr)}{2\lambda_{\max}(X_1^\tr X_1)}\}$ & Lemma~\ref{lem: small eigenvalues}\\
		     %\hline
		      $K_{\ref{lem: small eigenvalues},\beta}$ & $\min\{i|\beta_i \leq \frac{\lambda_{\min}(W_2 + W_2^\tr)}{2\lambda_{\max}(W_2^\tr W_2)}\}$ & Lemma~\ref{lem: small eigenvalues}\\
		     %\hline
		          $ K_{\ref{eq: conditions for cond 2},\alpha}(z)$ &  $\max\bigg\{\ceil[\bigg]{\left(\frac{q_1}{2(\alpha-\beta + z)}\right)^{1/\alpha}}, \ceil[\bigg]{\left(\frac{4(\alpha - \beta + z)}{q_1}\right)^{1/(1-\alpha)}} \bigg\}$ & Lemma~\ref{eq: conditions for cond 2}\\
		     %\hline
		          $ K_{\ref{eq: conditions for cond 2},\beta}(z)$ &  $\max\bigg\{\ceil[\bigg]{\left(\frac{q_2}{\alpha-\beta + z}\right)^{1/\beta}}, \ceil[\bigg]{\left(\frac{4(\alpha - \beta + z)}{q_2}\right)^{1/(1-\beta)}} \bigg\}$ & Lemma~\ref{eq: conditions for cond 2}\\
		    % \hline
		        $K_{\ref{lem: const bound on eps}, \alpha}$ & $\left[\frac{4d^3 L_\theta C_{\ref{lem: an bn upper bounds},\theta} p }{\alpha (\Rprojth)^2}\right]^{1/\alpha} \left[2\ln\left(2\frac{4d^3 L_\theta C_{\ref{lem: an bn upper bounds},\theta} p }{\alpha (\Rprojth)^2}\left[\frac{4 d^2}{\delta}\right]^{\alpha/p}\right)\right]^{1/\alpha}$ & Lemma~\ref{lem: const bound on eps}\\
		    % \hline
		        $K_{\ref{lem: const bound on eps}, \beta}$ & $\left[\frac{4d^3 L_w C_{\ref{lem: an bn upper bounds},w} p }{\beta (\Rprojw)^2}\right]^{1/\beta}\left[2\ln \left(2\frac{4d^3 L_w C_{\ref{lem: an bn upper bounds},w} p }{\beta (\Rprojw)^2}\left[\frac{4 d^2}{\delta}\right]^{\beta/p}\right)\right]^{1/\beta}$ & Lemma~\ref{lem: const bound on eps}\\
		    %  $K_{\ref{lem: const bound on eps}, \theta}$ &  $\frac{4d^3 L_\theta C_{\ref{lem: an bn upper bounds},\theta} p }{\alpha (\Rprojth)^2}\left[\frac{4 d^2}{\delta}\right]^{\alpha/p}$ & Lemma~\ref{lem: const bound on eps}\\
		    %  \hline
		    %   $K_{\ref{lem: const bound on eps}, w}$ &  $\frac{4d^3 L_w C_{\ref{lem: an bn upper bounds},w} p }{\beta (\Rprojw)^2}\left[\frac{4 d^2}{\delta}\right]^{\beta/p}$ & Lemma~\ref{lem: const bound on eps}\\
		    %  \hline
		    %  $ K_{\ref{lem: const bound on eps}, \alpha}$ &  $\left[\frac{4 d^2}{\delta}\right]^{-1/p}[2K_{\ref{lem: const bound on eps}, \theta}\ln (2K_{\ref{lem: const bound on eps}, \theta})]^{1/\alpha}$ & Lemma~\ref{lem: const bound on eps}\\
		    %  \hline
		    %   $ K_{\ref{lem: const bound on eps}, \beta}$ &  $\left[\frac{4 d^2}{\delta}\right]^{-1/p}[2K_{\ref{lem: const bound on eps}, w}\ln (2K_{\ref{lem: const bound on eps}, w})]^{1/\beta}$ & Lemma~\ref{lem: const bound on eps}\\
		    % \hline
		    $C_{\ref{lem: Rntheta bound},\theta}$ &  $\|W_1\| \|W_2^{-1}\| \frac{e^{q_2}}{q_2}\Crth \|\Gamma_2\| C_{\ref{lem: Dn bounds},w}$ & Lemma~\ref{lem: Rntheta bound}\\
		    % \hline
		    $C_{\ref{lem: Tn bound}}$&$ [\|X_1\| + 2(\alpha - \beta) [1 + \|X_1\|]]$ & Lemma~\ref{lem: Tn bound}
		    \\
		    % \hline
		    $C_{\ref{lem: Rntheta bound},w}$ &  $\|W_1\| \|W_2^{-1}\| \left[\frac{5}{2} + \frac{2e^{q_1/2}}{q_1} C_{\ref{lem: Tn bound}}  C_{\ref{lem: Dn bounds},\theta} \Crw \right]$ & Lemma~\ref{lem: Rntheta bound}
		    \\      $K_{\ref{lem:Large Theta n}}$ & $ \left[\frac{2}{3}C_{\ref{lem: Rntheta bound},\theta} + \frac{2}{3}C_{\ref{lem: Rntheta bound},w}\frac{\Rprojw}{\Rprojth}
		      \right]^{1/(\alpha - \beta)} $
		      & 
		      %Lemma~\ref{lem:Large Theta n}
		      \eqref{defn: Large Theta n constant}
		    \\
		    %\hline
		    $K_{\ref{lem:IntComputation},a}$ & $2^{1/(\alpha - \beta)}$ &
		    Lemma~{\ref{lem:IntComputation}}
		    \\
		    %\hline
		    $K_{\ref{lem:IntComputation},b}$&
		    $(3\alpha/q_2)^{1/(1-\beta)} - 2$ & Lemma~\ref{lem:IntComputation}
		    \\
		    %\hline
		    $C_{\ref{lem:IntComputation}}$ & $2e^{q_2/2}/q_2.$  &
		    Lemma~\ref{lem:IntComputation} 
		    \\
		    %\hline
		    $C_{\ref{lemma: R_n w bound},a}$&
		    $C_{\ref{lem: Dn bounds},\theta} \max\{\|W_1 W_2^{-1}\|, 1\}$
		    & Lemma~\ref{lemma: R_n w bound}
		    \\
		    %\hline
		    $C_{\ref{lemma: R_n w bound},b}$ & $\|W_1 W_2^{-1}\|\left(1 + \frac{2e^{q_1/2}}{q_1} C_{\ref{lem: Tn bound}} C_{\ref{lem: Dn bounds},\theta} \right) $ & Lemma~\ref{lemma: R_n w bound}
		    \\
		    %\hline
		    $C_{\ref{lemma: R_n w bound},c}(n_0)$& 
		    $
		    \left[\beta_{n_0} \|\theta_{n_0} - \thS\| +  C_{\ref{lemma: R_n w bound},a}  e^{q_1} \frac{2}{q_{\min}}    \left[\|\theta_{n_0} - \thS\| + \frac{\alpha_{n_0}}{\beta_{n_0}}\|w_{n_0} - \wS\|\right]\right]$
		    & Lemma~\ref{lemma: R_n w bound}
		    \\
		 %\hline%\hline
		    $A_{1,n_0}$ & $e +  \frac{e \left[C_{\ref{lem: Dn bounds},w}\|\Gamma_2\| C_{\ref{lemma: R_n w bound},c}(n_0) + C_{\ref{lem: Dn bounds},w} \|w_{n_0} - w^*\|\right]}{\epsw_{n_0}}
		    + 
		    e^2 C_{\ref{lem: Dn bounds},w} \|\Gamma_2\| C_{\ref{lem:IntComputation}}$ & 
		    \eqref{def: A1 and A2}
		    %Lemma~\ref{lem: bound on wn}
		    \\
		    %\hline
		    $A_2$ & $e^{q_1 + 2(\alpha - \beta)} C_{\ref{lem: Dn bounds},w}  \|\Gamma_2\|C_{\ref{lemma: R_n w bound},b} 2e^{q_2/2}/q_2$ & \eqref{def: A1 and A2}
		    %Lemma~\ref{lem: bound on wn}
		    \\
		    %\hline
		    $A_3$ & $\Crw \Rprojw$ & \eqref{defn: A3}
		    \\
		    %\hline
		    $A_{4,n_0}$ & $A_{1,n_0} \sqrt{d^3 L_w C_{\ref{lem: an bn upper bounds},w}} \sum_{i=0}^{\ceil{\frac{\beta}{2(\alpha-\beta)}}-1} A_2^i  + A_3 A_2^{\ceil{\frac{\beta}{2(\alpha-\beta)}}}$ & \eqref{Defn:A4}
		    \\
		    %\hline 
		    $A_{5,n_0}$ & $\left[C_{\ref{lemma: R_n w bound},a}  \left[\Crth\Rprojth + \Crw \Rprojw \right] /\epsth_{n_0 - 1} + 1\right]\sqrt{4d^3 L_\theta C_{\ref{lem: an bn upper bounds},\theta}} +  C_{\ref{lemma: R_n w bound},b} A_4$ & \eqref{Defn:A5}
		    \\
		    %\hline
		    $A_{4,C_1}$ & $d^3 L_w C_{\ref{lem: an bn upper bounds},w}\left(C_{\ref{lem: Dn bounds},w}\|\Gamma_2\| \left[\Rprojth +  C_{\ref{lemma: R_n w bound},a}   \frac{2e^{q_1}}{q_{\min}}    \left(\Rprojth + \Rprojw\right)\right] + C_{\ref{lem: Dn bounds},w} \Rprojw\right)e\sum_{i=0}^{\ceil{\frac{\beta}{2(\alpha-\beta)}}-1} A_2^i$ & \eqref{defn: A4C1}
		    \\
		    %\hline
		    $A_{5,C_1}$ & $C_{\ref{lemma: R_n w bound},a}
		    \left[\Crth\Rprojth + \Crw \Rprojw \right]$
		    & \eqref{defn: A5C1}
		    \\
		    %\hline
		    $A'_4$ & $A_{4,C_1} + 1$& \eqref{defn: A'4 and A'5}
		    \\
		    %\hline
		    $A'_5$ & $2 + A_{5,C_1} + C_{\ref{lemma: R_n w bound},b} A_{4,C_1},$ &\eqref{defn: A'4 and A'5}
		    \\
		    %\hline
		    $C_{\ref{thm:Rates Proj Iterates},\theta}$
		    &
		    %$A'_5/\nu(N_{\ref{thm:Rates Proj Iterates}}, \alpha)$
		    $ A'_5 {(N_{\ref{thm:Rates Proj Iterates}}+1)^{\alpha/2}}/{\sqrt{\ln{(4d^2(N_{\ref{thm:Rates Proj Iterates}}+1)^p/\delta)}}
			}$
		    &
		    \eqref{eq: Rates Proj Iterates Cth}
		    %Theorem~\ref{thm:Rates Proj Iterates}
		    \\
		    %\hline
		    $C_{\ref{thm:Rates Proj Iterates},w}$
		    &
		    %$ A'_4/\nu(N_{\ref{thm:Rates Proj Iterates}}, \beta)$
		    $ A'_4 {(N_{\ref{thm:Rates Proj Iterates}}+1)^{\beta/2}}/{\sqrt{\ln{(4d^2(N_{\ref{thm:Rates Proj Iterates}}+1)^p/\delta)}}
			}$
		    &
		    \eqref{eq: Rates Proj Iterates Cw}
		    %Theorem~\ref{thm:Rates Proj Iterates}
		    \\
		    %\hline
		    $K_{\ref{lemma: A4' A5'}
		    ,a}$ &
		    $\left[\frac{p }{\beta (A_{4,C_0})^2}\right]^{1/\beta}\left[2\ln \left(2\frac{p }{\beta (A_{4,C_0})^2}\left[\frac{4 d^2}{\delta}\right]^{\beta/p}\right)\right]^{1/\beta}$ &\eqref{defn: CA4'A5'Lemma a}
		    \\
		    %\hline
		    $K_{\ref{lemma: A4' A5'},b}$ & $\left[\frac{p }{\alpha (\min\{C_{\ref{lemma: R_n w bound},b}A_{4,C_0},A_{5,C_0}\})^2}\right]^{1/\alpha}\left[2\ln \left(2\frac{p }{\alpha (\min\{C_{\ref{lemma: R_n w bound},b}A_{4,C_0},A_{5,C_0}\})^2}\left[\frac{4 d^2}{\delta}\right]^{\alpha/p}\right)\right]^{1/\alpha}$ & \eqref{defn: CA4'A5'Lemma b}
		    \\
		    %\hline
		    $A''_1$ & $C_{\ref{lem: Dn bounds},w} \|\Gamma_2\| C_{\ref{lem:IntComputation}}$ &
		    \eqref{eq:defn: A''1}
		    \\
		    %\hline
		    $A_{4,C_0}$ &$A_3 A_2^{\ceil{\frac{\beta}{2(\alpha-\beta)}}} + (e + e^2 A_1'')\sum_{i=0}^{\ceil{\frac{\beta}{2(\alpha-\beta)}}-1} A_2^i \sqrt{d^3 L_w C_{\ref{lem: an bn upper bounds},w}}$ & \eqref{defn:A4C0}
		    \\
		    %\hline
		    $A_{5,C_0}$&
		    $\sqrt{4d^3 L_\theta C_{\ref{lem: an bn upper bounds},\theta}}$
		    &\eqref{defn:A5C0}
		    \\
		    %\hline
		    $K_{\ref{lem: epsilon n domination}, a}$ &$([L_\theta C_{\ref{lem: an bn upper bounds},\theta}]/[L_w C_{\ref{lem: an bn upper bounds},w}])^{1/(\alpha-\beta)}$ & Lemma~\ref{lem: epsilon n domination}
		    \\
		    %\hline
		    $K_{\ref{lem: epsilon n domination},b}$&$\left[1+ {\alpha}/{(2q_{\min})}\right]^{1/(1 - \alpha)}$ & Lemma~\ref{lem: epsilon n domination}
		    \\
		    $K_{\ref{thm:Rates Proj Iterates}, w}$&$[(A'_4/\Rprojw)^{2/\beta}]^{(A'_4/\Rprojw)^{2/\beta}} $ & \eqref{defn: Rates Proj Iterates Kw}
		    \\
		    $K_{\ref{thm:Rates Proj Iterates}, \theta}$&$ [(A'_5/\Rprojth)^{2/\alpha}]^{(A'_5/\Rprojth)^{2/\alpha}}$
		    & \eqref{defn: Rates Proj Iterates Kth}
		    \\
		\hline
		\caption{\label{tab: constants contd} Summary of all constants}
		\end{longtable}
		
		\end{document}